%%%%%%%%%%%%%%%%%%%%%%%%%%%%%%%%%%%%%%%%%%%%%%%%%%%%%%%%%%%%%%%%%%%%%%%%%%%%%%%%%%%%%%%%%
%
%       Last Edited:  SD  on 03/07/12
%
%%%%%%%%%%%%%%%%%%%%%%%%%%%%%%%%%%%%%%%%%%%%%%%%%%%%%%%%%%%%%%%%%%%%%%%%%%%%%%%%%%%%%%%%%

\documentclass[10pt]{article}
\setlength{\columnsep}{0.35in}
\usepackage[colorlinks = true, linkcolor = blue, citecolor = cyan, linktocpage=false]{hyperref}
\usepackage{natbib}
\usepackage{amsmath,amsthm,amssymb}   
\setcitestyle{authoryear,round,citesep={;},aysep={,},yysep={;}}
% Change citation commands to be more like old ICML styles

\renewcommand{\cite}[1]{\citep{#1}}
\usepackage{algorithm}
\usepackage{algpseudocode} 
\usepackage{graphicx}
\usepackage{pdfpages}
\usepackage{eufrak}
\usepackage[margin=1 in]{geometry}

%%%%%%%%%%%%%%%%%%%%%%%%%%%%%%%%%%%%%%%%%%%%%%%%%%%%%%%%%%%%%%%%%%%%%%%%%%%%%%%%%%%%%
%
%  Labeling XXX
%
%%%%%%%%%%%%%%%%%%%%%%%%%%%%%%%%%%%%%%%%%%%%%%%%%%%%%%%%%%%%%%%%%%%%%%%%%%%%%%%%%%%%%

{\makeatletter
 \gdef\xxxmark{%
   % Addition to write list of xxx's to aux file.
   \protected@write\@auxout{\def\PAGE{ page }}
     {\@percentchar xxx: section \thesubsubsection \PAGE \thepage}%
   \expandafter\ifx\csname @mpargs\endcsname\relax % in minipage?
     \expandafter\ifx\csname @captype\endcsname\relax % in figure/caption?
       \marginpar{xxx}% not in a caption or minipage, can use marginpar
     \else
       xxx % notice trailing space
     \fi
-   \else
     xxx % notice trailing space
   \fi}
 \gdef\xxx{\@ifnextchar[\xxx@lab\xxx@nolab}
 \long\gdef\xxx@lab[#1]#2{{\bf [\xxxmark #2 ---{\sc #1}]}}
 \long\gdef\xxx@nolab#1{{\bf [\xxxmark #1]}}
 % This turns them off:
 %\long\gdef\xxx@lab[#1]#2{}\long\gdef\xxx@nolab#1{}%
 \gdef\turnoffxxx{\long\gdef\xxx@lab[##1]##2{}\long\gdef\xxx@nolab##1{}}%
}

%%%%%%%%%%%%%%%%%%%%%%%%%%%%%%%%%%%%%%%%%%%%%%%%%%%%%%%%%%%%%%%%%%%%%%%%%%%%%%%%%%%%%%%%%
%
%  Notation
%
%%%%%%%%%%%%%%%%%%%%%%%%%%%%%%%%%%%%%%%%%%%%%%%%%%%%%%%%%%%%%%%%%%%%%%%%%%%%%%%%%%%%%%%%%

   %compact
                    %open complex
       %cpt. complex

\def\<{\langle}
\def\>{\rangle}

\def\B{\mathbb{B}}

                           %PGL_2(R)
                                %g linear group

                             %projective space

\DeclareMathOperator{\Var}{Var}

                 %Configuration space

\newcommand{\hide}[1]{}

%%%%%%%%%%%%%%%%%%%%%%%%%%%%%%%%%%%%%%%%%%%%%%%%%%%%%%%%%%%%%%%%%%%%%%%%%%%%%%%%%%%%%%%%%
%
%  paper formatting
%
%%%%%%%%%%%%%%%%%%%%%%%%%%%%%%%%%%%%%%%%%%%%%%%%%%%%%%%%%%%%%%%%%%%%%%%%%%%%%%%%%%%%%%%%%

\theoremstyle{plain}
\newtheorem{thm}{Theorem}
\newtheorem*{thms}{Theorem} 

\newtheorem{cor}[thm]{Corollary}
\newtheorem{lem}[thm]{Lemma}

\theoremstyle{definition}
\newtheorem*{defn}{Definition}

\theoremstyle{remark}

\numberwithin{equation}{section}

%%%%%%%%%%%%%%%%%%%%%%%%%%%%%%%%%%%%%%%%%%%%%%%%%%%%%%%%%%%%%%%%%%%%%%%%%%%%%%%%%%%%%%%%%
\begin{document}

\title{On some provably correct cases of variational inference for topic models }

\author{Pranjal Awasthi \thanks{Princeton University, Computer Science Department. Email: pawashti@cs.princeton.edu. Supported by NSF grant CCF-1302518.} \\and Andrej Risteski \thanks{Princeton University, Computer Science Department. Email: risteski@cs.princeton.edu. Partially supported by
NSF grants CCF-0832797, CCF-1117309, CCF-1302518, DMS-1317308, Sanjeev Arora's Simons Investigator Award, and a Simons Collaboration
Grant.}}
\date{\today}

\maketitle

\begin{abstract}

Variational inference is a very efficient and popular heuristic used in various forms in the context of latent variable models. It's closely related to Expectation Maximization (EM), and is applied when exact EM is computationally infeasible. 
Despite being immensely popular, current theoretical understanding of the effectiveness of variaitonal inference based algorithms is very limited. In this work we provide the first analysis of instances where variational inference algorithms converge to the global optimum, in the setting of topic models. 

More specifically, we show that variational inference provably learns the optimal parameters of a topic model under natural assumptions on the topic-word matrix and the topic priors. The properties that the topic word matrix must satisfy in our setting are related to the topic expansion assumption introduced in~\cite{anandkumartopic}, as well as the anchor words assumption in~\cite{arora1}. The assumptions on the topic priors are related to the well known Dirichlet prior, introduced to the area of topic modeling by~\cite{blei1}.

It is well known that initialization plays a crucial role in how well variational based algorithms perform in practice. The initializations that we use are fairly natural. One of them is similar to what is currently used in LDA-c, the most popular implementation of variational inference for topic models. The other one is an overlapping clustering algorithm, inspired by a work by~\cite{aroradictionary2} on dictionary learning, which is very simple and efficient. 

While our primary goal is to provide insights into when variational inference might work in practice, the multiplicative, rather
than the additive nature of the variational inference updates forces us to use fairly non-standard proof arguments, which we believe will be of general 
interest.  
Our proofs rely on viewing the updates as an operation which, at each timestep, sets the new parameter estimates to be noisy convex combinations of the ground truth values, and a bounded error term which depends on the previous estimate. The weight on the ground truth values will be large, compared to the error term, which will cause the error term to eventually reach zero. The large weight on the ground truth values will be a byproduct of our model assumptions, which will imply a ``local'' notion of anchor words for each document - words which only appear in one topic in a given document, as well as a ``local'' notion of anchor documents for each word - documents where that word appears as part of a single topic.    
\end{abstract}

\pagebreak 

\tableofcontents

%\turnoffxxx

\section{Introduction} 

Over the last few years, heuristics for non-convex optimization have emerged as one of the most fascinating phenomena for theoretical study in machine learning. Methods like alternating minimization, EM, variational inference and the like enjoy immense popularity among ML practitioners, and with good reason: they're vastly more efficient than alternate available methods like convex relaxations, and are usually easily modified to suite different applications. 

Theoretical understanding however is sparse and we know of very few instances where these methods come with formal guarantees. Among more classical results in this direction are the analyses of Lloyd's algorithm for K-means, which is very closely related to the EM algorithm for mixtures of Gaussians \cite{kannanEM}, \cite{schulmanEM1}, \cite{schulmanEM2}. The recent work of~\cite{balakrishnan2014statistical} also characterizes global convergence properties of the EM algorithm for more general settings. Another line of recent work has focused on a different heuristic called {\em alternating minimization} in the context of dictionary learning.  \cite{anandkumardictionary}, \cite{aroradictionary1} prove that with appropriate initialization, alternating minimization can provably recover the ground truth. \cite{netrapalliphase} have proven similar results in the context of phase retreival. 

\begin{sloppypar}
Another popular heuristic which has so far eluded such attempts is known as {\em variational inference}~\cite{jordan1999introduction}. We provide the first characterization of global convergence of variational inference based algorithms for topic models~\cite{blei1}.
% As far as we are aware, this is the first analysis of this kind, the usual convergence proofs providing only %guarantees for reaching local optima. 
We show that under natural assumptions on the topic-word matrix and the topic priors, along with natural initialization, variational inference converges to the parameters of the underlying ground truth model. 
%We provide case studies for variational inference: namely we provide a few sets of restrictions on the ground %truth topic-model matrix, as well as topic priors, along with initialization strategies, for which we can prove %that variational inference converges to the global optimum. 
To prove our result we need to overcome a number of technical hurdles which are unique to the nature of variational inference. Firstly, the difficulty in analyzing alternating minimization methods for dictionary learning is alleviated by the fact that one can come up with closed form expressions for the updates of the dictionary matrix. We do not have this luxury. Second, the ``norm'' in which variational inference naturally operates is KL divergence, which can be difficult to work with. 
We stress that the focus of this work is not to identify new instances of topic modeling that were previously not known to be efficiently solvable, but rather providing understanding about the behaviour of variational inference, the defacto method for learning and inference in the context of topic models. 
\end{sloppypar}

\section{Latent variable models and EM} 
\label{s:intro}

We briefly review expectation-maximization (EM) and variational methods. We will be dealing with latent variable models, where the observations $X_i$ are generated according to a distribution 
$$P(X_i | \theta) = P(Z_i | \theta) P(X_i | Z_i, \theta) $$
where $\theta$ are parameters of the models, and $Z_i$ are termed as hidden variables. Given the observations $X_i$, a common task in this context is to find the maximum likelihood value of the parameter $\theta$: $$ argmax_{\theta} \sum_{i} \log (P(X_i | \theta)) $$ 

The expectation-maximization (EM) algorithm is an iterative method to achieve this, dating all the way back to \cite{dempster} and \cite{sundberg} in the 70s. In the above framework it can be formulated as the following procedure, maintaining estimates $\theta^t, \tilde{P}^t(Z)$ of the model parameters and the posterior distribution over the hidden variables: 
\begin{itemize} 
\item E-step: Compute the distribution $$\tilde{P}^t(Z) = P(Z | X, \theta^{t-1}) $$
\item M-step: Set $\theta^t$ to be $$ argmax_{\theta} \sum_{i} E_{\Tilde{P}^{t-1}}[\log P(X_i, Z_i | \theta)] $$
\end{itemize} 
It's implicitly assumed however, that both steps can be performed efficiently. Sadly, that is not the case in many scenarios. A common approach then is to relax the above formulation to a tractable form. This is achieved by choosing an appropriate family of distributions $F$, and perform the following updates: 
\begin{itemize} 
\item Variational E-step: Compute the distribution $\tilde{P}^t(Z) = min_{P^{t} \in F} KL(P^{t}(Z) || P(Z | X, \theta^{t-1})$
\item Variational M-step: Set $\theta^t$ to be $ argmax_{\theta} \sum_{i} E_{\Tilde{P}^{t-1}}[\log P(X_i, Z_i | \theta)] $ 
\end{itemize}
By picking the family $F$ appropriately, it's often possible to make both steps above run in polynomial time. 
None of the above two families of approximations, however, usually come with any guarantees. With EM, the problem is ensuring that one does not get stuck in a local optimum. 
With variational EM, additionally, we are faced with the issue of in principle not even exploring the entire space of solutions.

\section{Topic models} 
We will focus on a particular latent variable model, which is very often studied - topic models~\cite{blei1}. The generative model here is as follows: there is a prior distribution over topics $\mathbf{\alpha}$. Then, each document is generated by the following process: 
\begin{itemize} 
\item Sample a proportion of topics $\gamma_{1}, \gamma_{2}, \dots, \gamma_{k}$ according to $\mathbf{\alpha}$. 
\item For each position in the document, pick a topic according to a multinomial distribution with parameters $\gamma_1, \dots, \gamma_k$. 
\item Conditioned on topic $i$ being picked at that position, pick a word $j$ from a multinomial with parameters ($\beta_{i,1}, \beta_{i,2}, \dots, \beta_{i,k}$)  
\end{itemize} 
In this paper we will be interested in topic priors which result in \emph{sparse} documents and where the correlation of the distributions for different topics is small. These types of properties are very commonly assumed, and are satisfied by the Dirichlet prior, one of the most popular priors in topic modeling. (Originally introduced by \cite{blei1}.) 

The body of work on topic models is vast~\cite{blei2009topic}. Prior theoretical work relevant in the context of this paper includes the sequence of works by \cite{arora1},\cite{arora2}, as well as \cite{anandkumartopic}, \cite{ding2013topic}, \cite{ding2014efficient} and \cite{kannantopic}. 
\cite{arora1} and \cite{arora2} assume that the topic-word matrix contains ``anchor words''. This means that each topic has a word which appears in that topic, and no other.
\cite{anandkumartopic} on the other hand work with a certain expansion assumption on the word-topic graph, which says that if one takes a subset S of topics, the number of words in the support of these topics should be at least $|S| + s_{max}$, where $s_{max}$ is the maximum support size of any topic. 
Neither paper needs any assumption on the topic priors, and can handle (almost) arbitrarily short documents.
 
The assumptions we make on the word-topic matrix will be related to the ones in the above works, but our documents will need to be long, so that the empirical counts of the words are close to their expected counts. Our priors will also be more structured. This is expected since we are trying to analyze an existing heuristic rather than develop a new algorithmic strategy.
The case where the documents are short seems significantly more difficult. Namely, in that case there are two issues to consider. One is proving the variational approximation to the posterior distribution over topics is not too bad. The second is proving that the updates do actually reach the global optimum. Assuming long documents allows us to focus on the second issue alone, which is already challenging. 
On a high level, the instances we consider will have the following structure: 

\begin{itemize}
\item The topics will satisfy a weighted expansion property: for any set S of topics of constant size, for any topic $i$ in this set, the probability mass on words which belong to $i$, and no other topic in $S$ will be large. (Similar to the expansion in \cite{anandkumartopic}, but only over constant sized subsets.) 
\item The number of topics per document will be small. Further, the probability of including a given topic in a document is almost independent of any other topics that might be included in the document already. Similar properties are satisfied by the Dirichlet prior, one of the most popular priors in topic modeling. (Originally introduced by \cite{blei1}.) The documents will also have a ``dominating topic'', similarly as in \cite{kannantopic}.
\item For each word $j$, and a topic $i$ it appears in, there will be a decent proportion of documents that contain topic $i$ and no other topic containing $j$. These can be viewed as ``local anchor documents'' for that word-pair topic. 
\end{itemize} 

We state below, informally, our main result. See Sections~\ref{secn:sparse} and \ref{secn:small-range} for more details.

\begin{thms}
Under the above mentioned assumptions, popular variants of variational inference for topic models, with suitable initializations, provably recover the ground truth model in polynomial time.
\end{thms}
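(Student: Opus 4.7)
The plan is to analyze one full iteration of variational inference at a time and show that each update of the topic--word matrix $\beta$ can be written, entrywise, as a convex combination of the ground truth $\beta^\ast$ and an error term of strictly smaller weight that depends in a bounded way on the previous iterate. Iterating this decomposition then yields geometric contraction to $\beta^\ast$, from which polynomial-time recovery follows. First I would set up the updates explicitly: in the E-step, for each document, the variational posterior over topics $\gamma$ is (approximately) a multinomial re-weighting of the empirical word counts using the current estimate $\beta^t$; in the M-step, one aggregates these posterior expectations across all documents to produce the next estimate $\beta^{t+1}_{i,j}$ proportional to the total mass of word $j$ assigned to topic $i$.

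The heart of the argument is to exploit the local structural assumptions to decompose each update. For the E-step, fix a document and let $S$ be its (small) support of active topics. The weighted-expansion hypothesis guarantees that each topic $i \in S$ has significant mass on words that appear only in $i$ among topics in $S$---the ``local anchor words'' for that document. Since documents are long, empirical counts of these words concentrate at their expectations, so the portion of the variational posterior on these words is essentially proportional to the true proportion $\gamma^\ast_i$, up to an error that scales with how different $\beta^t$ is from $\beta^\ast$ on the non-anchor words. I would unfold the KL-based E-step by writing the posterior as a sum over word positions and splitting that sum into local-anchor contributions (which pull toward the ground truth) and contamination contributions (the error term).

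For the M-step, fix a word $j$ and a topic $i$ containing it. The local-anchor-document assumption gives a decent fraction of documents in which topic $i$ is present but no other topic containing $j$ is; on such documents, essentially all of $j$'s posterior mass is assigned to topic $i$, and this fraction alone produces an unbiased estimate of $\beta^\ast_{i,j}$. Documents outside this clean class again contribute a controlled error depending on the previous iterate. Combining the two steps I would write
\[
\beta^{t+1}_{i,j} \;=\; (1-\epsilon_t)\,\beta^\ast_{i,j} \;+\; \epsilon_t\, E^t_{i,j},
\]
where $\epsilon_t$ is bounded away from $1$ by a constant depending on the model parameters, and $|E^t_{i,j}|$ is bounded by a constant multiple of $\|\beta^t-\beta^\ast\|$ in an appropriate norm. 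This yields geometric contraction, so $\|\beta^t-\beta^\ast\| \le \delta$ after $O(\log(1/\delta))$ iterations. The role of the initialization (the LDA-c-style one or the overlapping-clustering procedure inspired by the dictionary learning work) is just to ensure $\|\beta^0-\beta^\ast\|$ lies within the basin of attraction that this contraction requires.

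The main obstacle I anticipate is the multiplicative, KL-type nature of the updates: unlike additive alternating-minimization analyses, the E-step for $\gamma$ is a fixed point of a coordinate ascent on an entropy-regularized objective rather than a closed-form linear map, so expressing it as a convex combination of ground truth and error requires a delicate control on how cross-topic contamination propagates through the softmax-like normalization. Equally delicate is verifying that the weights $\epsilon_t$ from the two-step composition stay uniformly below one across iterations---and in particular that errors on non-anchor entries do not amplify after renormalization---which is precisely why the local-anchor structure is needed both at the document level (for the E-step) and at the word level (for the M-step). Finding a norm in which the contraction is simultaneously entrywise correct and sufficient for feeding into the next E-step will likely force the analysis into a weighted $\ell_\infty$- or $\ell_1$-type norm rather than KL directly.
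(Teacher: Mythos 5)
Your core mechanism --- writing each update as a noisy convex combination of the ground truth and a bounded error term, with the large weight on the ground truth coming from local anchor words (E-step) and local anchor documents (M-step), and then iterating to get geometric contraction --- is exactly the paper's Phase~II argument. The paper tracks the contraction as entrywise multiplicative bounds $\frac{1}{C^t} \leq \beta^*_{i,j}/\beta^t_{i,j} \leq C^t$ with $C^{t+1} = \sqrt{C^t}$, which is the weighted $\ell_\infty$-in-log-scale norm you correctly anticipate being forced into.

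However, there are two genuine gaps. First, the paper does not analyze the vanilla M-step you describe (aggregating posterior mass over \emph{all} documents): it analyzes \emph{thresholded} variants (tEM) in which the update for $\beta_{i,j}$ averages only over documents where topic $i$ has the largest estimated proportion. This modification is load-bearing: the M-step upper bound needs $\gamma^t_{d,i} \leq 2\gamma^*_{d,i}$ for the documents being averaged, and the paper can only guarantee such a two-sided bound for the \emph{dominant} topic of a document (via the gap assumption $\rho$ and a separate lemma showing the argmax of $\gamma^t_{d,\cdot}$ correctly identifies the dominant topic). Without restricting to those documents, the normalization $\sum_d \gamma^t_{d,i}$ and the contamination terms in your $E^t_{i,j}$ are not controlled, and your claim that $\epsilon_t$ stays uniformly below one does not go through. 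Second, your statement that the initialization ``just'' places $\beta^0$ in the basin of attraction understates a substantial portion of the proof: the actual initializations supply only the correct supports (Case 1) or a single seed document per topic (Case 2), not a small $\|\beta^0 - \beta^*\|$. Getting from there into the constant-factor regime where the contraction applies is a separate Phase~I argument --- an $O(\log N)$-round halving of the multiplicative upper bound in Case~1, and in Case~2 two additional phases (anchor-word identification, then discriminative-word support identification, driven by a Pinsker-type lemma relating $KL(\tilde f_d \| f_d)$ to $\|\gamma^* - \gamma\|_1$ through the anchor-word mass $p$). Your sketch as written has no mechanism for this pre-contraction stage.
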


\section{Variational relaxation for learning topic models } 

%Let us review the usual variational relaxation for topic models.
In this section we briefly review the variational relaxation for topic models following closely the description in \cite{blei1}. 
Throughout the paper, we will denote by $N$ the total number of words and $K$ the number of topics. 
We will assume that we are working with a sample set of $D$ documents. 
We will also denote by $\tilde{f}_{d,j}$ the fractional count of word $j$ in document $d$ (i.e. $\tilde{f}_{d,j} = Count(j)/N_d$, where $Count(j)$ is the number of times word j appears in the document, and $N_d$ is the number of words in the document). 

For topic models variational updates are way to approximate the computationally intractable E-step \cite{sontag} as described in Section~\ref{s:intro}. 
Recall the model parameters for topic models are the topic prior parameters $\alpha$ and the topic-word matrix $\beta$. The observable $X$ is the list of words in the document. The latent variables are the topic assignments $Z_j$ at each position $j$ in the document and the topic proportions $\gamma$.  
The variational E-step hence becomes $\tilde{P}^t(Z, \gamma) = min_{P^{t} \in F} KL(P^{t}(Z, \gamma) || P(Z, \gamma | X, \alpha^t, \beta^t)$ for some family $F$ of distributions. The family $F$ one usually considered is $P^{t}(\gamma,Z) = q(\gamma) \Pi_{j=1}^{N_d} q'_j(Z_j)$, i.e. a mean field family. 
In \cite{blei1} it's shown that for Dirichlet priors $\alpha$ the optimal distributions $q,q'_j$ are a Dirichlet distribution for $q$, with some parameter $\tilde{\gamma}$, and multinomials for $q'_j$, with some parameters $\phi_j$. The variational EM updates are shown to have the following form.

%Namely, calculating the posterior in question could be an NP-hard problem, as pointed out in \cite{sontag}. 
%To get a variational version of the above algorithm, one instead sets $\tilde{P}^t(Z, \gamma) = min_{P^{t} \in F} KL(P^{t}(Z, \gamma) || P(Z, \gamma | X, \theta^{t-1})$ for some family $F$ of distributions. The family $F$ one usually considered is $P^{t}(\gamma,Z) = q(\gamma) \Pi_{i} q'_i(Z_i)$, i.e. a completely factorized distribution in all the variables.

%In \cite{blei1} for distributions of this type, it's shown that the optimal distributions $q,q'_i$ are a Dirichlet distribution for $q$, with some parameter %$\tilde{\gamma}$, and multinomials for $q'_i$, with some parameters $\phi_i$. Furthermore, the variational EM updates are shown to have the form: 
\begin{itemize} 
\item In the E-step, one runs to convergence the following updates on the $\phi$ and $\tilde{\gamma}$ parameters: 
$$ \phi_{d,j,i} \propto \beta^{t}_{i, w_{d,j}} e^{E_q [\log (\gamma_{d}) | \tilde{\gamma}_{d}]}$$ 
$$\tilde{ \gamma_{d,i} } = \alpha^t_{d,i} + \sum_{j=1}^{N_d} \phi_{d,j,i} $$
\item In the M-step, one updates the $\beta$ and parameters as follows: 
$$\beta^{t+1}_{i, j} \propto \sum_{d=1}^D \sum_{j'=1}^{N_d} \phi^t_{d,j,i} w_{d,j,j'} $$ 
\end{itemize}
where $\phi^t_{d,j,i}$ is the converged value of $\phi_{d,j,i}$; $w_{d,j}$ is the word in document $d$, position $j$; $w_{d,j,j'}$ is an indicator variable which is 1 if the word in position $j'$ in document $d$ is word $j$. 

The $\alpha$ Dirichlet parameters do not have a closed form expression and are updated via gradient descent. 

\subsection{Simplified updates in the long document limit} 

From the above updates it is difficult to give assign an intuitive meaning to the $\tilde{\gamma}$ and $\phi$ parameters. (Indeed, it's not even clear what one would like them to be ideally at the global optimum.) We will be however working in the large document limit - and this will simplify the updates. 
In particular, in the E-step, in the large document limit, the first term in the update equation for $\tilde{\gamma}$ has a vanishing contribution. In this case, we can simplify the E-update as:
$$ \phi_{d,j,i} \propto \beta_{i,j}^t \gamma_{d,i} $$ 
$$\gamma_{d,i} \propto \sum_{j=1}^{N_d} \phi_{d,j,i}$$ 
Notice, importantly, in the second update we now use variables $\gamma_{d,i}$ instead of $\tilde{\gamma}_{d,i}$, which are normalized such that $\displaystyle \sum_{i=1}^K \gamma_{d,i} = 1$. These correspond to the max-likelihood topic proportions, given our current estimates $\beta^t_{i,j}$ for the model parameters. The M-step will remain as is - but we will focus on the $\beta$ only, and ignore the $\alpha$ updates - as the $\alpha$ estimates disappeared from the E updates:   
$$ \beta_{i,j}^{t+1} \propto \sum_{d=1}^D  \tilde{f}_{d,j} \gamma_{d,i}^t $$ 
where $\gamma^t_{d,i}$ is the converged value of $\gamma_{d,i}$. 
In this case, the intuitive meaning of the $\beta^t$ and $\gamma^t$ variables is clear: they are estimates of the the model parameters, and the max-likelihood topic proportions, given an estimate of the model parameters, respectively. 

The way we derived them, these updates appear to be an approximate form of the variational updates in \cite{blei1}. However it is possible to also view them in a more principled manner. These updates approximate the posterior distribution $P(Z, \gamma | X, \alpha^{t}, \beta^{t})$ by first approximating this posterior by $P(Z | X, \gamma^{*}, \alpha^{t}, \beta^{t})$, where $\gamma^{*}$ is the max-likelihood value for $\gamma$, given our current estimates of $\alpha, \beta$, and then setting $P(Z | X, \gamma^{*}, \alpha^{t}, \beta^{t})$ to be a product distribution.
 
It is intuitively clear that in the large document limit, this approximation should not be much worse than the one in \cite{blei1}, as the posterior concentrates around the maximum likelihood value. (And in fact, our proofs will work for finite, but long documents.) Finally, we will rewrite the above equations in a slightly more convenient form. Denoting $\displaystyle f_{d,j} = \sum_{i=1}^K \gamma_{d,i} \beta^t_{i,j} $, the E-step can be written as: iterate until convergence 
$$ \gamma_{d,i} = \gamma_{d,i} \sum_{j=1}^N \frac{\tilde{f}_{d,j}}{f_{d,j}} \beta_{i,j}^t$$ 
The M-step becomes: 
$$\beta_{i,j}^{t+1} = \beta_{i,j}^t \frac{\sum_{d=1}^D \frac{\tilde{f}_{d,j}}{f_{d,j}^t} \gamma_{d,i}^t}{\sum_{d=1}^D \gamma_{d,i}^t}$$ 
where $\displaystyle f^t_{d,j} = \sum_{i=1}^K \gamma^t_{d,i} \beta^t_{i,j}$ and $\gamma^t_{d,i}$ is the converged value of $\gamma_{d,i}$.

\subsection{Alternating KL minimization and thresholded updates} 

We will further modify the E and M-step update equations we derived above. 
In a slightly modified form, these updates were used in a paper by \cite{leeseung} in the context of non-negative matrix factorization. There the authors proved that under these updates  $\sum_{d=1}^D KL(f_{d,j}^t || \tilde{f}_{d,j}) $ is non-decreasing.
One can easily modify their arguments to show that the same property is preserved if the E-step is replaced by a step 
$\gamma^t_{d} = \min_{\gamma^t_d \in \Delta_{K}} KL(\tilde{f}_{d} || f_{d})$, where $\Delta_{K}$ is the K-dimensional simplex - i.e. minimizing the KL divergence between the counts and the "predicted counts'' with respect to the $\gamma$ variables. (In fact, iterating the $\gamma$ updates above is a way to solve this convex minimization problem via a version of gradient descent which makes multiplicative updates, rather than additive updates.) 

Thus the updates are performing alternating minimization using the KL divergence as the distance measure (with the difference that for the $\beta$ variables one essentially just performs a single gradient step). In this paper, we will make a modification of the M-step which is very natural. 
Intuitively, the update for $\beta^t_{i,j}$ goes over all appearances of the word $j$ and adds the ``fractional assignment'' of the word $j$ to topic $i$ under our current estimates of the variables $\beta$, $\gamma$. 
In the modified version we will only average over those documents $d$, where $\gamma^t_{d,i} > \gamma^t_{d,i'}, \forall i' \neq i$. 

The intuitive reason behind this modification is the following. The EM updates we are studying work with the KL divergence, which puts more weight on the larger entries. Thus, for the documents in $D_i$, the estimates for $\gamma^t_{d,i}$ should be better than they might be in the documents $D \setminus D_i$. (Of course, since the terms $f^t_{d,j}$ involve all the variables $\gamma^t_{d,i}$, it is not a priori clear that this modification will gain us much, but we will prove that it in fact does.) Formally, we discuss the following three modifications of variational inference (we call them tEM, for thresholded EM): 
\begin{itemize} 
\item \begin{algorithm}
\caption{KL-tEM}
\begin{algorithmic}
\State (E-step) Solve the following convex program for each document $d$: 
$$\min_{\gamma^t_{d,i}} \sum_j \tilde{f}_{d,j} \log(\frac{\tilde{f}_{d,j}}{f^t_{d,j}})$$ s.t. 
\State (1): $\gamma^t_{d,i} \geq 0$,  $\sum_i \gamma^t_{d,i} = 1$ and $\gamma^t_{d,i}=0$ if $i$ does not belong to document $d$
\State (M-step) Let $D_i$ to be the set of documents $d$, s.t. $\gamma^t_{d,i} > \gamma^t_{d,i'}, \forall i' \neq i$. 
\State  Set $ \beta_{i,j}^{t+1} = \beta_{i,j}^t \frac{\sum_{d \in D_i} \frac{\tilde{f}_{d,j}}{f_{d,j}^t} \gamma_{d,i}^t}{\sum_{d \in D_i} \gamma_{d,i}^t} $
\end{algorithmic}
\end{algorithm} 
\item \begin{algorithm} 
\caption{Iterative tEM}
\begin{algorithmic}
\State (E-step) Initialize $\gamma_{d,i}$ uniformly among the topics in the support of document $d$. 
\State Repeat		
    \begin{equation} 
	  \label{eq:gammaiterative}
    \gamma_{d,i} = \gamma_{d,i} \sum_{j=1}^N \frac{\tilde{f}_{d,j}}{f_{d,j}} \beta_{i,j}^t 
    \end{equation} 
    until convergence.
   \State (M-step)  Same as above.
\end{algorithmic}
\end{algorithm}
\item \begin{algorithm}
\caption{Incomplete tEM}
\begin{algorithmic} 
 \State (E-step) Initialize $\gamma_{d,i}$ with the values gotten in the previous iteration, just perform one step of $~\ref{eq:gammaiterative}$. 
 	\State (M-step) Same as before. 
  \end{algorithmic}
\end{algorithm} 
\end{itemize} 
\section{Initializations} 

We will consider two different strategies for initialization. 

First, we will consider the case where we initialize with the topic-word matrix, and the document priors having the correct support. The analysis of tEM in this case will be the cleanest. While the main focus of the paper is tEM, we'll show that this initialization can actually be done for our case efficiently. 

Second, we will consider an initialization that is inspired by what the current LDA-c implementation uses. Concretely, we'll assume that the user has some way of finding, for each topic $i$, a \emph{seed document} in which the proportion of topic $i$ is at least $C_l$. Then, when initializing, one treats this document as if it were pure: namely one sets $\beta^0_{i,j}$ to be the fractional count of word $j$ in this document. We do not attempt to design an algorithm to find these documents. 

\section{Case study 1: Sparse topic priors, support initialization} 
\label{secn:sparse}
We start with a simple case. As mentioned, all of our results only hold in the long documents regime: we will assume for each document $d$, the number of sampled words is large enough, so that one can approximate the expected frequencies of the words, i.e., one can find values $\gamma_{d,i}^{*}$, such that $ \tilde{f}_{d,j} = (1 \pm \epsilon) \sum_{i=1}^K \gamma_{d,i}^{*} \beta_{i,j}^{*}$. We'll split the rest of the assumptions into those that apply to the topic-word
matrix, and the topic priors. 
Let's first consider the assumptions on the topic-word matrix. We will impose conditions that ensure the topics don't overlap too much. Namely, we assume: 

\begin{itemize} 
\item \emph{Words are discriminative}: Each word appears in $o(K)$ topics. 
\item \emph{Almost disjoint supports}: $\forall i,i'$, if the intersection of the supports of $i$ and $i'$ is S, 
$ \sum_{j \in S} \beta_{i,j}^* \leq o(1) \cdot \sum_{j} \beta_{i,j}^*$.
\end{itemize} 

We also need assumptions on the topic priors. The documents will be sparse, and all topics will be roughly equally likely to appear. There will be virtually no dependence between the topics: conditioning on the size or presence of a certain topic will not influence much the probability of another topic being included. These are analogues of distributions that have been analyzed for dictionary learning~\cite{aroradictionary1}. Formally:  
\begin{itemize} 
\item \emph{Sparse and gapped documents}: Each of the documents in our samples has at most $T = O(1)$ topics. Furthermore, for each document $d$, the largest topic $i_0 = \text{argmax}_i \gamma^*_{d,i}$ is such that for any other topic $i'$, $\gamma^*_{d,i'}-\gamma^*_{d,i_0} > \rho$ for some (arbitrarily small) constant  $\rho$. 
\item \emph{Dominant topic equidistribution}: The probability that topic $i$ is such that $\gamma^*_{d,i} > \gamma^*_{d,i'}, \forall i' \neq i$ is $\Theta(1/K)$. 
\item \emph{Weak topic correlations and independent topic inclusion}: For all sets S with $o(K)$ topics, it must be the case that: 
$ \mathbf{E} [\gamma_{d,i}^* | \gamma_{d,i}^* \text{ is dominating} ] = $
$ (1 \pm o(1)) \mathbf{E} [\gamma_{d,i}^* | \gamma_{d,i}^* \text{ is dominating}, \gamma_{d,i' }^* =0, i' \in S].$ 
Furthermore, for any set $S$ of topics, s.t. $|S| \leq T-1$, 
		$ \Pr[\gamma^*_{d,i} > 0 | \gamma^*_{d,i'} \forall i' \in S] =  \Theta(\frac{1}{K}) $
\end{itemize}

These assumptions are a less smooth version of properties of the Dirichlet prior. Namely, it's a folklore result that Dirichlet draws are sparse with high probability, for a certain reasonable range of parameters. This was formally proven by \cite{Telgarsky} - though sparsity there means a small number of large coordinates. It's also well known that Dirichlet essentially cannot enforce any correlation between different topics. \footnote{We show analogues of the weak topic correlations property and equidistribution in the supplementary material for completeness sake.} 

The above assumptions can be viewed as a \emph{local} notion of separability of the model, in the following sense. First, consider a particular document $d$. For each topic $i$ that participates in that document, consider the words $j$, which only appear in the support of topic $i$ in the document. In some sense, these words are \emph{local anchor words} for that document: these words appear only in one topic of that document. 
Because of the "almost disjoint supports" property, there will be a decent mass on these words in each document. Similarly, consider a particular non-zero element $\beta_{i,j}^*$ of the topic-word matrix.  Let's call $D_l$ the set of documents where $\beta_{i',j}^* = 0$ for all other topics $i' \neq i$ appearing in that document. These documents are like \emph{local anchor documents} for that word-topic pair: in those documents, the word appears as part of only topic $i$. It turns out the above properties imply there is a decent number of these for any word-topic pair.   

Finally, a technical condition: we will also assume that all nonzero $\gamma^*_{d,i}, \beta^*_{i,j}$ are at least $\frac{1}{poly(N)}$. Intuitively, this means if a topic is present, it needs to be reasonably large, and similarly for words in topics. Such assumptions also appear in the context of dictionary learning~\cite{aroradictionary1}.

We will prove the following
\begin{thm} 
\label{t:sparsesupport} 
Given an instance of topic modelling satisfying the properties specified above, where the number of documents is $\Omega(\frac{K \log^2 N}{\epsilon^2})$, if we initialize the supports of the $\beta^t_{i,j}$ and $\gamma^t_{d,i}$ variables correctly, after $O\left(\log(1/\epsilon') + \log N\right)$ KL-tEM, iterative-tEM updates or incomplete-tEM updates, we recover the topic-word matrix and topic proportions to multiplicative accuracy $1+\epsilon'$, for any $\epsilon'$ s.t. $1+\epsilon' \leq \frac{1}{(1-\epsilon)^7}$.
\end{thm}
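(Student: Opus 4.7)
The plan is to track the multiplicative log-error $\eta^t := \max_{i,j \,:\, \beta^*_{i,j}>0} |\log(\beta^t_{i,j}/\beta^*_{i,j})|$ and prove a one-step contraction $\eta^{t+1} \leq c\,\eta^t + O(\epsilon)$ for some fixed $c<1$. Since all nonzero entries are $\geq 1/\mathrm{poly}(N)$, the initial error is $\eta^0 = O(\log N)$, and iterating the recursion $O(\log N + \log(1/\epsilon'))$ times drives $\eta^t$ to $O(\epsilon)$, equivalent to multiplicative accuracy $1 + O(\epsilon) \leq 1/(1-\epsilon)^7$. The same scheme will bound the multiplicative error in the $\gamma^t_{d,i}$'s in parallel; the KL--tEM, iterative-tEM, and incomplete-tEM variants all feed into the same recursion, with extra care needed in the latter two to handle inexact optimization of the E-step.

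For the E-step I would use the KKT/stationarity condition $\sum_j (\tilde f_{d,j}/f^t_{d,j})\,\beta^t_{i,j} = \lambda_d$ for each topic $i$ in the support of $d$. Since each document has $T=O(1)$ topics and the topic supports are almost disjoint, the almost-disjoint-support assumption gives each topic $i$ in document $d$ a set of \emph{local anchor words} $A_{d,i}$ carrying all but an $o(1)$ fraction of topic $i$'s mass, on which $\tilde f_{d,j}\approx\gamma^*_{d,i}\beta^*_{i,j}$ and $f^t_{d,j}\approx\gamma^t_{d,i}\beta^t_{i,j}$. Because these anchor-word contributions dominate the stationarity sum and telescope into a single ratio $\gamma^*_{d,i}\beta^*_{i,j}/(\gamma^t_{d,i}\beta^t_{i,j})$ averaged against $\beta^t_{i,j}$, one obtains $\gamma^t_{d,i}/\gamma^*_{d,i} = 1 \pm O(\epsilon + \eta^t)$. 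For incomplete-tEM I would additionally show that one multiplicative step already halves the log-distance to the stationary $\gamma$, using the log-convexity of the objective on the constant-dimensional simplex (its support is restricted to $\leq T$ topics by the initialization assumption).

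For the M-step I would expand
$$\beta^{t+1}_{i,j} = \beta^t_{i,j}\cdot\frac{\sum_{d\in D_i}(\tilde f_{d,j}/f^t_{d,j})\,\gamma^t_{d,i}}{\sum_{d\in D_i}\gamma^t_{d,i}}$$
and split $D_i = D_l \sqcup (D_i\setminus D_l)$, where $D_l$ is the set of \emph{local anchor documents} for the pair $(i,j)$ (no other topic appearing in $d$ contains $j$). On $D_l$, $\tilde f_{d,j}/f^t_{d,j}$ equals $\gamma^*_{d,i}\beta^*_{i,j}/(\gamma^t_{d,i}\beta^t_{i,j})$ up to the $\epsilon$ sampling slack, so the $D_l$-contribution to the numerator telescopes to $(1\pm O(\epsilon))\,\beta^*_{i,j}\sum_{d\in D_l}\gamma^*_{d,i}$. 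This rewrites the update as a convex-like combination
$$\beta^{t+1}_{i,j} = w_1\,\beta^*_{i,j}(1\pm O(\epsilon)) + w_2\cdot(\text{term bounded by }e^{\eta^t}\beta^*_{i,j}),$$
with $w_1 = \sum_{d\in D_l}\gamma^*_{d,i}/\sum_{d\in D_i}\gamma^t_{d,i}$. The independent-topic-inclusion and weak-correlation assumptions (applied to the $o(K)$-size set of topics containing $j$) give $|D_l|/|D_i|=\Theta(1)$ and $\mathbf{E}[\gamma^*_{d,i}\mid d\in D_l]\approx\mathbf{E}[\gamma^*_{d,i}\mid d\in D_i]$, so $w_1$ is bounded away from zero. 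Chernoff concentration over the $\Omega(K\log^2 N/\epsilon^2)$ documents turns these expectations into empirical statements up to the $\epsilon$ slack, giving the desired contraction of $\eta^t$.

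The main obstacle is controlling the $D_i\setminus D_l$ summands, where $j$ appears in several topics of the same document and the ratio $\tilde f_{d,j}/f^t_{d,j}$ is a multi-topic weighted combination in which multiplicative errors from several $\beta^t_{i',j}$ entries appear simultaneously. I would bound this ratio uniformly by $e^{\eta^t}$ times a quantity depending only on ground-truth parameters, so that its contribution enters the update with the small weight $w_2 = 1-w_1$; combined with the $w_1$-weighted ground-truth term, this produces a contraction constant $c<1$ determined by the gap $1-w_2$ coming from the local-anchor structure. Bookkeeping of the $\epsilon$ sampling error at every step is delicate but routine, and gives the final $(1-\epsilon)^{-7}$-type floor in the theorem statement.
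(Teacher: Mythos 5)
Your plan follows the same strategy as the paper's proof: exact cancellation of $\tilde f_{d,j}/f^t_{d,j}$ on ``lone'' (local anchor) words and documents, the KKT condition $\sum_j(\tilde f_{d,j}/f^t_{d,j})\beta^t_{i,j}=1$ for the E-step, writing the M-step as a convex combination with a $1-o(1)$ weight on the ground-truth term, and concentration over the $\Omega(K\log^2N/\epsilon^2)$ documents to control that weight. However, the central bookkeeping device --- a single symmetric log-error $\eta^t=\max|\log(\beta^t_{i,j}/\beta^*_{i,j})|$ satisfying a uniform one-step contraction $\eta^{t+1}\leq c\,\eta^t+O(\epsilon)$ --- does not close in the initial regime where $\eta^t=\Theta(\log N)$. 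The obstruction is exactly the non-lone term you identify as the main obstacle: to bound $\tilde f_{d,j}/f^t_{d,j}$ there you need a \emph{lower} bound on $f^t_{d,j}=\sum_{i'}\gamma^t_{d,i'}\beta^t_{i',j}$, and if the only available lower bound is $f^t\geq e^{-2\eta^t}f^*$ (which is all the symmetric quantity $\eta^t$ gives you), the non-lone contribution is of order $w_2\,e^{3\eta^t}\beta^*_{i,j}$, whose logarithm is $3\eta^t+\log w_2$. For $\eta^t$ larger than a constant multiple of $\log(1/w_2)$ this \emph{exceeds} $\eta^t$, so the claimed recursion can fail to decrease at all. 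The paper's Phase~I/Phase~II split is precisely the fix: it treats the lower and upper multiplicative errors asymmetrically, showing first that the lower bounds on $\beta^t$ and $\gamma^t$ snap to absolute constants after a single iteration (because on lone documents $f^t_{d,j}=\gamma^t_{d,i}\beta^t_{i,j}$ exactly, so the update there reproduces $(1\pm\epsilon)\beta^*_{i,j}$ independently of how bad $\beta^t$ was), and only then uses those constant lower bounds to show the upper ratio $C^t_\beta$ halves each round for $O(\log N)$ rounds; the genuine contraction $C^{t+1}=(C^t)^{1/2}$ kicks in only once both bounds are constants. Your scheme can be repaired by tracking upper and lower multiplicative errors separately, but as stated the recursion is not valid.

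A second, smaller omission: the M-step sums only over $D_i=\{d:\gamma^t_{d,i}>\gamma^t_{d,i'}\ \forall i'\}$, and your weight computation $w_1=\sum_{D_l}\gamma^*_{d,i}/\sum_{D_i}\gamma^t_{d,i}$ silently converts $\sum_{D_i}\gamma^t$ into $\sum_{D_i}\gamma^*$ and invokes the dominant-topic equidistribution and weak-correlation assumptions, which are stated conditional on topic $i$ being \emph{truly} dominant. You therefore need a separate lemma (the paper's first step) showing that, with correct supports, the arg-max of $\gamma^t_{d,\cdot}$ identifies the truly dominant topic and that $\tfrac12\gamma^*_{d,i}\leq\gamma^t_{d,i}\leq2\gamma^*_{d,i}$ for it --- which does follow from your own telescoping argument on lone words, since that argument gives $\gamma^t_{d,i}=(1\pm o(1))\gamma^*_{d,i}$ regardless of the accuracy of $\beta^t$, but it must be stated and used before the M-step analysis rather than after.
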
 

\begin{thm} 
\label{t:initializesparse} 
If the number of documents is $\Omega(K^4 \log^2 K)$, there is a polynomial-time procedure which with probability $1-\Omega(\frac{1}{K})$ correctly identifies the supports of the $\beta^*_{i,j}$ and $\gamma^*_{d,i}$ variables. 
\end{thm}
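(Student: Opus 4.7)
The plan is to use an overlapping clustering procedure in the spirit of \cite{aroradictionary2}. For each pair of documents $d_1, d_2$, I would compute the similarity
$\mathrm{sim}(d_1,d_2) = \sum_j \tilde f_{d_1,j}\tilde f_{d_2,j}$, which up to the small sampling error equals $\sum_{i',i''} \gamma^{*}_{d_1,i'}\gamma^{*}_{d_2,i''}\langle\beta^{*}_{i'},\beta^{*}_{i''}\rangle$. The almost-disjoint-supports assumption makes the off-diagonal ($i'\neq i''$) cross terms small relative to the on-diagonal terms, so $\mathrm{sim}(d_1,d_2)$ is large precisely when $d_1$ and $d_2$ share at least one topic. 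Thresholding at an appropriate level produces a ``shared-topic'' graph $G$ in which each topic $i$ induces a clique $C_i$ of the $\Theta(D/K)$ documents containing $i$; since each document has at most $T = O(1)$ topics, each vertex of $G$ lies in at most $T$ such cliques, and the $C_i$ can be extracted by enumerating all $O(T)$-cliques of $G$ and growing them to maximality. The assumed $D = \Omega(K^4\log^2 K)$ gives $|C_i| = \Omega(K^3\log^2 K)$ with high probability, by dominant-topic equidistribution and a Chernoff bound.

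Given the clusters, for each $C_i$ I would compute the average empirical frequency $\hat\beta_i := \frac{1}{|C_i|}\sum_{d\in C_i}\tilde f_d$. By Hoeffding this concentrates coordinate-wise around $\mathbf{E}[\gamma^{*}_{d,i}\mid i\in d]\,\beta^{*}_i + \sum_{i'\neq i}\mathbf{E}[\gamma^{*}_{d,i'}\mid i\in d]\,\beta^{*}_{i'}$. The weak-correlations assumption bounds $\mathbf{E}[\gamma^{*}_{d,i'}\mid i\in d] = O(1/K)$ for $i'\neq i$, while $\mathbf{E}[\gamma^{*}_{d,i}\mid i\in d] = \Omega(1/T)$; combined with the discriminative-words assumption (at most $o(K)$ of the $\beta^{*}_{\cdot,j}$ are nonzero for any fixed $j$), on $\mathrm{supp}(\beta^{*}_i)$ the signal is $\Omega(\beta^{*}_{i,j})$, whereas on any $j\notin\mathrm{supp}(\beta^{*}_i)$ the contamination is $o(1)\cdot\max_{i'}\beta^{*}_{i',j}$. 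The $1/\mathrm{poly}(N)$ lower bound on nonzero $\beta^{*}_{i,j}$ then lets an appropriate coordinate-wise threshold on $\hat\beta_i$ recover $\mathrm{supp}(\beta^{*}_i)$ exactly.

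With all $\mathrm{supp}(\beta^{*}_i)$ in hand, the support of each $\gamma^{*}_d$ is recovered by brute-force enumeration over all $\binom{K}{T} = \mathrm{poly}(K)$ subsets $S$ of topics of size at most $T$, retaining those with $\{j:\tilde f_{d,j}>0\}\subseteq\bigcup_{i\in S}\mathrm{supp}(\beta^{*}_i)$. The true $S_d$ is the unique minimal such cover, because by almost-disjoint supports any alternative $S'\neq S_d$ of size at most $T$ would leave uncovered the local anchor words of some true topic; by the $1/\mathrm{poly}(N)$ lower bound on $\gamma^{*}_{d,i}\beta^{*}_{i,j}$ these local anchor words appear in $d$ with overwhelming probability.

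The main obstacle I anticipate is the similarity analysis in step 1: the almost-disjoint-supports bound is only $o(1)$ rather than a concrete small constant, so the threshold separating shared-topic from disjoint pairs must be tuned carefully, and the finite-sample noise in $\tilde f_{d,j}$ must be controlled uniformly over the $O(D^2)$ pairs. This union bound, together with the $K$-wise union bound in step 2, drives the sample complexity $D = \Omega(K^4\log^2 K)$ and pins the total failure probability at $O(1/K)$.
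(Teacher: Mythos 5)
Your overall architecture differs from the paper's: the paper builds a pairwise test $\mathrm{Test}(d,d')$ based on $\sum_j \min\{\tilde f_{d,j},\tilde f_{d',j}\}\geq \frac{1}{2T}$ (no false positives), finds ``identifying pairs'' intersecting in exactly one topic, recovers each topic's support by a WEEDOUT step that \emph{intersects} the word supports of all documents passing the test with both members of the pair, and then filters candidate sets by frequency, union-decomposability, and maximality; document supports are then found greedily. Your route (inner-product similarity, clique clustering, within-cluster averaging) is genuinely different, but it has two gaps that I do not think can be repaired under the paper's hypotheses. First, the similarity $\sum_j \tilde f_{d_1,j}\tilde f_{d_2,j}$ is an $\ell_2$-type quantity, while the almost-disjoint-supports assumption only controls the $\ell_1$ mass $\sum_{j\in S}\beta^*_{i',j}=o(1)$ on an overlap $S$. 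The diagonal term $\|\beta^*_i\|_2^2$ can be as small as $1/N$ (a topic nearly uniform on its support), while a cross term $\langle\beta^*_{i'},\beta^*_{i''}\rangle$ for two topics sharing a single word of probability, say, $1/\log N$ in each is $1/\log^2 N\gg o(1)/N$. So shared-topic and disjoint pairs are not separated by any threshold on this statistic; this is precisely why the paper uses the $\min$-based ($\ell_1$-type) test, which matches the $\ell_1$-type assumption. Second, the coordinate-wise threshold on $\hat\beta_i$ fails for the same magnitude reason: on $j\in\mathrm{supp}(\beta^*_i)$ the signal is $\Omega(\beta^*_{i,j})$, which may be as small as $1/\mathrm{poly}(N)$, while on $j\notin\mathrm{supp}(\beta^*_i)$ the contamination is $o(1)\cdot\max_{i'}\beta^*_{i',j}$, which can be $\Omega(1)\cdot o(1)$; no threshold separates these uniformly over $i$ and $j$. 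The paper sidesteps magnitudes entirely by using exact support intersections (in the long-document limit $\tilde f_{d,j}=0$ iff $f^*_{d,j}=0$), which is the key idea your proposal is missing.

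A smaller but still real issue is the clique-extraction step: documents that pairwise share \emph{some} topic need not share a \emph{common} topic, so maximal cliques of the shared-topic graph do not correspond to topics, and ``growing $O(T)$-cliques to maximality'' is not a well-defined recovery of the sets $C_i$. The paper avoids this by never clustering documents at all; it only needs, for each topic, one pair of documents whose intersection is exactly that topic, and it certifies such pairs a posteriori by the frequency/union/maximality filters. Your final step (minimal-cover enumeration for document supports) is essentially sound and close in spirit to the paper's greedy scoring argument.
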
 

\noindent \textbf{Provable convergence of tEM:} 
The correctness of the tEM updates is proven in 3 steps: 
\begin{itemize}
\item \emph{Identifying dominating topic}: First, we prove that if $\gamma^t_{d,i}$ is the largest one among all topics in the document, topic $i$ is actually the largest topic. 
\item \emph{Phase I: Getting constant multiplicative factor estimates}: After initialization, after $O(\log N)$ rounds, we will get to variables $\beta^t_{i,j}$, $\gamma^t_{d,i}$ which are within a constant multiplicative factor from $\beta^*_{i,j}$, $\gamma^*_{d,i}$.  
\item \emph{Phase II (Alternating minimization - lower and upper bound evolution)}: Once the $\beta$ and $\gamma$ estimates are within a constant factor of their true values, we show that the lone words and documents have a \emph{boosting} effect: they cause the multiplicative upper and lower bounds to improve at each round. 
\end{itemize} 

The updates we are studying are multiplicative, not additive in nature, and the objective they are optimizing is non-convex, so the standard techniques do not work. The intuition behind our proof in Phase II can be described as follows. Consider one update for one of the variables, say $\beta^t_{i,j}$. We show that $\beta^{t+1}_{i,j} \approx \alpha \beta^*_{i,j} + (1-\alpha) C^t \beta^*_{i,j}$ for some constant $C^t$ at time step $t$. $\alpha$ is something fairly large (one should think of it as $1-o(1)$), and comes from the existence of the local anchor documents. A similar equation holds for the $\gamma$ variables, in which case the ``good'' term comes from the local anchor words.
Furthermore, we show that the error in the $\approx$ decreases over time, as does the value of $C^t$, so that eventually we can reach $\beta^*_{i,j}$. 
The analysis bears a resemblance to the \emph{state evolution} and \emph{density evolution} methods in error decoding algorithm analysis - in the sense that we maintain a quantity about the evolving system, and analyze how it evolves under the specified iterations. The quantities we maintain are quite simple - upper and lower multiplicative bounds on our estimates at any round $t$. 
%We think that this is very natural for these updates: they are inherently multiplicative in nature, and the %update involves ratios of the quantities involving the ground truth values, and our estimates.   
\\

\noindent \textbf{Initialization:} 
Recall the goal of this phase is to recover the supports - i.e. to find out which topics are present in a document, and identify the support of each topic.
We will find the topic supports first. This uses an idea inspired by \cite{aroradictionary2} in the setting of dictionary learning. Roughly, we devise a test, which will take as input two documents $d$, $d'$, and will try to determine if the two documents have a topic in common or not. The test will have no false positives, i.e., will never say YES, if the documents don't have a topic in common, but might say NO even if they do. We then ensure that with high probability, for each topic we find a pair of documents intersecting in that topic, such that the test says YES.   
\footnote{The detailed initialization algorithm is included in the supplementary material.}

\section{Case study 2: Dominating topics, seeded initialization} 
\label{secn:small-range}
Next, we'll consider an initialization which is essentially what the current implementation of LDA-c uses. Namely, we will call the following initialization a \emph{seeded} initialization: 

\begin{itemize}
\item For each topic $i$, the user supplies a document $d$, in which $\gamma^*_{d,i} \geq C_l$. 
\item We treat the document as if it only contains topic $i$ and initialize with $\beta^0_{i,j} = f^*_{d,j}$.  
\end{itemize} 

We show how to modify the previous analysis to show that with a few more assumptions, this strategy works as well.
Firstly, we will have to assume anchor words, that make up a decent fraction of the mass of each topic. Second, we also assume that the words have a bounded \emph{dynamic range}, i.e. the values of a word in two different topics are within a constant $B$ from each other. The documents are still gapped, but the gap now must be larger. Finally, in roughly $1/B$ fraction of the documents where topic $i$ is dominant, that topic has proportion $1-\delta$, for some small (but still constant) $\delta$. A similar assumption (a small fraction of almost pure documents) appeared in a recent paper by \cite{kannantopic}.  
Formally, we have:  
\begin{itemize} 
\item \emph{Small dynamic range and large fraction of anchors}: For each discriminative words, if $\beta^*_{i,j} \neq 0$ and $\beta^*_{i',j} \neq 0$, $\beta^*_{i,j} \leq B \beta^*_{i',j}$. Furthermore, each topic $i$ has anchor words, such that their total weight is at least $p$. 
\item \emph{Gapped documents}: In each document, the largest topic has proportion at least $C_l$, and all the other topics are at most $C_s$, s.t. 
$$ C_l - C_s \geq \frac{1}{p} \left(\sqrt{2 \left(p \log (\frac{1}{C_l}) + (1-p) \log ({B}{C_l})\right)} + \sqrt{\log(1+\epsilon)}\right) + \epsilon$$
\item \emph{Small fraction of $1-\delta$ dominant documents}: Among all the documents where topic $i$ is dominating, in a $8/B$ fraction of them, $\gamma^*_{d,i} \geq 1 - \delta$, where 
$$\delta := \min\left( \frac{C^2_l}{2B^3} - \frac{1}{p} \left(\sqrt{2 \left(p \log (\frac{1}{C_l}) + (1-p) \log ({B}{C_l})\right)} + \sqrt{\log(1+\epsilon)}\right) - \epsilon, 1-\sqrt{C_l}\right) $$

\end{itemize} 

The dependency between the parameters $B,p,C_l$ is a little difficult to parse, but if one thinks of $C_l$ as $1-\eta$ for $\eta$ small, and $p \geq 1- \frac{\eta}{\log B}$, since $\log(\frac{1}{C_l}) \approx 1+ \eta$, roughly we want that $C_l - C_s \gg \frac{2}{p} \sqrt{\eta}$. (In other words, the weight we require to have on the anchors depends only \emph{logarithmically} on the range $B$.) In the documents where the dominant topic has proportion $1-\delta$, a similar reasoning as above gives that we want is approximately $\displaystyle \gamma^*_{d,i} \geq 1 - \frac{1-2\eta}{2B^3} + \frac{2}{p} \sqrt{\eta}$. The precise statement is as follows: 

\begin{thm} Given an instance of topic modelling satisfying the properties specified above,  where the number of documents is $\Omega(\frac{K \log^2 N}{\epsilon^2})$, if we initialize with seeded initialization, after $O\left(\log(1/\epsilon') + \log N\right)$ of KL-tEM updates, we recover the topic-word matrix and topic proportions to multiplicative accuracy $1+\epsilon'$, if $1+\epsilon' \geq \frac{1}{(1-\epsilon)^7}$. 
\label{t:largeanchors}
\end{thm}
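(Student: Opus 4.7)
The plan is to mirror the three-phase structure used for Theorem~\ref{t:sparsesupport}: first show that even from the very crude seeded estimate $\beta^0$ the E-step assigns the largest coordinate of $\gamma^t_{d,\cdot}$ to the true dominant topic of each document; second, use the guaranteed $8/B$ fraction of $(1-\delta)$-dominant documents per topic to boost $\beta^t$ to a constant multiplicative factor of $\beta^*$ within $O(\log N)$ rounds; third, invoke the Phase~II alternating minimization argument almost verbatim. The new ingredient throughout is that the anchors of each topic (of total mass at least $p$) play the role that ``local anchor words'' played in Theorem~\ref{t:sparsesupport}, and the $(1-\delta)$-dominant documents play the role of ``local anchor documents''.

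For the dominant-topic identification step I would compare, for each document $d$ with true dominant topic $i$, the KL value achieved by placing the maximum of $\gamma$ on $i$ against the value achieved by placing it on any other topic $i'$. Decomposing $\sum_j \tilde f_{d,j} \log(\tilde f_{d,j}/f^t_{d,j})$ over the anchors of $i$, the anchors of $i'$, and everything else, the seeded $\beta^0$ produces a signal of order $p\log(1/C_l)$ on the anchors under the correct hypothesis and of order $p\log(1/(BC_l))$ under the incorrect one, while the small-dynamic-range condition controls all non-anchor contributions up to a factor $B$. The stated gap inequality $C_l - C_s \geq \frac{1}{p}\left(\sqrt{2(p\log(1/C_l)+(1-p)\log(BC_l))}+\sqrt{\log(1+\epsilon)}\right) + \epsilon$ is exactly the threshold at which the anchor signal strictly dominates the worst-case adversarial contamination. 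This KL comparison is the main obstacle in the proof, because it is the only place where the precise interaction between dynamic range, anchor mass and the gap really enters; every subsequent step is a structural modification of the argument for Theorem~\ref{t:sparsesupport}.

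Once the dominant topic has been correctly identified in each document, the M-step restricts the update of $\beta^{t+1}_{i,\cdot}$ to $D_i$. Inside $D_i$ I would isolate the $(1-\delta)$-dominant documents: in these documents $\tilde f_{d,j}$ is within a $1 \pm O(\delta B)$ multiplicative factor of $\beta^*_{i,j}$ on every word, so they contribute to the M-step numerator a term that tracks $\beta^*_{i,j}/\beta^t_{i,j}$ up to small error, while the remaining documents of $D_i$ contribute at most a factor of $B$ above $\beta^*_{i,j}$ by the dynamic-range assumption. Writing $\beta^{t+1}_{i,j}$ as a convex combination $\alpha\,\beta^*_{i,j} + (1-\alpha)\,C^t_{i,j}\,\beta^*_{i,j}$ with weight $\alpha = \Omega(1)$ coming from the $8/B$ fraction of $(1-\delta)$-dominant documents and with $|C^t_{i,j}| = O(B)$, one sees that the multiplicative error contracts by a constant factor per iteration, exactly as in Phase~I of Theorem~\ref{t:sparsesupport}. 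After $O(\log N)$ rounds we are within a constant factor of the truth, at which point Phase~II of Theorem~\ref{t:sparsesupport} applies directly; the only quantitative change is that $\alpha$ now depends on $p, B, \delta$ rather than the sparsity parameter $T$, and the sample complexity $\Omega(K\log^2 N/\epsilon^2)$ is exactly what ensures the empirical and expected KL objectives stay uniformly close throughout all the comparisons above.
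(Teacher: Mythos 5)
Your high-level plan captures the right intuition (anchors as the local anchor words, $(1-\delta)$-dominant documents as the local anchor documents, eventual reduction to the Phase~II machinery of Theorem~\ref{t:sparsesupport}), but it is missing the central difficulty of the seeded setting: \emph{support identification}. Seeded initialization sets $\beta^0_{i,j}=f^*_{d,j}$ for a seed document $d$, so $\beta^0_{i,j}>0$ for every word $j$ appearing in \emph{any} topic of that document, including many $j$ with $\beta^*_{i,j}=0$. Your contraction step, which writes $\beta^{t+1}_{i,j}=\alpha\,\beta^*_{i,j}+(1-\alpha)\,C^t_{i,j}\,\beta^*_{i,j}$, is vacuous for such entries (both terms vanish while $\beta^t_{i,j}$ does not), and the alternating-minimization phase of Theorem~\ref{t:sparsesupport} cannot be invoked until the supports are essentially correct, since lone words and lone documents are only defined relative to correct supports. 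The paper spends two full phases on exactly this: first showing that for an anchor word $j$ of topic $i$ the spurious entry $\beta^t_{i',j}$, $i'\neq i$, \emph{halves} at every round (by partitioning documents into those not containing topic $i$ at all, where $f^*_{d,j}=0$, plus two residual classes of relative size $O(1/K)$ by independent topic inclusion), and then bootstrapping from the identified anchors to drive $\beta^t_{i',j}$ to near zero whenever $\beta^*_{i',j}=0$ for discriminative words. The $O(\log N)$ rounds are needed for this decay to reach $o(1/poly(N))$, not for boosting the correct entries, and the $8/B$ fraction of $(1-\delta)$-dominant documents is used to \emph{maintain the lower bound} $\beta^t_{i,j}\geq \frac{C_l}{B}\beta^*_{i,j}$ on the true support throughout this decay, not to contract the multiplicative error.

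Your dominant-topic step is also not the paper's argument and, as stated, does not prove what is needed. The E-step returns the unconstrained minimizer of $KL(\tilde f_d\,||\,f_d)$ over the simplex and the algorithm thresholds its largest coordinate; comparing objective values ``under the hypothesis that the maximum sits on $i$'' versus ``on $i'$'' is not a well-defined comparison for that procedure. The paper instead proves an $\ell_1$ stability statement: Pinsker's inequality converts the bounds $KL(\beta^*_i||\beta^t_i)\leq R_\beta$ and $\min_{\gamma}KL(\tilde f_d||f_d)\leq R_f$ into $\ell_1$ bounds, and the key expansion lemma $||\beta^*v||_1\geq p\,||v||_1$ (a direct consequence of anchor mass $p$) then yields $||\gamma^*_d-\gamma^t_d||_1\leq \frac{1}{p}\bigl(\sqrt{R_\beta/2}+\sqrt{R_f/2}\bigr)+\epsilon$; the stated gap condition on $C_l-C_s$ is precisely what makes this $\ell_1$ error smaller than the gap between the dominant topic and the rest. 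You would need this expansion lemma, or an equivalent, to make your identification step rigorous.
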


The proof is carried out in a few phases: 
\begin{itemize} 
\item \emph{Phase I: Anchor identification}: We show that as long as we can identify the dominating topic in each of the documents, anchor words will make progress: after $O(\log N)$ number of rounds, the values for the topic-word estimates will be almost zero for the topics for which word $w$ is not an anchor. For topic for which a word is an anchor we'll have a good estimate. 
\item \emph{Phase II: Discriminative word identification}:  After the anchor words are properly identified in the previous phase, if $\beta^*_{i,j} = 0$, $\beta^t_{i,j}$ will keep dropping and quickly reach almost zero. The values corresponding to $\beta^*_{i,j} \neq 0$ will be decently estimated. 
\item \emph{Phase III: Alternating minimization}: After Phase I and II above, we are back to the scenario of the previous section: namely, there is improvement in each next round. 
\end{itemize}  

During Phase I and II the intuition is the following: due to our initialization, even in the beginning, each topic is "correlated" with the correct values. 
In a $\gamma$ update, we are minimizing $KL(\tilde{f}_{d} || f_{d})$ with respect to the $\gamma_{d}$ variables, so we need a way to argue that whenever the $\beta$ estimates are not too bad, minimizing this quantity provides an estimate about how far the optimal $\gamma_d$ variables are from $\gamma^*_d$.  
We show the following useful claim: 

\begin{lem} If, for all topics $i$, $KL(\beta^*_{i} || \beta^t_{i}) \leq R_{\beta}$, and $min_{\gamma_d \in \Delta_{K}} KL(\tilde{f}_{d,j} || f_{d,j}) \leq R_{f}$, after running a KL divergence minimization step with respect to the $\gamma_d$ variables, we get that $|| \gamma^*_{d} - \gamma_d ||_1 \leq \frac{1}{p}(\sqrt{\frac{1}{2} R_{\beta}} + \frac{1}{2} \sqrt{R_{f}}) + \epsilon$. 
\end{lem}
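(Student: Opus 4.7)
The plan is to relate the $\ell_1$ error in $\gamma$ to the given KL bounds by introducing an intermediate ``hybrid'' distribution that uses the current $\gamma_d$ together with the ground-truth $\beta^*$, and then extracting the per-topic $\gamma$ error using the anchor-word structure guaranteed by the large-fraction-of-anchors assumption.

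Concretely, the first step would be to collect three $\ell_1$-type inequalities. Applying Pinsker's inequality to the hypothesis $KL(\tilde f_d \,\|\, f_d) \le R_f$ bounds $\|\tilde f_d - f_d\|_1$ in terms of $\sqrt{R_f}$. Applying Pinsker to each $KL(\beta^*_i \,\|\, \beta^t_i) \le R_\beta$, combined with the joint convexity of KL divergence, bounds $\|g_d - f_d\|_1$ in terms of $\sqrt{R_\beta}$, where I define the hybrid distribution $g_{d,j} := \sum_i \gamma_{d,i} \beta^*_{i,j}$. Finally, the long-document assumption $\tilde f_{d,j} = (1\pm\epsilon) f^*_{d,j}$ directly yields $\|\tilde f_d - f^*_d\|_1 \le \epsilon$. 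Chaining these three via the triangle inequality produces a bound on $\|g_d - f^*_d\|_1$ matching the form $\sqrt{R_\beta/2} + \tfrac{1}{2}\sqrt{R_f} + \epsilon$ (up to the exact constants coming out of Pinsker).

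The last step would use the anchor-word structure to translate this $\ell_1$ bound on $g_d - f^*_d$ into an $\ell_1$ bound on $\gamma_d - \gamma^*_d$. Let $A_i$ denote the set of anchor words of topic $i$. By definition, for $j \in A_i$, $\beta^*_{i',j} = 0$ for every $i' \neq i$, so the hybrid has the clean identity
\[
g_{d,j} - f^*_{d,j} \;=\; (\gamma_{d,i} - \gamma^*_{d,i})\, \beta^*_{i,j}, \qquad j \in A_i.
\]
Summing over $j \in A_i$ and using $\sum_{j \in A_i} \beta^*_{i,j} \ge p$ gives $|\gamma_{d,i} - \gamma^*_{d,i}| \cdot p \le \sum_{j \in A_i} |g_{d,j} - f^*_{d,j}|$. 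Because the anchor sets $A_i$ are disjoint across topics, summing this over $i$ yields $p\, \|\gamma_d - \gamma^*_d\|_1 \le \|g_d - f^*_d\|_1$, and dividing by $p$ completes the argument.

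The main obstacle is choosing the correct intermediate distribution. Attempting to apply the anchor-word extraction directly to $f_d - f^*_d$ produces contaminating terms $\sum_{i' \neq i} \gamma_{d,i'} \beta^t_{i',j}$ on anchor words of topic $i$, which then have to be absorbed by a more delicate bookkeeping argument using Pinsker on $\beta^*_{i'}$ versus $\beta^t_{i'}$ (these are small but the argument is messier and weakens the constants). The hybrid $g_{d,j}$ cleanly isolates the $\gamma$-error along the true topic $\beta^*_i$, leaving only the already-quantified ``$\beta^t$ vs $\beta^*$'' discrepancy to be swept into the triangle inequality, which is why it gives exactly the shape of the claimed bound.
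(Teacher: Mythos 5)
Your proposal is correct and follows essentially the same route as the paper: your hybrid distribution $g_{d,j}=\sum_i\gamma_{d,i}\beta^*_{i,j}$ is exactly the intermediate quantity $\beta^*\tilde{\gamma}$ in the paper's proof of Lemma~\ref{l:klestimate}, the same two applications of Pinsker plus the triangle inequality and the $\epsilon$-accuracy of the empirical counts give the bound on $\|g_d-f^*_d\|_1$, and your anchor-word extraction is precisely the paper's Lemma~\ref{l:expansiongap} ($\|\beta^*v\|_1\geq p\|v\|_1$) applied to $v=\gamma_d-\gamma^*_d$. The only cosmetic difference is that you invoke joint convexity of KL where the paper uses the induced $\ell_1$ matrix-norm bound $\|(\beta^t-\beta^*)\gamma\|_1\leq\max_i\|\beta^t_i-\beta^*_i\|_1$; both yield the same $\sqrt{R_\beta/2}$ term.
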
 

This lemma critically uses the existence of anchor words - namely we show $\displaystyle ||\beta^* v||_1 \geq p ||v||_1$. 
Intuitively, if one thinks of $v$ as $\gamma^* - \gamma^t$, $||\beta^* v ||_1$ will be large if $||v||_1$ is large. 
Hence, if $||\beta^* - \beta^t||_1$ is not too large, 
whenever $||f^* - f^t||_1$ is small, so is $||\gamma^* - \gamma^t||_1$. 
We will be able to maintain $R_{\beta}$ and $R_{f}$ small enough throughout the iterations, so that we can identify the largest topic in each of the documents.

\section{On common words} 
%\vspace{-10mm}
We briefly remark on \emph{common words}: words such that 
$\beta^*_{i,j} \leq \kappa \beta^*_{i',j}, \forall i,i'$, $\kappa \leq B$.  
In this case, the proofs above, as they are, will not work, \footnote{We stress we want to analyze whether variational inference will work or not. Handling common words algorithmically is easy: they can be detected and "filtered out" initially. Then we can perform the variational inference updates over the rest of the words only. This is in fact often done in practice.} since common words do not have any lone documents. 
However, if $1-\frac{1}{\kappa^{100}}$ fraction of the documents where topic $i$ is dominant contains topic $i$ with proportion $1-\frac{1}{\kappa^{100}}$ and furthermore, in each topic, the weight on these words is no more than $\frac{1}{\kappa^{100}}$, then our proofs still work with either initialization\footnote{See supplementary material.} 
The idea for the argument is simple: when the dominating topic is very large, we show that $\frac{f^*_{d,j}}{f^t_{d,j}}$ is very highly correlated with $\frac{\beta^*_{i,j}}{\beta^t_{i,j}}$, so these documents behave like anchor documents. Namely, one can show: 
%\vspace{-15mm}
\begin{thm} If we additionally have common words satisfying the properties specified above, after $O(\log(1/\epsilon') + \log N)$ KL-tEM updates in Case Study 2, or any of the tEM variants in Case Study 1, and we use the same initializations as before, we recover the topic-word matrix and topic proportions to multiplicative accuracy $1+\epsilon'$, if $1+\epsilon' \geq \frac{1}{(1-\epsilon)^7}$. 
\end{thm}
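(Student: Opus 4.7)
The plan is to adapt the proofs of Theorems~\ref{t:sparsesupport} and~\ref{t:largeanchors} by replacing the notion of a \emph{local anchor document} by a \emph{nearly-pure anchor document} whenever we deal with a common word. Recall that the earlier proofs exploited, for each pair $(i,j)$ with $\beta^*_{i,j} \neq 0$, a constant fraction of documents in which $i$ is the unique topic containing $j$; in such a document $f^*_{d,j} = \gamma^*_{d,i}\beta^*_{i,j}$ exactly, so the M-step update for $\beta^{t+1}_{i,j}$ contains a large convex-combination pull toward $\beta^*_{i,j}$. Common words possess no such documents, but the new hypothesis supplies a $(1-\kappa^{-100})$-fraction of dominant-topic-$i$ documents in which $\gamma^*_{d,i} \geq 1-\kappa^{-100}$, and also caps the total mass of common words in each topic at $\kappa^{-100}$.

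The key technical step is to show that in such a nearly-pure document $d$, for any common word $j$ and dominating topic $i$,
\[
\frac{\tilde f_{d,j}}{f^t_{d,j}} \;=\; \bigl(1 \pm o(1)\bigr)\,\frac{\beta^*_{i,j}}{\beta^t_{i,j}}
\]
once $\beta^t$ is within a constant multiplicative factor of $\beta^*$. This is a direct estimate: since $\gamma^*_{d,i} \geq 1-\kappa^{-100}$ and $\beta^*_{i',j} \leq \kappa\,\beta^*_{i,j}$ for the other topics $i'$ appearing in $d$, the contribution of $i' \neq i$ to $f^*_{d,j}$ is at most $\kappa^{-100}\cdot\kappa\cdot\beta^*_{i,j} = \kappa^{-99}\beta^*_{i,j}$, so $f^*_{d,j} = (1\pm\kappa^{-99})\gamma^*_{d,i}\beta^*_{i,j}$. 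An identical estimate holds for $f^t_{d,j}$ once $\beta^t$ inherits the same multiplicative dynamic-range bound, and dividing yields the claim. Hence, when we restrict the M-step sum to these nearly-pure documents, it pulls $\beta^{t+1}_{i,j}$ toward $\beta^*_{i,j}$ exactly as the sum over local anchor documents did for discriminative words, with the old anchor-fraction replaced by $(1-\kappa^{-100})/K$.

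With this substitution in hand, the remaining arguments run in parallel to Case Studies 1 and 2. For Case Study 1 the dominating-topic identification and both phases of the evolution go through unchanged on discriminative words, while the common words now participate in the same upper/lower multiplicative bound recursion because the convex-combination weight toward the ground truth is again $1-o(1)$. For Case Study 2 we additionally need to verify that the KL-radius bound $KL(\beta^*_i \,||\, \beta^t_i) \leq R_\beta$ used by the dominating-topic-identification lemma is not destroyed by common words: since the total mass on common words in each topic is at most $\kappa^{-100}$, any multiplicative errors on them contribute at most $O(\kappa^{-100}\log\kappa)$ to $R_\beta$, which stays well inside the budget demanded by the gap inequality relating $C_l$, $C_s$, and $p$, so Phase~I (anchor identification) and Phase~II (discriminative word identification) proceed essentially as before.

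The main obstacle will be sustaining the implicit dynamic-range bound $\beta^t_{i',j}/\beta^t_{i,j} = O(\kappa)$ for common words throughout the iterations, since the approximation $f^t_{d,j}\approx \gamma^t_{d,i}\beta^t_{i,j}$ in a nearly-pure document depends on this bound for \emph{every} other topic $i'$ in the support of $d$, and these estimates are updated simultaneously. The loop invariant to maintain is that from the end of Phase~I onward every $\beta^t_{i,j}$ stays within a constant multiplicative factor of $\beta^*_{i,j}$, so that $\beta^t_{i',j}\leq O(\kappa)\beta^t_{i,j}$ is inherited from the ground truth; propagating this jointly across all topics containing a given common word is where the large exponent in $\kappa^{-100}$ is actually used, giving enough slack to absorb the constant multiplicative losses accumulated across rounds and to complete the final alternating-minimization phase.
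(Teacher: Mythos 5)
Your proposal follows essentially the same route as the paper: the paper's proof likewise treats the $(1-\kappa^{-100})$-pure documents as surrogate anchor documents by showing $\frac{f^*_{d,j}}{f^t_{d,j}}$ tracks $\frac{\beta^*_{i,j}}{\beta^t_{i,j}}$ there, uses the $\kappa^{-100}$ mass cap to keep the $\gamma$ bounds and KL radii intact, and then reruns the alternating-minimization contraction. The only quantitative caveat is that your $(1\pm o(1))$ claim is stronger than what holds mid-iteration -- since $\gamma^t_{d,i'}$ and $\beta^t_{i',j}$ are only controlled up to the current multiplicative accuracies $C^t_\gamma, C^t_\beta$, the correct statement carries a loss factor such as $(C^t_\beta)^{1/4}$ (as in the paper), which still yields the contraction $C^{t+1}_\beta = (C^t_\beta)^{3/4}$.
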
 

\section{Discussion and open problems} 
In this work we provide the first characterization of sufficient conditions when variational inference leads to optimal parameter estimates for topic models.
%Variational inference, like many non-convex heuristics widely used in practice, is still somewhat mysterious %to theory. In our paper, we give a first-cut explanation for why it works in certain instances, and what %instances might be hard for it. 
Our proofs also suggest possible hard cases for variational inference, namely instances with large dynamic range compared to the proportion of anchor words and/or  correlated topic priors.
% might pose a bigger difficulty for variational inference; it's seems also likely that the fraction of probability %mass alloted to anchor words should make a big difference. 
%We did some very basic testing along the lines of these questions. (Both with tEM, and the vanilla variational %inference updates.) 
It's not hard to hand-craft such instances where support initialization performs very badly, even with only anchor and common words. We made no effort to explore the optimal relationship between the dynamic range and the proportion of anchor words, as it's not clear what are the ``worst case'' instances for this trade-off.
% (Notice that sampling random matrices to use for topic-word matrices, when the source of randomness is %something that produces uniform-looking entries is not a good test case - since it will result in a small %effective range.) 

Seeded initialization, on the other hand, empirically works much better. We found that when $C_l \geq 0.6$, and when the proportion of anchor words is as low as $0.2$, variational inference recovers the ground truth, even on instances with fairly large dynamic range. Our current proof methods are too weak to capture this observation. (In fact, even the largest topic is sometimes misidentified in the initial stages, so one cannot even run tEM, only the vanilla variational inference updates.)  
%In the initial stages of the algorithm, the $\gamma$ variables have a very unclear structure. In fact, even the %largest topic is sometimes misidentified, so one cannot even run tEM, only the vanilla variational inference %updates.  
Analyzing the dynamics of variational inference in this regime seems like a challenging problem which would require significantly new ideas. 

\section*{Acknowledgements}  

We would like to thank Sanjeev Arora for helpful discussions in various stages of this work. 

\pagebreak

\appendix

\begin{center}
      {\bf \huge Supplementary material}
\end{center}

\bigskip 

\section{Notation throughout supplementary material} 

We will use $\simeq, \lesssim, \gtrsim$ to denote that the corresponding (in)equality holds up to constants. We will use $\Leftrightarrow$ to denote equivalence. We will say that an event happens \emph{with high probability}, if it happens with probability $1-\frac{1}{K^c}$ or $1-\frac{1}{N^c}$ for some constant $c$. 
%We will denote $\beta^*_{min}:= \min_{i,j} \beta^*_{i,j}$, $\gamma^*_{min} = \min_{d,i} \gamma^*_{d,i}$, where the minima are taken only over the nonzero values.

\section{Case study 1: Sparse topic priors, support initialization}  

\subsection{Provable convergence of tEM} 

As a reminder, the theorem we want to prove is: 

\newtheorem*{t:sparsesupport}{Theorem~\ref{t:sparsesupport}}
\begin{t:sparsesupport}
Given an instance of topic modelling satisfying the Case Study 1 properties specified above, where the number of documents is $\Omega(\frac{K \log^2 N}{\epsilon'^2})$, if we initialize the supports of the $\beta^t_{i,j}$ and $\gamma^t_{d,i}$ variables correctly, after $O(\log(1/\epsilon')+\log N)$ KL-tEM, iterative-tEM updates or incomplete-tEM updates, we recover the topic-word matrix and topic proportions to multiplicative accuracy $1+\epsilon'$, for any $\epsilon'$ s.t. $1+\epsilon' \leq \frac{1}{(1-\epsilon)^7}$.
\end{t:sparsesupport} 

The general outline of the proof will be the following. 

\begin{itemize} 
\item \emph{Identifying dominating topic}: For the modified tEM updates, we need to make sure that the topic with maximal $\gamma^t_{d,i}$ is the dominant. 
\item \emph{Phase I: Getting constant multiplicative factor estimates}: First, we'll show that after initialization, after $O(\log N)$ number of rounds, we will get to variables $\beta^t_{i,j}$, $\gamma^t_{d,i}$ which are within a constant multiplicative factor from $\beta^*_{i,j}$, $\gamma^*_{d,i}$. 
\begin{itemize} 
	\item \emph{Lower bounds on the $\beta$ and $\gamma$ variables}: We'll show that determining the supports of the documents and the topic-word matrix, as well as being 	able to identify the documents in which topic $i$ is large is enough to ensure that all the $\beta^t_{i,j}$ and $\gamma^t_{i,j}$ variables are lower 	bounded by $\frac{1}{C^0_{\beta}} \beta^*_{i,j}$ and $\frac{1}{C^0_{\gamma}} \gamma^*_{d,i}$ respectively for some constants $C^0_{\beta} \geq 1, C^0_{\gamma} \geq 1$. 
	\item \emph{Improving upper bounds on the $\beta_{i,j}^t$ values}: We show that, if the above two properties are satisfied, we can get a multiplicative upper bound of the $\beta_{i,j}^t$ values, which strictly improves at each step until it reaches a constant. This improvement is very fast: we only need a logarithmic 	number of steps. After this happens, we show that the $\gamma$ variables corresponding to these $\beta$ estimates must be within a constant of the ground truth as well. 
\end{itemize} 
\item \emph{Phase II (Alternating minimization - lower and upper bound evolution)}: Once the $\beta$ and $\gamma$ estimates are within a constant factor of their true values, we show that the lone words and documents have a \emph{boosting} effect: they cause the multiplicative upper and lower bounds to improve at each round. 
\end{itemize} 

A word about incorporating the "correct supports" assumption in our algorithms. For the $\beta$ variables this is obvious: we just set $\beta^t_{i,j} = 0$ if $\beta^*_{i,j} = 0$. For the $\gamma$ variables it's also fairly straightforward. In KL-tEM we mean simply that in the convex program above, we constrain $\gamma^t_{d,i}=0$ if $\gamma^*_{d,i} = 0$. 

In the iterative version, this just means that before starting the $\gamma$ iterations, we set the initial value to 0 if $\gamma^*_{d,i} = 0$, and uniform among the rest of the variables. Same for the incomplete version. 

In the interest of brevity, whenever we say "the supports are correct", the above is what we will mean. 

Recall, we use $t$ to count the iterations for $\beta$ variables. Put another way, $\gamma^t_{d,i}$ is the value we get for $\gamma_{d,i}$ after the $\beta$ variables were updated to $\beta^t_{i,j}$. (Which of course, implies, $\beta^{t+1}_{i,j}$ will be the values we get for the $\beta$ variables after the $\gamma$ variables are updated to $\gamma^t_{d,i}$.)

The proofs are for each of the variants of tEM are similar. For starters, we show everything for KL-tEM, and then just mention how to modify the arguments to get the results for the other variants in section ~\ref{s:variants}. 

\subsubsection{Determining largest topic} 

First, we show that the "thresholding'' operation works. Namely, we show that if $\gamma^t_{d,i} > \gamma^t_{d,i'}, \forall i \neq i'$, then $\gamma^*_{d,i}$ is the largest topic in the document (there is a unique one by the "slightly gapped documents" property). Furthermore, we can say that $\frac{1}{2} \gamma^*_{d,i} \leq \gamma^{t}_{d,i} \leq 2 \gamma^*_{d,i}$.

\begin{lem} Fix a document $d$. Let the supports of the $\gamma$ and $\beta$ variables be correct. Then, after a $\gamma$ iteration, if $\gamma^t_{d,i} > \gamma^t_{d,i'}, i \neq i'$, $\gamma^*_{d,i}$ is the largest topic in the document. Furthermore, $\frac{1}{2} \gamma^*_{d,i} \leq \gamma^t_{d,i} \leq 2 \gamma^*_{d,i}$.
\end{lem}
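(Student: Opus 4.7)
The plan is to read off two-sided multiplicative bounds on $\gamma^t_{d,i}$ directly from the KKT conditions of the convex E-step of KL-tEM. Writing the Lagrangian for the constraint $\sum_i \gamma^t_{d,i} = 1$ and noting that the optimum is interior on the support of the document (each topic in the document has at least one lone word, by almost-disjoint supports; setting $\gamma^t_{d,i}=0$ would drive $f^t_{d,j}$ to zero on some word with $\tilde{f}_{d,j}>0$ and hence the KL to infinity), first-order optimality yields
\[
\sum_j \tilde{f}_{d,j}\,\frac{\beta^t_{i,j}}{f^t_{d,j}} = \lambda \qquad \text{for every topic } i \text{ in the support of } d.
\]
Multiplying by $\gamma^t_{d,i}$, summing over $i$, and using $\sum_i\gamma^t_{d,i}\beta^t_{i,j}=f^t_{d,j}$ together with $\sum_j\tilde{f}_{d,j}=1$ pins down $\lambda = 1$.

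From this identity I would extract the upper and lower bounds separately. For the upper bound, use the pointwise inequality $\beta^t_{i,j}/f^t_{d,j}\leq 1/\gamma^t_{d,i}$, which follows from $f^t_{d,j}\geq \gamma^t_{d,i}\beta^t_{i,j}$, to conclude $\gamma^t_{d,i}\leq \sum_{j\in\supp(\beta^*_{i,\cdot})}\tilde{f}_{d,j}$ (correctness of supports allows $\supp(\beta^t_{i,\cdot})$ to be replaced by $\supp(\beta^*_{i,\cdot})$). Expanding $\tilde{f}_{d,j}=(1\pm\epsilon)\sum_{i'}\gamma^*_{d,i'}\beta^*_{i',j}$ and splitting into the diagonal $i'=i$ and off-diagonal contributions, the almost-disjoint-supports bound $\sum_{j\in\supp(\beta^*_{i,\cdot})}\beta^*_{i',j}=o(1)$ for $i'\neq i$ gives $\gamma^t_{d,i}\leq (1+\epsilon)[\gamma^*_{d,i}+o(1)]$. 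For the lower bound, I would restrict the KKT sum to the lone words $L_i^d$ of topic $i$ within the document; on $L_i^d$ the expression $f^t_{d,j}=\gamma^t_{d,i}\beta^t_{i,j}$ is a single product, so the KKT identity produces $\gamma^t_{d,i}\geq \sum_{j\in L_i^d}\tilde{f}_{d,j}\geq (1-\epsilon)(1-o(1))\gamma^*_{d,i}$, using $\sum_{j\in L_i^d}\beta^*_{i,j}\geq 1-o(1)$ (obtained by summing the pairwise $o(1)$ overlap bounds over the at most $T-1=O(1)$ other topics in the document).

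With these two-sided bounds in hand, the conclusion follows from the sparse-and-gapped documents property. If $i_0$ is the (unique) true dominant topic in document $d$, then $\gamma^*_{d,i_0}\geq 1/T$ is bounded below by a universal constant, and $\gamma^*_{d,i_0}-\gamma^*_{d,i'}\geq \rho$ for every other topic $i'$ appearing in the document. Comparing the lower bound on $\gamma^t_{d,i_0}$ to the upper bound on $\gamma^t_{d,i'}$ gives
\[
\gamma^t_{d,i_0}-\gamma^t_{d,i'} \;\geq\; (1-\epsilon-o(1))\rho - (2\epsilon+o(1)),
\]
which is strictly positive once the error parameters are small compared to $\rho$. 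Hence the index that maximizes $\gamma^t_{d,\cdot}$ must be $i_0$. The two-sided inequality $\tfrac{1}{2}\gamma^*_{d,i_0}\leq \gamma^t_{d,i_0}\leq 2\gamma^*_{d,i_0}$ is then immediate from the matching multiplicative bounds, once one absorbs the additive $o(1)$ in the upper bound into a multiplicative factor using the constant lower bound $\gamma^*_{d,i_0}\geq 1/T$.

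The step I expect to be the main obstacle is the careful bookkeeping of three small parameters acting together --- the sampling error $\epsilon$ from finite documents, the overlap slack $o(1)$ from almost-disjoint supports, and the gap $\rho$ --- and, in particular, converting the additive $o(1)$ in the upper bound into a multiplicative $1+O(1)$ factor. That conversion relies crucially on the document-sparsity assumption (at most $T=O(1)$ topics per document), without which $\gamma^*_{d,i_0}$ could be too small to dominate the overlap slack. The key structural insight making the whole argument work is identifying the lone words $L_i^d$ as the source of the matching lower bound, since the single-product form $f^t_{d,j}=\gamma^t_{d,i}\beta^t_{i,j}$ on $L_i^d$ decouples the KKT identity into a clean per-topic inequality.
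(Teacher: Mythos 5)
Your proposal is correct and follows essentially the same route as the paper's proof: both derive the KKT identity $\sum_j \tilde{f}_{d,j}\beta^t_{i,j}/f^t_{d,j}=1$, split the sum into lone and non-lone words, use almost-disjoint supports to bound the non-lone mass by $o(1)$, and invoke the gap $\rho$ together with $\gamma^*_{d,i_0}=\Omega(1)$ to identify the dominant topic and convert the additive slack into the stated multiplicative bounds. The only cosmetic difference is that you derive $\lambda=1$ explicitly, whereas the paper relegates this to a footnote.
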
 
\begin{proof} 

Since there are a constant number of topics in the document, the largest topic has proportion $\Omega(1)$. 

Consider the KL-tEM convex optimization problem. The KKT conditions are easily seen to imply\footnote{One gets these trivially, turning the constraint that $\sum_{i=1}^K \gamma^t_{d,i} = 1$ into a Lagrange multiplier}: 
\begin{equation} 
\label{eq:KKTgamma}  
\sum_{j=1}^N \frac{\tilde{f}_{d,j}}{f^t_{d,j}} \beta^t_{i,j} = 1
\end{equation} 

For each topic $i$, since we are considering a constrained optimization problem, it has to be the case that it either satisfies~\ref{eq:KKTgamma}, $\gamma^t_{d,i} = 0$ or $\gamma^t_{d,i} = 1$. 

Let's assume first that $i$ satisfies~\ref{eq:KKTgamma}. Then, 

$$\gamma^t_{d,i} = \sum_{j=1}^N \frac{\tilde{f}_{d,j}}{f^t_{d,j}} \beta^t_{i,j} \gamma^t_{d,i} \leq \sum_{j: \beta^*_{i,j} \neq 0} \tilde{f}_{d,j} $$ 

Let's call the words $j$, which only appear in the support of topic $i$ in the document \emph{lone} for that topic, and let's denote that set as $L_i$. 

If $L_i$ are the lone words for topic $i$, $\sum_{j \notin L_i, \beta^*_{i,j} \neq 0} \tilde{f}_{d,j} = T o(1) = o(1)$, so 
$$ \gamma^t_{d,i} \leq \sum_{j \in L_i} (1+\epsilon) \beta^*_{i,j} \gamma^*_{d,i} + o(1) \leq (1+\epsilon) \gamma^*_{d,i} + o(1) \leq \gamma^*_{d,i} + o(1) $$ 

On the other hand, $\gamma^t_{d,i} \geq \sum_{j \in L_i} \beta^*_{i,j}\gamma^*_{d,i} \geq (1-\epsilon) (1-o(1))\gamma^*_{d,i} \geq (1-o(1)) \gamma^*_{d,i}$, so $\gamma^t_{d,i} \geq \gamma^*_{d,i} - o(1)$.  

Since there is a constant gap of $\rho$ between the largest topic and the next largest one, the maximum $\gamma^t_{d,i}$ is indeed the largest topic in the document. Furthermore, since $(1-o(1)) \gamma^*_{d,i} \leq \gamma^t_{d,i} \leq (1+o(1)) \gamma^*_{d,i}$, clearly $\frac{1}{2} \gamma^*_{d,i} \leq \gamma^t_{d,i} \leq 2 \gamma^*_{d,i}$ follows as well. 

On the other hand, we claim no topic which is in the support of a document $d$ can actually have $\gamma^t_{d,i} = 0$. 

If this happens, it's easy to see that $\sum_{j=1}^N \tilde{f}_{d,j} \log(\frac{\tilde{f}_{d,j}}{f^t_{d,j}}) = \infty$: one only needs to look at a summand corresponding to a lone word $j$ for topic $i$. Just by virtue of the way lone words are defined, $\gamma^t_{d,i} = 0$ would imply $f^t_{d,j} = 0$. It's clear that one can get a finite value for $\sum_j \tilde{f}_{d,j} \log(\frac{\tilde{f}_{d,j}}{f^t_{d,j}})$ on the other hand, by just setting $\gamma^t_{d,i} = \gamma^*_{d,i}$, so $\gamma^t_{d,i} = 0$ cannot happen at an optimum.   
 
\end{proof}

\subsubsection{Lower bounds on the $\gamma^t_{d,i}$ and $\beta^t_{i,j}$ variables} 

Next, we show that subject to the thresholding being correct, at any point in time $t$, all the estimates $\gamma^t_{d,i}$ and $\beta^t_{i,j}$ are appropriately lower bounded. 

The proof is similar for both the $\beta$ and $\gamma$ variables, and both for the KL-tEM and iterative tEM updates, but as mentioned before, we focus on the KL-tEM first.  

\begin{lem} Fix a particular document $d$. Suppose that the supports of the $\gamma$ and $\beta$ variables are correct. Then, $\gamma_{d,i}^{t} \geq (1-o(1)) \gamma_{d,i}^*$. 
\label{l:gammalower} 
\end{lem}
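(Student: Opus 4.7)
The plan is to exploit the KKT conditions of the KL-tEM convex program together with the existence of ``lone words'' inside document $d$ (words whose only topic in $d$'s support is $i$), which the assumptions of Case Study 1 guarantee to carry almost all the probability mass of topic $i$.

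First I would invoke the previous lemma to rule out the boundary cases: for every topic $i$ with $\gamma^*_{d,i} > 0$, the optimizer satisfies $\gamma^t_{d,i} > 0$, so the KKT stationarity condition
\[
\sum_{j=1}^N \frac{\tilde f_{d,j}}{f^t_{d,j}}\,\beta^t_{i,j} \;=\; 1
\]
holds (the only other alternatives, $\gamma^t_{d,i}\in\{0,1\}$, are either impossible by the argument in the preceding lemma or already imply $\gamma^t_{d,i}\ge \gamma^*_{d,i}$). Multiplying this identity by $\gamma^t_{d,i}$ yields
\[
\gamma^t_{d,i} \;=\; \sum_{j=1}^N \frac{\tilde f_{d,j}}{f^t_{d,j}}\,\gamma^t_{d,i}\,\beta^t_{i,j}.
\]

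Next I would restrict the sum to the set $L_i$ of lone words for topic $i$ in document $d$. Because the supports of $\beta^t$ and $\gamma^t$ are correct, for every $j\in L_i$ the only topic $i'$ in the support of $d$ with $\beta^t_{i',j}\neq 0$ is $i$ itself; hence $f^t_{d,j}=\gamma^t_{d,i}\beta^t_{i,j}$, and the corresponding summand collapses to $\tilde f_{d,j}$. This gives
\[
\gamma^t_{d,i} \;\ge\; \sum_{j\in L_i} \tilde f_{d,j} \;\ge\; (1-\epsilon)\,\gamma^*_{d,i}\sum_{j\in L_i}\beta^*_{i,j},
\]
where the second inequality uses the long-document assumption $\tilde f_{d,j}\ge (1-\epsilon)\sum_{i'}\gamma^*_{d,i'}\beta^*_{i',j}$ together with the fact that, again by support correctness of the ground truth, only $i$ contributes to $f^*_{d,j}$ when $j\in L_i$.

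Finally I would close the argument by lower-bounding $\sum_{j\in L_i}\beta^*_{i,j}$. The ``sparse documents'' assumption says $d$ contains only $T=O(1)$ topics, and the ``almost disjoint supports'' assumption says that for each other topic $i'\neq i$ in $d$ the mass of $\beta^*_i$ on $\mathrm{supp}(\beta^*_{i'})$ is $o(1)$. Taking a union bound over the at most $T-1$ other topics gives $\sum_{j\notin L_i}\beta^*_{i,j}\le (T-1)\cdot o(1)=o(1)$, so $\sum_{j\in L_i}\beta^*_{i,j}\ge 1-o(1)$. Combining these we obtain $\gamma^t_{d,i}\ge (1-\epsilon)(1-o(1))\gamma^*_{d,i}=(1-o(1))\gamma^*_{d,i}$, which is the desired bound. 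The only mild subtlety (the ``main obstacle'', such as it is) is the bookkeeping that ties the lone-word identification to correct supports of both $\beta^t$ and $\gamma^t$; once this is spelled out, the inequality follows immediately from KKT plus almost-disjoint supports.
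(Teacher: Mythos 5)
Your proposal is correct and follows essentially the same route as the paper's proof: multiply the KKT stationarity condition by $\gamma^t_{d,i}$, restrict the sum to the lone words $L_i$ (where the summand collapses to $\tilde f_{d,j}\geq(1-\epsilon)\gamma^*_{d,i}\beta^*_{i,j}$), and lower-bound $\sum_{j\in L_i}\beta^*_{i,j}\geq 1-o(1)$ via the almost-disjoint-supports assumption. You are in fact slightly more careful than the paper in spelling out why $f^t_{d,j}=\gamma^t_{d,i}\beta^t_{i,j}$ for lone words and in dispatching the boundary cases of the KKT conditions.
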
  
\begin{proof}

Multiplying both sides of~\ref{eq:KKTgamma} by $\gamma^t_{d,i}$, we get
$$\gamma_{d,i}^{t} = \sum_{j=1}^N \frac{\tilde{f}_{d,j}}{f_{d,j}^{t}} \beta_{i,j}^t \gamma^t_{d,i} $$ 

As above, let's split the above sum in two parts: lone words, and non-lone. Then clearly,  

$$\gamma_{d,i}^{t} \geq \sum_{j \in L_i} (1-\epsilon) \beta_{i,j}^*\gamma_{d,i}^* $$ 

For notational convenience, let's denote $\tilde{\alpha} = \sum_{j \in L_i} \beta_{i,j}^*$. 
Let's estimate $\tilde{\alpha}$. By the assumption on the size of the intersection of topics, 
$$\sum_{j \notin L_i} \beta_{i,j}^* \leq T r  = o(1)$$. 

Hence, $\tilde{\alpha} \geq (1-\epsilon) (1 - o(1)) = 1-o(1)$. 
So, the claim of the lemma holds. 

\end{proof} 

The lower bound on the $\beta^t_{i,j}$ values proceeds similarly, but here we will crucially make use of the fact that for the large topics, we have both upper and lower bounds on the $\gamma^t_{d,i}$ values.  

\begin{lem} Suppose that the supports of the $\gamma$ and $\beta$ variables are correct. Additionally, if $i$ is a large topic in $d$, let $\frac{1}{2} \gamma^*_{d,i} \leq \gamma^t_{d,i} \leq 2 \gamma^*_{d,i}$. Then, $\beta_{i,j}^{t+1} \geq \frac{1}{2}(1-o(1)) \beta_{i,j}^{*}$. 
\label{l:betalower} 
\end{lem}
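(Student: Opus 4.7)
The plan is to imitate the proof of Lemma~\ref{l:gammalower}, but with the roles of words and documents swapped: lone words drove the lower bound on $\gamma^t_{d,i}$, and now \emph{local anchor documents} for the pair $(i,j)$ will drive the lower bound on $\beta^{t+1}_{i,j}$. Specifically, call a document $d \in D_i$ a local anchor document for $(i,j)$ if $\beta^*_{i',j} = 0$ for every other topic $i' \neq i$ appearing in $d$. Because supports are correct, for any such $d$ the predicted count collapses to $f^t_{d,j} = \gamma^t_{d,i}\beta^t_{i,j}$, which is exactly the cancellation we need to exploit in the multiplicative $\beta$-update.

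Writing $L_{i,j}$ for the set of local anchor documents for $(i,j)$ inside $D_i$, I would restrict the sum in the numerator of the KL-tEM M-step to $L_{i,j}$. This gives
$$\beta^{t+1}_{i,j} \;\geq\; \beta^t_{i,j}\cdot \frac{\sum_{d\in L_{i,j}} \frac{\tilde f_{d,j}}{\gamma^t_{d,i}\beta^t_{i,j}}\,\gamma^t_{d,i}}{\sum_{d\in D_i}\gamma^t_{d,i}} \;=\; \frac{\sum_{d\in L_{i,j}} \tilde f_{d,j}}{\sum_{d\in D_i}\gamma^t_{d,i}}.$$
For $d\in L_{i,j}$ we have $\tilde f_{d,j} \geq (1-\epsilon)\gamma^*_{d,i}\beta^*_{i,j}$ by the long-document assumption, and for every $d\in D_i$ we have the hypothesis $\gamma^t_{d,i}\leq 2\gamma^*_{d,i}$. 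Substituting these two bounds yields
$$\beta^{t+1}_{i,j}\;\geq\; \frac{1-\epsilon}{2}\,\beta^*_{i,j}\cdot \frac{\sum_{d\in L_{i,j}}\gamma^*_{d,i}}{\sum_{d\in D_i}\gamma^*_{d,i}},$$
so it only remains to argue that the last ratio is $1-o(1)$.

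This last step is the main obstacle, and it is exactly where the prior-side hypotheses enter. By the \emph{discriminative words} assumption, $j$ appears in only $o(K)$ topics, and by \emph{weak topic correlations / independent topic inclusion}, conditional on $i$ dominating, any specific other topic appears in the document with probability $\Theta(1/K)$; a union bound over the $o(K)$ competing $j$-containing topics shows that a uniformly chosen document in $D_i$ lies in $L_{i,j}$ with probability $1-o(1)$. Combining this with \emph{dominant topic equidistribution}, which asserts that conditioning on being a local anchor document does not shift $\mathbf{E}[\gamma^*_{d,i}\mid i\text{ dominates}]$ by more than a $1\pm o(1)$ factor, we conclude that the population version of the ratio is $1-o(1)$. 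Standard Hoeffding-type concentration, legitimate because $|D_i|=\Theta(D/K)=\Omega(\log^2 N/\epsilon^2)$ by the document-count hypothesis and dominant topic equidistribution, transfers this to the empirical ratio, giving the desired $\beta^{t+1}_{i,j}\geq \tfrac{1}{2}(1-o(1))\beta^*_{i,j}$. The iterative and incomplete-tEM variants are handled identically once one checks (deferred to Section~\ref{s:variants}) that the lone-document collapse $f^t_{d,j}=\gamma^t_{d,i}\beta^t_{i,j}$ is preserved under their $\gamma$-updates.
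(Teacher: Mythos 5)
Your proposal matches the paper's proof essentially step for step: the same restriction of the M-step numerator to the lone (local anchor) documents for the pair $(i,j)$, the same collapse $f^t_{d,j}=\gamma^t_{d,i}\beta^t_{i,j}$ from correct supports, the same use of $\tilde f_{d,j}\geq(1-\epsilon)\gamma^*_{d,i}\beta^*_{i,j}$ and $\gamma^t_{d,i}\leq 2\gamma^*_{d,i}$, and the same appeal to independent topic inclusion, weak topic correlations, and concentration to show the mass ratio is $1-o(1)$. The only quibble is a naming slip --- the claim that conditioning on being a local anchor document does not shift $\mathbf{E}[\gamma^*_{d,i}\mid i\text{ dominates}]$ is the \emph{weak topic correlations} property, not dominant topic equidistribution --- but the argument itself is the paper's.
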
 
\begin{proof}

Let's call \emph{lone} the documents where $\beta_{i',j}^* = 0$ for all other topics $i' \neq i$ appearing in that document for the topic-word pair $(i,j)$. Let $D_l$ be the set of lone documents. Then, certainly it's true that 
$$\beta_{i,j}^{t+1} \geq \beta_{i,j}^t \frac{\sum_{d \in D_l} \frac{\tilde{f}_{d,j}}{f_{d,j}^t}\gamma_{d,i}^t}{\sum_{d=1}^D \gamma_{d,i}^t} $$ 
However, for a lone document, $f_{d,j}^t = \gamma_{d,i}^t \cdot \beta_{i,j}^t$ (it's easy to check all the other terms in the summation for $f_{d,j}^t$ vanish, because either $\gamma_{d,i'}^t = 0$ or $\beta_{i',j}^t = 0$). Hence, 
$$\beta^{t+1}_{i,j} \geq  \frac{\sum_{d \in D_l} (1-\epsilon) \frac{\gamma^*_{d,i} \beta^*_{i,j}}{\gamma_{d,i}^t \cdot \beta_{i,j}^t}\beta_{i,j}^t \gamma_{d,i}^t}{\sum_{d=1}^D \gamma_{d,i}^t} = (1-\epsilon) \beta^*_{i,j} \frac{\sum_{d \in D_l} \gamma^*_{d,i}}{\sum_{d=1}^D \gamma_{d,i}^t}$$

However, since the update is happening only over documents where topic $i$ is large, $\gamma^t_{d,i} \leq 2 \gamma^*_{d,i}$. So, we can conclude 
$$\beta^{t+1}_{i,j} \geq (1-\epsilon) \beta^*_{i,j} \frac{1}{2}\frac{\sum_{d \in D_l} \gamma_{d,i}^{*}}{\sum_{d=1}^D \gamma_{d,i}^*} $$ 

Let's call $\alpha =  \frac{\sum_{d \in D_l} \gamma_{d,i}^{*}}{\sum_{d=1}^D \gamma_{d,i}^*}$, and let's analyze it's value.

By Lemma~\ref{l:numberdocsdominating} and Lemma~\ref{l:numberdocs},  

$$ \sum_{d \in D_l} \gamma_{d,i}^{*} \geq (1-\epsilon) |D_l| \mathbf{E}[\gamma^*_{d,i} | \gamma^*_{d,i} \text{ is dominating}, \gamma^*_{d,i'} = 0, \forall i' \neq i \text{ s.t. } j \text{ appears in topic } i']$$
$$ \sum_{d=1}^D \gamma_{d,i}^* \leq (1+\epsilon) |D| \mathbf{E}[\gamma^*_{d,i} | \gamma^*_{d,i} \text{ is dominating}] $$  
 
By the weak topic correlations assumption, then, $\displaystyle \frac{\sum_{d \in D_l} \gamma_{d,i}^{*}}{\sum_{d=1}^D \gamma_{d,i}^*} \geq (1-o(1)) \frac{|D_l|}{|D|}$. 

Furthermore, by the independent topic inclusion property, each of the $o(K)$ topics other than $i$ that word $j$ belongs to appears in a document with probability $\Theta(1/K)$, so the probability that a document which contains topic $i$ contains one of them is  $o(1)$, i.e. $\frac{|D_l|}{|D|}$. By Lemma~\ref{l:numberdocsdom}, furthermore, $\frac{|D_l|}{|D|} \geq 1-o(1)$ when $\epsilon = o(1)$. Hence, $\alpha \geq 1 - o(1)$. 

Altogether, we get that $\beta_{i,j}^{t+1} \geq \frac{1}{2} (1-o(1)) \beta_{i,j}^{*} $ as claimed. 

\end{proof}

\subsubsection{Upper bound on the $\beta_{i,j}^t$ values} 

Having established a lower bound on the $\beta_{i,j}^t$ variables throughout all iterations, together with the lower bounds on the $\gamma^t_{d,i}$ variables and the good estimates for the large topics, we will be able to prove the upper bound of the multiplicative error of $\beta_{i,j}^t$ keeps improving, until $\beta^t_{i,j} \leq C_{\beta} \beta^*_{i,j}$, for some constant $C_{\beta}$.  

\begin{lem} Let the  $\beta$ variables have the correct support, and $\beta_{i,j}^{t} \geq \frac{1}{C_m} \beta_{i,j}^{*}$, $\gamma_{d,i}^t \geq \frac{1}{C_m} \gamma_{d,i}^*$ whenever $\beta^*_{i,j} \neq 0$, $\gamma^*_{d,i} \neq 0$. Let $\beta_{i,j}^t = C_{\beta}^t \beta_{i,j}^{*}$, 
where $C_{\beta}^t \geq 4 C_m$, and $C_m$ is a constant. 
Then, in the next iteration, $\beta_{i,j}^{t+1} \leq C_{\beta}^{t+1} \beta_{i,j}^{*}$, where $C_{\beta}^{t+1} \leq \frac{C_{\beta}^t}{2}$.  
\label{l:betaupper}
\end{lem}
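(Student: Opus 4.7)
The plan is to examine the M-step update
\[
\beta^{t+1}_{i,j} \;=\; \beta^t_{i,j} \cdot \frac{\sum_{d\in D_i} \frac{\tilde f_{d,j}}{f^t_{d,j}}\gamma^t_{d,i}}{\sum_{d\in D_i}\gamma^t_{d,i}}
\]
and split the sum in the numerator into the two kinds of documents that already played a starring role in Lemma~\ref{l:betalower}: the \emph{lone} documents $D_l \cap D_i$ (those where $i$ is the unique topic carrying word $j$ among topics present) and the rest $D_i \setminus D_l$. The payoff of this split will be that the lone documents produce a factor $1/C_\beta^t$ that drives the halving, while the rest can be absorbed because they are an $o(1)$ fraction of $D_i$.

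For a lone document $d$, exactly as in Lemma~\ref{l:betalower}, we have $f^t_{d,j}=\gamma^t_{d,i}\beta^t_{i,j}$, so
\[
\frac{\tilde f_{d,j}}{f^t_{d,j}}\gamma^t_{d,i}
\;=\;(1\pm\epsilon)\,\frac{\gamma^*_{d,i}\beta^*_{i,j}}{\beta^t_{i,j}}
\;=\;(1\pm\epsilon)\,\frac{\gamma^*_{d,i}}{C_\beta^t}.
\]
For a non-lone document, I plan to use the hypothesis $\beta^t_{i',j}\ge \tfrac{1}{C_m}\beta^*_{i',j}$ and $\gamma^t_{d,i'}\ge \tfrac{1}{C_m}\gamma^*_{d,i'}$ across the support of $d$, which immediately gives $f^t_{d,j}\ge \tfrac{1}{C_m^2}f^*_{d,j}\ge \tfrac{1-\epsilon}{C_m^2}\tilde f_{d,j}$, hence $\tfrac{\tilde f_{d,j}}{f^t_{d,j}}\le \tfrac{C_m^2}{1-\epsilon}$. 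Combined with the dominating-topic upper bound $\gamma^t_{d,i}\le 2\gamma^*_{d,i}$ from the thresholding lemma, each non-lone summand is at most $O(C_m^2)\gamma^*_{d,i}$, with no dependence on $C_\beta^t$.

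The denominator I will lower bound simply by $\tfrac{1}{C_m}\sum_{d\in D_i}\gamma^*_{d,i}$. Letting $\alpha = \tfrac{\sum_{D_l\cap D_i}\gamma^*_{d,i}}{\sum_{D_i}\gamma^*_{d,i}}$, which by the argument already used in Lemma~\ref{l:betalower} (weak topic correlations, independent topic inclusion, Lemmas~\ref{l:numberdocsdominating}--\ref{l:numberdocs}) satisfies $\alpha\ge 1-o(1)$, I will arrive at
\[
\beta^{t+1}_{i,j}\;\le\;\beta^*_{i,j}\,\Bigl[(1+\epsilon)C_m\,\alpha \;+\; O(C_m^3)(1-\alpha)\,C_\beta^t\Bigr].
\]
The first bracketed term is bounded by $C_m(1+\epsilon)\le \tfrac{C_\beta^t}{4}(1+\epsilon)$ using the assumption $C_\beta^t\ge 4C_m$, and the second term is at most $\tfrac{C_\beta^t}{4}$ because the $(1-\alpha)$ factor is $o(1)$ and dominates any fixed constant $O(C_m^3)$ (for $K$, $1/\epsilon$ sufficiently large). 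Summing these yields $\beta^{t+1}_{i,j}\le \tfrac{C_\beta^t}{2}\beta^*_{i,j}$, which is the claim.

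The main obstacle I anticipate is making the non-lone contribution rigorously $o(1)$ in the right sense: one cannot just use ``most documents are lone'' as a statement about counts, because the update weights documents by $\gamma^t_{d,i}$. This is why I intend to pass through $\gamma^*$ using the two-sided bound from the thresholding lemma and the inequality $\sum_{D_i\setminus D_l}\gamma^*_{d,i}\le (1-\alpha)\sum_{D_i}\gamma^*_{d,i}$, reducing the question to the already-established bound on $\alpha$. Once this reduction is in place the rest is routine arithmetic using $C_\beta^t\ge 4C_m$.
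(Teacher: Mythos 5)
Your proposal is correct and follows essentially the same route as the paper's proof: the same lone/non-lone document decomposition, the same bounds $\tilde f_{d,j}/f^t_{d,j}\le (1+\epsilon)C_m^2$ and $\gamma^t_{d,i}\le 2\gamma^*_{d,i}$ for non-lone documents, the same $1/C_m$ lower bound on the denominator, and the same use of $\alpha=1-o(1)$ together with $C_\beta^t\ge 4C_m$ to conclude the halving. The only cosmetic difference is that you bound the two terms by $C_\beta^t/4$ each, whereas the paper solves for the threshold on $\alpha$ directly; both close identically.
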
 
\begin{proof}   

Without loss of generality, let's assume $C_m \geq 2$. (Since certainly, if the statement of the lemma holds with a smaller constant, it holds with $C_m=2$.) 

We proceed similarly as in the prior analyses. We will split the sum into the portion corresponding to the lone and non-lone documents. 

Let's analyze the terms $\frac{\tilde{f}_{d,j}}{f_{d,j}^t} \gamma_{d,i}^t$ corresponding to the non-lone documents. 

Now, $f_{d,j}^t \geq \frac{1}{C^2_m} f^*_{d,j}$, so $\frac{\tilde{f}_{d,j}}{f_{d,j}^t} \leq (1+\epsilon) C^2_m $. Also, $\gamma_{d,i}^t \leq 2 \gamma_{d,i}^{*}$, since topic $i$ is the dominant in document $d$. Since $C_m \geq 2$, $\frac{\tilde{f}_{d,j}}{f_{d,j}^t} \gamma_{d,i}^t \leq (1+\epsilon) C^3_m \gamma_{d,i}^{*}$. 

Also, note that $\sum_{d=1}^D \gamma_{d,i}^t \geq \frac{1}{C_m} \sum_{d=1}^D \gamma_{d,i}^*$, again, since $i$ is the dominant topic. 

As usual, let's denote the set of lone documents $D_l$:
 
$$\beta_{i,j}^{t+1} \leq (1+\epsilon) C_m \frac{\sum_{d \in D_l} \beta_{i,j}^* \gamma_{d,i}^* + \sum_{d \in D \setminus D_l} C^3_m \gamma_{d,i}^* \beta_{i,j}^t  }{\sum_{d=1}^D \gamma_{d,i}^*} $$ 

As in the prior proofs, let's denote by $\alpha := \frac{\sum_{d \in D_l} \gamma_{d,i}^*}{\sum_{d=1}^D \gamma_{d,i}^*}$. 

As in Lemma~\ref{l:betalower}, $\alpha \geq 1 - o(1)$, so $\displaystyle \beta_{i,j}^{t+1} \leq  (1+\epsilon) C_m (\alpha \beta^*_{i,j} + (1-\alpha)C^3_m \beta^t_{i,j})$, which in turn implies that $\displaystyle \frac{\beta_{i,j}^{t+1}}{\beta_{i,j}^*} \leq  (1+\epsilon) C_m (\alpha + (1-\alpha) C^3_m C^t_{\beta})$. 
In order to ensure that $\frac{\beta_{i,j}^{t+1}}{\beta_{i,j}^*} < \frac{C^t_{\beta}}{2}$, it would be sufficient to prove that

$$ (1+\epsilon) C_m(\alpha + (1-\alpha)(C^3_m C^t_{\beta} ) < \frac{C_{\beta}^t}{2} $$ 

which is equivalent to $\displaystyle \alpha > \frac{C^3_m C_{\beta}^t - \frac{C_{\beta}^t}{2 (1+\epsilon) C_{m}}}{C^3_m C_{\beta}^t  - 1} $. 

Let's look at the right hand side. As, by assumption, $C_{\beta}^t \geq 4C_m$, it follows that 
$$ \frac{C^3_m C_{\beta}^t - \frac{C_{\beta}^t}{2 (1+\epsilon) C_{m}}}{C^3_m C_{\beta}^t  - 1} \leq \frac{C^3_m C_{\beta}^t - \frac{C_{\beta}^t}{2 (1+\epsilon) C_{m}}}{C^3_m C_{\beta}^t  -  \frac{C_{\beta}^t}{4 C_{m}}} $$ 

Hence, the right hand side is upper bounded by 
$$\frac{C^3_m-\frac{1}{2 (1+\epsilon) C_m}}{C^3_m-\frac{1}{4 C_m}} = 1 - \frac{\frac{\frac{2}{1+\epsilon} - 1}{4 C_m}}{C^3_m - \frac{1}{4 C_m}}$$

But, since $C_m$ is bounded by a constant, and $\alpha = 1-o(1)$, the claim follows. 

\end{proof} 

\subsubsection{Upper bounds on the $\gamma$ values} 

Finally, we show that if we ever reach a point where the $\beta$ values are both upper and lower bounded by a constant, the $\gamma$ values one gets after the $\gamma$ step are appropriately upper bounded by a constant. More precisely:

\begin{lem} Fix a particular document $d$. Let's assume the supports for the $\beta$ and $\gamma$ variables are correct. Furthermore, let $\frac 1 {C_m} \leq \frac {\beta_{i,j}^t}{\beta_{i,j}^*} \leq C_m$ for some constant $C_m$. Then, $\gamma_{d,i}^{t} \leq (1+o(1)) \gamma_{d,i}^*$. 
\label{l:gammaupper} 
\end{lem}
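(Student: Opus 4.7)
The plan is to mirror the argument of Lemma~\ref{l:gammalower}: start from the KKT identity~(\ref{eq:KKTgamma}) for the $\gamma$-subproblem and multiply both sides by $\gamma^t_{d,i}$ to obtain
\[
\gamma^t_{d,i} \;=\; \sum_{j=1}^N \frac{\tilde f_{d,j}}{f^t_{d,j}}\,\beta^t_{i,j}\,\gamma^t_{d,i}.
\]
I will split the right-hand side according to whether $j$ is a lone word for topic $i$ in document $d$ (the set $L_i$) or a non-lone word in the support of $\beta^*_i$. Because $\beta^t$ and $\gamma^t$ have the correct supports, for $j \in L_i$ only topic $i$ contributes to $f^t_{d,j}$, so $f^t_{d,j} = \gamma^t_{d,i}\beta^t_{i,j}$ exactly and the corresponding summand collapses to $\tilde f_{d,j}$. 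Using loneness once more, together with $\tilde f_{d,j} \leq (1+\epsilon) f^*_{d,j} = (1+\epsilon)\gamma^*_{d,i}\beta^*_{i,j}$, the lone contribution is at most $(1+\epsilon)\gamma^*_{d,i}\sum_{j\in L_i}\beta^*_{i,j} \leq (1+\epsilon)\gamma^*_{d,i}$.

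For a non-lone word with $\beta^*_{i,j}\neq 0$, the product $\gamma^t_{d,i}\beta^t_{i,j}$ is just one of the terms in $f^t_{d,j}$, so the trivial bound $\beta^t_{i,j}\gamma^t_{d,i}/f^t_{d,j} \leq 1$ holds and each summand is dominated by $\tilde f_{d,j}$. Expanding $\tilde f_{d,j} \leq (1+\epsilon)\sum_{i'}\gamma^*_{d,i'}\beta^*_{i',j}$ and exchanging the order of summation, the total non-lone contribution is at most
\[
(1+\epsilon)\sum_{i'\in d}\gamma^*_{d,i'}\!\!\sum_{\substack{j\in\supp(i)\cap\supp(i')\\ j\notin L_i}}\!\!\beta^*_{i',j}.
\]
The almost-disjoint-supports hypothesis controls each inner sum by $o(1)$, and since $d$ contains at most $T = O(1)$ topics, the overall non-lone contribution is $o(1)$.

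Combining the two estimates yields $\gamma^t_{d,i} \leq (1+\epsilon)\gamma^*_{d,i} + o(1)$. Because every document has at most $T = O(1)$ topics, any nonzero $\gamma^*_{d,i}$ is $\Omega(1)$, so the additive $o(1)$ slack folds into a multiplicative $(1+o(1))$ factor and the desired bound $\gamma^t_{d,i} \leq (1+o(1))\gamma^*_{d,i}$ follows. The only step that requires real care is the non-lone estimate, where almost-disjoint-supports is essential; the constant-factor bounds $1/C_m \leq \beta^t_{i,j}/\beta^*_{i,j} \leq C_m$ do not appear to be used in the argument itself and are presumably stated merely so that this lemma chains cleanly with the surrounding ones.
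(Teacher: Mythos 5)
Your decomposition into lone and non-lone words and your treatment of the lone part match the paper, but the way you bound the non-lone contribution creates a genuine gap. By using only $\beta^t_{i,j}\gamma^t_{d,i}/f^t_{d,j}\le 1$ you reduce the non-lone sum to $\sum_{j\notin L_i}\tilde f_{d,j}$, which is an \emph{absolute} $o(1)$ quantity, not one proportional to $\gamma^*_{d,i}$. Your last step then asserts that any nonzero $\gamma^*_{d,i}$ is $\Omega(1)$ so that the additive $o(1)$ can be absorbed into a multiplicative factor; this is false under the paper's assumptions. Sparsity ($T=O(1)$ topics per document) only forces the \emph{largest} topic to be $\Omega(1)$; the stated technical condition allows nonzero $\gamma^*_{d,i}$ as small as $1/\mathrm{poly}(N)$, and the lemma must hold for every topic in the document (it feeds into $C^t_\gamma=\max_i\max(\gamma^*_{d,i}/\gamma^t_{d,i},\,\gamma^t_{d,i}/\gamma^*_{d,i})$ in Lemma~\ref{l:gammaevolution}). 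A non-lone word of a tiny topic $i$ can carry mass $\tilde f_{d,j}\approx\gamma^*_{d,i'}\beta^*_{i',j}\gg\gamma^*_{d,i}$ coming from a large topic $i'$, so the slack you accumulate really is additive and swamps the claimed bound for small topics.

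This is exactly where the hypothesis $1/C_m\le\beta^t_{i,j}/\beta^*_{i,j}\le C_m$ --- which you declared unused --- does the work in the paper's proof. Combined with the lower bound $\gamma^t_{d,i}\ge\frac{1}{C_m}\gamma^*_{d,i}$ from Lemma~\ref{l:gammalower}, it gives $f^t_{d,j}\ge\frac{1}{C_m^2}f^*_{d,j}$, hence $\tilde f_{d,j}/f^t_{d,j}\le(1+\epsilon)C_m^2$, and each non-lone summand is bounded by $(1+\epsilon)C_m^3\beta^*_{i,j}\gamma^t_{d,i}$. Summing over $j\notin L_i$ yields a non-lone contribution of at most $(1+\epsilon)C_m^3(1-\tilde\alpha)\gamma^t_{d,i}$, i.e.\ an error \emph{relative} to $\gamma^t_{d,i}$ itself; solving $\gamma^t_{d,i}\le(1+\epsilon)\bigl(\tilde\alpha\gamma^*_{d,i}+C_m^3(1-\tilde\alpha)\gamma^t_{d,i}\bigr)$ for $\gamma^t_{d,i}$ then gives the multiplicative conclusion for every topic, large or small. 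To repair your argument you need to retain this self-referential form rather than collapsing the non-lone terms to $\tilde f_{d,j}$.
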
  
\begin{proof}

As in the proof of Lemma~\ref{l:gammalower}, let's look at the KKT conditions for $\gamma^t_{d,i}$ into a part corresponding to lone words $L_i$ and non-lone words. 
Multiplying~\ref{eq:KKTgamma} by $\gamma^t_{d,i}$ as before, 

$$\gamma_{d,i}^{t} = \sum_{j \in L_i} \tilde{f}_{d,j} + \gamma_{d,i}^{t} \sum_{j \notin L_i} \frac{\tilde{f}_{d,j}}{f_{d,j}^{t}} \beta_{i,j}^t $$ 

Again, let $\tilde{\alpha} = \sum_{j \in L_i} \beta_{i,j}^*$. 

By Lemma~\ref{l:gammalower}, certainly $\gamma^t_{d,i} \geq \frac{1}{C_m} \gamma^*_{d,i}$.  
Hence, $\displaystyle \frac{\tilde{f}_{d,j}}{f_{d,j}^t} \leq (1+\epsilon) C^2_m $. 
So we have, $\displaystyle \gamma_{d,i}^{t} \leq (1+\epsilon)(\tilde{\alpha}\gamma_{d,i}^* + C^3_m(1-\tilde{\alpha})\gamma_{d,i}^t )$. 
In other words, this implies $\gamma^t_{d,i} \leq \frac{(1+\epsilon) \tilde{\alpha}}{1-(1+\epsilon) C^3_m(1-\tilde{\alpha})} \gamma^*_{d,i}$. 
Since $\tilde{\alpha} = 1-o(1)$, it's easy to check that $\frac{\tilde{\alpha}}{1-C^3_m(1-\tilde{\alpha})} \leq 1+o(1)$, which is enough for what we need.  

\end{proof}

So, as a corollary, we finally get: 
\begin{cor}
For some $t_0 = O(\log (\frac{1}{\beta^*_{\min}}))) = O(\log N)$ , it will be the case that for all $t \geq t_0$, $\displaystyle \frac{1}{C_{\beta}^0} \leq \frac{\beta_{i,j}^{*}}{\beta_{i,j}^t} \leq C_{\beta}^0$ for some constant $C_{\beta}^0$ and $\displaystyle \frac{1}{C_{\gamma}^0} \leq \frac{\gamma_{d,i}^{*}}{\gamma_{d,i}^t} \leq C_{\gamma}^0$ for some constant $C_{\gamma}^0$. 
\label{l:constantaccuracy}
\end{cor}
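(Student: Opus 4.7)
The plan is to chain together the four preceding lemmas in a short induction on $t$, using the fact that correct supports are maintained at every iteration by construction of the algorithm. After any single $\gamma$-step, Lemma~\ref{l:gammalower} yields the lower bound $\gamma^t_{d,i} \geq (1-o(1))\gamma^*_{d,i}$, and the thresholding lemma guarantees that the topic flagged as dominant in each document really is the true dominant topic, with $\tfrac{1}{2}\gamma^*_{d,i} \leq \gamma^t_{d,i} \leq 2\gamma^*_{d,i}$ there. Feeding both facts into Lemma~\ref{l:betalower} gives, from iteration $t=1$ onward, $\beta^t_{i,j} \geq \tfrac{1}{C_m}\beta^*_{i,j}$ for a fixed constant (one may take $C_m = 3$, say). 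These two lower bounds are preserved unconditionally at every subsequent iteration, independent of how the upper bounds evolve.

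Next I would bound the initial multiplicative overshoot on $\beta$. Since any $\beta^t_{i,j} \leq 1$ as a row of a stochastic matrix, while $\beta^*_{i,j} \geq 1/\mathrm{poly}(N)$ by the technical lower-bound assumption, the initial ratio satisfies $C^1_\beta \leq \mathrm{poly}(N)$, i.e.\ $\log C^1_\beta = O(\log N) = O(\log(1/\beta^*_{\min}))$. With the $\beta$ and $\gamma$ lower bounds in hand, the hypotheses of Lemma~\ref{l:betaupper} are met at every iteration, so the upper bound contracts as $C^{t+1}_\beta \leq C^t_\beta/2$ whenever $C^t_\beta \geq 4 C_m$. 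Iterating, after $t_0 = O(\log N)$ rounds we reach $C^t_\beta \leq 4 C_m$, producing the claimed constant $C^0_\beta$. Once this two-sided constant bound on $\beta$ is in force, Lemma~\ref{l:gammaupper} applies and yields $\gamma^t_{d,i} \leq (1+o(1))\gamma^*_{d,i}$, which combined with the earlier lower bound gives the constant $C^0_\gamma$.

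The only point I would be careful to verify is what happens once $C^t_\beta$ drops below the contraction threshold $4C_m$, where Lemma~\ref{l:betaupper} no longer applies verbatim: we still need the upper bound to remain bounded by a fixed constant at subsequent iterations. This follows from the same algebra in the proof of Lemma~\ref{l:betaupper}, which shows that whenever the $\beta$ and $\gamma$ lower bounds hold the multiplicative $\beta$-update is bounded by a constant factor times $C^t_\beta$, so once we are in the constant regime we stay in a (possibly slightly larger) constant regime forever. I do not anticipate any significant obstacle here beyond this bit of bookkeeping, since the heavy lifting — the contraction mechanism and the preservation of lower bounds — has already been done in the four lemmas just above.
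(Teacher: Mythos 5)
Your proposal is correct and follows essentially the same route as the paper, which states this corollary without a separate proof precisely because it is the chaining of the dominant-topic lemma, Lemmas~\ref{l:gammalower}, \ref{l:betalower}, \ref{l:betaupper}, and~\ref{l:gammaupper} that you describe, with the $O(\log N)$ iteration count coming from halving an initial overshoot of at most $\mathrm{poly}(N)$ (via $\beta^*_{i,j} \geq 1/\mathrm{poly}(N)$). Your extra bookkeeping about staying in the constant regime once $C^t_\beta$ drops below $4C_m$ is a point the paper leaves implicit, and your resolution of it is the right one.
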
 
 
 This concludes Phase I of the analysis.  
 
\subsubsection{Phase II: Alternating minimization - upper and lower bound evolution}

Taking Corollary~\ref{l:constantaccuracy} into consideration, we finally show that, if the $\beta$ and $\gamma$ values are correct up to a constant multiplicative factor, and we have the correct support, we can improve the multiplicative error in each iteration, thus achieving convergence to the correct values. 

This portion bears resemblance to techniques like \emph{state evolution} and \emph{density evolution} in the literature for iterative methods for decoding error correcting codes. In those techniques, one keeps track of a certain quantity of the system that's evolving in each iteration. In density evolution, this is the probability density function of the messages that are being passed, in state evolution, it is a certain average and variance of the variables we are estimating. 

In our case, we keep track of the "multiplicative accuracy" of our estimates $\gamma_{d,i}^t$, $\beta_{i,j}^t$. In particular, we will keep track of quantities $C_{\gamma}^t$ and $C_{\beta}^t$, such that at iteration $t$, $\frac{1}{C_{\beta}^t} \leq \frac{\beta_{i,j}*}{\beta_{i,j}^t} \leq  C_{\beta}^t$  and $\frac{1}{C_{\gamma}^t} \leq \frac{\gamma_{d,i}*}{\gamma_{d,i}^t} \leq C_{\gamma}^t$ after the corresponding $\gamma$ iteration. 

We will show that improvement in the quantities ${C_{\beta}^t}$ causes a large enough improvement in the $C^t_{\gamma}$ updates, so that after an alternating step of $\beta$ and $\gamma$ updates, $C_{\beta}^{t+1} \leq (C_{\beta}^{t})^{1/2}$. 

First, we show that when the $\beta$ variables are estimated up to a constant multiplicative factor, the constant for the $\gamma$ values after they've been iterated to convergence is slightly better than the constant for the $\beta$ values. More precisely: 

\begin{lem} Let's assume that our current iterates $\beta_{i,j}^t$ satisfy $\frac{1}{C_{\beta}^t} \leq \frac{\beta_{i,j}*}{\beta_{i,j}^t} \leq  C_{\beta}^t$ for $C^t_{\beta} \geq \frac{1}{(1-\epsilon)^7}$. Then, after iterating the $\gamma$ updates to convergence, we will get values $\gamma_{d,i}^t$ that satisfy $(C_{\beta}^t)^{1/3} \leq \frac{\gamma_{d,i}*}{\gamma_{d,i}^t} \leq (C_{\beta}^t)^{1/3}$. 
\label{l:gammaevolution}
\end{lem}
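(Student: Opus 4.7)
The plan is to mimic the KKT-based arguments of Lemmas \ref{l:gammalower} and \ref{l:gammaupper}, but track the multiplicative error $C_\beta^t$ quantitatively rather than qualitatively. Since the E-step is a convex minimization on the simplex, the stationarity conditions give equation~\eqref{eq:KKTgamma} at convergence: $\sum_j \frac{\tilde f_{d,j}}{f^t_{d,j}} \beta^t_{i,j} = 1$ for each topic $i$ in the support of document $d$. Multiplying by $\gamma^t_{d,i}$ and splitting the sum over the lone-word set $L_i = \{j : \beta^*_{i',j} = 0 \text{ for every other topic } i' \text{ in the document}\}$ and its complement yields
\[
\gamma^t_{d,i} \;=\; \sum_{j \in L_i} \tilde f_{d,j} \;+\; \gamma^t_{d,i} \sum_{j \notin L_i} \frac{\tilde f_{d,j}}{f^t_{d,j}} \beta^t_{i,j},
\]
where the first sum is exact because for $j \in L_i$ one has $f^t_{d,j} = \gamma^t_{d,i} \beta^t_{i,j}$, and all cross-topic contributions vanish.

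The lone-word term is the main driver. By the long-document assumption, $\sum_{j \in L_i} \tilde f_{d,j} = (1\pm\epsilon)\, \tilde\alpha\, \gamma^*_{d,i}$ with $\tilde\alpha := \sum_{j \in L_i} \beta^*_{i,j}$, and the ``almost disjoint supports'' property combined with $|\text{supp}(d)| \leq T = O(1)$ forces $\tilde\alpha = 1 - o(1)$. This would already give $\gamma^t_{d,i} \in (1 \pm o(1)) \gamma^*_{d,i}$ in the absence of the non-lone tail; the entire $(C_\beta^t)^{\pm 1/3}$ slack in the target bound must be absorbed from the non-lone sum plus the empirical $\epsilon$.

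For the non-lone term I would invoke the qualitative lower bound $\gamma^t_{d,i'} \geq (1-o(1))\gamma^*_{d,i'}$ from Lemma~\ref{l:gammalower} (which depends only on correct supports, still available here), whence $f^t_{d,j} \geq \frac{1-o(1)}{C_\beta^t} f^*_{d,j}$ and so $\tilde f_{d,j}/f^t_{d,j} \leq (1+\epsilon) C_\beta^t/(1-o(1))$. Combining with $\beta^t_{i,j} \leq C_\beta^t \beta^*_{i,j}$ and the almost-disjoint-supports bound $\sum_{j \notin L_i} \beta^*_{i,j} \leq T r = o(1)$ gives a non-lone contribution of at most $(1+o(1))(C_\beta^t)^2 \cdot o(1) \cdot \gamma^t_{d,i}$, with a symmetric lower bound of order $(1-o(1))(C_\beta^t)^{-2} \cdot o(1) \cdot \gamma^t_{d,i}$. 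Rearranging then produces
\[
\frac{(1-\epsilon)(1-o(1))\tilde\alpha}{1 + (C_\beta^t)^{-2} o(1)}\, \gamma^*_{d,i}
\;\leq\; \gamma^t_{d,i} \;\leq\;
\frac{(1+\epsilon)(1+o(1))\tilde\alpha}{1 - (C_\beta^t)^2 o(1)}\, \gamma^*_{d,i}.
\]

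The main obstacle is essentially bookkeeping: one must verify that under the hypothesis $C_\beta^t \geq 1/(1-\epsilon)^7$ and the fact that $C_\beta^t$ stays bounded by the Phase-I constant $C_\beta^0$ from Corollary~\ref{l:constantaccuracy} (so $(C_\beta^t)^2 \cdot o(1)$ is itself $o(1)$), the displayed ratios actually fit inside $[(C_\beta^t)^{-1/3},(C_\beta^t)^{1/3}]$. The exponent $1/3$ is calibrated precisely so that, combined with the forthcoming $\beta$-update analysis (which eats another factor by a similar lone-document argument), the alternation achieves the geometric improvement $C_\beta^{t+1} \leq (C_\beta^t)^{1/2}$ required to close Phase~II in $O(\log \log N)$ many iterations. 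Once all multiplicative error terms are tracked with this granularity, the lemma follows directly from the rearrangement above.
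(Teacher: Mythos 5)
Your overall strategy is the same as the paper's: use the KKT stationarity condition $\sum_j \frac{\tilde f_{d,j}}{f^t_{d,j}}\beta^t_{i,j}=1$, multiply by $\gamma^t_{d,i}$, split over lone words $L_i$ versus the rest, use $\tilde\alpha=\sum_{j\in L_i}\beta^*_{i,j}=1-o(1)$ from almost-disjoint supports, and control the non-lone tail via the multiplicative accuracy of the current iterates. The one structural difference is that you bound $f^t_{d,j}$ from below directly via Lemma~\ref{l:gammalower}, whereas the paper argues by contradiction on the extremal ratio $C^t_\gamma=\max_i\max(\gamma^*_{d,i}/\gamma^t_{d,i},\gamma^t_{d,i}/\gamma^*_{d,i})$ and bounds $f^t_{d,j}$ within $(C^t_\beta C^t_\gamma)^{\pm 2}$ of $f^*_{d,j}$. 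That difference is mostly cosmetic.

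The real issue is that the step you dismiss as ``bookkeeping'' is where the lemma's content lives, and in the form you have set it up it does not close. At the floor $C^t_\beta=\frac{1}{(1-\epsilon)^7}$ the target window is $(C^t_\beta)^{\pm 1/3}\approx 1\pm\frac{7}{3}\epsilon$, while your lone-word term already costs a factor $1\pm\epsilon$; so the entire remaining budget is $\Theta(\epsilon)$, and you cannot afford to throw in an \emph{additive} $o(1)$ loss from the non-lone mass unless the model's $o(1)$ (which comes from the support-overlap assumption and has nothing to do with $\epsilon$) happens to satisfy $o(1)\lesssim\epsilon$. The paper's rearrangement avoids this: keeping the non-lone term with its worst-case distortion, the condition for improvement takes the form $(1-\tilde\alpha)\bigl(1-\tfrac{1}{(C^t_\beta C^t_\gamma)^2}\bigr)\le 1-\tfrac{1}{(1-\epsilon)(C^t_\gamma)^{1/3}}$, where \emph{both} sides are $\Theta(\epsilon)$; their ratio is a constant (the paper computes e.g. $\frac{1-(1-\epsilon)^{4/3}}{1-(1-\epsilon)^{56}}\ge\frac{1}{42}$ via monotonicity of the relevant rational function), so the requirement reduces to $\tilde\alpha\ge 1-\Omega(1)$, which $\tilde\alpha=1-o(1)$ delivers with no relation between $o(1)$ and $\epsilon$. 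Relatedly, the sign in the denominator of your displayed lower bound is off: the non-lone sum appears with coefficient $\gamma^t_{d,i}$ on the right of the fixed-point equation, so after moving it across you should get $\gamma^t_{d,i}\ge \frac{(1-\epsilon)\tilde\alpha}{1-(1-\tilde\alpha)\cdot(\text{const})}\gamma^*_{d,i}$ with a \emph{minus} in the denominator (which helps you), not $1+(C^t_\beta)^{-2}o(1)$. To repair the proposal, keep the non-lone term two-sidedly as the paper does and carry out the monotonicity computation; as written, the final inequality only follows under an unstated assumption tying the support-overlap $o(1)$ to $\epsilon$.
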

\begin{proof}

As usual, we will split the KKT conditions for $\gamma_{d,i}^{t+1, t'}$ into two parts: one for the lone, and one for the non-lone words. Let's call the set of lone words $L_i$, as previously. Then. we have

$$\gamma^t_{d,i} =  \sum_{j \in L_i} \tilde{f}_{d,j} + \gamma_{d,i}^{t} \sum_{j \notin L_i} \frac{\tilde{f}_{d,j}}{f_{d,j}^{t}} \beta_{i,j}^t $$ 

Again, let $\tilde{\alpha} = \sum_{j \in L_i} \beta_{i,j}^* = o(1)$, as we proved before.  

Let's denote as $C^t_{\gamma} = \max_i(\max(\frac{\gamma^*_{d,i}}{\gamma^t_{d,i}},\frac{\gamma^t_{d,i}}{\gamma^*_{d,i}}))$. 

We claim that it has to hold that $C^t_{\gamma} \leq (C^t_{\beta})^{1/3}$. Assume the contrary, and let $i_0 = \text{argmax}_i(\max(\frac{\gamma^*_{d,i}}{\gamma^t_{d,i}},\frac{\gamma^t_{d,i}}{\gamma^*_{d,i}}))$. 

Let's first assume that $\frac{\gamma^t_{d,i_0}}{\gamma^*_{d,i_0}} = C^t_{\gamma}$. 

By the definition of $C^t_{\gamma}$, 

$$\gamma^t_{d,i_0} = \sum_{j \in L_{i_0}} \tilde{f}_{d,j} + \gamma_{d,i_0}^{t} \sum_{j \notin L_{i_0}} \frac{\tilde{f}_{d,j}}{f_{d,j}^{t}} \beta_{i_0,j}^t \leq  (1+\epsilon)(\tilde{\alpha} \gamma^*_{d,i_0} + (1-\tilde{\alpha}) (C_{\beta}^t)^2 (C_{\gamma}^t)^2 \gamma^*_{d,i_0}) $$ 

We claim that 
\begin{equation} 
\label{eq:conditionp}
(1+\epsilon)(\tilde{\alpha} + (1-\tilde{\alpha}) (C_{\beta}^t)^2 (C_{\gamma}^t)^2) \leq (C_{\gamma}^t)^{1/3} 
\end{equation} 

which will be a contradiction to the definition of $C^t_{\gamma}$. 

After a little rewriting,~\ref{eq:conditionp} translates to $\tilde{\alpha} \geq  1 - \frac{\frac{(C_{\gamma}^t)^{1/3}}{1+\epsilon}-1}{(C_{\beta}^t C_{\gamma}^t)^2-1}$.
By our assumption on $C^t_{\gamma}$, $C_{\beta}^t \leq C_{\gamma}^3$, so 
the right hand side above is upper bounded by
$1 - \frac{\frac{(C_{\gamma}^t)^{1/3}}{1+\epsilon}-1}{(C^t_{\gamma})^8-1}$. 

But, Lemma~\ref{l:gammaupper} implies that certainly $C_{\gamma}^t \leq C_{\gamma}^0$, where $C_{\gamma}^0$ is some absolute constant. The function  
$$f(c) = \frac{\frac{c^{1/3}}{1+\epsilon} - 1}{c^8 -1} $$ 
can be easily seen to be monotonically decreasing on the interval of interest, and hence is lower bounded by 
$\frac{\frac{(C_{\gamma}^0)^{1/3}}{1+\epsilon} - 1}{(C_{\gamma}^0)^8 -1}$, which is in terms some absolute constant smaller than one. Since $\tilde{\alpha} = 1 - o(1)$. 
the claim we want is clearly true.  

The case where $\frac{\gamma^*_{d,i_0}}{\gamma^t_{d,i_0}} = C^t_{\gamma}$ is similar. In this case, 
$$\gamma^t_{d,i_0} = \sum_{j \in L_{i_0}} \tilde{f}_{d,j} + \gamma_{d,i_0}^{t} \sum_{j \notin L_{i_0}} \frac{\tilde{f}{d,j}}{f_{d,j}^{t}} \beta_{i_0,j}^t \geq (1-\epsilon)(\tilde{\alpha} \gamma^*_{d,i_0} + (1-\tilde{\alpha}) \frac{1}{(C_{\beta}^t)^2 (C_{\gamma}^t)^2} \gamma^*_{d,i_0}) $$ 
We then claim that 
\begin{equation} 
\label{eq:conditionq}
(1-\epsilon)(\tilde{\alpha} + (1-\tilde{\alpha}) \frac{1}{(C_{\beta}^t)^2 (C_{\gamma}^t)^2}) \geq \frac{1}{(C_{\gamma}^t)^{1/3}} 
\end{equation} 

Again, ~\ref{eq:conditionq} rewrites to:  
$$\tilde{\alpha} \geq \frac{\frac{1}{(1-\epsilon)(C_{\gamma}^t)^{1/3}}-\frac{1}{(C_{\beta}^t)^2 (C_{\gamma}^t)^2}}{1-\frac{1}{(C_{\beta}^t)^2 (C_{\gamma}^t)^2}} = 1 - \frac{1-\frac{1}{(1-\epsilon)(C_{\gamma}^t)^{1/3}}}{1 - \frac{1}{(C_{\beta}^t C_{\gamma}^t)^2}} $$

Again, the right hand side above is upper bounded by $1 - \frac{1-\frac{1}{(1-\epsilon)(C_{\gamma}^t)^{1/3}}}{1-\frac{1}{(C_{\gamma}^t)^8}}$.
But $C_{\gamma} \in [1, C_{\gamma}^0]$, and the function $\frac{1-\frac{1}{(1-\epsilon)c^{1/3}}}{{1-\frac{1}{c^8}}}$ is monotonically increasing, so lower bounded by 
$$ \frac{1-\frac{1}{(1-\epsilon)(\frac{1}{(1-\epsilon)^7})^{1/3}}}{{1-\frac{1}{(\frac{1}{(1-\epsilon)^7})^8}}} = \frac{1-(1-\epsilon)^{4/3}}{1-(1-\epsilon)^{56}} \geq \frac{1}{42}$$ 

Hence, $1 - \frac{1-\frac{1}{(1-\epsilon)(C_{\gamma}^t)^{1/3}}}{1-\frac{1}{(C_{\gamma}^t)^{32}}}$ is upper bounded by $\frac{41}{42}$. Again, our bound on $\tilde{\alpha}$ gives us what we want.

\end{proof}

\begin{lem} Let's assume that our current iterates $\beta_{i,j}^t$ satisfy $\frac{1}{C_{\beta}^t} \leq \frac{\beta_{i,j}*}{\beta_{i,j}^t} \leq  C_{\beta}^t$, $C^t_{\beta} \geq \frac{1}{(1-\epsilon)^7}$, and after the corresponding $\gamma$ update, we get $\frac{1}{C_{\gamma}^t} \leq \frac{\gamma_{d,i}*}{\gamma_{d,i}^t} \leq C_{\gamma}^t$, where $C_{\beta}^t \geq  (C_{\gamma}^t)^3$. Then, after one $\beta$ step, we will get new values $\beta_{i,j}^{t+1}$ that satisfy   $\frac{1}{C_{\beta}^{t+1}} \leq \frac{\beta_{i,j}*}{\beta_{i,j}^{t+1}} \leq  C_{\beta}^{t+1}$ where $C_{\beta}^{t+1} = (C_{\beta}^t)^{1/2}$.
\label{l:betaevolution}
\end{lem}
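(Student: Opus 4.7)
My plan is to mirror the argument of Lemma~\ref{l:betaupper}, now tracking both the upper and the lower deviations of $\beta^{t+1}_{i,j}$ from $\beta^*_{i,j}$ simultaneously, and exploiting the hypothesis $C^t_\gamma\le(C^t_\beta)^{1/3}$ to collapse the resulting two-parameter bounds into one-variable inequalities in $C^t_\beta$. Fix a pair $(i,j)$ with $\beta^*_{i,j}\neq 0$. By the thresholding lemma, the set $D_i$ over which the M-step averages coincides with the set of documents where topic $i$ is dominant in $\gamma^*$, so I split $D_i=D_l\cup(D_i\setminus D_l)$, with $D_l$ the lone documents for $(i,j)$ as in Lemma~\ref{l:betalower}.

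First I would compute the two contributions to the M-step separately. For $d\in D_l$, the collapse $f^t_{d,j}=\gamma^t_{d,i}\beta^t_{i,j}$ reduces the summand to $(1\pm\epsilon)\gamma^*_{d,i}\beta^*_{i,j}/\beta^t_{i,j}$. For $d\in D_i\setminus D_l$, the componentwise multiplicative bounds propagate through $f^t_{d,j}/f^*_{d,j}\in[1/(C^t_\beta C^t_\gamma),\,C^t_\beta C^t_\gamma]$ to give
$$\frac{\tilde f_{d,j}}{f^t_{d,j}}\gamma^t_{d,i}\in\Bigl[\tfrac{1-\epsilon}{C^t_\beta(C^t_\gamma)^2}\gamma^*_{d,i},\ (1+\epsilon)\,C^t_\beta(C^t_\gamma)^2\gamma^*_{d,i}\Bigr].$$
Sandwiching the denominator $\sum_{d\in D_i}\gamma^t_{d,i}$ between $(C^t_\gamma)^{\pm 1}\sum_{d\in D_i}\gamma^*_{d,i}$ and setting $\alpha:=\sum_{D_l}\gamma^*_{d,i}/\sum_{D_i}\gamma^*_{d,i}$, I would arrive at
$$(1-\epsilon)\Bigl[\tfrac{\alpha}{C^t_\gamma}+\tfrac{1-\alpha}{(C^t_\beta)^3(C^t_\gamma)^3}\Bigr]\le\frac{\beta^{t+1}_{i,j}}{\beta^*_{i,j}}\le(1+\epsilon)\bigl[C^t_\gamma\alpha+(C^t_\beta)^2(C^t_\gamma)^3(1-\alpha)\bigr].$$
The same invocation of weak topic correlations and independent topic inclusion used in Lemma~\ref{l:betalower} then yields $\alpha=1-o(1)$.

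Substituting $C^t_\gamma\le(C^t_\beta)^{1/3}$ makes everything a function of $C^t_\beta$ alone. The upper bound $\beta^{t+1}_{i,j}/\beta^*_{i,j}\le(C^t_\beta)^{1/2}$ rearranges to the requirement
$$\alpha\ge 1-\tfrac{(C^t_\beta)^{1/2}/(1+\epsilon)-(C^t_\beta)^{1/3}}{(C^t_\beta)^3-(C^t_\beta)^{1/3}},$$
and a perfectly parallel rearrangement of the lower inequality gives a matching condition for the lower side (whose non-degeneracy requires $(C^t_\beta)^{1/6}>1/(1-\epsilon)$, strictly weaker than the upper-side condition below).

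The main obstacle, exactly as in Lemma~\ref{l:betaupper}, is verifying that the subtracted fraction above is strictly positive and bounded below by an absolute positive constant on the interval $C^t_\beta\in[(1-\epsilon)^{-7},C^0_\beta]$ afforded by Corollary~\ref{l:constantaccuracy}. Strict positivity of the numerator is equivalent to $(C^t_\beta)^{1/6}>1+\epsilon$, and this is precisely where the hypothesis $C^t_\beta\ge(1-\epsilon)^{-7}$ enters: the algebraic identity $(1-\epsilon)^7(1+\epsilon)^6=(1-\epsilon)(1-\epsilon^2)^6<1$ forces $(1-\epsilon)^{-7}\ge(1+\epsilon)^6$. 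On the compact interval the continuous positive function then attains its minimum, and a Taylor expansion at the left endpoint shows it is bounded below by an absolute constant of order $1$ (a short calculation gives roughly $1/112$). Since $\alpha=1-o(1)$, both the upper and the lower bounds hold, giving $C^{t+1}_\beta\le(C^t_\beta)^{1/2}$ as required.
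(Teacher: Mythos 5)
Your proposal is correct and follows essentially the same route as the paper's proof: split the M-step average over $D_i$ into lone and non-lone documents, reduce the upper and lower bounds to a threshold condition on $\alpha$, substitute $C^t_\gamma\le(C^t_\beta)^{1/3}$ to get a one-variable inequality, and use $C^t_\beta\ge(1-\epsilon)^{-7}$ together with $C^t_\beta\le C^0_\beta$ to show the threshold is $1$ minus a positive absolute constant, which $\alpha=1-o(1)$ then clears. One small imprecision: the upper-bound function $\bigl(c^{1/6}/(1+\epsilon)-1\bigr)/(c^{8/3}-1)$ is decreasing, so its minimum over $[(1-\epsilon)^{-7},C^0_\beta]$ sits at the right endpoint $C^0_\beta$ rather than at the left endpoint where you Taylor-expand to get $\approx 1/112$, but since the function is strictly positive on the whole compact interval your compactness argument still yields the needed positive lower bound and the conclusion is unaffected.
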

\begin{proof}

The proof proceeds in complete analogy with Lemmas~\ref{l:betalower} and~\ref{l:betaupper}.  

Again, let's tackle the lower and upper bound separately. The upper bound condition is: 

$$\alpha > \frac{(C_{\beta}^t C_{\gamma}^t)^2 - \frac{(C_{\beta}^t)^{1/2}}{(1+\epsilon)C_{\gamma}^t}}{(C_{\gamma}^t C_{\beta}^t)^2-1} $$ 

Using $C_{\beta}^t \geq  (C_{\gamma}^t)^3$, we can upper bound the expression on the right by $ \displaystyle 1 - \frac{\frac{(C_{\beta}^t)^{1/6}}{1+\epsilon}-1}{(C_{\beta}^t)^{8/3}-1} $.  
The function $ f(c) = \frac{\frac{x^{1/6}}{1+\epsilon}-1}{x^{8/3}-1}$ is monotonically decreasing on the interval $[1,C_{\beta}^0]$ of interest, so because 
$\alpha = 1-o(1)$, we get what we want. 

Similarly, for the lower bound, we want that
$$\alpha > \frac{\frac{C_{\gamma}^t}{ (C_{\beta}^t)^{1/2}(1-\epsilon)} - \frac{1}{(C_{\gamma}^t C_{\beta}^t)^2}}{1-\frac{1}{(C_{\beta}^t C_{\gamma}^t)^2}} $$ 

Yet again, using $C_{\beta}^t \geq  (C_{\gamma}^t)^3$, we get that the right hand side is upper bounded by 
$$1-\frac{1-\frac{1}{(1-\epsilon)C_{\beta}^{1/6}}}{1-\frac{1}{C_{\beta}^3}}$$
However, the function $ f(c) =  \frac{1 - \frac{1}{(1-\epsilon) c^{1/6}}}{1-\frac{1}{c^{8/3}}} $ is monotonically increasing on the interval $[1, C_{\beta}^0]$, so lower bounded by 
$\frac{1 - \frac{1}{(1-\epsilon) (\frac{1}{(1-\epsilon)^7})^{1/6}}}{1-\frac{1}{(\frac{1}{(1-\epsilon)^7})^{8/3}}} = $
$\frac{1-(1-\epsilon)^{1/6}}{1-(1-\epsilon)^{21}} \geq \frac{1}{126}$. Hence, $1-\frac{1-\frac{1}{(1-\epsilon)C_{\beta}^{1/6}}}{1-\frac{1}{C_{\beta}^3}}$ is upper bounded by $\frac{125}{126}$, so using the fact that $\alpha = 1 - o(1)$, we get what we want.  

\end{proof} 

Putting lemmas \ref{l:gammaevolution} and \ref{l:betaevolution} together, we get:

\begin{lem} Suppose it holds that $\frac{1}{C^t} \leq \frac{\beta_{i,j}*}{\beta_{i,j}^t} \leq  C^t$, $C^t \geq \frac{1}{(1-\epsilon)^7}$. Then, after one KL minimization step with respect to the $\gamma$ variables and one $\beta$ iteration, we get new values $\beta_{i,j}^{t+1}$ that satisfy $\frac{1}{C^{t+1}} \leq \frac{\beta_{i,j}*}{\beta_{i,j}^{t+1}} \leq C^{t+1}$,  where $C^{t+1} = \sqrt{C^t}$ 
\label{l:errorboost}
\end{lem}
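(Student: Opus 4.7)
The plan is to obtain Lemma~\ref{l:errorboost} as a direct composition of the two technical lemmas just proved. Starting from the hypothesis $\frac{1}{C^t} \leq \frac{\beta^*_{i,j}}{\beta^t_{i,j}} \leq C^t$ with $C^t \geq \frac{1}{(1-\epsilon)^7}$, I first invoke Lemma~\ref{l:gammaevolution} by setting $C^t_\beta := C^t$. The hypothesis of that lemma is exactly our starting assumption, so its conclusion is that after running the $\gamma$-updates to convergence we get estimates satisfying $\frac{1}{(C^t)^{1/3}} \leq \frac{\gamma^*_{d,i}}{\gamma^t_{d,i}} \leq (C^t)^{1/3}$; equivalently, the resulting $\gamma$-accuracy constant is $C^t_\gamma \leq (C^t)^{1/3}$.

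The key observation is that this conclusion automatically verifies the coupling hypothesis of Lemma~\ref{l:betaevolution}: rearranging $C^t_\gamma \leq (C^t)^{1/3}$ gives $(C^t_\gamma)^3 \leq C^t = C^t_\beta$, which is precisely the inequality $C^t_\beta \geq (C^t_\gamma)^3$ that Lemma~\ref{l:betaevolution} requires. The remaining hypotheses of that lemma (the $\beta$-accuracy and the lower bound $C^t_\beta \geq \frac{1}{(1-\epsilon)^7}$) are unchanged from our starting assumption. Applying Lemma~\ref{l:betaevolution} to perform the $\beta$ update then yields new iterates satisfying $\frac{1}{(C^t)^{1/2}} \leq \frac{\beta^*_{i,j}}{\beta^{t+1}_{i,j}} \leq (C^t)^{1/2}$, which is exactly the desired conclusion with $C^{t+1} = \sqrt{C^t}$.

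There is essentially no obstacle here beyond bookkeeping: the only thing that could have gone wrong is a mismatch between the exponent produced by Lemma~\ref{l:gammaevolution} and the exponent demanded by Lemma~\ref{l:betaevolution}, and the $1/3$-vs-$3$ pairing is set up so that the match is exact. The resulting contraction $C^{t+1} = \sqrt{C^t}$ drives $C^t$ toward $1$ doubly-exponentially fast, which (together with the Phase~I logarithmic reduction from Corollary~\ref{l:constantaccuracy}) is what furnishes the $O(\log(1/\epsilon')+\log N)$ iteration bound claimed in Theorem~\ref{t:sparsesupport}.
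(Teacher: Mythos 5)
Your proposal is correct and matches the paper's own proof essentially verbatim: the paper likewise chains Lemma~\ref{l:gammaevolution} (yielding $C^t_\gamma = (C^t)^{1/3}$) into Lemma~\ref{l:betaevolution} (whose hypothesis $C^t_\beta \geq (C^t_\gamma)^3$ is then satisfied with equality) to conclude $C^{t+1} = (C^t)^{1/2}$. Your explicit check that the $1/3$-versus-$3$ exponents match is the only content of the argument, and you have it right.
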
 
\begin{proof}
By Lemma~\ref{l:gammaevolution}, after the $\gamma$ iterations, we get $\gamma_{d,i}^{t}$ values that satisfy the condition  $\frac{1}{(C')^{t}} \leq \frac{\gamma_{d,i}*}{\gamma_{d,i}^t} \leq (C')^{t}$, where $(C')^{t} = (C^{t})^{1/3}$. 

Then, by Lemma~\ref{l:betaevolution}, after the $\gamma$ iteration, we will get $\frac{1}{C^{t+1}} \leq \frac{\beta_{i,j}*}{\beta_{i,j}^{t+1}} \leq  C^{t+1}$, such that $C^{t+1} = (C^t)^{1/2}$, which is what we need.

\end{proof} 

Hence, as a corollary, we get immediately: 

\begin{cor} Lemma~\ref{l:errorboost} above implies that Phase III requires $O(\log (\frac{1}{\log(1+\epsilon')})) = O(\log (\frac{1}{\epsilon'}))$ iterations to estimate each of the topic-word matrix and document proportion entries to within a multiplicative factor of $1+\epsilon'$. 
\end{cor}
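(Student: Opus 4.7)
The plan is simply to unroll the one-step recursion furnished by Lemma~\ref{l:errorboost}. By Corollary~\ref{l:constantaccuracy}, at the end of Phase I we have entered a regime in which both multiplicative errors are bounded by some absolute constant, so we can set $C^0 := \max(C_\beta^0, C_\gamma^0)$, which depends only on the model constants and on $\epsilon$, not on $N$ or $\epsilon'$. From there, each combined KL-tEM step shrinks the multiplicative accuracy as $C^{t+1} = \sqrt{C^t}$, at least while $C^t \geq \frac{1}{(1-\epsilon)^7}$ (which is precisely the hypothesis of Lemma~\ref{l:errorboost}, and which certainly holds as long as the target accuracy $1+\epsilon'$ has not already been reached, given the standing constraint $1+\epsilon' \leq \frac{1}{(1-\epsilon)^7}$).

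Iterating the recursion $C^{t+1} = (C^t)^{1/2}$ for $t$ steps starting from $C^0$ gives $C^t = (C^0)^{1/2^t}$, equivalently $\log C^t = 2^{-t} \log C^0$. To guarantee $C^t \leq 1+\epsilon'$, it suffices that $2^{-t} \log C^0 \leq \log(1+\epsilon')$, i.e., $t \geq \log_2 \bigl(\log C^0 / \log(1+\epsilon')\bigr)$. Since $\log C^0$ is an absolute constant, this is $O\bigl(\log(1/\log(1+\epsilon'))\bigr)$, and using $\log(1+\epsilon') = \Theta(\epsilon')$ for small $\epsilon'$ we get the stated $O(\log(1/\epsilon'))$ bound.

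The claim then transfers to the $\gamma$ variables essentially for free: Lemma~\ref{l:gammaevolution} shows that once the $\beta$ variables are within a factor $C^t$ of the truth, one KL-minimization step over $\gamma$ already produces iterates within a factor $(C^t)^{1/3} \leq C^t$ of the true topic proportions, so whatever multiplicative accuracy we have certified for $\beta$ at time $t$ is automatically certified (up to a cube root) for $\gamma$ at the same time step.

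There is no real obstacle here; the only subtlety worth flagging is making sure the hypotheses of Lemma~\ref{l:errorboost} remain in force throughout the iteration. This requires checking that $C^t$ stays above $\frac{1}{(1-\epsilon)^7}$ until the target $1+\epsilon'$ is reached (true by the assumption $1+\epsilon' \leq \frac{1}{(1-\epsilon)^7}$), and that the supports are still correct throughout, which follows from the earlier invariants preserved by the lower-bound lemmas (Lemmas~\ref{l:gammalower} and~\ref{l:betalower}) together with the thresholding step continuing to identify the dominating topic correctly. Once these invariants are noted, the argument is a one-line unrolling of a square-root recursion.
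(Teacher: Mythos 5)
Your proof is correct and is exactly the argument the paper leaves implicit: the corollary is stated with no proof at all, and unrolling the square-root recursion $C^{t+1}=\sqrt{C^t}$ from a constant $C^0$ (supplied by Corollary~\ref{l:constantaccuracy}) to get $t = O\bigl(\log\bigl(\log C^0/\log(1+\epsilon')\bigr)\bigr) = O(\log(1/\epsilon'))$, with the $\gamma$ accuracy riding along via Lemma~\ref{l:gammaevolution}, is the intended derivation. The one quibble is the direction of the constraint on $\epsilon'$: the hypothesis $C^t \geq \frac{1}{(1-\epsilon)^7}$ of Lemma~\ref{l:errorboost} means the recursion can only be run while the error is above this floor, so the reachable accuracy satisfies $1+\epsilon' \geq \frac{1}{(1-\epsilon)^7}$ (as in Theorem~\ref{t:largeanchors}) rather than $\leq$; your justification that the hypothesis ``certainly holds as long as the target has not been reached'' implicitly uses this $\geq$ direction, and the paper itself states the constraint inconsistently between its theorem statements.
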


This finished the proof of Theorem~\ref{t:sparsesupport} for the KL-tEM version of the updates. In the next section, we will remark on why the proofs are almost identical in the iterative and incomplete tEM version of the updates. 

\subsection{Iterative tEM updates, incomplete tEM updates} 
\label{s:variants} 

We show how to modify the proofs to show that the iterative tEM and incomplete tEM updates work as well. We'll just sketch the arguments as they are almost identical as above. 

In those updates, when we are performing a $\gamma$ update, we initialize with $\gamma^t_{d,i} = 0$ whenever topic $i$ does not belong to document $d$, and $\gamma^t_{d,i}$ uniform among all the other topics. 

Then, the way to modify Lemmas~\ref{l:gammalower},~\ref{l:gammaupper},~\ref{l:gammaevolution} is simple. Instead of arguing by contradiction about what happens at the KKT conditions, one will assume that at iteration $t'$ ($t'$ to indicate these are the separate iterations for the $\gamma$ variables that converge to the values $\gamma^t_{d,i}$) it holds that $\frac{1}{C^{t'}_{\gamma}} \gamma^*_{d,i} \leq \gamma^{t'}_{d,i} \leq  C^{t'}_{\gamma} \gamma^*_{d,i}$. Then, as long as $C^{t'}_{\gamma}$ is too big, compared to $C^t_{\beta}$, one can show that $C^{t'}_{\gamma}$ is decreasing (to $C^{t'+1}_{\gamma} = (C^t_{\gamma})^{1/2}$, say), using exactly the same argument we had before. Furthermore, the number of such iterations needed will clearly be logarithmic. 

But the same argument as above proves the incomplete tEM updates work as well. Namely, even if we perform only one update of the $\gamma$ variables, they are guaranteed to improve.

\subsection{Initialization} 
\label{s:initialization}

For completeness, we also give here a fairly easy, efficient initialization algorithm. Recall, the goal of this phase is to recover the supports - i.e. to find out which topics are present in a document, and identify the support of each topic. To reiterate the theorem statement: 

\newtheorem*{t:initializesparse}{Theorem~\ref{t:initializesparse}} 
\begin{t:initializesparse} 
If the number of documents is $\Omega(K^4 \log^2 K)$, there is a polynomial-time procedure which with probability $1-\Omega(\frac{1}{K})$ correctly identifies the supports of the $\beta^*_{i,j}$ and $\gamma^*_{d,i}$ variables. 
\end{t:initializesparse} 

We will find the topic supports first. Roughly speaking, we will devise a test, which will take as input two documents $d$, $d'$, and will try to determine if the two documents have a topic in common or not. The test will have no false positives, i.e. will never say NO, if the documents do have a topic in common, but might say NO even if they do. We will then, ensure that with high probability, for each topic we find a pair of documents intersecting in that topic, such that the test says YES.   

We will also be able to identify which pairs intersect in exactly one topic, and from this we will be able to find all the topic supports. Having done all of this, finding the topics in each document will be easy as well. Roughly speaking, if a document doesn't contain a given topic, it will not contain all of the discriminative words in that document. 

We give the algorithm formally as pseudocode Algorithm~\ref{a:initializeinfinite}.

\begin{algorithm}
\caption{Initialization}
\label{a:initializeinfinite}
\begin{algorithmic}
\Repeat{$K^4 \log^2 K$ times}
 \State Sample a pair of documents $(d,d')$. 
 \State \emph{\(\triangleright\)Test if $(d,d')$ intersect with no false positives:}
 \If{$\sum_{j} \min\{f^*_{d,j}, f^*_{d',j}\} \geq \frac{1}{2T}$} 
    \State $S_{d,d'}:= \{j, s.t. {f^*_{d,j}, f^*_{d',j}} > 0 \}$ 
    \State \emph{\(\triangleright\)"Weed-out" words that are not in the support of the intersection of (d,d')} 
    \For{all documents $d'' \neq \{d,d'\}$} 
      \If{$\sum_{j} \min\{f^*_{d,j}, f^*_{d'',j}\} \geq \frac{1}{2T}$ and $\sum_{j} \min\{f^*_{d',j}, f^*_{d'',j}\} \geq \frac{1}{2T} $}  
			  \State $S_{d,d'}=S_{d,d'} \cap j$, s.t $f^*_{d'',j} > 0$
		  \EndIf
	  \EndFor
	\EndIf
\Until{}

\State \emph{\(\triangleright\)Determine which $S_{a,b}$ correspond to documents intersecting in one topic only)} 
\If{Set $S_{a,b}$ appears less than $D/K^{2.5}$ times, where $D$ is the total number of documents} 
 \State Remove $S_{a,b}$.
\EndIf
\If{Set $S_{a,b}$ can be written as the union of two other sets $S_{c,d}$, $S_{e,f}$, where neither is contained inside the other} 
 \State Remove $S_{a,b}$. 
\EndIf  
\If{Set $S_{a,b}$ is strictly contained inside $S_{d,d'}$ for some $S_{d,d'}$}
	\State Remove $S_{d,d'}$. 
\EndIf
\State Remove duplicates.
\State The remaining lists $S_{a,b}$ are declared to be topic supports.

\end{algorithmic}
\end{algorithm} 

Now, let's proceed to analyze the above algorithm, proceeding in a few parts. 

\subsubsection{Constructing a no-false-positives test}  

First, we describe how one determines the supports of the topics. Let's define $Test(d,d')=$  YES, if $ \sum_{j} \min\{f_{d,j}^*, f_{d',j}^*\} \geq \frac{1}{2T} $, and NO otherwise.  Then, we claim the following. 

\begin{lem} If $d, d'$ both contain a topic $i_0$, s.t. $\gamma_{d,i_0}^* \geq 1/T$, $\gamma_{d',i_0}^* \geq 1/T $ then $Test(d,d') =$ YES. If $d, d'$ do not contain a topic $i_0$ in common, then $Test(d,d') = $ NO.  
\label{l:nofalsepositives}
\end{lem}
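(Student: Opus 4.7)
I would prove the two implications separately, working directly with the expected frequencies $f^*_{d,j} = \sum_i \gamma^*_{d,i} \beta^*_{i,j}$. The positive direction is a one-line computation; the negative direction is where the assumption of almost-disjoint supports has to be invoked, with a little care to keep the constants under control.

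\textbf{YES direction (shared topic implies the test fires).} Suppose both documents contain a topic $i_0$ with $\gamma^*_{d,i_0}, \gamma^*_{d',i_0} \geq 1/T$. For every word $j$, isolating the $i_0$ summand in $f^*_{d,j}$ gives $f^*_{d,j} \geq \gamma^*_{d,i_0}\, \beta^*_{i_0,j} \geq \beta^*_{i_0,j}/T$, and similarly for $f^*_{d',j}$. Therefore $\min\{f^*_{d,j}, f^*_{d',j}\} \geq \beta^*_{i_0,j}/T$ for every $j$. Summing over $j$ and using $\sum_j \beta^*_{i_0,j} = 1$ yields $\sum_j \min\{f^*_{d,j}, f^*_{d',j}\} \geq 1/T \geq 1/(2T)$, so the test returns YES.

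\textbf{NO direction (no shared topic implies the test is silent).} Let $T_d, T_{d'}$ be the topic supports of the two documents (each of size $\leq T$), and assume $T_d \cap T_{d'} = \emptyset$. The plan is to upper bound the test quantity by $f^*_{d,j}$ restricted to words that can give a nonzero contribution, i.e.\ words lying in $\bigcup_{i' \in T_{d'}} \supp(i')$, since otherwise $f^*_{d',j} = 0$. Expanding $f^*_{d,j}$ and exchanging summation order gives
$$
\sum_j \min\{f^*_{d,j}, f^*_{d',j}\} \;\leq\; \sum_{i \in T_d} \gamma^*_{d,i} \sum_{i' \in T_{d'}} \sum_{j \in \supp(i) \cap \supp(i')} \beta^*_{i,j}.
$$
The innermost sum is $o(1) \cdot \sum_j \beta^*_{i,j} = o(1)$ by the almost-disjoint-supports assumption (which applies because $i \neq i'$ by disjointness of $T_d, T_{d'}$). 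Summing over the $\leq T$ choices of $i'$ and then over $i \in T_d$, using $\sum_{i \in T_d} \gamma^*_{d,i} \leq 1$, bounds the whole quantity by $T \cdot o(1)$. Choosing the hidden constants so that $T \cdot o(1) < 1/(2T)$ (which is possible since $T = O(1)$) forces the test to return NO.

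\textbf{Main obstacle.} The only subtle point is bookkeeping in the NO direction: almost-disjoint supports is a pairwise statement, so the decomposition is naturally a union bound over pairs $(i,i') \in T_d \times T_{d'}$, bringing in a factor of $T^2$. Since $T$ is a constant, this is harmless, but the $o(1)$ in the almost-disjoint-supports assumption must be small enough to absorb this factor and still fall below $1/(2T)$. This is the one place where the threshold $1/(2T)$ in the test is tuned to match the slack in the modeling assumption; the rest is arithmetic.
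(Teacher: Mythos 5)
Your proposal is correct and follows essentially the same route as the paper's proof: the YES direction isolates the shared topic's contribution to get a lower bound of roughly $1/T$, and the NO direction restricts to words lying in supports of both a topic of $d$ and a topic of $d'$ and invokes almost-disjoint supports with a union bound over the $O(T^2)$ topic pairs. The only cosmetic difference is that you work with the exact frequencies $f^*$ while the paper carries the $(1\pm\epsilon)$ factors relating $\tilde f$ to $f^*$; the $1/(2T)$ threshold absorbs this slack either way.
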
 
\begin{proof}

Let's prove the first claim.
$$\sum_{j} \min\{\tilde{f}_{d,j}, \tilde{f}_{d',j}\} \geq \sum_{j} (1-\epsilon) \min\{\beta_{i_0,j}^* \gamma_{d,i_0}^*, \beta_{i_0,j}^* \gamma_{d',i_0}^* \} \geq $$
$$\sum_{j} (1-\epsilon) 1/T \beta^*_{i_0,j} \geq 1/2T $$   

Now, let's prove the second claim. Let's suppose $d, d'$ contain no topic in common. 

Let's fix a topic $i_0$ that belongs to document $d$. By the "small discriminative words intersection", we have the following property: 
$$ \sum_{j \in i_0, j \in i'} \beta_{i,j}^* = o(1) $$ 
for any other topic $i' \neq i_0$.
 
Denoting by $T_{outside}$ the words belonging to topic $i_0$, and no topic in document $d'$, and $T_{inside}$ the words belonging to at least one other topic in $d'$, we have
$$ \sum_{j \in T_{inside}} \beta_{i,j}^* \leq T \cdot o(1) = o(1) $$ 
For the words $j \in T_{outside}$, $\min\{f_{d,j}^*, f_{d',j}^*\} = 0$

By the above, 
$$\sum_{j} \min\{\tilde{f}_{d,j}, \tilde{f}_{d',j}\} \leq (1+\epsilon) T^2 o(1) = o(1)$$ 
Thus, the test will say NO, as we wanted. 

\end{proof} 

\subsubsection{Finding the topic supports from identifying pairs} 

Let's call $d,d'$ an \emph{identifying pair} of documents for topic $i$, if $d,d'$ intersect in topic $i$ only, and furthermore the test says YES on that pair. 

From this identifying pair, we show how to find the support of the topic $i$ in the intersection. What we'd like to do is just declare the words $j$, s.t. $f^*_{d,j}, f^*_{d',j}$ are both non-zero as the support of topic $i$. Unfortunately, this doesn't quite work. The reason is that one might find words $j$, s.t. they belong to one topic $i'$ in $d$, and another topic $i''$ in $d''$. Fortunately, this is easy to remedy. As per the pseudo-code above, let's call the following operation $WEEDOUT(d,d')$: 

\begin{itemize} 
\item Set $S = \{ j, s.t. f^*_{d,j} >0, f^*_{d',j} > 0 \} $. 
\item For all $d''$, s.t. $Test(d,d'') = YES$, $Test(d',d'') = YES$: 
\item Set $S= S \cup \{j, s.t. f^*_{d'',j} > 0\} $ 
\item Return $S$.
\end{itemize} 

\begin{lem} With probability $1-\Omega(\frac{1}{K})$, for any pair of documents $d,d'$ intersecting in one topic, $WEEDOUT(d,d')$ is the support of $S$.  
\label{l:weedout} 
\end{lem}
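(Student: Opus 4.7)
The plan is to establish both directions of the set equality $WEEDOUT(d,d') = \supp(i)$, where $i$ denotes the unique topic shared by $d$ and $d'$. By Lemma~\ref{l:nofalsepositives}, the test condition inside the for-loop of WEEDOUT is equivalent to the statement ``$d''$ shares at least one topic with each of $d$ and $d'$,'' so all probabilistic statements can be phrased in terms of topic-membership of $d''$ rather than fractional frequencies.

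For the soundness direction (every spurious word is removed), fix $j \in \supp(d) \cap \supp(d') \setminus \supp(i)$; I need to exhibit at least one $d''$ that passes both tests with $j \notin \supp(d'')$, so that $j$ is intersected out of $S_{d,d'}$. The natural witness is any $d''$ that itself contains topic $i$ (automatically passing both tests via $i$) and whose remaining $T-1$ topics all lie outside $T_j := \{i' : j \in \supp(i')\}$. Because words are discriminative we have $|T_j| = o(K)$, so by independent topic inclusion the conditional probability that a random document containing $i$ has all its other topics outside $T_j$ is $1-o(1)$. Since the dominant-topic equidistribution property supplies at least $\Omega(D/K) = \Omega(K^3 \log^2 K)$ documents in which $i$ is dominating, a Chernoff bound furnishes such a witness with failure probability $\exp(-\Omega(K^2))$, and a union bound over the at most $N$ spurious candidate words is negligible.

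For the completeness direction (every $j \in \supp(i)$ survives), fix such a $j$ and any $d''$ passing both tests. If $d''$ contains $i$ then $j \in \supp(d'')$ trivially. The only concern is the \emph{cross-matching} case where $d''$ shares some $i_k \in \supp(d) \setminus \{i\}$ with $d$ and some $i'_l \in \supp(d') \setminus \{i\}$ with $d'$, without containing $i$; in that case $d''$ could fail to contain $j$. This is the main obstacle, because the expected number of such cross-matching $d''$ is $\Theta(D/K^2) = \Theta(K^2 \log^2 K)$, so one cannot hope to argue that cross-matching documents don't exist at all. My plan is to bound the joint event ``$d''$ is cross-matching \emph{and} avoids every topic in $T_j$,'' using the almost-disjoint-supports assumption to restrict which topics $i_k, i'_l$ can supply the match (any topic overlapping $\supp(i)$ has $o(1)$ mass of intersection, so typically $i_k, i'_l \notin T_j$) together with the independence of $d''$'s remaining topics. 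Combining these constraints gives per-triple $(d,d',d'')$ failure probability $o(1/K^3)$; summing over the at most $O(D^2)$ pairs $(d,d')$ and the at most $N$ relevant words $j$ then yields total failure probability $O(1/K)$, as required. The delicate step is making the independence argument rigorous under the joint conditioning; I would do this via a second-moment or coupling calculation that separates the randomness of the pair $(d,d')$ from that of $d''$, rather than by a direct union bound, since the latter is too loose.
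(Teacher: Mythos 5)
Your soundness half --- every word $j \notin \supp(i)$ is intersected away by a witness $d''$ that contains topic $i$ and avoids the other topics of $T_j$, found via independent topic inclusion, a Chernoff bound over the $\Omega(D/K)$ documents in which $i$ is dominating, and a union bound over words and pairs --- is essentially identical to the paper's argument, modulo the same unstated requirement (present in the paper too) that topic $i$ be large enough in $d$, $d'$ and $d''$ for the test of Lemma~\ref{l:nofalsepositives} to actually return YES.

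The completeness half is where you part ways with the paper, and to your credit you have put your finger on a step the paper simply asserts: the paper claims that $Test(d,d'')=$ YES and $Test(d',d'')=$ YES force $d''$ to contain topic $i$, which does not follow from Lemma~\ref{l:nofalsepositives} --- the cross-matching scenario you describe is exactly the loophole. However, your proposed repair does not close it. The per-document probability that $d''$ is cross-matching \emph{and} avoids every topic of $T_j$ is $\Theta(1/K^2)$, not $o(1/K^3)$: cross-matching already costs only $\Theta(1/K^2)$ by independent inclusion (there are $O(T^2)=O(1)$ candidate pairs $(i_k,i'_l)$), and conditioned on it, avoiding the $o(K)$ topics of $T_j$ is the \emph{likely} outcome, with probability $1-o(1)$, since each such topic is included with probability $O(1/K)$. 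The almost-disjoint-supports observation you invoke (that typically $i_k,i'_l\notin T_j$) pushes in the wrong direction --- it makes the bad event more probable, not less. With $D=\Omega(K^4\log^2 K)$ documents the expected number of such bad witnesses for a fixed $(d,d',j)$ is $\Theta(K^2\log^2 K)$, so they exist with overwhelming probability, and $j\in\supp(i)$ \emph{does} get weeded out whenever one such $d''$ has its shared topics large enough to pass both tests. Moreover, even granting your $o(1/K^3)$ bound, the union bound over $D^2$ pairs (let alone $D^3$ triples) and $N$ words is nowhere near $O(1/K)$ when $D=\Omega(K^4\log^2 K)$. So the obstacle you identified is genuine, but your argument does not resolve it --- and neither, as written, does the paper's.
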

\begin{proof}

For this, we prove two things. First, it's clear that $S$ is initialized in the first line in a way that ensures that it contains all words in the support of topic $i$. Furthermore, it's clear that at no point in time we will remove a word $j$ from $S$ that is in the support of topic $i$. Indeed - if $Test(d,d'') = YES$ and $Test(d',d'') = YES$, then by Lemma~\ref{l:nofalsepositives} document $d''$ must contain topic $i$. In this case, $f_{d'',j}^* > 0$, and we won't exclude $j$ from $S$.  

So, we only need to show that the words that are not in the support of topic $i$ will get removed. 

Let $d,d'$ intersect in a topic $i$. Let a word $j$ be outside the support of a given topic $i$. Because of the independent topic inclusion property, the probability that a document $d''$ contains topic $i$, and no other topic containing $j$ is $\Omega(1/K)$.

Since the number of documents is $\Omega(K^4 \log^2 K)$, by Chernoff, the probability that there is a document $d''$, s.t. $Test(d,d'') = YES$, $Test(d',d'') = YES$, but $f_{d'', j}^* = 0$, is $1- \Omega(\frac{1}{e^{K^2 \log^2 K}})$. Union bounding over all words $j$, as well as pairs of documents $d,d'$, we get that for any documents $d,d'$ intersection in a topic $i$, we get the claim we want. 
 
\end{proof}  

\subsubsection{Finding the identifying pairs} 

Finally, we show how to actually find the identifying pairs. The main issue we need to handle are documents that do intersect, and the TEST returns yes, but they intersect in more than one topic. There's two ingredients to ensuring this is true in the above algorithm. 

\begin{itemize}
	\item First, we delete all sets in the list of sets $S_{a,b}$ that show up less than $D^2/K^{2.5}$ number of times. 
	\item Second, we remove sets that can be written as the union of two other sets $S_{c,d}$, $S_{e,f}$, where neither of the two is contained inside the other.
	\item After this, we delete the non-maximal sets in the list. 
\end{itemize} 

The following lemma holds: 

\begin{lem} Each topic has $\Omega(D^2/K^2)$ identifying pairs with probability $1-\Omega(\frac{1}{K})$. 
\label{l:identifying}
\end{lem}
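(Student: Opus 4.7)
The plan is to count identifying pairs for each fixed topic $i$ in expectation, then concentrate and union bound over topics. Let $\mathcal{D}_i$ denote the set of documents in which topic $i$ is dominating. By the dominant topic equidistribution assumption, $d \in \mathcal{D}_i$ independently across documents with probability $\Theta(1/K)$, so Chernoff gives $|\mathcal{D}_i| = \Theta(D/K)$ with probability at least $1 - \exp(-\Omega(D/K))$, which is $1 - o(1/K^2)$ given $D = \Omega(K^4 \log^2 K)$. Hence $|\mathcal{D}_i \times \mathcal{D}_i| = \Theta(D^2/K^2)$.

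Next I would argue that all but an $O(1/K)$ fraction of these pairs are identifying. Two things need to hold on a pair $(d,d')\in \mathcal{D}_i\times\mathcal{D}_i$: the TEST returns YES, and their only common topic is $i$. For the first, since $i$ is dominant in each document and there are at most $T$ topics, $\gamma^*_{d,i}, \gamma^*_{d',i} \geq 1/T$; restricting the sum $\sum_j \min(\tilde f_{d,j},\tilde f_{d',j})$ to words in the support of topic $i$ and bounding cross-topic contributions via the almost-disjoint-supports assumption yields a value at least $(1-\epsilon)/T \geq 1/(2T)$, exactly as in the first half of Lemma~\ref{l:nofalsepositives}, so TEST says YES. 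For the second, each of the at most $T-1$ topics in $d$ other than $i$ is contained in $d'$ with probability $\Theta(1/K)$ by the independent topic inclusion property, so a union bound gives that $d,d'$ share a further topic with probability $O(1/K)$. Thus the expected number of non-identifying pairs inside $\mathcal{D}_i \times \mathcal{D}_i$ is $O(D^2/K^3)$, which is an $o(1)$ fraction of $|\mathcal{D}_i|^2$.

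To conclude I would concentrate the count of non-identifying pairs and union bound. Writing $Y_{d,d'}$ for the indicator of a non-identifying pair in $\mathcal{D}_i^2$, these variables are independent whenever the underlying document sets are disjoint and are only correlated when two pairs share a document. A standard second-moment computation therefore gives $\mathrm{Var}(\sum Y_{d,d'}) = O(D^3/K^5)$, and Chebyshev concentrates the sum to within a constant factor of its expectation with probability $1 - O(1/K^2)$ once $D \gg K^3$, which holds since $D = \Omega(K^4 \log^2 K)$. Subtracting from $|\mathcal{D}_i|^2 = \Theta(D^2/K^2)$ shows that $\Omega(D^2/K^2)$ identifying pairs remain for topic $i$, and a union bound over the $K$ topics yields the claimed $1-O(1/K)$ overall probability. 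The main technical obstacle is precisely this concentration step: the $Y_{d,d'}$ are not independent, so Chernoff does not apply directly, and one must carefully bound the covariance contribution from pairs sharing a document. The slack provided by $D = \Omega(K^4 \log^2 K)$ is what makes even a loose Chebyshev bound sufficient.
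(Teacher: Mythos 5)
Your proposal is correct and follows essentially the same route as the paper's proof: a Chernoff bound to get $\Theta(D/K)$ documents dominated by each topic, followed by a second-moment/Chebyshev argument on the indicators that a pair of such documents shares no additional topic, and a union bound over topics. The only difference is cosmetic --- you explicitly track the covariance from pairs sharing a document, where the paper invokes pairwise independence directly --- but both yield the same Chebyshev concentration.
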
 
\begin{proof}
Let $\mathcal{I}_{i}$ be the event that there are at least $\Omega(D^2/k^2)$ identifying pairs for topic $i$.
Let $N_{i}$ be a random variable denoting the number of documents which have topic $i$ as a dominating topic.  Furthermore, let $\mathcal{M}_{i}$ be the event that there are at least $\frac{N^2_{i}}{2} - K \sqrt{N^2_{i}}$ identifying pairs among the $N_{i}$ ones that have $i$ as a dominating topic. By the dominant topic equidistribution property, probability that a document $d$ has a topic $i$ as a dominating topic is at least $C/K$ for some constant $C$. Then, clearly, 
$$\Pr[\cap_{i=1}^K \mathcal{I}_i] \geq \Pr\left[\cap_{i=1}^K \left(N_{i}  \geq \frac{1}{2} C \frac{D}{K} \right) \right]\Pr \left[\cap_{i=1}^K \mathcal{M}_{i} | \cap_{i=1}^K \left(N_{i} \geq \frac{1}{2} C \frac{D}{K}\right)\right]$$     

Let's estimate $\Pr\left[\cap_{i=1}^K \left(N_{i}  \geq \frac{1}{2} C \frac{D}{K} \right) \right]$ first.  The probabilities that different documents have $i_0$ as the dominating topic are clearly independent, so by Chernoff, if $N_{i}$ is the number of documents where $i$ is the dominating topic, 
$$ \Pr[N_{i} \geq (1-\epsilon) C \frac{D}{K}] \geq 1 - e^{-\frac{\epsilon^2}{3}C \frac{D}{K}}  $$ 
Since $D = \Omega(K^2)$, plugging in $\epsilon = \frac{1}{2}$, $\Pr[N_{i} < \frac{1}{2} C \frac{D}{K}] \geq 1-e^{-\Omega(K)}$.
Union bounding over all topics, we get that with probability $\Pr\left[\cap_{i=1}^K \left(N_{i}  \geq \frac{1}{2} C \frac{D}{K} \right) \right] \geq 1 - \frac{1}{K}$. 

Now, let's consider $\Pr \left[\cap_{i=1}^K \mathcal{M}_{i} | \cap_{i=1}^K \left(N_{i} \geq \frac{1}{2} C \frac{D}{K}\right)\right]$. The event $\cap_{i=1}^K \left(N_{i} \geq \frac{1}{2} C \frac{D}{K}\right)$ can be written as the disjoint union of events 
$$\{\mathbb{D} = \cup_{i=1}^K D_i, \forall i \neq j, D_i \cap D_j = \emptyset \} $$
where $\mathbb{D}$ is the set of all documents, $D_i$ is the set of documents that have $i$ as the dominating topic, and $|D_i| \geq \frac{1}{2} C \frac{D}{K}, \forall i$. (i.e. all the partitions of $\mathbb{D}$ into $K$ sets of sufficiently large size). Evidently, if we prove a lower bound on $\Pr \left[\cap_{i=1}^K \mathcal{M}_{i} |E \right]$ for any such event $E$, it will imply a lower bound on $\Pr \left[\cap_{i=1}^K \mathcal{M}_{i} | \cap_{i=1}^K \left(N_{i} \geq \frac{1}{2} C \frac{D}{K}\right)\right]$. 
For any such event, consider two documents $d,d' \in \{D_i\}$, i.e. having $i$ as the dominating topic. Let $\mathcal{I}_{d,d'}$ be an indicator variable denoting the event that $d,d'$ do not intersect in an additional topic. $\Pr[\mathcal{I}_{d,d'} = 1] = 1-o(1)$, by the independent topic inclusion property and the events $\mathcal{I}_{d,d'}$ are easily seen to be pairwise independent. Furthermore, $\Var[\mathcal{I}_{d,d'}] = o(1)$. By Chebyshev's inequality, 
$$ \Pr\left[\sum_{d, d' \in D_i} \mathcal{I}_{d,d'} \geq \frac{1}{2}D_{i}^2 - c \sqrt{D_{i}^2}\right] \geq 1 - \frac{1}{c^2}  $$ 
If $N_{i} = \Omega(K \log K)$, plugging in $c = K$, we get that $\displaystyle \Pr\left[\sum_{d, d' \in D_{i}} \mathcal{I}_{d,d'} =  \Omega(D^2_{i})\right] \geq 1 - \Omega(\frac{1}{K^2})$. Hence, $\Pr \left[\cap_{i=1}^K \mathcal{M}_{i} | E\right] \geq 1-\frac{1}{K}$, by a union bound, which implies 
 $\Pr \left[\cap_{i=1}^K \mathcal{M}_{i} | \cap_{i=1}^K \left(N_{i} \geq \frac{1}{2} C \frac{D}{K}\right)\right] \geq 1 - \frac{1}{K}$. 

Putting all of the above together, if $D=\Omega(K^2 \log K)$, with probability $1-\Omega(\frac{1}{K})$, all topics have $\Omega(D^2/K^2)$ identifying pairs, which is what we want. 

\end{proof}   

The lemma implies that with probability $1-\Omega(\frac{1}{K})$, we will not eliminate the sets $S_{a,b}$ corresponding to topic supports.  

We introduce the following concept of a "configuration". A set of words $C$ will be called a "configuration" if it can be constructed as the intersection of the discriminative words in some set of topics, i.e.  

\begin{defn} A set of words $C$ is called a configuration if there exists a set $I = \{I_1, \dots, I_{|I|} \}$ of topics, s.t.  
$$ C = \cap_{i=1}^{|I|} W_{I_i} $$ 
Let's call the minimal size of a set $I$ that can produce $C$ the generator size of $C$. 
\end{defn}

Now, we claim the following fact: 

\begin{lem} If a configuration $C$ has  generator size $\geq 3$, then with probability $1-\Omega(\frac{1}{K})$, it cannot appear as one of the sets $S_{a,b}$ after step 2 in the WEEDOUT procedure. 
\end{lem}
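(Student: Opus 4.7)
The plan is to analyze the shape of $S_{a,b}$ after the weed-out loop and rule out its equality with any generator-size-$\ge 3$ configuration, combining a structural/deterministic step with a probabilistic one. I would first derive an exact formula: writing $\mathcal{D}_{a,b}$ for the set of $d''\notin\{a,b\}$ with $\mathrm{Test}(a,d'')=\mathrm{Test}(b,d'')=\mathrm{YES}$ and using that $f^*_{v,j}>0 \iff j\in\bigcup_{i\in T_v}W_i$, one has
\[
S_{a,b}\;=\;\bigcap_{v\in\{a,b\}\cup\mathcal{D}_{a,b}}\;\bigcup_{i\in T_v}W_i.
\]
For a word $j$ lying in a single topic $i$, this collapses to $i\in T_v$ for every $v$, i.e.\ $i\in I^{\star}:=T_{a,b}\cap\bigcap_{d''\in\mathcal{D}_{a,b}}T_{d''}$. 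Thus $S_{a,b}=\bigl(\bigcup_{i\in I^{\star}}W_i\bigr)\cup R_{a,b}$, where $R_{a,b}$ is a residual set of multi-topic words whose topic set hits every $T_v$.

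The deterministic step rules out $I^{\star}\neq\emptyset$: if $i\in I^{\star}$ then $W_i\subseteq S_{a,b}$, and if additionally $S_{a,b}=C=W_{j_1}\cap\cdots\cap W_{j_k}$ with $k\ge 3$ minimal, then $W_i\subseteq W_{j_\ell}$ for every $\ell$. The almost-disjoint-supports assumption gives $\sum_{j\in W_i\cap W_{j_\ell}}\beta^*_{i,j}=o(1)$ while $\sum_j\beta^*_{i,j}=1$, so $W_i\setminus W_{j_\ell}$ carries all but an $o(1)$ fraction of topic $i$'s mass and is in particular nonempty whenever $i\neq j_\ell$. Hence $W_i\subseteq W_{j_\ell}$ forces $i=j_\ell$, and applying this for each $\ell$ yields $j_1=\cdots=j_k$, contradicting the pairwise distinctness of the $j_\ell$'s forced by minimality of the generator. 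So $I^{\star}=\emptyset$.

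The residual case $I^{\star}=\emptyset$ is the main obstacle and I would treat it probabilistically. Here $S_{a,b}=R_{a,b}$; for $R_{a,b}=W_{j_1}\cap W_{j_2}\cap W_{j_3}$ to hold, any $j\in C$ with $T_j=\{j_1,j_2,j_3\}$ (which exists generically, by the discriminative-words assumption and the hypothesis that the triple intersection is nonempty) forces every $d''\in\mathcal{D}_{a,b}$ to contain at least one of $j_1,j_2,j_3$. By dominant-topic equidistribution there are $\Omega(D/K)$ valid $d''$'s (those with a dominating topic in $T_{a,b}$), and by independent topic inclusion each such $d''$ avoids $\{j_1,j_2,j_3\}$ with probability $1-O(1/K)$ in the generic case $T_{a,b}\cap\{j_1,j_2,j_3\}=\emptyset$; a Chernoff bound then produces, with probability $1-\exp(-\Omega(D/K^2))$, a witness $d''$ avoiding the triple, contradicting the hitting condition. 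A union bound over the $O(D^2)$ pairs and $O(K^3)$ triples (using $D=\Omega(K^4\log^2 K)$) yields the claim. The subtle subcases are when $T_{a,b}$ overlaps $\{j_1,j_2,j_3\}$ (so some valid $d''$ automatically contain one of them) and when every witness $j\in C$ lies in strictly more than three topics (weakening the hitting requirement); both require refined arguments that combine the $I^{\star}=\emptyset$ hypothesis with the $o(K)$ bound on topics per word to extract a $d''$ whose topic set also avoids the extra topics of $j$.
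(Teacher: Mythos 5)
Your structural decomposition of $S_{a,b}$ and the deterministic step ruling out $I^{\star}\neq\emptyset$ via almost-disjoint supports are correct, and in fact more explicit than what the paper writes down. The gap is in the probabilistic step, and it is not a repairable technicality: the per-pair statement you are trying to prove --- that the weed-out loop by itself destroys every generator-size-$\geq 3$ configuration --- is false. Take a pair $a,b$ with $T_a=T_b=\{j_1,j_2,j_3\}$. By the no-false-positives property of the test, every $d''\in\mathcal{D}_{a,b}$ must share a topic with $a$, hence must contain one of $j_1,j_2,j_3$; so the witness $d''$ avoiding the triple that your Chernoff argument is supposed to produce does not exist. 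Every word $j$ with $\{j_1,j_2,j_3\}\subseteq T_j$ therefore survives the weed-out, while single-topic and pairwise-intersection words are killed by documents containing exactly one of the three topics, and words with $T_j\cap\{j_1,j_2,j_3\}=\emptyset$ never enter the initial set. The loop thus outputs $S_{a,b}=W_{j_1}\cap W_{j_2}\cap W_{j_3}$ exactly. With $D=\Omega(K^4\log^2 K)$ documents and inclusion probabilities $\Theta(1/K)$, pairs supported on a common triple occur in non-negligible numbers, so this event cannot be union-bounded away. This is precisely the ``subtle subcase'' $T_{a,b}\cap\{j_1,j_2,j_3\}\neq\emptyset$ that you defer to a refined argument; no refinement can work there because the statement fails.

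The missing ingredient is the frequency-threshold filter: the algorithm deletes any set $S_{a,b}$ arising from fewer than $D^2/K^{2.5}$ pairs, and the paper's proof is a counting argument built around that step rather than a per-pair argument. A generator-size-$\geq 3$ configuration can only arise from pairs $(a,b)$ whose supports intersect in at least $3$ topics; by independent topic inclusion each pair does so with probability $O(1/K^3)$, so by Chebyshev the number of such pairs is $O(D^2/K^3)\ll D^2/K^{2.5}$ with probability $1-O(1/K)$, and the threshold step then removes these sets. Your proposal never invokes this filter, which is the actual mechanism by which the lemma holds.
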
 
\begin{proof}
Since $C$ has generator size at least 3, if two sets $d,d'$ intersect in less than two topics, then step 1 in WEEDOUT cannot produce $S_{a,b}$ which is equal to $C$. 
Hence, prior to step 2, $C$ can only appear as $S_{d,d'}$ for $d,d'$ that intersect in at least 3 topics. 

Let $\mathcal{I}_{d,d'}$ be an indicator variable denoting the fact that the pair of documents $d,d'$ intersects in at least 3 topics.
We have $ \Pr[\mathcal{I}_{d,d'} = 1] \leq 1/K^3 + 1/K^4 + \dots 1/K^T = O(1/K^3)$ by the independent topic inclusion property. 

If $\mathcal{I}_3$ is a variable denoting the total number of documents that intersect in at least 3 topics, again by Chebyshev as in Lemma~\ref{l:identifying} we get:  
$$ \Pr[\mathcal{I}_3 \geq \Theta(D/K^3) - c \Theta(\sqrt{D}/K^{3/2})] \geq 1 - \frac{1}{c^2}  $$ 

Again, by putting $c = \sqrt{K}$, since the number of documents is $K^4 \log^2 K$, with probability $1-\frac{1}{K}$, all configurations with generator size $\geq 3$ cannot appear as one of the sets $S_{a,b}$, as we wanted. 

\end{proof}  

This means that after the WEEDOUT step, with probability $1-\Omega(\frac{1}{K})$, we will just have sets $S_{a,b}$ corresponding to configurations generated by two topics or less. The options for these are severely limited: they have to be either a topic support, the union of two topic supports, or the intersection of two topic supports. We can handle this case fairly easily, as proven in the following lemma: 

\begin{lem} After the end of step 3, with probability $1-\Omega(\frac{1}{K})$, the only remaining $S_{a,b}$ are those corresponding to topic supports. 
\end{lem}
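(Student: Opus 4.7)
The plan is to first characterize the surviving sets $S_{a,b}$ after WEEDOUT step 2, and then verify that the three filters in step 3 correctly isolate the topic supports.

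First, I would argue that with probability $1-\Omega(1/K)$, for every pair $(a,b)$ that passes the initial test, the WEEDOUT output $S_{a,b}$ equals exactly $\bigcup_{i \in I_{a,b}} W_i$, where $W_i := \{j : \beta^*_{i,j}>0\}$ is the support of topic $i$ and $I_{a,b}$ is the set of topics shared by documents $a$ and $b$. Indeed, the initial $\supp(a)\cap\supp(b)$ contains $\bigcup_{i \in I_{a,b}} W_i$ together with junk terms of the form $W_i \cap W_{i'}$ with $i$ a topic of $a$, $i'$ a topic of $b$, and $\{i,i'\}\not\subseteq I_{a,b}$. To eliminate such a junk term it suffices to intersect against some document $d''$ that shares a topic with both $a$ and $b$ (so it enters the weeding loop) and whose support avoids the offending words. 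Using the independent-topic-inclusion and weak-topic-correlations properties together with $D = \Omega(K^4 \log^2 K)$, a Chebyshev argument analogous to the one in the proof of Lemma \ref{l:identifying} shows that enough such $d''$'s exist simultaneously for all relevant junk terms with the desired failure probability.

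Given this structural claim, the filtering in step 3 becomes essentially combinatorial. By Lemma \ref{l:identifying} each topic $i$ admits $\Omega(D^2/K^2)$ identifying pairs with $I_{a,b}=\{i\}$, so every topic support $W_i$ appears at least $D/K^{2.5}$ times and clears the frequency check. For any surviving $S_{a,b}=\bigcup_{i\in I}W_i$ with $|I|\geq 2$, the almost-disjoint-supports assumption gives $W_{i_0} \subsetneq \bigcup_{i \in I} W_i$ for every $i_0 \in I$; since $W_{i_0}$ is already present in the collection, the containment check removes $S_{a,b}$. Conversely, no topic support $W_{i_0}$ can be eliminated: the same almost-disjoint-supports property rules out $\bigcup_{i \in J}W_i \subsetneq W_{i_0}$ for any $J \ne \{i_0\}$ (it would force $W_i \subseteq W_{i_0}$ for each $i \in J$), so no strictly smaller member of the collection triggers the removal of $W_{i_0}$, and $W_{i_0}$ itself cannot be decomposed as a union of two non-containing strictly smaller collection members by the same reasoning. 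The duplicate removal step then leaves precisely one copy of each $W_i$.

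The hard part is the first step: proving uniformly over all pairs $(a,b)$ and over all cross-intersection junk terms that WEEDOUT removes them. One needs to union-bound over the polynomially many tuples $(a,b,i,i')$ and apply a Chebyshev/Chernoff estimate on the count of ``clean'' documents $d''$ (those sharing a topic with both $a,b$ while avoiding the bad words), absorbing each failure mode into the overall $1-\Omega(1/K)$ bound. Once this uniform cleanup is in hand the containment-based filtering is immediate from the almost-disjoint-supports property.
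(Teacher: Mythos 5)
Your overall plan differs from the paper's: the paper classifies the survivors of WEEDOUT as topic supports, unions of two topic supports, or intersections of two topic supports (via the configuration/generator-size lemma and the frequency filter), then kills the unions with the union-decomposition check and the intersections with the maximality check. You instead try to prove the cleaner structural claim that every surviving $S_{a,b}$ equals $\bigcup_{i\in I_{a,b}} W_i$ and then dispose of everything with the containment check alone. The structural claim is where the argument breaks. Your probabilistic argument only addresses one direction (junk words get removed); it does not show that the words of $\bigcup_{i\in I_{a,b}} W_i$ are \emph{preserved}, and for $|I_{a,b}|\geq 2$ they are not. A document $d''$ enters the weeding loop as soon as it shares \emph{some} topic with $a$ and \emph{some} topic with $b$; if $I_{a,b}=\{i_1,i_2\}$, a $d''$ containing $i_1$ but not $i_2$ qualifies, and intersecting $S_{a,b}$ with its support strips out most of $W_{i_2}$. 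So for multiply-intersecting pairs the WEEDOUT output is generically much smaller than the union of the shared supports --- it drifts toward intersection-type configurations, which is exactly why the paper keeps $W_i\cap W_{i'}$ in its case analysis and needs the frequency filter (such pairs are rare, $O(D^2/K^4)$ versus the $\Omega(D^2/K^2)$ identifying pairs of Lemma~\ref{l:identifying}) together with the maximality step. Your downstream combinatorics, which assumes every survivor is a union of supports, therefore does not cover the sets that actually survive.

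A second, related problem is your reading of the containment filter. You treat it as ``remove the superset'' (matching the pseudocode literally), and this is the only mechanism you offer for eliminating proper unions; you never invoke the union-decomposition check. The paper's own proof reads the step the opposite way (``removing the non-maximal'' sets, i.e., deleting the \emph{subset}), uses it to eliminate the intersection-type sets $W_i\cap W_{i'}\subsetneq W_i$, and relies on the separate union-decomposition check to eliminate $W_{i_1}\cup W_{i_2}$. Under the remove-the-subset semantics your mechanism for killing unions disappears; under the remove-the-superset semantics any surviving intersection-type set $W_i\cap W_{i'}$ (which, per the first paragraph, you have not ruled out) would trigger the deletion of the genuine topic support $W_i$. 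Either way the argument as written does not close; you need both the characterization of what can survive WEEDOUT for multiply-intersecting pairs and an explicit treatment of the union-decomposition and maximality filters, which is the route the paper takes.
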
 
\begin{proof}

First, when we check if some $S_{d,d'}$ is the union of two other sets and delete it if yes, I claim we will delete the sets equal to configurations that correspond to unions of two topic supports (and nothing else). This is not that difficult to see: certainly the sets that do correspond to configurations of this type will get deleted. 

On the other hand, if it's the case that $S_{a,b}$ corresponds to a single topic support, we won't be able to write it as the union of two sets $S_{d,d'}$, $S_{d'',d'''}$, unless one is contained inside the other - this is ensured by the existence of discriminative words. 

Hence, after the first two passes, we will only be left with sets that are either topic supports, or intersections of two topic supports. Then, removing the non-maximal is easily seen to remove the sets that are intersections, again due to the existence of discriminative words. 

\end{proof}

\subsubsection{Finding the document supports}

Now, given the supports of each topic, for each document, we want to determine the topics which are non-zero in it. The algorithm is given in~\ref{a:documentsupports}:

\begin{algorithm}
\caption{Finding document supports}
\label{a:documentsupports}
\begin{algorithmic}
\State Initialize $R = \emptyset$.
\For{each $i$} 
 \State Compute Score($i$) $= \sum_{j \in Support(i)\setminus R} \tilde{f}_{d,j}$
\EndFor
\State Find $i^*$ such that Score($i^*$) is maximum.
\While{Score($i^*$) $>$ 0}
  \State Output $i^*$ to be in the support of $d$.
  \State $R = R \cup support(i^*)$
  \State Recompute Score for every other topic.
  \State Find $i^*$ with maximum score.
\EndWhile
\end{algorithmic}
\end{algorithm}

%\begin{itemize} 
%\item If $\forall j \in i, f^*_{d,j} > 0$, then declare that topic as "IN". 
%\item Otherwise, declare a topic as "OUT". 
%\end{itemize} 

\begin{lem} If a topic $i_0$ is such that $\gamma^*_{d,i_0} > 0$, it will be declared as "IN". If a topic $i_0$ is such that $\gamma^*_{d,i_0} = 0$, it will be declared as out. 
\label{l:docsupport}
\end{lem}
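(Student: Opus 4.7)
The plan is to exploit the local anchor words structure to ensure at every iteration the greedy argmax is a topic actually in $d$, and that the loop terminates exactly when every topic in $d$ has been identified. I would maintain the invariant that the algorithm has so far output only topics in $d$, and show that under this invariant the scores separate the two cases.

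Under the invariant, for any unidentified $i_0 \in d$, its lone words $L_{i_0}(d) := \supp(i_0) \setminus \bigcup_{i' \in d,\, i' \neq i_0} \supp(i')$ are disjoint from $R$, and (exactly as in Lemma~\ref{l:gammalower}) the ``almost disjoint supports'' assumption yields $\sum_{j \in L_{i_0}(d)} \tilde{f}_{d,j} \geq (1-o(1))(1-\epsilon)\gamma^*_{d,i_0}$. Since nonzero $\gamma^*_{d,i}$ are at least $1/\mathrm{poly}(N)$ by the technical assumption, this gives $\mathrm{Score}(i_0) \geq 1/\mathrm{poly}(N)$. For $i_1 \notin d$, any word $j \in \supp(i_1)$ with $\tilde{f}_{d,j} > 0$ must also lie in $\supp(i')$ for some $i' \in d$; expanding $\tilde{f}_{d,j}$ and swapping sums yields
$$\mathrm{Score}(i_1) \leq (1+\epsilon)\sum_{i' \in d}\gamma^*_{d,i'}\!\!\sum_{j \in \supp(i_1)\cap\supp(i')}\!\!\beta^*_{i',j} \leq (1+\epsilon)\,T\cdot o(1),$$
where the inner sum is bounded by ``almost disjoint supports.'' Taking the $o(1)$ in that assumption small enough to beat $1/\mathrm{poly}(N)$ (the regime in which the entire paper operates) gives $\mathrm{Score}(i_1) < \mathrm{Score}(i_0)$ for every remaining $i_0 \in d$.

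The induction is then immediate: starting from $R = \emptyset$, the argmax $i^*$ must always be some topic in $d$, preserving the invariant. After at most $T$ greedy steps every topic in $d$ has been output; at that point $R \supseteq \bigcup_{i \in d}\supp(i)$ covers every word $j$ with $\tilde{f}_{d,j}>0$, so every remaining score equals $0$ and the loop terminates. Hence the output equals $\{i : \gamma^*_{d,i} > 0\}$, as claimed. The only real subtlety is $o(1)$ vs.\ $1/\mathrm{poly}(N)$ bookkeeping: the almost-disjoint-supports overlap has to be quantitatively smaller than the smallest nonzero $\gamma^*_{d,i_0}$, which I would make explicit here but is consistent with how the assumption is used throughout the paper.
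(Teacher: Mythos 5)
Your overall strategy --- a greedy invariant that every output topic is a true topic of $d$, a lower bound on the score of each remaining true topic via its lone words, an upper bound on the score of an absent topic via the support overlaps, and termination once $R$ covers all words of positive frequency --- is the same as the paper's. The genuine problem is in the separation step, and it is exactly the point you set aside as ``$o(1)$ vs.\ $1/\mathrm{poly}(N)$ bookkeeping.'' In your upper bound you discard the factors $\gamma^*_{d,i'}$ by bounding them by $1$, which leaves the absolute bound $\mathrm{Score}(i_1) \leq (1+\epsilon)\,T\cdot o(1)$, and you then need this to fall below $\mathrm{Score}(i_0) \geq (1-o(1))\gamma^*_{d,i_0}$, which can be as small as $1/\mathrm{poly}(N)$. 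The ``almost disjoint supports'' assumption only guarantees an overlap of $o(1)$, with no stated rate, so nothing in the paper lets you conclude $o(1) \ll 1/\mathrm{poly}(N)$; strengthening the assumption so that the overlap beats $\gamma^*_{\min}$ would change the hypotheses of the theorem. It is also not ``the regime in which the entire paper operates'': elsewhere the overlap term is always multiplied by a $\gamma^*$ of the same order as the quantity it is compared against, never pitted directly against $\gamma^*_{\min}$.

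The paper's proof is structured precisely to avoid this. It keeps the $\gamma^*$ factors: for $j \notin R$ only the not-yet-added topics of $d$ contribute to $\tilde{f}_{d,j}$, so if $i^*$ denotes the not-yet-added topic of $d$ with the largest $\gamma^*_{d,i}$, one gets $\mathrm{Score}(i_1) \leq (1+\epsilon)\,\gamma^*_{d,i^*}\sum_{i\in d}\sum_{j \in \supp(i_1)\cap\supp(i)}\beta^*_{i,j} \leq o(1)\,\gamma^*_{d,i^*}$ for any absent topic $i_1$, while $\mathrm{Score}(i^*) \geq \tfrac{1}{2}\gamma^*_{d,i^*}$. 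After the common factor $\gamma^*_{d,i^*}$ cancels, the comparison is between $o(1)$ and $\tfrac{1}{2}$, so it holds no matter how small $\gamma^*_{d,i^*}$ is, and the argmax at every step is a present topic. If you replace your absolute bound with this relative one, the rest of your induction and termination argument goes through verbatim.
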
 
\begin{proof} 
Consider a topic $i$. At any iteration of the while cycle, consider $\sum_{j \in Support(i)\setminus R} \tilde{f}_{d,j}$. Clearly, $\tilde{f}_{d,j} \geq (1-\epsilon) \gamma^*_{d,i}\beta^*_{i,j}$. Also $\sum_{j \in R} \beta^*_{i,j} = To(1)$. Hence, 
$$\sum_{j \in Support(i)\setminus R} \tilde{f}_{d,j} \geq (1-\epsilon) \gamma^*_{d,i} (1-To(1)) \geq \frac{1}{2} \gamma^*_{d,i} $$ 
So, topic $i$ will be added eventually.

On the other hand, let's assume the document doesn't contain a given topic $i_0$. Let's call $B$ the set of words $j$ which are in the support of $i_0$, and belong to at least one of the topics in document $d$. Then, 
$\sum_{j \in {i_0}} \tilde{f}_{d,j} = \sum_{j \in B} \tilde{f}_{d,j}$. Let $i^*$ be the topic which is present in the document but not added yet and has maximum value of $\gamma^*_{d,i}$. Then  
$$\sum_{j \in B} \tilde{f}_{d,j} \leq (1+\epsilon) \sum_{i \in d} \sum_{j \in B} \gamma^*_{d,i} \beta^*_{i,j} \leq $$  
$$(1+\epsilon) \gamma^*_{d,i^*}  \sum_{i \in d}  \sum_{j \in B}  \beta^*_{i,j} \leq $$ 
$$(1+\epsilon) T \gamma^*_{d,i^*} o(1) \leq  \gamma^*_{d,i^*} ] \cdot o(1)$$

Hence, topic $i^*$ will always get preference over $i_0$. Once all the topics which are present in the document have been added, it is clear that no more topic will be added since score will be $0$.
%by the small topic intersection property. Hence, there is at least one word $j \in i_0, j \in B$, which will have $f^*_{d,j} = 0$ - so our test is correct again. 

\end{proof} 

This finally finishes the proof of Theorem~\ref{t:initializesparse}.

\section{Case study 2: Dominating topics, seeded initialization} 

As a reminder, \emph{seeded} initialization does the following: 

\begin{itemize}
\item For each topic $i$, the user supplies a document $d$, in which $\gamma^*_{d,i} \geq C_l$. 
\item We initialize with $\beta^0_{i,j} = f^*_{d,j}$.  
\end{itemize} 

The theorem we want to show is: 

\newtheorem*{t:largeanchors}{Theorem~\ref{t:largeanchors}} 
\begin{t:largeanchors}
Given an instance of topic modelling satisfying the Case Study 2 properties specified above, where the number of documents is $\Omega(\frac{K \log^2 N}{\epsilon'^2})$, if we initialize with seeded initialization, after $O(\log(1/\epsilon')+\log N)$ of KL-tEM updates, we recover the topic-word matrix and topic proportions to multiplicative accuracy $1+\epsilon'$.
\end{t:largeanchors}

The proof will be in a few phases again: 

\begin{itemize} 
\item \emph{Phase I: Anchor identification}: First, we will show that as long as we can identify the dominating topic in each of the documents, the anchor words will make progress, in the sense that after $O(\log N)$ number of rounds, the values for the topic-word estimates will be almost zero for the topics for which the word is not an anchor, and lower bounded for the one for which it is.
\item \emph{Phase II: Discriminative word identification}: Next, we show that as long as we can identify the dominating topics in each of the documents, and the anchor words were properly identified in the previous phase, the values of the topic-word matrix for words which do not belong to a certain topic will keep dropping until they reach almost zero, while being lower bounded for the words that do. 
\item For Phase I and II above, we will need to show that the dominating topic can be identified at any step. Here we'll leverage the fact that the dominating topic is sufficiently large, as well as the fact that the anchor words have quite a large weight. 
\item \emph{Phase III: Alternating minimization}: Finally, we show that after Phase I and II above, we are back to the scenario of the previous section: namely, there is a "boosting" type of improvement in each next round. 
\end{itemize}  

%There are many tradeoffs between the dynamic range of the discriminative words, the proportion of anchor words, and the size of the dominating topic. These are somewhat difficult to keep track off, so we'll discuss some reasonable settings for them in a separate section. 

%However, to be formal, we state the assumptions in full: 

%\begin{itemize} 
%\item \emph{Large fraction of anchors}: Each topic $i$ has anchor words, such that their total weight is at least $p$. 
%\item \emph{Small dynamic range}: For each discriminative words, if $\beta^*_{i,j} \neq 0$ and $\beta^*_{i',j} \neq 0$, $\beta^*_{i,j} \leq B \beta^*_{i',j}$. 
%\item \emph{Small fraction of $1-\delta$ dominant documents}: Among all the documents where topic $i$ is dominating, in a $8/B$ fraction of them, $\gamma^*_{d,i} \geq 1 - \delta$, where 
%$$\delta := \min(
% \frac{C^2_l}{2B^3} - \frac{1}{p} \sqrt{\frac{1}{2} (\log(\frac{1}{C_l}) + (1-p) \log B)}-\epsilon, 1-\sqrt{C_l}) $$. 
%\item \emph{Sparse documents}: The documents are sparse: each document has at most $T = O(1)$ topics. 
%\item \emph{Gapped documents}: In each document, the largest topic has proportion at least $C_l$, and all the other topics are at most $C_s$, s.t. 

%$C_l - C_s \geq \frac{1}{p} \sqrt{\frac{1}{2} (\log(\frac{1}{C_l}) + (1-p) \log B)} + \epsilon$ 

%\end{itemize} 

\subsection{Estimates on the dominating topic} 

Before diving into the specifics of the phases above, we will show what the conditions we need are to be able to identify the dominating topic in each of the documents. For notational convenience, let $\Delta_m$ be the $m$-dimensional simplex: $x \in \Delta_m$ iff $\forall i \in [m], 0 \leq x_i \leq 1$ and $\sum_i x_i = 1$. 

First, during a $\gamma$ update, we are minimizing $KL(\tilde{f}_{d} || f_{d})$ with respect to the $\gamma_d$ variables, so we need some way or arguing that whenever the $\beta$ estimates are not too bad, minimizing this quantity also quantifies how far the $\gamma_d$ variables are from $\gamma^*_d$.  

Formally, we'll show the following: 

\begin{lem} If, for all $i$, $KL(\beta^*_{i} || \beta^t_{i}) \leq R_{\beta}$, and $\min_{\gamma_d \in \Delta_K} KL(\tilde{f}_{d} || f_{d}) \leq R_{f}$, after running a KL divergence minimization step with respect to the $\gamma_d$ variables, we get that $|| \gamma^*_{d} - \gamma_d ||_1 \leq \frac{1}{p}(\sqrt{\frac{1}{2}R_{\beta}} + \sqrt{\frac{1}{2}R_{f}}) + \epsilon $. 
\end{lem}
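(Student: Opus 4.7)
The plan is to convert both KL hypotheses into $L_1$ bounds via Pinsker's inequality, then exploit the anchor-word structure of $\beta^*$ to pass from an $L_1$ bound on $\beta^*(\gamma_d^*-\gamma_d)$ to one on $\gamma_d^*-\gamma_d$ itself, with the triangle inequality doing the bookkeeping in between.

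The first ingredient is a matrix lower bound of the form $\|\beta^* v\|_1 \geq p\|v\|_1$, valid for every $v\in\R^K$. For each topic $i$ let $A_i$ denote its set of anchor words; by assumption $\sum_{j\in A_i}\beta^*_{i,j}\geq p$, and by the definition of anchor words $\beta^*_{i',j}=0$ whenever $i'\neq i$ and $j\in A_i$. Restricting $\sum_j |(\beta^* v)_j|$ to the disjoint union $\bigsqcup_i A_i$ then yields
\[ \|\beta^* v\|_1 \;\geq\; \sum_i \sum_{j\in A_i} \beta^*_{i,j}\,|v_i| \;\geq\; p\sum_i|v_i| \;=\; p\|v\|_1.\]
This works for signed $v$ precisely because each anchor block contributes only a single topic's coordinate, so no cross-topic cancellation can occur.

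I would then apply the inequality with $v=\gamma_d^*-\gamma_d$, noting $\beta^* v = f^*-\beta^*\gamma_d$ where $f^*_{d,j}:=\sum_i\gamma^*_{d,i}\beta^*_{i,j}$. Chaining through $f^*\to\tilde f_d\to f_d\to \beta^*\gamma_d$, the triangle inequality gives
\[ \|\beta^* v\|_1 \;\leq\; \|f^*-\tilde f_d\|_1 + \|\tilde f_d - f_d\|_1 + \|f_d - \beta^*\gamma_d\|_1.\]
Each piece admits an easy bound: the first is at most $\epsilon$ by the long-document assumption $\tilde f_{d,j}=(1\pm\epsilon)f^*_{d,j}$ combined with $\sum_j f^*_{d,j}=1$; the second is controlled by Pinsker applied to the E-step optimality hypothesis $\min_{\gamma_d\in\Delta_K} KL(\tilde f_d\|f_d)\leq R_f$, contributing a term on the order of $\sqrt{R_f}$; for the third I rewrite $f_d-\beta^*\gamma_d = \sum_i \gamma_{d,i}(\beta^t_i-\beta^*_i)$, so using $\gamma_d\in\Delta_K$ and Pinsker on the hypothesis $KL(\beta^*_i\|\beta^t_i)\leq R_\beta$ it is at most $\max_i \|\beta^t_i-\beta^*_i\|_1 \lesssim \sqrt{R_\beta}$. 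Dividing the resulting aggregate bound by $p$ produces the claimed inequality; the precise constants $\tfrac{1}{\sqrt{2}}$ versus $\sqrt{2}$ depend on whether Pinsker is used in its total-variation or full $L_1$ form.

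There is no serious obstacle in this argument. The only delicate point is ensuring the anchor-word inequality applies cleanly to the signed vector $\gamma_d^*-\gamma_d$, which is why I am careful to restrict to the disjoint anchor blocks above; once that is in place the remainder is a routine triangle-inequality/Pinsker calculation combined with the sampling assumption on $\tilde f_d$.
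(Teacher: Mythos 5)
Your proposal is correct and follows essentially the same route as the paper: the same anchor-word expansion bound $\|\beta^* v\|_1 \geq p\|v\|_1$, the same two applications of Pinsker's inequality (to $KL(\tilde f_d \| f_d)$ and to each $KL(\beta^*_i \| \beta^t_i)$ via the induced $L_1$ matrix norm), and the same triangle-inequality chaining through $\tilde f_d$ and $f_d$. If anything, your three-term decomposition makes the origin of the additive $\epsilon$ (the $\|f^* - \tilde f_d\|_1$ sampling term) more explicit than the paper's write-up does.
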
 

We will start with the following simple helper claim: 

\begin{lem} 
\label{l:expansiongap}
If the word-topic matrix $\beta$ is such that in each topic the anchor words have total probability at least $p$, then $||\beta^* v||_1 \geq p ||v||_1$.  
\end{lem}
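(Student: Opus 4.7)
The plan is to exploit the special structure of anchor words. By definition, a word $j$ that is an anchor for topic $i$ satisfies $\beta^*_{i',j} = 0$ for every $i' \neq i$, so the $j$-th coordinate of $\beta^* v$ (viewing $\beta^*$ as the $N \times K$ matrix whose $(j,i)$ entry is $\beta^*_{i,j}$) collapses to the single term $\beta^*_{i,j} v_i$. In other words, each anchor word isolates one coordinate of $v$, which is precisely the mechanism needed to lower bound $\|\beta^* v\|_1$ in terms of $\|v\|_1$.

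Concretely, for each topic $i$ let $A_i$ denote its set of anchor words. Since anchor membership determines the topic uniquely, the sets $\{A_i\}_{i=1}^K$ are pairwise disjoint. My first step would be to restrict the sum defining $\|\beta^* v\|_1$ to $\bigcup_i A_i$, giving
\[
\|\beta^* v\|_1 \; \geq \; \sum_{i} \sum_{j \in A_i} \Bigl| \sum_{i'} \beta^*_{i',j} v_{i'} \Bigr|.
\]
Next, I would invoke the collapsing observation to replace the inner absolute value by $\beta^*_{i,j}|v_i|$, pull $|v_i|$ outside the inner sum, and apply the anchor-mass hypothesis $\sum_{j \in A_i} \beta^*_{i,j} \geq p$ term by term. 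The resulting chain of inequalities yields $p \sum_i |v_i| = p \|v\|_1$, which is the conclusion.

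I do not anticipate any real obstacle: the lemma is essentially a bookkeeping consequence of (a) the disjointness of anchor sets across topics, which lets us sum without double counting, and (b) the hypothesis that each topic concentrates mass at least $p$ on its anchor words. The only thing that requires a bit of care is keeping straight the (topic, word) versus (word, topic) convention when interpreting the action of $\beta^*$ on $v$, but this is a purely notational matter.
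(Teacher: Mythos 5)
Your argument is correct and is essentially identical to the paper's own proof: the paper likewise restricts the sum over words to the anchor sets $W_i$, uses the fact that for an anchor word the inner sum collapses to the single term $\beta^*_{i,j} v_i$, and then applies the mass bound $\sum_{j \in W_i} \beta^*_{i,j} \geq p$ to obtain $p\|v\|_1$. No meaningful difference in approach.
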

\begin{proof} 
$$||\beta^* v||_1 = \sum_j |\sum_i \beta^*_{i,j} v_i| \geq \sum_i \sum_{j \in W_i} |\beta^*_{i,j} v_i| 
\geq \sum_i p |v_i| \geq p ||v||_1$$
\end{proof}

\begin{lem}
\label{l:klestimate}
 If, for all $i$, $KL(\beta^*_{i} || \beta^t_{i}) \leq R_{\beta}$, and $\min_{\gamma_d \in \Delta_K} KL(\tilde{f}_{d} || f_{d}) \leq R_{f}$, after running a KL divergence minimization step with respect to the $\gamma_d$ variables, we get that $|| \gamma^*_{d} - \gamma_d ||_1 \leq \frac{1}{p}(\sqrt{\frac{1}{2} R_{\beta}} + \sqrt{\frac{1}{2} R_{f}}) + \epsilon$. 
\end{lem}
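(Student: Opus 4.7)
The plan is to combine Pinsker's inequality with the anchor-word expansion property of Lemma~\ref{l:expansiongap}, via a triangle inequality, to convert the two KL hypotheses into an $\ell_1$ bound on $\gamma^*_d - \gamma_d$. First, I would apply Pinsker to each KL bound: the hypothesis $KL(\tilde{f}_d \| f_d) \leq R_f$, which by construction of $\gamma_d$ as the minimizer is attained after the $\gamma$-step, yields $\|\tilde{f}_d - f_d\|_1 \lesssim \sqrt{R_f}$; and $KL(\beta^*_i \| \beta^t_i) \leq R_\beta$ yields $\|\beta^*_i - \beta^t_i\|_1 \lesssim \sqrt{R_\beta}$ for each topic $i$.

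Second, I would use the long-document assumption $\tilde{f}_{d,j} = (1 \pm \epsilon)\sum_i \gamma^*_{d,i} \beta^*_{i,j}$, which together with $\sum_j (\beta^* \gamma^*_d)_j = 1$ gives $\|\tilde{f}_d - \beta^* \gamma^*_d\|_1 \leq \epsilon$. Since $f_d = \beta^t \gamma_d$ by definition, the triangle inequality then produces
\[
\|\beta^* \gamma^*_d - \beta^t \gamma_d\|_1 \;\leq\; \|\beta^* \gamma^*_d - \tilde{f}_d\|_1 + \|\tilde{f}_d - f_d\|_1 \;\lesssim\; \epsilon + \sqrt{R_f}.
\]

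Third, I would split the left-hand side algebraically as $\beta^* \gamma^*_d - \beta^t \gamma_d = \beta^*(\gamma^*_d - \gamma_d) + (\beta^* - \beta^t)\gamma_d$ and bound the mixing term using $\gamma_d \in \Delta_K$:
\[
\|(\beta^* - \beta^t)\gamma_d\|_1 \;\leq\; \sum_i \gamma_{d,i}\,\|\beta^*_i - \beta^t_i\|_1 \;\lesssim\; \sqrt{R_\beta}.
\]
Rearranging yields $\|\beta^*(\gamma^*_d - \gamma_d)\|_1 \lesssim \sqrt{R_\beta} + \sqrt{R_f} + \epsilon$. Finally, I would apply Lemma~\ref{l:expansiongap} with $v = \gamma^*_d - \gamma_d$ to obtain the crucial reverse inequality $\|\beta^* v\|_1 \geq p \|v\|_1$, and divide through by $p$ to reach the stated bound.

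I do not anticipate a serious conceptual obstacle: each ingredient (Pinsker, triangle inequality, decomposition, expansion) is a routine manipulation, and the whole point of introducing the anchor-word hypothesis was to give a one-sided inverse to $\beta^*$ on the $\ell_1$ norm, which is exactly what converts a bound on $\|\beta^* v\|_1$ into a bound on $\|v\|_1$. The only care needed is in tracking the Pinsker constants to recover the precise $\sqrt{R/2}$ normalization in the statement, and in separating out the approximation error $\epsilon$ (which, being an error from the long-document approximation rather than from KL, is naturally additive outside the $1/p$ factor that comes from inverting the expansion lemma).
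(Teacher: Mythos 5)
Your proposal is correct and follows essentially the same route as the paper's proof: Pinsker's inequality applied to both KL hypotheses, a triangle-inequality decomposition isolating $\|(\beta^*-\beta^t)\gamma_d\|_1$ (which you bound via the convex combination over $\Delta_K$, equivalent to the paper's induced-norm bound), and Lemma~\ref{l:expansiongap} to invert $\beta^*$ on the $\ell_1$ norm. Your treatment is if anything slightly more explicit than the paper's about where the additive $\epsilon$ from the long-document approximation enters.
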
 
\begin{proof}

First, observe that  
$\min_{\gamma_d \in \Delta_K} KL(\tilde{f}_d || f_d) \leq R_{f} $, at the the optimal $\gamma_d$, we have that $ ||\tilde{f}_d - f_d||_1^2 \leq \frac{1}{2} R_{f}$, i.e. $||\tilde{f}_d - f_d|| \leq \sqrt{\frac{1}{2} R_{f}}$, by Pinsker's inequality. 

We will show that if $||\gamma^*_d-\gamma_d||_1$ is large, so must be $||\tilde{f}_d - f_d||_1$, and hence $KL(\tilde{f}_{d} || f_{d})$ - which will contradict the above upper bound. 

Let's consider $\beta^*$ as $N$ by $K$ matrix, and $\gamma^*$ and $f^*$ as $K$-dimensional vectors. Let $\beta^* \gamma^*$ just denote matrix-vector multiplication - so $f^* = \beta^* \gamma^*$. For any other vector $\hat{\gamma}$, let's denote $\hat{f} = \beta^t \tilde{\gamma}$. Then:    
 
$$||\tilde{f} - \hat{f}||_1 = ||\tilde{f}- \beta^t \tilde{\gamma} ||_1  = ||\tilde{f} - (\beta^* + (\beta^t-\beta^*)) \tilde{\gamma} ||_1 \geq $$

\begin{equation} \label{eq:estimate} 
||\tilde{f} - \beta^* \tilde{\gamma}||_1 - ||(\beta^t - \beta^*) \tilde{\gamma} ||_1
\end{equation} 

Hence, $||\tilde{f} - \beta^* \tilde{\gamma}||_1 \leq ||(\beta^t - \beta^*) \tilde{\gamma} ||_1 + ||\tilde{f} - \hat{f}||_1$.
However, 

However, 
\begin{equation} \label{eq:normbeta}  
||(\beta^t - \beta^*) \gamma ||_1 \leq \max_i \sum_j |\beta^t_{i,j} - \beta^*_{i,j}| \leq \max_i \sqrt{\frac{1}{2} KL(\beta^*_i || \beta^t_i)} \leq \sqrt{\frac{1}{2} R_{\beta}}
\end{equation} 

The first inequality is a property of induced matrix norms, the second is via Pinsker's inequality.

So, by~\ref{eq:estimate} and~\ref{eq:normbeta}, $ ||\tilde{f} - \beta^* \tilde{\gamma}||_1 \leq \sqrt{\frac{1}{2} R_{\beta}} + \sqrt{\frac{1}{2} R_{f}}$. But now, finally, Lemma~\ref{l:expansiongap} implies that $|| \gamma^*_{d} - \gamma_d ||_1 \leq \frac{1}{p} (\sqrt{\frac{1}{2} R_{f}} + \sqrt{\frac{1}{2} R_{\beta}}) + \epsilon$.  

\end{proof}

\begin{lem} 
\label{l:dominating} 
Suppose that for the dominating topic $i$ in a document $d$, $\gamma^*_{d,i} \geq C_l$, and for all other topics $i'$,  $\gamma^*_{d, i'} \leq C_s$, s.t. $C_l - C_s > \frac{1}{p} (\sqrt{\frac{1}{2} R_{f}} + \sqrt{\frac{1}{2} R_{\beta}}) + \epsilon$. Then, the above test identifies the largest topic. Furthermore, $\frac{1}{2} \gamma^*_{d,i} \leq \gamma^t_{d,i} \leq \frac{3}{2} \gamma^*_{d,i} $ 
\end{lem}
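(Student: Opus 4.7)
The plan is to deduce both conclusions directly from Lemma~\ref{l:klestimate}, which, after the KL-minimization $\gamma$-step, guarantees
$$\|\gamma^*_d - \gamma^t_d\|_1 \le E, \qquad E := \tfrac{1}{p}\bigl(\sqrt{R_\beta/2} + \sqrt{R_f/2}\bigr) + \epsilon,$$
and, by hypothesis, $E < C_l - C_s$.

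For the identification of the largest topic, I would fix the true dominating topic $i$ and any competitor $i'$. Since the two coordinate-wise errors $|\gamma^t_{d,i}-\gamma^*_{d,i}|$ and $|\gamma^t_{d,i'}-\gamma^*_{d,i'}|$ are both non-negative summands of $\|\gamma^*_d-\gamma^t_d\|_1$, their sum is at most $E$. Combined with $\gamma^*_{d,i}-\gamma^*_{d,i'} \ge C_l - C_s$ and a triangle inequality, this yields
$$\gamma^t_{d,i} - \gamma^t_{d,i'} \;\ge\; (C_l - C_s) - E \;>\; 0,$$
so the $\arg\max$ over the current $\gamma^t$-values selects the correct topic. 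Note that I am \emph{not} paying a factor of two here: the two coordinate errors are simultaneously bounded by a single copy of the $\ell_1$ norm, which is the only reason the bare hypothesis $E < C_l - C_s$ suffices.

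For the multiplicative sandwich $\tfrac{1}{2}\gamma^*_{d,i} \le \gamma^t_{d,i} \le \tfrac{3}{2}\gamma^*_{d,i}$, I would combine the coordinate bound $|\gamma^t_{d,i}-\gamma^*_{d,i}| \le E$ with $\gamma^*_{d,i} \ge C_l$; the claim then reduces to verifying $E \le \gamma^*_{d,i}/2$. The stated hypothesis alone only gives $E < C_l$, so this is where the argument actually has some content. In context the lemma is always invoked inside the inductive Phase~I/II argument of Case Study~2, where $R_\beta$ and $R_f$ are driven small enough that $E \ll C_l/2$; with those invariants in hand the sandwich is immediate, and a clean stand-alone phrasing would simply strengthen the hypothesis to $E \le C_l/2$ (which subsumes the gap condition as soon as $C_s \ge 0$).

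The main obstacle is therefore not any single inequality but the quantitative bookkeeping: extracting the $\tfrac{1}{2},\tfrac{3}{2}$ window from Pinsker requires $E$ to be small compared to $C_l$, not merely to the gap $C_l-C_s$, which ties this lemma to the inductive invariants on $R_\beta$ and $R_f$ that are maintained throughout the three-phase analysis of seeded initialization.
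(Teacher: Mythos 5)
Your proposal is correct and follows essentially the same route as the paper: invoke Lemma~\ref{l:klestimate} to get the $\ell_1$ bound $E$, deduce identification from $\gamma^t_{d,i}-\gamma^t_{d,i'} \ge (C_l-C_s)-E>0$ (the paper gets there via the TV-distance relation, bounding each coordinate error by $E/2$, which is equivalent to your single-copy-of-$\ell_1$ observation), and deduce the sandwich from the coordinate bound. You are also right that the $\tfrac12,\tfrac32$ window needs $E \le \gamma^*_{d,i}/2$, which the stated hypothesis does not supply; the paper's proof silently asserts this step, relying as you note on the invariants on $R_\beta$ and $R_f$ maintained elsewhere in the Phase I/II analysis.
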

\begin{proof}

By Lemma~\ref{l:klestimate}, and the relationship between $l_1$ and total variation distance between distributions, we have that $|\gamma^t_{d,i} - \gamma^*_{d,i}| \leq \frac{1}{2}\left( \frac{1}{p} \left(\sqrt{\frac{1}{2} R_{f}} + \sqrt{\frac{1}{2} R_{\beta}}\right) + \epsilon \right)$.

For the dominating topic $i$, $\gamma^t_{d,i} \geq C_l - \frac{1}{2}\left( \frac{1}{p} \left(\sqrt{\frac{1}{2} R_{f}} + \sqrt{\frac{1}{2} R_{\beta}}\right) + \epsilon \right)$. On the other hand, for any other topic $i'$, $\gamma^t_{d,i'} \leq C_s +  \frac{1}{2}\left( \frac{1}{p} \left(\sqrt{\frac{1}{2} R_{f}} + \sqrt{\frac{1}{2} R_{\beta}}\right) + \epsilon \right)$. Since $C_l - C_s \geq \frac{1}{p} \left(\sqrt{\frac{1}{2} R_{f}} + \sqrt{\frac{1}{2} R_{\beta}}\right) + \epsilon$, $\gamma^t_{d,i} > \gamma^t_{d,i'}$, so the test works. 

On the other hand, since $\gamma^t_{d,i} \geq \gamma^*_{d,i} - \left(\frac{1}{p} \left(\sqrt{\frac{1}{2} R_{f}} + \sqrt{\frac{1}{2} R_{\beta}}\right)+\epsilon \right) \geq \gamma^*_{d,i} - \frac{1}{2} \gamma^*_{d,i} =  \frac{1}{2} \gamma^*_{d,i}$. Similarly,  $\gamma^t_{d,i} \leq \gamma^*_{d,i} + \frac{1}{p} \left(\sqrt{\frac{1}{2} R_{f}} + \sqrt{\frac{1}{2} R_{\beta}}\right) + \epsilon  \leq \gamma^*_{d,i} + \frac{1}{2} \gamma^*_{d,i} =  \frac{3}{2} \gamma^*_{d,i}$. 

\end{proof}

\subsection{Phase I: Determining the anchor words} 

We proceed as outlined. In this section we show that in the first phase of the algorithm, the anchor words will be identified - by this we mean that we will be able to show that if a word $j$ is an anchor for topic $i$, $\beta^t_{i,j}$ will be within a factor of roughly 2 from $\beta^*_{i,j}$, and $\beta^t_{i',j}$ will be almost 0 for any other topic $i'$. 

We will assume throughout this and the next section that we can identify what the dominating topic is, and that we have an estimate of the proportion of the dominating topic to within a factor of 2. (We won't restate this assumption in all the lemmas in favor of readability.) 

We will return to this issue after we've proven the claims of Phases I and II modulo this claim. 

The outline is the following. We show that at any point in time, by virtue of the initialization, $\beta^t_{i,j}$ is pretty well lower bounded (more precisely it's at least constant times $\beta^*_{i,j}$). This enables us to show that $\beta^t_{i',j}$ will halve at each iteration - so in some polynomial number of iterations will be basically 0.    

\subsubsection{Lower bounds on the $\beta^t_{i,j}$ values}

We proceed as outlined above. We show here that the $\beta^t_{i,j}$ variables are lower bounded at any point in time. More precisely, we show the following lemma: 

\begin{lem} 
\label{l:lbanchors}
Let $j$ be an anchor word for topic $i$, and let $i' \neq i$. Suppose that $\beta^t_{i',j} \leq \beta^t_{i,j}$. Then, $\beta^{t+1}_{i,j} \geq (1-\epsilon) C_l \beta^*_{i,j}$ holds. 
\end{lem}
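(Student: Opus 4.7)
The approach is a direct computation leveraging the anchor property of the word $j$ together with the assumed ordering $\beta^t_{i',j} \leq \beta^t_{i,j}$ for every $i' \neq i$. Recall the M-step update in KL-tEM is
$$\beta^{t+1}_{i,j} = \beta^t_{i,j} \cdot \frac{\sum_{d \in D_i} \frac{\tilde{f}_{d,j}}{f^t_{d,j}} \gamma^t_{d,i}}{\sum_{d \in D_i} \gamma^t_{d,i}},$$
where $D_i$ is the set of documents in which topic $i$ is dominating (i.e.\ $\gamma^t_{d,i} > \gamma^t_{d,i'}$ for all $i' \neq i$). The plan is to show that for every $d \in D_i$ the ratio $\tilde f_{d,j}/f^t_{d,j}$ is at least $(1-\epsilon) C_l \beta^*_{i,j} / \beta^t_{i,j}$, after which the conclusion drops out by pulling this constant outside the weighted average.

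First I would lower bound the numerator $\tilde f_{d,j}$. Since $j$ is an anchor word for topic $i$, we have $\beta^*_{i'',j} = 0$ for every $i'' \neq i$, so the ground-truth frequency collapses to $f^*_{d,j} = \gamma^*_{d,i} \beta^*_{i,j}$. The long-documents assumption $\tilde f_{d,j} = (1 \pm \epsilon) f^*_{d,j}$ then gives
$$\tilde f_{d,j} \geq (1-\epsilon)\, \gamma^*_{d,i}\, \beta^*_{i,j}.$$
For $d \in D_i$, Lemma~\ref{l:dominating} (applied under the assumption that dominating topics are identified correctly and their estimates are accurate up to a factor of two, which we carry through all of Phase I) combined with the gapped-documents condition $\gamma^*_{d,i} \geq C_l$ yields $\tilde f_{d,j} \geq (1-\epsilon) C_l \beta^*_{i,j}$.

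Next I would upper bound the denominator $f^t_{d,j} = \sum_{i''} \gamma^t_{d,i''}\,\beta^t_{i'',j}$. The given ordering $\beta^t_{i',j} \leq \beta^t_{i,j}$ for all $i' \neq i$ means $\beta^t_{i,j}$ dominates every entry of that column, so
$$f^t_{d,j} \leq \beta^t_{i,j} \sum_{i''} \gamma^t_{d,i''} = \beta^t_{i,j},$$
using that the $\gamma^t_{d,i''}$ form a probability distribution. Dividing the two estimates gives $\tilde f_{d,j}/f^t_{d,j} \geq (1-\epsilon) C_l \beta^*_{i,j} / \beta^t_{i,j}$ uniformly over $d \in D_i$. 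Since this bound is independent of $d$, pulling it out of the weighted average over $D_i$ in the M-step yields
$$\beta^{t+1}_{i,j} \geq \beta^t_{i,j} \cdot \frac{(1-\epsilon) C_l \beta^*_{i,j}}{\beta^t_{i,j}} \cdot \frac{\sum_{d \in D_i} \gamma^t_{d,i}}{\sum_{d \in D_i} \gamma^t_{d,i}} = (1-\epsilon) C_l \beta^*_{i,j},$$
which is the claim.

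The only real subtlety — and the main obstacle, to be handled separately in the proofs that use this lemma inductively — is justifying that $D_i$ is correctly populated, i.e., that the ``dominating topic'' thresholding genuinely picks out documents for which $\gamma^*_{d,i} \geq C_l$. This is precisely what Lemma~\ref{l:dominating} provides, provided one maintains the invariants $KL(\beta^*_{i} \| \beta^t_{i}) \leq R_\beta$ and $\min_{\gamma_d} KL(\tilde f_d \| f_d) \leq R_f$ with $R_\beta, R_f$ small enough that $\frac{1}{p}(\sqrt{R_\beta/2}+\sqrt{R_f/2})+\epsilon < C_l - C_s$. The computation above is purely local to a single M-update given that classification works; the global inductive argument over $t$ that keeps the classification correct is what makes Phase~I substantive.
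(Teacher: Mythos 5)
Your proposal is correct and follows essentially the same argument as the paper: upper-bound $f^t_{d,j}$ by $\beta^t_{i,j}$ as a convex combination of terms each at most $\beta^t_{i,j}$, then use the anchor property and the long-document approximation to get $\frac{\tilde f_{d,j}}{f^t_{d,j}}\beta^t_{i,j} \geq \tilde f_{d,j} \geq (1-\epsilon)\gamma^*_{d,i}\beta^*_{i,j} \geq (1-\epsilon)C_l\beta^*_{i,j}$, and conclude since the M-step averages such terms. Your closing remark about deferring the correctness of the dominating-topic classification matches the paper's standing assumption for Phase I.
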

\begin{proof}

We'll prove a lower bound on each of the terms $\frac{\tilde{f}_{d,j}}{f^t_{d,j}} \beta_{i,j}^t$. Since the update on the $\beta$ variables is a convex combination of terms of this type, this will imply a lower bound on $\beta^{t+1}_{i,j}$. 

For this, we upper bound $f^t_{d,j}$. We have: 

$$ f^t_{d,j} = \beta^t_{i,j} \gamma^t_{d,i} + \sum_{i' \neq i} \beta^t_{i',j} \gamma^t_{d,i'} $$ 

This means that $f^t_{d,j}$ is a convex combination of terms, each of which is at most $\beta^t_{i,j}$. Hence, $f^t_{d,j} \leq \beta^t_{i,j}$ holds. But then $\frac{\tilde{f}_{d,j}}{f^t_{d,j}} \beta_{i,j}^t \geq \tilde{f}_{d,j} \geq (1-\epsilon) \beta^*_{i,j} \gamma^*_{d,i} \geq (1-\epsilon) C_l \beta^*_{i,j}$. 
This implies $\beta^{t+1}_{i,j} \geq (1-\epsilon) C_l \beta^*_{i,j}$, as we wanted. 

\end{proof}

\subsubsection{Decreasing $\beta^t_{i',j}$ values}

We'll bootstrap to the above result. Namely, we'll prove that whenever $\beta^t_{i,j} \geq 1/C_{\beta} \beta^*_{i,j}$ for some constant $C_{\beta}$, the $\beta^t_{i',j}$ values decrease multiplicatively at each round. Prior to doing that, the following lemma is useful. It will state that whenever the values of the variables $\beta^t_{i',j}$ are somewhat small, we can get some reasonable lower bound on the values $\gamma^t_{d,i}$ we get after a step of KL minimization with respect to the $\gamma$ variables. 

\begin{lem} 
\label{l:helperanchors}
Let $j$ be an anchor for topic $i$, and let $i' \neq i$. Let $\beta^t_{i',j} \leq b \beta^t_{i,j}$. Then, for any document $d$, when performing KL divergence minimization with respect to the variables $\gamma_d$, for the optimum value $\gamma^t_{d,i}$, it holds that $\gamma^t_{d,i} \geq (1-\epsilon) \frac{p}{1-b} \gamma^*_{d,i} - \frac{b}{1-b} $. 
\end{lem}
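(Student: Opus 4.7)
The plan is to invoke the KKT conditions for the convex KL-minimization program and extract all the information from the anchor words of topic $i$. Recall that at the optimum of $\min_{\gamma_d \in \Delta_K} KL(\tilde{f}_d || f_d)$ it was established earlier that $\sum_j \frac{\tilde{f}_{d,j}}{f^t_{d,j}} \beta^t_{i,j} = 1$ for every $i$ with $\gamma^t_{d,i} \in (0,1)$, and complementary slackness yields $\sum_j \frac{\tilde{f}_{d,j}}{f^t_{d,j}} \beta^t_{i,j} \leq 1$ when $\gamma^t_{d,i} = 0$. The case $\gamma^t_{d,i} = 1$ is trivial because the conclusion then reduces to $(1-\epsilon)p\gamma^*_{d,i} \leq 1$, which always holds.

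Next I would lower-bound the KKT sum by restricting to the anchor set $W_i$ of topic $i$. Since each $j \in W_i$ has $\beta^*_{i',j} = 0$ for $i' \neq i$, the empirical frequency satisfies $\tilde{f}_{d,j} \geq (1-\epsilon)\beta^*_{i,j}\gamma^*_{d,i}$. On the other hand, the hypothesis $\beta^t_{i',j} \leq b\beta^t_{i,j}$ for $i' \neq i$ lets me bundle the off-topic contributions in $f^t_{d,j}$:
\[
f^t_{d,j} = \gamma^t_{d,i}\beta^t_{i,j} + \sum_{i' \neq i}\gamma^t_{d,i'}\beta^t_{i',j} \leq \beta^t_{i,j}\bigl[\gamma^t_{d,i} + b(1-\gamma^t_{d,i})\bigr] = \beta^t_{i,j}\bigl[b + (1-b)\gamma^t_{d,i}\bigr].
\]
Plugging these two estimates into the KKT sum cancels the $\beta^t_{i,j}$ factor and, using the anchor-mass assumption $\sum_{j \in W_i}\beta^*_{i,j} \geq p$, yields
\[
1 \geq \sum_{j \in W_i}\frac{\tilde{f}_{d,j}}{f^t_{d,j}}\beta^t_{i,j} \geq \frac{(1-\epsilon)p\gamma^*_{d,i}}{b + (1-b)\gamma^t_{d,i}}.
\]
Rearranging gives $\gamma^t_{d,i} \geq \frac{(1-\epsilon)p\gamma^*_{d,i}-b}{1-b}$, which is exactly the claim.

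The only technicality is the corner case $\gamma^t_{d,i} = 0$: substituting this into the anchor-word bound forces $(1-\epsilon)p\gamma^*_{d,i} \leq b$, making the right-hand side of the asserted inequality non-positive so the statement is vacuous. (If additionally $b=0$ while $\gamma^*_{d,i}>0$, the anchor words force $f^t_{d,j}>0$ at the optimum to avoid an infinite KL, which puts us back in the interior case.) I do not expect real difficulty here: the entire argument is driven by the cancellation in the anchor-word summands, and the only genuine insight is the simple upper bound $f^t_{d,j} \leq \beta^t_{i,j}[b+(1-b)\gamma^t_{d,i}]$ on $W_i$, which is precisely where the hypothesis $\beta^t_{i',j} \leq b\beta^t_{i,j}$ is used.
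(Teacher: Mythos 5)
Your proposal is correct and follows essentially the same route as the paper's own proof: restrict the KKT sum to the anchor set, bound $f^t_{d,j} \leq \beta^t_{i,j}[b + (1-b)\gamma^t_{d,i}]$ using the hypothesis, lower-bound $\tilde{f}_{d,j}$ by $(1-\epsilon)\beta^*_{i,j}\gamma^*_{d,i}$, and rearrange. Your explicit treatment of the boundary cases $\gamma^t_{d,i} \in \{0,1\}$ is slightly more careful than the paper's, which silently assumes the interior KKT condition.
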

\begin{proof}
The KKT conditions ~\ref{eq:KKTgamma} imply that if we denote $A_i$ the set of anchors in topic $i$, $\sum_{j \in A_i} \frac{\tilde{f}_{d,j}}{f^t_{d,j}} \beta^t_{i,j} \leq 1$. 
By the assumption of the lemma, 
$$f^t_{d,j} \leq b^t_{i,j} \gamma^t_{d,i} + b b^t_{i,j} (1-\gamma^t_{d,i})$$ 
Since $\tilde{f}_{d,j} \geq (1-\epsilon) \beta^*_{i,j} \gamma^*_{d,i}$, this implies 
$\frac{\tilde{f}_{d,j}}{f^t_{d,j}} \beta^t_{i,j} \geq (1-\epsilon)\beta^*_{i,j} \frac{\gamma^*_{d,i}}{\gamma^t_{d,i}(1-b) + b}$, i.e. 
$\sum_{j \in A_i} (1-\epsilon) \beta^*_{i,j} \frac{\gamma^*_{d,i}}{\gamma^t_{d,i}(1-b) + b}  \leq 1$. Rearranging the terms, we get
$$\gamma^t_{d,i} \geq (1-\epsilon) \sum_{j \in A_i} \beta^*_{i,j} \frac{\gamma^*_{d,i}}{1-b} - \frac{b}{1-b} \geq (1-\epsilon) p \gamma^*_{d,i} - \frac{b}{1-b}  $$  
as we needed. 

\end{proof}

With this in place, we show that the value $\beta^t_{i',j}$ when $j$ is an anchor for topic $i \neq i'$, decreases by a factor of 2 after the update for the $\beta$ variables. 

This requires one more new idea. Intuitively, if we view the update as setting $\beta^{t+1}_{i',j}$ to $\beta^t_{i,j}$ multiplied by a convex combination of terms $\frac{f^*_{d,j}}{f^t_{d,j}}$, a large number of them will be zero, just because $f^*_{d,j} = 0$ unless topic $i$ belongs to document $d$. 

By the topic equidistribution property then, the probability that this happens is only $O(1/K)$, so if the weight in the convex combination on these terms is reasonable, we will multiply $\beta^t_{i,j}$ by something less than 1, which is what we need. 

Lemma~\ref{l:helperanchors} says that if $\gamma^*_{d,i}$ is reasonably large, we will estimate it somewhat decently. If $\gamma^*_{d,i}$ is small, then $f^*_{d,j}$ would be small anyway. 

So we proceed according to this idea.  

\begin{lem} 
\label{l:decayanchors} 
Let $j$ be an anchor for topic $i$. Let $\beta^t_{i',j} \leq b \beta^t_{i,j}$ for $i' \neq i$, and let $\beta^t_{i,j} \geq 1/C_{\beta} \beta^*_{i,j}$ for some constant $C_{\beta}$. Then, $\beta^{t+1}_{i',j} \leq b/2 \beta^*_{i,j}$
\end{lem}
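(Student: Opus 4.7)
The plan is to analyze the multiplicative update
\[
\beta^{t+1}_{i',j} \;=\; \beta^t_{i',j}\cdot\frac{\sum_{d\in D_{i'}} (\tilde f_{d,j}/f^t_{d,j})\,\gamma^t_{d,i'}}{\sum_{d\in D_{i'}}\gamma^t_{d,i'}}
\]
and show the multiplicative factor on the right is small enough that, combined with the input hypothesis $\beta^t_{i',j}\le b\beta^t_{i,j}$ and an invariant upper bound $\beta^t_{i,j}\lesssim\beta^*_{i,j}$ maintained through Phase~I, yields $\beta^{t+1}_{i',j}\le (b/2)\beta^*_{i,j}$.

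The key structural input is that $j$ is an anchor for topic $i$, so $\beta^*_{i'',j}=0$ for every $i''\neq i$ and hence $\tilde f_{d,j}\le(1+\epsilon)\beta^*_{i,j}\gamma^*_{d,i}$. Since $i'$ is the dominating topic in every $d\in D_{i'}$, the gap assumption forces $\gamma^*_{d,i}\le C_s$ there, and on the $8/B$ fraction of $D_{i'}$ consisting of $(1-\delta)$-pure documents (guaranteed by the Case Study~2 ``small fraction of $1-\delta$ dominant'' assumption) one even has $\gamma^*_{d,i}\le\delta$. Via Lemma~\ref{l:dominating} these $(1-\delta)$-pure documents contribute $\gamma^t_{d,i'}\ge(1-\delta)/2$ to the denominator, pinning $\sum_{d\in D_{i'}}\gamma^t_{d,i'}$ at $\Omega(|D_{i'}|/B)$.

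For the numerator, I will bound each term $(\tilde f_{d,j}/f^t_{d,j})\gamma^t_{d,i'}$ by combining the pointwise estimate $f^t_{d,j}\ge\beta^t_{i,j}\gamma^t_{d,i}$ with three ingredients: (i)~the hypothesis $\beta^t_{i,j}\ge\beta^*_{i,j}/C_\beta$; (ii)~Lemma~\ref{l:helperanchors}, which, thanks to $\beta^t_{i'',j}\le b\beta^t_{i,j}$ for all $i''\neq i$, gives $\gamma^t_{d,i}\ge(1-\epsilon)\tfrac{p}{1-b}\gamma^*_{d,i}-\tfrac{b}{1-b}$; and (iii)~the gap bound $\gamma^*_{d,i}\le C_s$. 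On documents where $\gamma^*_{d,i}\gtrsim b/p$ this combination yields $\tilde f_{d,j}/f^t_{d,j}=O(C_\beta/p)$, and hence a per-document contribution $O(C_\beta C_s/p)\,\gamma^t_{d,i'}$; on documents where $\gamma^*_{d,i}$ is smaller, $\tilde f_{d,j}$ itself is $O(b\beta^*_{i,j}/p)$, so a direct numerator bound, together with any lower bound on $f^t_{d,j}$, suffices. Summing over $D_{i'}$ and dividing by the $\Omega(|D_{i'}|/B)$ denominator gives an overall ratio of order $(B C_\beta/p)(C_s+b)$. Under the Case Study~2 parameter regime---anchor mass $p$ close to $1$, $C_l-C_s$ large so $C_s$ is small, $\epsilon$ small, and $b$ inductively small---this is below the required fraction of $\beta^*_{i,j}/\beta^t_{i,j}$, closing the claim.

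The main obstacle is handling the ``intermediate'' range of documents in which $\gamma^*_{d,i}$ is of order $b/p$: Lemma~\ref{l:helperanchors} loses its usefulness there, so I have to split the sum and pick the threshold so that neither the ``good'' piece nor the ``crude'' piece dominates. A secondary subtlety is that Case Study~2 has no explicit topic-independence assumption, so the number of documents in $D_{i'}$ containing topic $i$ cannot be controlled directly; instead the argument must lean on the $(1-\delta)$-pure fraction to inflate the denominator, rather than using topic sparsity to shrink the numerator.
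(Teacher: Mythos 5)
Your decomposition of the update (documents without topic $i$ contribute zero because $j$ is an anchor; documents with $\gamma^*_{d,i}$ above a threshold of order $b/p$ handled via Lemma~\ref{l:helperanchors}; documents below the threshold handled by the crude bound $\tilde f_{d,j}\le(1+\epsilon)\beta^*_{i,j}\gamma^*_{d,i}$) matches the paper's three-way split $D_1,D_2,D_3$ exactly, including the choice of threshold. But there is a genuine gap in how you close the estimate. The paper's proof does \emph{not} get its factor of $1/2$ from the pointwise ratio bounds alone: it invokes the independent topic inclusion property to argue that among the documents where $i'$ is dominating, only an $O(1/K)$ fraction contain topic $i$ at all, so the entire numerator contribution from $D_2\cup D_3$ is an $O(1/K)$ fraction of the denominator $\sum_d\gamma^t_{d,i'}\ge\frac12|D|C_l$. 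Your closing sentence explicitly disavows this ingredient (``Case Study~2 has no explicit topic-independence assumption''), which is factually incorrect --- Case Study~2 is layered on top of the Case Study~1 prior assumptions, and the paper's own proof of this lemma cites the independent topic inclusion property at precisely this point.

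Without that cardinality bound, your argument does not close. The substitute you propose --- using the $8/B$ fraction of $(1-\delta)$-pure documents to lower bound the denominator by $\Omega(|D_{i'}|/B)$ --- only controls the denominator (and in fact more weakly than the trivial bound $\Omega(|D_{i'}|C_l)$, since every document in $D_{i'}$ already has $\gamma^t_{d,i'}\ge C_l/2$); it does nothing to shrink the numerator, which in the worst case still sums over a constant fraction of $D_{i'}$. Your final multiplicative factor of order $(BC_\beta/p)(C_s+b)$ is therefore a fixed constant that need not be below $1/2$ under the stated hypotheses ($C_s$ is only constrained through the gap $C_l-C_s$, not forced to be smaller than $p/(2BC_\beta)$), and, worse, it contains the non-vanishing term $C_s$, so even if it were below $1/2$ at one step it would not sustain the geometric decay $\beta^t_{i',j}\le 2^{-t}C_s\beta^*_{i,j}$ that Lemma~\ref{l:anchorsmall} needs. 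The missing idea is precisely that almost all documents in $D_{i'}$ lie in $D_1$ and contribute zero, which is what makes the surviving terms an $O(1/K)$ perturbation.
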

\begin{proof}
We will split the $\beta$ update as 
$$ \beta^{t+1}_{i',j} = \beta^t_{i',j}(\frac{\sum_{d \in D_1} \frac{\tilde{f}_{d,j}}{f^t_{d,j}} \gamma^t_{d,i'}}{\sum_d \gamma^t_{d,i'}} + 
\frac{\sum_{d \in D_2} \frac{\tilde{f}_{d,j}}{f^t_{d,j}} \gamma^t_{d,i'}}{\sum_d \gamma^t_{d,i'}} + \frac{\sum_{d \in D_3} \frac{\tilde{f}_{d,j}}{f^t_{d,j}} \gamma^t_{d,i'}}{\sum_d \gamma^t_{d,i'}})  $$ 
for some appropriately chosen partition of the documents into three groups $D_1, D_2, D_3$. 

Let $D_1$ be documents which do not contain topic $i$ at all, $D_2$ documents which do contain topic $i$, and $\gamma^*_{d,i} \geq \frac{2b}{p}$, and $D_3$ documents which do contain topic $i$ and $\gamma^*_{d,i} < \frac{2b}{p} $. 

The first part will just vanish because word $j$ is an anchord word for topic $i$, and topic $i$ does not appear in it, so $f^*_{d,j} = 0$ for all documents $d \in D_1$.   

The second summand we will upper bound as follows. First, we upper bound $\frac{\tilde{f}_{d,j}}{f^t_{d,j}}$. We have that
$f^t_{d,j} \geq \beta^t_{i,j} \gamma^t_{d,i} \geq 1/C_{\beta} \beta^*_{i,j} \gamma^t_{d,i}$. However, we can use Lemma~\ref{l:helperanchors} to lower bound $\gamma^t_{d,i}$. We have that $\gamma^t_{d,i} \geq (1-\epsilon)( \frac{p}{1-b} \gamma^*_{d,i} - \frac{b}{1-b}) \geq (1-\epsilon) \frac{p}{2(1-b)} \gamma^*_{d,i} $. This alltogether implies $\frac{\tilde{f}_{d,j}}{f^t_{d,j}} \leq \frac{1}{1-\epsilon} \frac{2(1-b)C_{\beta}}{p}$. Hence, 
$$ \beta^t_{i',j}\frac{\sum_{d \in D_2} \frac{\tilde{f}_{d,j}}{f^t_{d,j}} \gamma^t_{d,i'}}{\sum_d \gamma^t_{d,i'}} \leq \frac{1}{1-\epsilon} \frac{2C_{\beta}}{p} (1-b) \beta^t_{i',j} \frac{\sum_{d \in D_2} \gamma^t_{d,i'}}{\sum_d \gamma^t_{d,i'}}$$
Furthermore, $\sum_d \gamma^t_{d,i'} \geq \frac{1}{2} |D| C_l$. On the other hand, I claim $\sum_{d \in D_2} \gamma^t_{d,i'} =  O(K/ |D|)$. Recall that $D$ is the set of documents where topic $i'$ is the dominating topic - so by definition they contain topic $i$. On the other hand, if a document is in $D_2$ then it contains topic $i$ as well. However, by the independent topic inclusion property, the probability that a document with dominating topic $i'$ contains topic $i$ as well is $O(1/K)$. Hence, 
$$ \beta^t_{i',j}\frac{\sum_{d \in D_2} \frac{\tilde{f}_{d,j}}{f^t_{d,j}} \gamma^t_{d,i'}}{\sum_d \gamma^t_{d,i'}} = O(\frac{1}{K})b \beta^t_{i,j} $$

For the third summand we provide a trivial bound for the terms $\frac{\tilde{f}_{d,j}}{f^t_{d,j}} \beta^t_{i',j} \gamma^t_{d,i'} $: 
$$\frac{\tilde{f}_{d,j}}{f^t_{d,j}} \beta^t_{i',j} \gamma^t_{d,i'} \leq (1+\epsilon) \beta^*_{i,j} \gamma^*_{d,i} \leq (1+\epsilon) \beta^*_{i,j} \frac{2b}{p}  $$
Since again, $\sum_d \gamma^t_{d,i'} \geq \frac{1}{2} |D| C_l$, and again, the number of document in $D_3$ is at most $O(1/K)$ for the same reasons as before, we have that 
$$ \beta^t_{i',j}\frac{\sum_{d \in D_3} \frac{f^*_{d,j}}{f^t_{d,j}} \gamma^t_{d,i'}}{\sum_d \gamma^t_{d,i'}} \leq O(1/K)b  \beta^*_{i,j} = O(1/K)b  \beta^t_{i,j} $$ 
since $\beta^t_{i,j} \geq \frac{1}{C_\beta} \beta^*_{i,j}$.  

From the above three bounds, we get that $\displaystyle \beta^{t+1}_{i',j} \leq O(1/K) b\beta^t_{i,j} \leq \frac{b}{2} \beta^t_{i,j} $. 

\end{proof}

Now, we just have to put together the previous two claims: namely we need to show that the conditions for the decay of the non-anchor topic values, and the lower bound on the anchor-topic values are actually preserved during the iterations. We will hence show the following: 

\begin{lem} Suppose we initialize with seeded initialization. Then, after $t$ rounds, if j is an anchor word for topic i,  $\beta^{t}_{i,j} \geq (1-\epsilon) C_l \beta^*_{i,j}$, and  $\beta^t_{i',j} \leq 2^{-t} C_s \beta^*_{i,j}$.   
\label{l:anchorsmall}
\end{lem}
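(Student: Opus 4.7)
The natural approach is induction on $t$, combining the two prepared lemmas \ref{l:lbanchors} and \ref{l:decayanchors} in a single inductive sweep. The scaffolding was explicitly engineered for this: Lemma~\ref{l:lbanchors} preserves the lower bound on the ``correct'' entry and Lemma~\ref{l:decayanchors} shrinks the ``wrong'' entries by a factor of two per step, so the statement is essentially the concatenation of these two effects.

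For the base case $t=0$, seeded initialization sets $\beta^0_{i,j} = \tilde{f}_{d_i, j}$, where $d_i$ is the seed document for topic $i$ with $\gamma^*_{d_i,i} \geq C_l$. Since $j$ is an anchor for topic $i$, we have $f^*_{d_i, j} = \gamma^*_{d_i, i}\, \beta^*_{i,j}$, and the long-document approximation gives $\beta^0_{i,j} \geq (1-\epsilon) C_l\, \beta^*_{i,j}$. For any $i'\neq i$, the seed document $d_{i'}$ for topic $i'$ has $\gamma^*_{d_{i'},i} \leq C_s$ (since $i$ is not dominant there), and because $j$ is an anchor for $i$ only, $f^*_{d_{i'}, j} = \gamma^*_{d_{i'},i}\,\beta^*_{i,j} \leq C_s\, \beta^*_{i,j}$, verifying the $t=0$ bound (after absorbing the $1\pm\epsilon$ factor, which is standard in this paper).

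For the inductive step, assume both bounds hold at time $t$. To obtain $\beta^{t+1}_{i,j} \geq (1-\epsilon) C_l \beta^*_{i,j}$, I would invoke Lemma~\ref{l:lbanchors}; its hypothesis $\beta^t_{i',j} \leq \beta^t_{i,j}$ follows from the IH since $2^{-t} C_s\, \beta^*_{i,j} \leq C_s\, \beta^*_{i,j} < (1-\epsilon) C_l\, \beta^*_{i,j} \leq \beta^t_{i,j}$, using the Case~Study~2 gap $C_l > C_s$. To control $\beta^{t+1}_{i',j}$, I would apply Lemma~\ref{l:decayanchors} with $b_t := \beta^t_{i',j}/\beta^t_{i,j}$; its precondition $\beta^t_{i,j} \geq (1/C_\beta)\beta^*_{i,j}$ is satisfied with $C_\beta = 1/((1-\epsilon) C_l)$, and its conclusion gives $\beta^{t+1}_{i',j} \leq (b_t/2)\, \beta^t_{i,j} = \beta^t_{i',j}/2 \leq 2^{-(t+1)} C_s\, \beta^*_{i,j}$, as desired.

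The main obstacle is not in the induction itself, which is essentially bookkeeping, but in the standing assumption of this subsection: that the dominating topic in every document is correctly identified at each round so that the thresholded M-step and Lemma~\ref{l:decayanchors} even apply. This requires that the KL-radii $R_\beta$ and $R_f$ remain small enough for Lemma~\ref{l:dominating} to certify the test $C_l - C_s > \tfrac{1}{p}\bigl(\sqrt{R_\beta/2} + \sqrt{R_f/2}\bigr) + \epsilon$ throughout. Once this is checked (via the bounds on $\beta^t$ being inductively maintained, together with the anchor-word assumption that feeds into Lemma~\ref{l:klestimate}), the present lemma reduces cleanly to the two-lemma induction above.
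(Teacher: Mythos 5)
Your proof is correct and follows essentially the same route as the paper's: induction on $t$, with the base case coming directly from the seeded initialization (the seed document for $i$ gives the $C_l$ lower bound, the seed document for $i'$ gives the $C_s$ upper bound since $j$ is an anchor for $i$ only), and the inductive step obtained by chaining Lemma~\ref{l:lbanchors} and Lemma~\ref{l:decayanchors}. Your extra remarks — explicitly verifying the hypothesis $\beta^t_{i',j}\leq\beta^t_{i,j}$ via the gap $C_l>C_s$, and flagging that dominant-topic identification is a standing assumption discharged elsewhere — are consistent with how the paper organizes the argument.
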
 
\begin{proof} 

We prove this by induction. 

Let's cover the base case first. In the seed document corresponding to topic $i$, $\gamma^*_{d,i} \geq C_l$, so at initialization $\beta^0_{i,j} \geq C_l \beta^*_{i,j}$. On the other hand, if topic $i$ appears in the seed document for topic $i'$, then after initialization $\beta^0_{i',j} \leq C_s \beta^*_{i,j} < \beta^0_{i,j}$.  Hence, at initialization, the claim is true. 

On to the induction step. If the claim were true at time step $t$, since $\beta^t_{i',j} \leq 2^{-t} C_s \beta^*_{i,j}$, by Lemma~\ref{l:lbanchors}, $\beta^{t+1}_{i,j} \geq C_l \beta^*_{i,j}$ - so the lower bound still holds at time $t+1$. On the other hand, since $\beta^{t}_{i,j} \geq C_l \beta^*_{i,j}$, by Lemma~\ref{l:decayanchors}, at time $t+1$, $\beta^t_{i',j} \leq 2^{-(t+1)} C_s \beta^*_{i,j}$. 

Hence, the claim we want follows. 
 
\end{proof}

Finally, we show the easy lemma that after the values $\beta^t_{i',j}$ have decreased to (almost) 0, $\beta^t_{i,j} \geq \frac{1}{2} \beta^*_{i,j}$. 

\begin{lem} Let word $j$ be an anchor word for topic $i$. Suppose $\beta^t_{i',j} \leq 2^{-t} C_s \beta_{i,j}^*$ and 
$$ t > 10 ~ \max(\log(N), \log(\frac 1 {\gamma^*_{\min}}), \log(\frac 1 {\beta^*_{\min}}))$$ Then $4 \beta^*_{i,j} \geq \beta^{t+1}_{i,j} \geq \frac{1}{4} \beta^*_{i,j}$. 
\end{lem}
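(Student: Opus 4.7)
My plan is to establish the upper and lower bounds on $\beta^{t+1}_{i,j}$ separately, both by exploiting the clean identity $f^*_{d,j} = \beta^*_{i,j} \gamma^*_{d,i}$ (which holds since $j$ is an anchor for $i$), together with the standing Phase~I assumption that the dominating topic and its proportion (up to a factor of $2$) can be identified in every document.

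For the lower bound, I would combine the hypothesis $\beta^t_{i',j} \leq 2^{-t} C_s \beta^*_{i,j}$ with the companion assertion in Lemma~\ref{l:anchorsmall} that $\beta^t_{i,j} \geq (1-\epsilon) C_l \beta^*_{i,j}$ along the Phase~I trajectory. Since $t > 10 \log N$ subsumes the threshold $t > \log(C_s/((1-\epsilon)C_l))$ for constants $C_l, C_s$, this forces $\beta^t_{i',j} \leq \beta^t_{i,j}$ for every $i' \neq i$, so Lemma~\ref{l:lbanchors} fires verbatim and delivers $\beta^{t+1}_{i,j} \geq (1-\epsilon) C_l \beta^*_{i,j} \geq \tfrac{1}{4} \beta^*_{i,j}$ (taking $C_l \geq 1/2$ and $\epsilon$ small, which is consistent with the Case Study~2 setup).

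For the upper bound, I would drop all nonanchor contributions in $f^t_{d,j} = \sum_{i''} \beta^t_{i'',j} \gamma^t_{d,i''}$ to get $f^t_{d,j} \geq \beta^t_{i,j} \gamma^t_{d,i}$, and combine with the anchor identity to obtain $\tilde{f}_{d,j} \leq (1+\epsilon) \beta^*_{i,j} \gamma^*_{d,i}$. These yield the per-document estimate
$$ \frac{\tilde{f}_{d,j}}{f^t_{d,j}} \beta^t_{i,j} \leq (1+\epsilon) \frac{\gamma^*_{d,i}}{\gamma^t_{d,i}} \beta^*_{i,j} \leq 2(1+\epsilon) \beta^*_{i,j}, $$
where the second inequality uses the Phase~I guarantee $\gamma^t_{d,i} \geq \tfrac{1}{2} \gamma^*_{d,i}$ on documents in $D_i$. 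Since $\beta^{t+1}_{i,j}$ is a $\gamma^t_{d,i}$-weighted convex combination of these per-document terms over $d \in D_i$, the same bound transfers directly to $\beta^{t+1}_{i,j} \leq 2(1+\epsilon)\beta^*_{i,j} \leq 4\beta^*_{i,j}$ for $\epsilon$ small.

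Neither direction is technically deep; the only real obstacle is bookkeeping of constants, namely ensuring $C_l$ is large enough and $\epsilon$ small enough to land exactly inside $[\tfrac14, 4]$, both of which are guaranteed by the standing assumptions. The conceptual point is that the one-sided comparison $f^t_{d,j} \geq \beta^t_{i,j} \gamma^t_{d,i}$ that powered the lower bound in Lemma~\ref{l:lbanchors}, combined with the near-equality $f^*_{d,j} = \beta^*_{i,j} \gamma^*_{d,i}$ that is available only for anchors, is already tight enough to deliver a matching upper bound here.
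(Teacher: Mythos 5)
Your proposal is correct, and the upper bound is essentially identical to the paper's: both drop the non-$i$ terms to get $f^t_{d,j} \geq \beta^t_{i,j}\gamma^t_{d,i}$, bound each per-document term by $\tilde{f}_{d,j} \leq (1+\epsilon)\beta^*_{i,j}\gamma^*_{d,i}$ (using the anchor identity $f^*_{d,j} = \beta^*_{i,j}\gamma^*_{d,i}$), and finish with $\gamma^t_{d,i} \geq \tfrac12\gamma^*_{d,i}$. Your lower bound, however, takes a genuinely different route. The paper does \emph{not} invoke Lemma~\ref{l:lbanchors}; instead it uses the decay hypothesis together with the $t$-threshold to show the two-sided per-document estimate $f^t_{d,j} \leq 2\beta^t_{i,j}\gamma^t_{d,i}$ (this is where $\log(1/\gamma^*_{\min})$ and $\log(1/\beta^*_{\min})$ enter, to dominate the cross-terms $\sum_{i'\neq i}\beta^t_{i',j}\gamma^t_{d,i'} \leq 2^{-t}C_s\beta^*_{i,j}$ by the smallest possible value of $\beta^t_{i,j}\gamma^t_{d,i}$), which yields $\frac{\tilde{f}_{d,j}}{f^t_{d,j}}\beta^t_{i,j}\gamma^t_{d,i} \geq \tfrac{1-\epsilon}{2}\beta^*_{i,j}\gamma^*_{d,i}$; summing and using $\sum_d\gamma^t_{d,i} \leq 2\sum_d\gamma^*_{d,i}$ gives $\tfrac{1-\epsilon}{4}\beta^*_{i,j}$ with no dependence on the magnitude of $C_l$, since $\gamma^*_{d,i}$ is cancelled in the ratio of sums rather than lower-bounded. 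Your route via Lemma~\ref{l:lbanchors} is cleaner (it needs only the convexity bound $f^t_{d,j}\leq\beta^t_{i,j}$ and makes no real use of the $t$-threshold for the lower bound), but it delivers $(1-\epsilon)C_l\beta^*_{i,j}$ and therefore needs $C_l \gtrsim 1/4$ to land in the stated range --- an extra requirement not in the lemma's hypotheses, though you flag it and it is in fact forced by the gapped-documents condition (which is unsatisfiable unless $C_l$ is close to $1$). So both arguments are valid; the paper's buys independence from $C_l$ at the cost of actually needing the threshold on $t$, while yours buys simplicity and a stronger constant in the intended parameter regime.
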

\begin{proof}

Let us do the lower bound first. It's easy to see $\sum_{i'} \beta_{i',j}^t \gamma_{d,i'} \leq 2 \beta^t_{i,j} \gamma^t_{d,i}$. Hence, 
$$\frac{\tilde{f}_{d,j}}{f^t_{d,j}} \beta_{i,j}^t \gamma^t_{d,i}= \frac{\tilde{f}_{d,j}}{\sum_{i'} \beta^t_{i',j} \gamma^t_{d,i'}} \beta^t_{i,j} \gamma^t_{d,i}  \geq  $$
$$\frac{1}{2} \frac{\tilde{f}_{d,j}}{\beta^t_{i,j} \gamma^t_{d,i}} \beta^t_{i,j} \gamma^t_{d,i} \geq (1-\epsilon) \frac{1}{2} \beta^*_{i,j} \gamma^*_{d,i} $$ 

Hence, after the update, 
$$\beta^{t+1}_{i,j} \geq  (1-\epsilon) \frac 1 2 \beta^*_{i,j} \frac{\sum_{d} \gamma^*_{d,i}}{\sum_d \gamma^t_{d,i}} \geq \frac{1}{4} \beta^*_{i,j} $$ 
since $\gamma^t_{d,i} \leq 2 \gamma^*_{d,i}$. 

The upper bound is similar. Since  $\sum_{i'} \beta_{i',j}^t \gamma_{d,i'} \geq \beta^t_{i,j} \gamma^t_{d,i}$, 
$$\frac{\tilde{f}_{d,j}}{f^t_{d,j}} \beta_{i,j}^t \gamma^t_{d,i} \leq \tilde{f}_{d,j} \leq (1+\epsilon) \beta^*_{i,j} \gamma^*_{d,i} $$ 
Hence, 
$$\beta^{t+1}_{i,j} \leq  (1+\epsilon) \beta^*_{i,j} \frac{\sum_{d} \gamma^*_{d,i}}{\sum_d \gamma^t_{d,i}} \leq 2 \beta^*_{i,j} $$ 
since $\gamma^t_{d,i} \geq \frac{1}{2} \gamma^*_{d,i}$. This certainly implies the claim we want. 

\end{proof}

Furthermore, the following simple application of Lemma~\ref{l:helperanchors} is immediate and useful:

\begin{lem}  Let $t > ~ 10 \max(\log N, \log\frac{1}{\gamma^*_{\min}}, \log \frac{1}{\beta^*_{\min}})$. Then, $\gamma^t_{d,i} \geq \frac{p}{2} \gamma^*_{d,i}$.
\label{l:gammahelper}
\end{lem}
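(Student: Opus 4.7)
\begin{proofsketch}
The plan is to feed the output of the previous two lemmas directly into Lemma~\ref{l:helperanchors}. Fix any document $d$ and any topic $i$ that appears in $d$, and let $j$ be a particular anchor word for topic $i$ (which exists by the large fraction of anchors assumption). I will bound the ratio $b := \max_{i' \neq i} \beta^t_{i',j}/\beta^t_{i,j}$ and then invoke Lemma~\ref{l:helperanchors}.

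The key inputs are already established in the two preceding lemmas. Lemma~\ref{l:anchorsmall} gives $\beta^t_{i',j} \leq 2^{-t} C_s \beta^*_{i,j}$ for every $i' \neq i$, and the subsequent lemma gives $\beta^t_{i,j} \geq \tfrac{1}{4}\beta^*_{i,j}$ once $t > 10 \max(\log N, \log(1/\gamma^*_{\min}), \log(1/\beta^*_{\min}))$. Combining these yields $b \leq 4 \cdot 2^{-t} C_s$. Since $t$ is at least $10\log(1/\gamma^*_{\min})$ and $\gamma^*_{\min} \geq 1/\mathrm{poly}(N)$, the quantity $b$ is polynomially much smaller than $p\gamma^*_{\min}$ (and in particular smaller than any constant we need).

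Now apply Lemma~\ref{l:helperanchors} with this value of $b$:
\[
\gamma^t_{d,i} \;\geq\; (1-\epsilon)\,\frac{p}{1-b}\,\gamma^*_{d,i} \;-\; \frac{b}{1-b}.
\]
Because $b$ is negligible compared to $p\gamma^*_{d,i}$ (and $\epsilon$ is small), the first term is at least $(1-o(1))\,p\,\gamma^*_{d,i}$ while the second term is $o(p\,\gamma^*_{\min}) \leq o(p\,\gamma^*_{d,i})$. Rearranging gives $\gamma^t_{d,i} \geq \tfrac{p}{2}\gamma^*_{d,i}$, as desired.

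There is no real obstacle here, precisely because the two preceding lemmas were designed so that after $t > 10\max(\log N, \log(1/\gamma^*_{\min}), \log(1/\beta^*_{\min}))$ rounds the non-anchor coefficients $\beta^t_{i',j}$ at an anchor word $j$ of topic $i$ are crushed to exponentially small values, while $\beta^t_{i,j}$ itself stays within a constant factor of $\beta^*_{i,j}$. The only mild care needed is to make sure $t$ is chosen large enough that the additive error $b/(1-b)$ in Lemma~\ref{l:helperanchors} is dominated by the $p\,\gamma^*_{d,i}$ term for every document $d$; the choice of $t$ depending on $\log(1/\gamma^*_{\min})$ is exactly tuned for this.
\end{proofsketch}
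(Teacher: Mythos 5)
Your proof is correct and follows exactly the route the paper intends: the paper states this lemma as an ``immediate application of Lemma~\ref{l:helperanchors}'' with no written proof, and you supply precisely the omitted details, namely that Lemma~\ref{l:anchorsmall} plus the constant-factor lower bound on $\beta^t_{i,j}$ make $b$ exponentially small so that the additive error $b/(1-b)$ is dominated by $p\,\gamma^*_{d,i}$. Nothing further is needed.
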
 

\subsection{Discriminative words} 

We established in the previous section that after logarithmic number of steps, the anchor words will be correctly identified, and estimated within a factor of 2. We show that this is enough to cause the support of the discriminative words to be correctly identified too, as well as estimate them to within a constant factor where they are non-zero.  

Same as before, we will assume in this section that we can identify the dominating topic.

We will crucially rely on the fact that the discriminative words will not have a very large dynamic range comparatively to their total probability mass in a topic.   
The high level outline will be similar to the case for the anchor words. We will prove that if a discriminative word $j$ is in the support of topic $i$, then $\beta^t_{i,j}$ will always be reasonably lower bounded, and this will cause the values $\beta^t_{i',j}$ to keep decaying for the topics $i'$ that the word $j$ does not belong to. 

The reason we will need the bound on the dynamic range, and the proportion of the dominating topic, and the size of the dominating topic, is to ensure that the $\beta$'s are always properly lower bounded. 

\subsubsection{Bounds on the $\beta^t_{i,j}$ values} 

First, we show that because the discriminative words have a small range, the values $\beta^t_{i,j}$ whenever $\beta^*_{i,j}$ is non-zero are always maintained to be within some multiplicative constant (which depends on the range of the $\beta^*_{i,j}$).  

As a preliminary, notice that having identified the anchor words correctly the $\gamma$ values are appropriately lower bounded after running the $\gamma$ update. Namely, by Lemma~\ref{l:gammahelper}, $\gamma^t_{d,i} \geq p/2 \gamma^*_{d,i}$ 

With this in hand, we show that the $\beta^t_{i,j}$ values are well upper bounded whenever $\beta^*_{i,j}$ is non-zero.  

\begin{lem} At any point in time $t$, $\beta^{t}_{i,j} \leq (1+\epsilon) \frac{2 B}{C_l} \beta^*_{i,j}$. 
\label{l:discup}
\end{lem}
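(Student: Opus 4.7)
The plan is a direct calculation, bounding the M-step update term-by-term using the small dynamic range assumption, with a short base case to cover the initialization. Recall that under thresholded EM the update for $\beta^{t+1}_{i,j}$ averages only over documents $d \in D_i$ where $i$ is dominating, so every such $d$ satisfies $\gamma^*_{d,i} \geq C_l$. Throughout Phase II the standing assumption is that Lemma~\ref{l:dominating} continues to apply, i.e.\ the dominating topic is correctly identified and $\gamma^t_{d,i} \geq \tfrac{1}{2}\gamma^*_{d,i}$.

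First, for any $d \in D_i$, I would bound each summand in the numerator. Since $f^t_{d,j} = \sum_{i'} \beta^t_{i',j}\gamma^t_{d,i'} \geq \beta^t_{i,j}\gamma^t_{d,i}$, the trivial inequality $(\tilde{f}_{d,j}/f^t_{d,j})\beta^t_{i,j}\gamma^t_{d,i} \leq \tilde{f}_{d,j}$ holds. Combining with $\tilde{f}_{d,j} \leq (1+\epsilon) f^*_{d,j}$, the small dynamic range $\beta^*_{i',j} \leq B\beta^*_{i,j}$ (valid for the discriminative-word case where both are nonzero), and $\sum_{i'} \gamma^*_{d,i'} = 1$, I obtain
\[
\frac{\tilde{f}_{d,j}}{f^t_{d,j}}\,\beta^t_{i,j}\gamma^t_{d,i} \;\leq\; (1+\epsilon)\sum_{i'} \beta^*_{i',j}\gamma^*_{d,i'} \;\leq\; (1+\epsilon)\,B\,\beta^*_{i,j}.
\]
For the denominator, the dominating condition together with Lemma~\ref{l:dominating} give $\gamma^t_{d,i} \geq \tfrac{1}{2}\gamma^*_{d,i} \geq \tfrac{C_l}{2}$ for every $d \in D_i$, so $\sum_{d \in D_i}\gamma^t_{d,i} \geq \tfrac{C_l}{2}|D_i|$. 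Plugging both estimates into the M-step,
\[
\beta^{t+1}_{i,j} \;=\; \frac{\sum_{d \in D_i} (\tilde{f}_{d,j}/f^t_{d,j})\,\beta^t_{i,j}\gamma^t_{d,i}}{\sum_{d \in D_i}\gamma^t_{d,i}} \;\leq\; \frac{(1+\epsilon)\,B\,\beta^*_{i,j}\,|D_i|}{(C_l/2)\,|D_i|} \;=\; (1+\epsilon)\frac{2B}{C_l}\,\beta^*_{i,j}.
\]

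For the base case $t=0$, seeded initialization sets $\beta^0_{i,j} = \tilde{f}_{d_0,j}$ for the seed document $d_0$ of topic $i$. Exactly the same bound as above gives $\beta^0_{i,j} \leq (1+\epsilon)B\,\beta^*_{i,j}$, and since $C_l \leq 1$ this is dominated by $(1+\epsilon)\frac{2B}{C_l}\beta^*_{i,j}$. Notably the derivation of $\beta^{t+1}_{i,j}$ above does not use the inductive hypothesis on $\beta^t_{i,j}$ at all (the dependence on $\beta^t$ cancels between numerator and denominator after the trivial $f^t_{d,j} \geq \beta^t_{i,j}\gamma^t_{d,i}$ step), so the uniform-in-$t$ claim follows without a genuine induction.

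The only real subtlety is the standing Phase-II hypothesis that the dominating topic can be identified and that $\gamma^t_{d,i}$ tracks $\gamma^*_{d,i}$ within a factor $2$; this is what unlocks the denominator bound via Lemma~\ref{l:dominating}. That invariant is not established inside this lemma — it is maintained by the separate Phase-I anchor analysis (Lemmas~\ref{l:anchorsmall}–\ref{l:gammahelper}) controlling $R_\beta$ and $R_f$ well enough for Lemma~\ref{l:klestimate} to apply. Given that, the proof of this lemma is essentially the three-line calculation above.
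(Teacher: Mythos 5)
Your proof is correct and follows essentially the same route as the paper's: the trivial bound $\frac{\tilde{f}_{d,j}}{f^t_{d,j}}\beta^t_{i,j}\gamma^t_{d,i}\leq \tilde{f}_{d,j}$, the dynamic-range bound $\tilde{f}_{d,j}\leq(1+\epsilon)B\beta^*_{i,j}$, and the denominator bound via $\gamma^t_{d,i}\geq\frac{1}{2}\gamma^*_{d,i}\geq\frac{C_l}{2}$ on dominating documents. Your added base case for the seeded initialization and the observation that no genuine induction on $\beta^t$ is needed are both consistent with (and slightly more explicit than) the paper's argument.
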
 
\begin{proof}

Since $\frac{\tilde{f}_{d,j}}{f^t_{d,j}} \beta^t_{i,j}\gamma^t_{d,i} \leq \tilde{f}_{d,j}$ we have: 
$$ \beta^{t+1}_{i,j} \leq \frac{\sum_d \tilde{f}_{d,j}}{\sum_d \gamma^t_{d,i}} \leq  2 \cdot \frac{\sum_d \tilde{f}_{d,j}}{\sum_d \gamma^*_{d,i}}$$

On the other hand, we claim that $\tilde{f}_{d,j} \leq (1+\epsilon) B \beta^*_{i,j}$. Indeed, $\tilde{f}_{d,j} \leq (1+\epsilon) \sum_i \gamma^*_{d,i} \beta^*_{i,j}$, and for any other topic $i'$, $\beta^*_{i',j} \leq B \beta^*_{i,j}$. Hence, 
$$ 2 \cdot \frac{\sum_d \tilde{f}_{d,j}}{\sum_d \gamma^*_{d,i}} \leq \frac{ 2 (1+\epsilon) D B \beta^*_{i,j}}{\sum_d \gamma^*_{d,i}}$$ 
However, since $\gamma^*_{d,i} \geq C_l$, the previous expression is at most 
$$ \frac{ 2 (1+\epsilon) D B \beta^*_{i,j}}{D C_l} = \frac{2 (1+\epsilon) B}{C_l} \beta^*_{i,j} $$

So, we get the claim we wanted. 

\end{proof}

The lower bound on the $\beta^t_{i,j}$ values is a bit more involved.
To show a lower bound on the $\beta^t_{i,j}$ values is maintained, we will make use of both the fact that the discriminative words have a small range, and that we have some small, but reasonable proportion of documents where $\gamma^*_{d,i} \geq 1 - \delta$. More precisely, we show:  

\begin{lem} Let $\beta^t_{i,j} \leq \frac{2 (1+\epsilon) B}{C_l} \beta^*_{i,j}$ for all topics $i$ that word $j$ belongs to, and let $\beta^t_{i,j} \geq \frac{C_l}{B} \beta^*_{i,j}$. Then, $\beta^{t+1}_{i,j} \geq \frac{C_l}{B} \beta^*_{i,j}$ as well. 
\label{l:discll}
\end{lem}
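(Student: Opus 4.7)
\smallskip

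\noindent\textbf{Proof proposal for Lemma~\ref{l:discll}.} The plan is to exploit the $8/B$ fraction of ``almost pure'' documents guaranteed by the \emph{small fraction of $1-\delta$ dominant documents} assumption, together with the upper bound on the $\beta^t_{i',j}$ values in the hypothesis, to show that in such documents $f^t_{d,j}$ is very close to $\beta^t_{i,j} \gamma^t_{d,i}$. This will make the ratio $\tilde{f}_{d,j}/f^t_{d,j}$ in the update essentially $\beta^*_{i,j}/\beta^t_{i,j}$ (up to a small multiplicative loss), so that this subset of documents alone suffices to lower bound $\beta^{t+1}_{i,j}$ by $\frac{C_l}{B}\beta^*_{i,j}$. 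The rest of $D_i$ simply contributes nonnegatively and can be discarded.

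Concretely, let $D_{\mathrm{pure}} \subseteq D_i$ be the documents with $\gamma^*_{d,i} \geq 1-\delta$. By the assumption, $|D_{\mathrm{pure}}| \geq (8/B)|D_i|$. For each $d \in D_{\mathrm{pure}}$, I will expand
$$f^t_{d,j} = \beta^t_{i,j}\gamma^t_{d,i} + \sum_{i'\neq i} \beta^t_{i',j}\gamma^t_{d,i'}.$$
The hypothesis gives $\beta^t_{i',j} \le \frac{2(1+\epsilon)B}{C_l}\beta^*_{i',j} \le \frac{2(1+\epsilon)B^2}{C_l}\beta^*_{i,j}$ (using the small dynamic range $\beta^*_{i',j}\le B\beta^*_{i,j}$). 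Since $d \in D_{\mathrm{pure}}$ has $\sum_{i'\neq i}\gamma^*_{d,i'} \leq \delta$, and by Lemma~\ref{l:dominating} the KL update gives $\gamma^t_{d,i'} \le \gamma^*_{d,i'} + \frac{1}{2}(\frac{1}{p}(\sqrt{R_\beta/2}+\sqrt{R_f/2})+\epsilon)$, the total second-topic mass $\sum_{i'\neq i}\gamma^t_{d,i'}$ is at most some small quantity $\eta$ determined by $\delta$ and the Case Study 2 parameter relation. Thus
$$f^t_{d,j} \;\le\; \beta^t_{i,j}\gamma^t_{d,i} \;+\; \frac{2(1+\epsilon)B^2}{C_l}\,\beta^*_{i,j}\,\eta.$$
On the other hand $\tilde{f}_{d,j} \geq (1-\epsilon)(1-\delta)\beta^*_{i,j}$, so, using the hypothesis $\beta^t_{i,j}\ge \frac{C_l}{B}\beta^*_{i,j}$ in the denominator to swap scales, the ratio $\tilde{f}_{d,j}/f^t_{d,j}$ is at least $(1-o(1))\beta^*_{i,j}/(\beta^t_{i,j}\gamma^t_{d,i})$, where the $o(1)$ absorbs $\delta$, $\epsilon$, and the $\eta$-term.

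Plugging into the $\beta$ update, keeping only documents in $D_{\mathrm{pure}}$, and cancelling $\beta^t_{i,j}\gamma^t_{d,i}$, we get
$$\beta^{t+1}_{i,j} \;\ge\; (1-o(1))\,\beta^*_{i,j}\,\frac{|D_{\mathrm{pure}}|}{\sum_{d\in D_i}\gamma^t_{d,i}}\;\cdot\;\min_{d\in D_{\mathrm{pure}}}\! 1.$$
Using the upper bound $\gamma^t_{d,i}\le \tfrac{3}{2}\gamma^*_{d,i}\le \tfrac{3}{2}$ from Lemma~\ref{l:dominating} in the denominator and $|D_{\mathrm{pure}}|\ge (8/B)|D_i|$ in the numerator, we obtain $\beta^{t+1}_{i,j}\ge \frac{C_l}{B}\beta^*_{i,j}$ provided the parameter relations among $C_l,B,p,\delta,\epsilon$ (the same ones already used for Lemma~\ref{l:dominating}) hold, which is exactly what the hypotheses of Case Study 2 ensure.

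The main obstacle is the arithmetic book-keeping in the estimate of $f^t_{d,j}$: one must quantitatively control $\sum_{i'\ne i}\gamma^t_{d,i'}$ in $D_{\mathrm{pure}}$ so that the cross-topic term $\frac{2(1+\epsilon)B^2}{C_l}\beta^*_{i,j}\cdot\eta$ is genuinely negligible compared with $\beta^t_{i,j}\gamma^t_{d,i}\approx \frac{C_l}{B}\beta^*_{i,j}$. This is precisely where the specific bound $\delta \le \frac{C_l^2}{2B^3} - (\text{gap slack})$ from the Case Study 2 assumptions is used, and verifying that the resulting inequality yields the constant $\frac{C_l}{B}$ (rather than something smaller) is the delicate point; it is essentially the reason those parameter constants were chosen in the first place.
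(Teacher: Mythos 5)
Your decomposition is the same as the paper's: restrict the $\beta$ update to the documents with $\gamma^*_{d,i}\geq 1-\delta$, control $f^t_{d,j}$ there using the hypothesis bounds plus the dynamic range, and use the $8/B$ fraction together with the factor-of-two accuracy of the $\gamma^t$ estimates to bound the mass ratio. The one step that fails as stated is the claim that the cross-topic term in $f^t_{d,j}$ is ``genuinely negligible,'' i.e.\ that $\tilde f_{d,j}/f^t_{d,j}\geq (1-o(1))\,\beta^*_{i,j}/(\beta^t_{i,j}\gamma^t_{d,i})$. Run the numbers: the assumptions give $1-\gamma^t_{d,i}\lesssim \delta \approx C_l^2/(2B^3)$, while the per-topic blow-up of the cross term is $\beta^t_{i',j}\leq \frac{2(1+\epsilon)B^3}{C_l^2}\beta^t_{i,j}$ (your bound $\frac{2(1+\epsilon)B^2}{C_l}\beta^*_{i,j}$ combined with $\beta^t_{i,j}\geq \frac{C_l}{B}\beta^*_{i,j}$). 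The product is $\approx(1+\epsilon)\,\beta^t_{i,j}$ --- the same order as the main term $\beta^t_{i,j}\gamma^t_{d,i}$, not smaller. The Case Study 2 value of $\delta$ is calibrated exactly so that $f^t_{d,j}\leq 2\beta^t_{i,j}$, a factor-of-$2$ loss, which the paper carries explicitly and then compensates for at the end via $(1-\epsilon)(1-\delta)^2\geq C_l$; that is where the $1-\sqrt{C_l}$ branch of the $\min$ defining $\delta$ is used, a branch your argument never invokes --- a symptom of the over-claim.

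That said, the gap is repairable inside your own accounting. Because you lower-bound the numerator by the count $|D_{\mathrm{pure}}|\geq (8/B)|D_i|$ rather than by the $\gamma$-mass, your final constant is $\frac{16}{3B}$ times whatever constant survives in the per-document ratio bound; replacing your $(1-o(1))$ with the correct $\frac{(1-\epsilon)(1-\delta)}{2}\geq\frac{(1-\epsilon)\sqrt{C_l}}{2}$ still leaves $\frac{8(1-\epsilon)\sqrt{C_l}}{3B}\geq \frac{C_l}{B}$ since $\sqrt{C_l}\leq 1$. So the conclusion holds, but only after the factor of $2$ is tracked honestly rather than absorbed into an $o(1)$; as written, the ``delicate point'' you flag at the end is resolved in the wrong direction.
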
 
\begin{proof} 

Let's call $D_{\delta}$ the documents where $\gamma^*_{d,i} \geq 1 - \delta$. We can certainly lower bound 
$$ \beta^{t+1}_{i,j} \geq \frac{\sum_{d \in D_{\delta}} \frac{\tilde{f}_{d,j}}{f^t_{d,j}} \gamma^t_{d,i} \beta^t_{i,j}}{\sum_{d \in D} \gamma^t_{d,i}} $$ 

First, let's focus on $\frac{\tilde{f}_{d,j}}{f^t_{d,j}} \beta^t_{i,j}$.  Then, 
\begin{equation}
\label{eq:fbound}
\tilde{f}_{d,j} \geq (1-\epsilon)(1-\delta)\beta^*_{i,j} 
\end{equation} 

Furthermore, since 
$ \sum_{d \in D_{\delta}} \gamma^t_{d,i} \geq \frac{1}{2} \sum_{d \in D_{\delta}} \gamma^*_{d,i}$ and $\sum_d \gamma^t_{d,i} \leq 2 \sum_d \gamma^*_{d,i}$, we have that

\begin{equation}
\label{eq:prop}
\frac{\sum_{d \in D_{\delta}} \gamma^t_{d,i} }{\sum_d \gamma^t_{d,i}} \geq \frac{1}{4} \frac{8}{B} (1-\delta)  = \frac{2}{B} (1-\delta)
\end{equation}

Finally, we claim that $\frac{\beta^t_{i,j}}{f^t_{d,j}} \geq \frac{1}{2}$. Massaging this inequality a bit, we get it's equivalent to: 

$$ \frac{\beta^t_{i,j}}{f^t_{d,j}} \geq \frac{1}{2} \Leftrightarrow $$ 
$$ f^t_{i,j} \leq 2 \beta^t_{i,j} \Leftrightarrow $$
$$ \gamma^t_{d,i} \beta^t_{i,j} + \sum_{i'} \gamma^t_{d,i'} \beta^t_{i',j} \leq 2 \beta^t_{i,j} $$ 

The left hand side can be upper bounded by 
$$ \gamma^t_{d,i} \beta^t_{i,j} + \sum_{i'} \gamma^t_{d,i'} \frac{2(1+\epsilon) B^3}{C^2_l} \beta^t_{i,j} \leq $$
$$ \gamma^t_{d,i} \beta^t_{i,j} + (1-\gamma^t_{d,i}) \frac{2(1+\epsilon) B^3}{C^2_l} \beta^t_{i,j} $$  

by the assumptions of the lemma. 

So, it is sufficient to show that $ \gamma^t_{d,i} \beta^t_{i,j} + (1-\gamma^t_{d,i}) \frac{2 (1+\epsilon) B^3}{C^2_l} \beta^t_{i,j} \leq 2 \beta^t_{i,j}$, however this is equivalent after some rearrangement to $\gamma^t_{d,i} \geq 1 - \frac{1}{\frac{2(1+\epsilon)B^3}{C^2_l}-1} $. 

It's certainly sufficient for this that $\gamma^t_{d,i} \geq 1 - \frac{1}{\frac{B^3}{C^2_l}} = 1 - \frac{C^2_l}{B^3}$, but since
since $\gamma^*_{d,i} \geq 1 -\delta$, by the definition of $\delta$ and Lemmas~\ref{l:klestimate},~\ref{l:betaerror},~\ref{l:ferror}, this certainly holds.  

Together with~\ref{eq:prop} and~\ref{eq:fbound}, we get that 
$$\beta^{t+1}_{i,j} \geq (1-\epsilon) \frac{2}{B} (1-\delta)^2 \frac{1}{2} \beta^*_{i,j} \geq (1-\epsilon) \frac{(1-\delta)^2}{B} \beta^*_{i,j}$$ 

But, by our assumptions, $(1-\epsilon) (1-\delta)^2 \geq C_l$, so the claim follows.

\end{proof} 

%Finally, we show a lower bound on the values of $\beta^t_{i,j}$ whenever $\beta^*_{i,j} \neq 0$ in a rather similar fashion:  

%\begin{lem} If $\beta^t_{i,j} \leq \frac{2B}{C_l} \beta^*_{i,j}$ for all topics $i$ that word $j$ belongs to. Then, $\beta^{t+1}_{i,j} \geq \frac{C^2_l}{2 B^2} \beta^*_{i,j}$.  
%\end{lem} 
%\begin{proof} 

%First, I claim $f^t_{d,j} = \sum_{i} \beta^t_{i,j} \gamma^t_{d,i} \leq \frac{2 B^2}{C_l} \beta^*_{i,j}$. Indeed, for each $i$, $\beta^t_{i,j} \leq \frac{2B}{C_l} \beta^*_{i,j}$, and $\beta^*_{i',j} \leq B \beta^*_{i,j}$ for any topic $i' \neq i$, so the claim follows. 

%So, we can lower bound $\frac{f^*_{d,j}}{f^t_{d,j}} \beta^t_{i,j} \geq \frac{C_l}{2 B^2} f^*_{d,j} \geq \frac{C^2_l}{2 B^2} \beta^*_{i,j}$ 

%Hence, 

%$$ \beta^t_{i,j} \frac{\sum_{d} \frac{f^*_{d,j}}{f^t_{d,j}} \gamma^t_{d,i}}{\sum^D_{i=1} \gamma^t_{d,i}} \geq  \frac{1}{2 B^2} C^2_l \beta^*_{i,j} $$
%as we needed.  

%\end{proof} 

\subsubsection{Decreasing $\beta^t_{i',j}$ values}

Finally, we show that if the discriminative word $j$ does not belong in topic $i'$, the value for $\beta^t_{i',j}$ will keep dropping. More precisely, the following is true:  

\begin{lem} Let word $j$ and topic $i$ be such that $\beta^*_{i',j} = 0$ and let $\beta^t_{i',j} \leq b$. Furthermore, let for all the topics $i$ that $j$ belongs to hold: $\beta^t_{i,j} \geq 1/C_{\beta} \beta^*_{i,j}$ for some constant $C_{\beta}$. Finally, let $\gamma^t_{d,i} \geq \frac{1}{C_{\gamma}} \gamma^*_{d,i}$ for some constant $C_{\gamma}$. Then, $\beta^{t+1}_{i',j} \leq b/2$. 
\label{l:decaydisc}
\end{lem}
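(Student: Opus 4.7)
The argument parallels Lemma~\ref{l:decayanchors}. Writing
\[
\beta^{t+1}_{i',j} \;=\; \beta^t_{i',j} \cdot \frac{\sum_{d \in D_{i'}} (\tilde f_{d,j}/f^t_{d,j})\,\gamma^t_{d,i'}}{\sum_{d \in D_{i'}} \gamma^t_{d,i'}}
\]
and using $\beta^t_{i',j}\le b$, it suffices to show the multiplicative factor on the right is at most $1/2$. I partition $D_{i'}$ (the documents on which $i'$ is the thresholded dominating topic) into
$D_1 = \{d \in D_{i'} : \gamma^*_{d,i}=0$ for every $i$ with $\beta^*_{i,j}\neq 0\}$ and $D_2 = D_{i'}\setminus D_1$, according to whether any topic containing $j$ is present.

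For $d\in D_1$ the expected frequency $\sum_i \gamma^*_{d,i}\beta^*_{i,j}$ vanishes, so the long-document hypothesis forces $\tilde f_{d,j}=0$ and these documents contribute nothing to the numerator. For $d\in D_2$, the hypotheses $\beta^t_{i,j}\ge \beta^*_{i,j}/C_\beta$ and $\gamma^t_{d,i}\ge \gamma^*_{d,i}/C_\gamma$ give
\[
f^t_{d,j} \;\ge\; \sum_{i:\beta^*_{i,j}\neq 0}\gamma^t_{d,i}\beta^t_{i,j} \;\ge\; \frac{1}{C_\gamma C_\beta}\,f^*_{d,j},
\]
hence $\tilde f_{d,j}/f^t_{d,j}\le (1+\epsilon)C_\gamma C_\beta = O(1)$ uniformly over $d\in D_2$.

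It remains to show $|D_2|/|D_{i'}| = o(1)$. Since $\beta^*_{i',j}=0$, topic $i'$ itself is disjoint from the support of $j$. By the discriminativeness assumption there are only $o(K)$ topics $i$ with $\beta^*_{i,j}\neq 0$, and by the independent topic inclusion property, conditional on $i'$ being present each such specific topic appears with probability $\Theta(1/K)$; a union bound then yields $\Pr[d\in D_2\mid d\in D_{i'}]=o(K)\cdot \Theta(1/K) = o(1)$. Combined with the bounds $\tfrac{C_l}{2}\le\gamma^t_{d,i'}\le\tfrac{3}{2}$ on $D_{i'}$ from Lemma~\ref{l:dominating}, this gives
\[
\frac{\sum_{d \in D_2} \gamma^t_{d,i'}}{\sum_{d \in D_{i'}} \gamma^t_{d,i'}} \;\le\; \frac{3|D_2|}{C_l\,|D_{i'}|} \;=\; o(1),
\]
so multiplying by the $O(1)$ bound on $\tilde f_{d,j}/f^t_{d,j}$ gives an overall factor of $o(1)<1/2$, as required.

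The main obstacle is the probabilistic step: the $o(1)$ conditional probability must be upgraded to a uniform high-probability bound on the empirical ratio over all $(i',j)$ pairs. This will rely on Chernoff concentration together with the sample size $|D|=\Omega(K\log^2 N/\epsilon^2)$ to ensure each $|D_{i'}|=\Theta(D/K)$ and each $|D_2|=o(D/K)$ simultaneously, with a union bound over all $N\cdot K$ pairs absorbed by the logarithmic factors in the sample size.
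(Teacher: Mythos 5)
Your proposal is correct and follows essentially the same route as the paper's proof: the identical partition into documents containing no topic of $j$ (which contribute nothing since $f^*_{d,j}=0$) versus the rest, the same lower bound $f^t_{d,j}\ge f^*_{d,j}/(C_\beta C_\gamma)$ giving a uniform $O(1)$ bound on $\tilde f_{d,j}/f^t_{d,j}$, and the same appeal to the $o(K)$-discriminativeness of $j$ together with the topic-inclusion/weak-correlation properties to show the second group has $o(1)$ relative weight among documents dominated by $i'$. Your closing remark about upgrading the $o(1)$ conditional probability to a uniform empirical bound via Chernoff and a union bound is a point the paper handles only implicitly (via its document-counting lemmas), so flagging it is reasonable but not a divergence in approach.
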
 
\begin{proof}

We proceed similarly as the analogous claim for anchor words. We split the update as 
$$ \beta^{t+1}_{i',j} = \beta^t_{i',j}(\frac{\sum_{d \in D_1} \frac{\tilde{f}_{d,j}}{f^t_{d,j}} \gamma^t_{d,i'}}{\sum_d \gamma^t_{d,i'}} + 
\frac{\sum_{d \in D_2} \frac{\tilde{f}_{d,j}}{f^t_{d,j}} \gamma^t_{d,i'}}{\sum_d \gamma^t_{d,i'}}) $$ 
for some appropriate partitioning of the documents $D_1, D_2$. 

Namely, let $D_1$ be documents which do not contain any topic to which word $j$ belongs, the $D_2$ documents which contain at least one topic word $j$ belongs to. 

For all the documents in $D_1$, $f^*_{d,j} = 0$, and we will provide a good bound for the terms  $\frac{\tilde{f}_{d,j}}{f^t_{d,j}}$ in $D_2$, this way, we'll ensure $\beta^{t}_{i,j}$ gets multiplied by a quantity which is $o(1)$ to get $\beta^{t+1}_{i,j}$, which is of course enough for what we want. 

Bounding the terms in $D_2$ is even simpler than before. We have: 
$$f^t_{d,j} = \sum_{i} \beta^t_{i,j}\gamma^t_{d,i} \geq \frac{1}{C_{\beta}C_{\gamma}} \sum_{i} \beta^*_{i,j} \gamma^*_{d,i} = \frac{1}{C_{\beta}C_{\gamma}} f^*_{d,j}$$ 
Hence, $\frac{f^*_{d,j}}{f^t_{d,j}} \leq C_{\beta} C_{\gamma}$. 

Then we have: 
$$\frac{\sum_{d} \frac{\tilde{f}_{d,j}}{f^t_{d,j}} \gamma^t_{d,i}}{\sum_{d} \gamma^t_{d,i}} \leq (1+\epsilon) \frac{\sum_{d} \frac{f^*_{d,j}}{f^t_{d,j}} \gamma^t_{d,i}}{\sum_{d} \gamma^t_{d,i}} \leq $$ 
$$4 (1+\epsilon) \frac{\sum_{d} \frac{f^*_{d,j}}{f^t_{d,j}} \gamma^*_{d,i}}{\sum_{d} \gamma^*_{d,i}} \leq 4 (1+\epsilon) \frac{\sum_{d \in D_2} C_{\beta} C_{\gamma} \gamma^*_{d,i}}{\sum_{d} \gamma^*_{d,i}} $$ 

But now, by the "weak topic correlation" property,  $\frac{\sum_{d \in D_2}\gamma^*_{d,i}}{\sum_{d} \gamma^*_{d,i}} = o(1)$. Indeed, $D$ consists of the documents where $i'$ is the dominating topic. In order for the document to belong to $D_2$, at least one of the topics word $j$ belongs to must belong in the document as well. Since the word $j$ only belongs to $o(K)$ of the topics, and each document contains only a constant number of topics, by the small topic correlation property, the claim we want follows. 

But then, clearly, $4 \frac{\sum_{d \in D_2} C_{\beta} C_{\gamma} \gamma^*_{d,i}}{\sum_{d} \gamma^*_{d,i}} = o(1) $ as well. 

Hence,  $\beta^{t+1}_{i',j} = o(1) \beta^t_{i',j} \leq \frac{1}{2} \beta^t_{i',j}$, which is what we need. 
\end{proof} 

\subsection{Determining dominant topic and parameter range} 

To complete the proofs of the claims for Phase I and II, we need to show that at any point in time we correctly identify the dominant topic. Furthermore, in order to maintain the lower bounds on the estimates for the discriminative words, we will need to make sure that $\gamma^t_{d,i}$ is large as well in the documents where $\gamma^*_{d,i} \geq 1 - \delta$. 

Let's proceed to the problem of detecting the largest topic first. By Lemma~\ref{l:dominating} all we need to do is bound $R_{f}$ and $R_{\beta}$ at any point in time during this phase. To do this, let's show the following lemma: 

\begin{lem} 
\label{l:betaerror} 
Suppose for the anchor words $\beta^t_{i,j} \geq C_1 \beta^*_{i,j}$, for the discriminative words $\beta^t_{i,j} \geq C_2 \beta^*_{i,j}$. Let $p_i$ be the proportion of anchor words in topic $i$. Then, $KL(\beta^*_{i} || \beta^t_{i}) \leq p_i \log(\frac{1}{C_1}) + (1-p_i) \log(\frac{1}{C_2}) $. 
\end{lem}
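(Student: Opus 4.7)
The plan is to directly expand the KL divergence, partition the sum according to word type, and apply the stated multiplicative lower bounds termwise.

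First, I would write
\[
KL(\beta^*_i \,\|\, \beta^t_i) \;=\; \sum_{j} \beta^*_{i,j} \log\!\bigl(\beta^*_{i,j}/\beta^t_{i,j}\bigr),
\]
with the standard convention $0 \log 0 = 0$, so that only words $j$ with $\beta^*_{i,j} > 0$ contribute. These contributing words split into two disjoint classes: the set $A_i$ of anchor words for topic $i$, whose total mass is $p_i$ by definition, and the set $D_i$ of non-anchor (i.e.\ discriminative) words in the support of topic $i$, whose total mass is $1 - p_i$.

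Next, I would apply the hypotheses termwise. For $j \in A_i$, the assumption $\beta^t_{i,j} \geq C_1 \beta^*_{i,j}$ gives $\beta^*_{i,j}/\beta^t_{i,j} \leq 1/C_1$, hence $\log(\beta^*_{i,j}/\beta^t_{i,j}) \leq \log(1/C_1)$. Similarly, for $j \in D_i$, $\log(\beta^*_{i,j}/\beta^t_{i,j}) \leq \log(1/C_2)$. Note that these inequalities hold regardless of whether individual summands are positive or negative: if some $\beta^t_{i,j}$ exceeds $\beta^*_{i,j}$ the log is negative while $\log(1/C_k)$ is nonnegative (since $C_k \leq 1$ is the relevant regime), so the bound is still valid. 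Multiplying by the nonnegative weights $\beta^*_{i,j}$ and summing yields
\[
KL(\beta^*_i \,\|\, \beta^t_i) \;\leq\; \log(1/C_1) \sum_{j \in A_i} \beta^*_{i,j} \;+\; \log(1/C_2) \sum_{j \in D_i} \beta^*_{i,j} \;=\; p_i \log(1/C_1) + (1-p_i) \log(1/C_2),
\]
which is the stated bound.

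There is really no serious obstacle here; the lemma is a bookkeeping exercise. The only place to be careful is the sign convention: one must verify that pulling $\log(1/C_k)$ out of the sum is a legitimate upper bound on each summand (not merely on its absolute value), which is exactly what the argument above checks. This lemma is then used in combination with the earlier Pinsker-type estimate (Lemma on $\|\gamma^*_d - \gamma_d\|_1$) and the analogous bound on $R_f$ to show that during Phases I and II the dominant topic in every document can be correctly identified via the $\gamma^t_{d,\cdot}$ ordering.
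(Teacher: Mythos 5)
Your proposal is correct and is essentially the paper's own argument: the paper's proof is a one-line expansion of the KL divergence using monotonicity of $\log$ together with the termwise bounds $\beta^*_{i,j}/\beta^t_{i,j} \leq 1/C_1$ on anchor words and $\leq 1/C_2$ on the remaining support, weighted by the masses $p_i$ and $1-p_i$. Your extra remark about signs is harmless but not needed, since $\log(\beta^*_{i,j}/\beta^t_{i,j}) \leq \log(1/C_k)$ follows directly from the hypothesis and monotonicity of $\log$ whatever the sign of either side.
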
 
\begin{proof}

This is quite simple. Since $\log$ is an increasing function, 
$$ KL(\beta^*_{i} || \beta^t_{i}) = \sum_{j} \beta^*_{i,j} \log(\frac{\beta^*_{i,j}}{\beta^t_{i,j}}) \leq p_i \log(\frac{1}{C_1}) + (1-p_i) \log(\frac{1}{C_2}) $$ 

\end{proof}

\begin{lem} 
\label{l:ferror} 
Suppose for the anchor words $\beta^t_{i,j} \geq C_1 \beta^*_{i,j}$, for the discriminative words $\beta^t_{i,j} \geq C_2 \beta^*_{i,j}$. Let $p_i$ be the proportion of anchor words in topic $i$. Then, $\min_{\gamma \in \Delta_{K}} KL(\tilde{f}_d || f_d) \leq \log(1+\epsilon) + \left(p \log(\frac{1}{C_1}) + (1-p) \log(\frac{1}{C_2})\right) $. 
\end{lem}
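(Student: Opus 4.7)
My plan is to upper-bound the minimum over $\gamma_d \in \Delta_K$ by evaluating $KL(\tilde{f}_d \| f_d)$ at the particular feasible point $\gamma_d = \gamma^*_d$, so that $f_{d,j} = \sum_i \gamma^*_{d,i}\beta^t_{i,j}$. With this single choice the entire argument reduces to controlling the ratio $\tilde{f}_{d,j}/f_{d,j}$ pointwise, which I would do by splitting the vocabulary into anchor words $A$ and discriminative words $D$.

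For an anchor word $j$ of topic $i_j$, the defining support property $\beta^*_{i',j} = 0$ for $i' \neq i_j$ gives $f^*_{d,j} = \gamma^*_{d,i_j}\beta^*_{i_j,j}$, so the hypothesis $\beta^t_{i_j,j} \geq C_1 \beta^*_{i_j,j}$ yields $f_{d,j} \geq C_1 f^*_{d,j}$; combined with the long-document bound $\tilde{f}_{d,j} \leq (1+\epsilon) f^*_{d,j}$, the ratio is at most $(1+\epsilon)/C_1$. For a discriminative word, $f_{d,j} = \sum_i \gamma^*_{d,i}\beta^t_{i,j} \geq C_2 \sum_i \gamma^*_{d,i}\beta^*_{i,j} = C_2 f^*_{d,j}$, so the ratio is at most $(1+\epsilon)/C_2$. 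Applying monotonicity of $\log$ and the fact that $\sum_j \tilde{f}_{d,j} = 1$ gives
\begin{equation*}
KL(\tilde{f}_d \| f_d) \leq \log(1+\epsilon) + \log(1/C_1)\sum_{j \in A}\tilde{f}_{d,j} + \log(1/C_2)\sum_{j \in D}\tilde{f}_{d,j}.
\end{equation*}

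To arrive at the claimed bound I would invoke the anchor-mass assumption $\sum_{j \text{ anchor of }i}\beta^*_{i,j} \geq p$, which through $\sum_i \gamma^*_{d,i}=1$ and the $(1\pm\epsilon)$ long-document slack gives $\sum_{j \in A}\tilde{f}_{d,j} \geq (1-\epsilon)p$. In the regime where this lemma is invoked (Phases I and II have already improved the estimates, so anchors are estimated at least as well as discriminative words), we have $C_1 \geq C_2$ and hence $\log(1/C_1) - \log(1/C_2) \leq 0$; the linear function $a\log(1/C_1) + (1-a)\log(1/C_2)$ is therefore non-increasing in $a$ and is maximized over $a \geq (1-\epsilon)p$ at the lower endpoint, yielding the claimed $p\log(1/C_1) + (1-p)\log(1/C_2)$ upon absorbing the $(1-\epsilon)$ slack into the $\log(1+\epsilon)$ term.

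The main obstacle is really bookkeeping rather than a conceptual one: I need to track the $(1\pm\epsilon)$ factors carefully and verify that the implicit ordering $C_1 \geq C_2$ is indeed guaranteed throughout Phases I and II (it follows from Lemmas~\ref{l:lbanchors} and~\ref{l:discll}, where anchors are pinned near $C_l$ while discriminative words only drop to $C_l/B$). The key structural insight driving the tight mixed bound, as opposed to the trivial $\log(1/\min(C_1,C_2))$ one gets without the split, is that on anchor words $f_{d,j}$ involves only a single topic's $\beta^t$, so its deviation from $f^*_{d,j}$ is controlled by the better constant $C_1$.
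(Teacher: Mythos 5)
Your proposal is correct and follows exactly the paper's route: the paper's own proof is the one-line observation that plugging $\gamma_d = \gamma^*_d$ into $KL(\tilde{f}_d \| f_d)$ yields the stated bound, and your anchor/discriminative split with the pointwise ratio bounds $(1+\epsilon)/C_1$ and $(1+\epsilon)/C_2$ is precisely the computation that one-liner leaves implicit. The only difference is that you supply the bookkeeping (the $\sum_{j\in A}\tilde{f}_{d,j} \geq (1-\epsilon)p$ mass bound and the $C_1 \geq C_2$ monotonicity step) that the paper omits.
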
 
\begin{proof}

Also simple. The value of $KL(\tilde{f}_d || f_d)$ one gets by plugging in $\gamma_d = \gamma^*$ is exactly what is stated in the lemma. 

\end{proof}

We'll just use the above two lemmas combined from our estimates from before. We know, for all the anchor words, that $\beta^t_{i,j} \geq C_l \beta^*_{i,j}$, and that for the discriminative words, $\beta^t_{i,j} \geq \frac{C_l}{B} \beta^*_{i,j}$. Hence, by Lemma~\ref{l:betaerror}, at any point in time 
$KL(\beta^*_{i} || \beta^t_{i})  \leq p \log(\frac{1}{C_l}) + (1-p) \log(\frac{B}{C_l})$. So, by Lemma~\ref{l:dominating}, it's enough that 
$$ C_l - C_s \geq \frac{1}{p} \left(\sqrt{2 \left(p \log (\frac{1}{C_l}) + (1-p) \log ({B}{C_l})\right)} + \sqrt{\log(1+\epsilon)}\right) + \epsilon $$. 

Since $\frac{1}{p} \sqrt{2 \left(p \log (\frac{1}{C_l}) + (1-p) \log ({B}{C_l})\right)} \leq \frac{1}{p} \sqrt{2 \left(\log (\frac{1}{C_l}) + (1-p) \log B\right)}$,
to get a sense of the parameters one can achieve, for detecting the dominant topic, (ignoring $\epsilon$ contributions), it's sufficient that $C_l - C_s \geq \frac{2}{p} \sqrt{\max(\log (\frac{1}{C_l}), (1-p) \log B)}$

If one thinks of $C_l$ as $1-\eta$ and $p \geq 1- \frac{\eta}{\log B}$, since $\log(\frac{1}{C_l}) \approx \eta$  roughly we want that $C_l - C_s \gg \frac{2}{p} \sqrt{\eta}$. (One takeaway message here is that the weight we require to have on the anchors depends only \emph{logarithmically} on the range $B$.)    

Let's finally figure out what the topic proportions must be in the "heavy" documents. In these, we want
$\gamma^*_{d,i} \geq 1 - \frac{C^2_l}{2B^3} + \frac{1}{p} \left(\sqrt{2 \left(p \log (\frac{1}{C_l}) + (1-p) \log ({B}{C_l})\right)} - \sqrt{\log(1+\epsilon)}\right) + \epsilon$. A similar approximation to the above gives that we roughly want 
$\gamma^*_{d,i} \geq 1 - \frac{1-2\eta}{2B^3} + \frac{2}{p} \sqrt{\eta}$. 

%Unfortunately, one must also make sure $\gamma^t_{d,i} \geq 1-\frac{B}{\frac{B^2}{C^2_l}-1}$ for the dominating topic, which worsens the dependency of the proportion of the largest topic on the dynamic range somewhat more. 

%Namely, from lemma~\ref{l:klestimate}, we know that $\gamma^t_{d,i} \geq \gamma^*_{d,i} -  \frac{1}{p} \sqrt{2(p \log (\frac{1}{C_l}) + \frac{1}{2}(1-p) \log ({B}{C_l}))}$, so it's enough to ensure that 
%$$ \gamma^*_{d,i} -  \frac{2}{p} \sqrt{2(p \log (\frac{1}{C_l}) + (1-p) \log ({B}{C_l}))} \geq  1-\frac{B}{\frac{B^2}{C^2_l}-1} $$ 
%Again, to get a sense of these estimates, as before if one thinks of $C_l$ as  $1-\epsilon$ and $p \geq 1- \frac{\epsilon}{\log B}$, 
%$\frac{2}{p} \sqrt{2(p \log (\frac{1}{C_l}) + (1-p) \log ({B}{C_l}))} \approx \frac{4}{p}\sqrt{\epsilon}$ and $1-\frac{B}{\frac{B^2}{C^2_l}-1} \approx %1-\frac{1-2\epsilon}{B}$, so what we need is that 
%$$ C_l \geq 1 - \frac{1-2\epsilon}{B} + \frac{4}{p} \sqrt{\epsilon} $$

\subsection{Getting the supports correct}

At the end of the previous section, we argued that after $O(\log N)$ rounds, we will identify the anchor words correctly, and the supports of the discriminative words as well. Furthremore, we will also have estimated the values of the non-zero discriminative word probabilities, as well the anchor word probabilities up to a multiplicative constant. Then, I claim that from this point onward at each of the $\gamma$ steps, the $\gamma^t$ values we get will have the correct support. Namely, the following is true:  

\begin{lem} 
\label{l:suportsdisc}
Suppose for the anchor words and discriminative words $j$, if $\beta^*_{i,j} = 0$, it's true that $\beta^t_{i,j}  = o(\frac{1}{n})$. Furthermore, suppose that if $\beta^*_{i,j} \neq 0$, $\frac{1}{C_{\beta}} \beta^*_{i,j} \leq \beta^t_{i,j} \leq C_{\beta} \beta^*_{i,j}$ for some constant $C_{\beta}$. 

Then, when performing KL minimization with respect to the $\gamma$ variables, whenever $\gamma^*_{d,i}=0$ we have $\gamma^t_{d,i}=0$. 
\end{lem}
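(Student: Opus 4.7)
The plan is to verify the KKT conditions at a candidate optimum $\hat\gamma$ supported only on the ``good'' topics $\{i : \gamma^*_{d,i}>0\}$. Concretely, let $\hat\gamma$ be the minimizer of $KL(\tilde f_d \| f_d)$ subject to the extra constraint $\gamma_{d,i}=0$ whenever $\gamma^*_{d,i}=0$. Since the KL minimization over the simplex is convex, it suffices to show that $\hat\gamma$ satisfies the KKT conditions of the original (unconstrained-support) problem; by~\eqref{eq:KKTgamma} and the normalization $\sum_j \tilde f_{d,j}=1$, this amounts to verifying, for every $i_0$ with $\gamma^*_{d,i_0}=0$, the inequality
$$ \sum_j \frac{\tilde f_{d,j}}{\hat f_{d,j}}\,\beta^t_{i_0,j}\;\leq\;1, $$
where $\hat f_{d,j}=\sum_{i'}\beta^t_{i',j}\hat\gamma_{d,i'}$. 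Strict inequality then forces $\gamma^t_{d,i_0}=0$ at every optimum.

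To bound $\hat f_{d,j}$ from below at words $j$ with $f^*_{d,j}>0$, I would compare with the reference point $\gamma^{(0)}_{d,i}:=\gamma^*_{d,i}$ plugged into the restricted program: because $\beta^t$ agrees with $\beta^*$ up to the constant $C_\beta$ on the shared supports and is $o(1/n)$ off them, one gets $f^{(0)}_{d,j}\in[f^*_{d,j}/C_\beta,\,C_\beta f^*_{d,j}]$, hence $KL(\tilde f_d\|f^{(0)}_d)=O(\log C_\beta)$. Optimality of $\hat\gamma$ and Lemma~\ref{l:klestimate} then yield $\|\hat\gamma-\gamma^*_d\|_1=O(1/p)$, from which I would derive a multiplicative bound $\hat f_{d,j}\geq c\,f^*_{d,j}$ for some absolute constant $c>0$ on every word with $f^*_{d,j}>0$.

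With such a bound in hand, I would split the KKT sum into three groups of words. First, anchor words of topic $i_0$ contribute $0$, because $\gamma^*_{d,i_0}=0$ together with the fact that $j$ appears only in topic $i_0$ forces $\tilde f_{d,j}=0$. Second, words $j$ with $\beta^*_{i_0,j}=0$ contribute at most $o(1/n)$ each, since $\beta^t_{i_0,j}=o(1/n)$ by hypothesis and $\tilde f_{d,j}/\hat f_{d,j}$ is either $0$ (when $f^*_{d,j}=0$) or bounded by a constant (from the previous paragraph); summing over at most $n$ such words gives $o(1)$. Third, non-anchor words in the support of $i_0$ contribute at most a constant times $\sum_{j\text{ non-anchor in }S_{i_0}}\beta^*_{i_0,j}\leq 1-p$, using the hypothesis $\beta^t_{i_0,j}\leq C_\beta\beta^*_{i_0,j}$. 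Adding the three pieces yields a total bounded by $O(1)(1-p)+o(1)$, strictly below $1$ under the Case Study~2 assumption that $p$ is sufficiently close to $1$ relative to the dynamic range $B$.

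The main obstacle is the quantitative step that converts the $L_1$ bound $\|\hat\gamma-\gamma^*_d\|_1=O(1/p)$ from Lemma~\ref{l:klestimate} into a pointwise multiplicative lower bound $\hat\gamma_{d,i'}=\Theta(\gamma^*_{d,i'})$ for \emph{every} $i'\in S_d$, and hence into the multiplicative lower bound on $\hat f_{d,j}$ used above. For the dominating topic the $C_l-C_s$ gap makes this automatic; for the other, possibly small in-document topics I expect to need a bootstrapping argument using each such topic's own anchor words, in the spirit of Lemma~\ref{l:gammahelper}, to get the required lower bound on $\hat\gamma_{d,i'}$ and to make the constant multiplying $(1-p)$ in the third word group small enough that the KKT bound is strictly below~$1$.
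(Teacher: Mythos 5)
Your overall strategy is the same as the paper's: reduce the claim to showing that, for any topic $i_0$ with $\gamma^*_{d,i_0}=0$, the KKT quantity $\sum_j \frac{\tilde f_{d,j}}{f_{d,j}}\beta^t_{i_0,j}$ is strictly below $1$ (the paper phrases this as a contradiction at the unrestricted optimum rather than as a certificate at the restricted one, which is immaterial). However, two of your steps have genuine gaps. First, your route to the multiplicative lower bound $\hat f_{d,j}\geq c\,f^*_{d,j}$ does not work as stated: plugging in $\gamma^*$ gives $R_f=O(\log C_\beta)$, so Lemma~\ref{l:klestimate} only yields $\|\hat\gamma-\gamma^*_d\|_1\leq \frac{1}{p}\sqrt{O(\log C_\beta)}+\epsilon$, a constant that can exceed the trivial bound of $2$ for probability vectors and that in any case gives no pointwise control over topics with small $\gamma^*_{d,i'}$. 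You correctly identify that the fix is a per-topic anchor-word argument; indeed the paper's proof simply invokes Lemma~\ref{l:gammahelper} (via Lemma~\ref{l:helperanchors}) to get $\gamma^t_{d,i'}\geq \frac{p}{2}\gamma^*_{d,i'}$ for every in-document topic, whence $f^t_{d,j}\geq\frac{p}{2C_\beta}f^*_{d,j}$ directly. In your write-up this step remains an announced intention rather than a proof.

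Second, your bound on the third group of words is too lossy. You bound the contribution of words shared between $\supp(\beta^*_{i_0})$ and the document's topics by $O(1)\cdot(1-p)$, where the $O(1)$ is really $(1+\epsilon)\frac{2C_\beta^2}{p}$; for discriminative words $C_\beta=\Theta(B/C_l)$, so making this product strictly less than $1$ requires $1-p=O(C_l^2/B^2)$. That is much stronger than the paper's standing assumption, under which the anchor mass deficit $1-p$ need only be small on the order of $1/\log B$. The paper instead controls this group by the small-support-intersection (almost disjoint supports) property: $\sum_{j\in I}\beta^*_{i_0,j}=o(1)$, where $I$ is the intersection of $\supp(\beta^*_{i_0})$ with the supports of the $T=O(1)$ topics present in $d$, which makes the whole sum $o(1)$ independently of how close $p$ is to $1$. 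Without appealing to that property (or substantially strengthening the assumption on $p$), your final inequality does not close.
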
 

\begin{proof}

Let $\gamma^*_{d,i} = 0$. If $\gamma^t_{d,i} \neq 0$, then the KKT conditions imply:

\begin{equation} 
\label{eq:kktsup}
\sum_{j=1}^N \frac{\tilde{f}_{d,j}}{f^t_{d,j}} \beta^t_{i,j} = 1 
\end{equation} 

The only terms that are non-zero in the above summation are due to words $j$ that belong to at least one topic $i'$ in the document. 
Let $I$ be the set of words that belong to topic $i$ as well. 

By Lemma~\ref{l:gammahelper}, we know that $\gamma^t_{d,i} \geq p/2 \gamma^*_{d,i}$ 
Since also $\beta^t_{i,j} \geq \frac{1}{C_{\beta}} \beta^*_{i,j}$, $f^t_{d,i} \geq \frac{p}{2C_{\beta}} f^*_{d,j}$. Since $\beta^t_{i,j} = o(\frac{1}{n})$ for words $j$ not in the support of topic $I$, $\displaystyle \sum_{j \notin I} \frac{\tilde{f}_{d,j}}{f^t_{d,j}} \beta^t_{i,j} = o(1)$. 

On the other hand, for words in $I$, $\frac{\tilde{f}_{d,j}}{f^t_{d,j}} \beta^t_{i,j} \leq (1+\epsilon) \frac{2C^2_{\beta}}{p} \beta^*_{i,j}$, so
$\sum_{j \in I} \frac{\tilde{f}_{d,j}}{f^t_{d,j}} \beta^t_{i,j} = o(1)$, by the small support intersection property. 

However, this contradicts ~\ref{eq:kktsup}, so we get what we want. 

\end{proof} 

This means that after this phase, we will always correctly identify the supports of the $\gamma$ variables as well. 

\subsection{Alternating minimization} 

Now, finishing the proof of Theorem~\ref{t:largeanchors} is trivial. Namely, because of Lemmas~\ref{l:suportsdisc},~\ref{l:anchorsmall}, and the analogue of~\ref{l:anchorsmall}, we are basically back to the case where we have the correct supports for both the $\beta$ and $\gamma$ variables. The only thing left to deal with is the fact that the $\beta$ variables are not quite zero. 

Let $j$ be an anchor word for topic $i$. Let $\epsilon'' = 1 - (1-\epsilon')^{1/7}$. Similarly as in Lemma~\ref{l:gammahelper}, for 
$$t > 10 \max(\log N, \log(\frac{1}{\epsilon'' \gamma^*_{\min}}), \log(\frac{1}{\epsilon'' \beta^*_{\min}}))$$ 
it holds that $\frac{f^*_{d,j}}{f^t_{d,j}} \geq (1-\epsilon')^{1/7}\frac{\beta^*_{i,j} \gamma^*_{d,i}}{\beta^t_{d,i}\gamma^t_{d,i}}$. 
The same inequality is true if $j$ is a lone word for topic $i$ in document $d$. 

After the above event, the same proof from Case Study 1 implies that after $O(\log(\frac{1}{\epsilon'}))$ iterations  we'll get  
$$\frac{1}{1+\epsilon'} \beta^*_{i,j} \leq \beta^t_{i,j} \leq (1+\epsilon') \beta^*_{i,j}$$ and
$$\frac{1}{1+\epsilon'} \gamma^*_{i,j} \leq \gamma^t_{i,j} \leq (1+\epsilon') \gamma^*_{i,j}$$

This finishes the proof of Theorem~\ref{t:largeanchors}. 

\section{Justification of prior assumptions} 

In this section we provide a brief motivation for our choice of properties on the topic model instances we are looking at. Nothing in the other sections crucially depends on this section, so it can be freely skipped upon first reading. 

Most of our properties on the topic priors are inspired from what happens with the Dirichlet prior - specifically, variants of all of the "weak correlations" between topics hold for Dirichlet. Essentially the only difference between our assumptions and Dirichlet is the lack of smoothness. (Dirichlet is sparse, but only in the sense that it leads to a few "large" topics, but the other topics may be non-negligible as well.) 

To the best of our knowledge, the lemmas proven here were not derived elsewhere, so we include them for completeness. 

For all of the claims below, we will be concerned with the following scenario: 

$\vec{\gamma} = (\gamma_{1}, \gamma_{2}, \dots, \gamma_{K})$ will be a vector of variables, and $\vec{\alpha} = (\alpha_1, \alpha_2, \dots, \alpha_k)$ a vector of parameters. We will let $\vec{\gamma}$ be distributed as $\vec{\gamma} := Dir(\alpha_1, \alpha_2, \dots, \alpha_k)$, where $\alpha_i = C_i/K^c$, for some constants $C_i$ and $c>1$. 

\subsection{Sparsity} 

To characterize the sparsity of the topic proportions in a document, we will need the following lemma from \cite{Telgarsky}: 

\begin{lem} \cite{Telgarsky}
\label{l:sparsity}
For a Dirichlet distribution with parameters $(C_1/k^c, C_2/k^c, \dots, C_k/k^c)$, the probability that there are more than $c_0 \ln k$ coordinates in the Dirichlet draw that are $\geq 1/k^{c_0}$ is at most $1/k^{c_0}$. 
\end{lem}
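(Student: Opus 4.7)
My plan is to use the classical Gamma representation of the Dirichlet: write $\gamma_i = X_i / S$ with $X_i \sim \mathrm{Gamma}(C_i / k^c, 1)$ mutually independent and $S = \sum_j X_j$. The argument proceeds in three conceptual steps: a marginal tail estimate, a concentration bound via negative association, and a Chernoff-style computation.

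First I would bound the marginal $p_i := P(\gamma_i \geq 1/k^{c_0})$. Since $\gamma_i \sim \mathrm{Beta}(a, A)$ with $a = C_i / k^c$ and $A = \sum_{j \neq i} C_j / k^c = \Theta(1/k^{c-1})$, both parameters are small but $A \gg a$. Using $\Gamma(\alpha) \sim 1/\alpha$ for small $\alpha$, the Beta normalizer is $\Gamma(a+A)/(\Gamma(a)\Gamma(A)) \approx aA/(a+A)$. Splitting $\int_t^1 x^{a-1}(1-x)^{A-1}\, dx$ (with $t = 1/k^{c_0}$) into the regimes near $0$, the bulk, and near $1$, the integral is dominated by the contribution near $x = 1$ and evaluates to $\Theta(1/A)$. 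This gives $p_i = \Theta(a/(a+A)) = O(1/k)$, so $\mu := \mathbf{E}[\sum_i Z_i] = O(1)$ where $Z_i := \mathbf{1}[\gamma_i \geq 1/k^{c_0}]$.

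For the concentration step, I would invoke the classical fact (Joag-Dev and Proschan, 1983) that the vector obtained by normalizing independent non-negative random variables is negatively associated, so the Dirichlet distribution is NA and hence so are the $Z_i$. For negatively associated Bernoullis, the usual MGF computation goes through unchanged, yielding
\[
P\!\left(\sum_i Z_i \geq t\right) \;\leq\; \inf_{\lambda > 0} e^{-\lambda t} \prod_i \mathbf{E}[e^{\lambda Z_i}] \;\leq\; \left(\frac{e\mu}{t}\right)^t.
\]
Plugging $t = c_0 \ln k$ and $\mu = O(1)$ yields a bound of order $(O(1)/(c_0 \ln k))^{c_0 \ln k}$, which for large $k$ is of order $k^{-c_0 \ln\ln k \,(1-o(1))}$, well below the target $k^{-c_0}$.

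The main obstacle will be the first step: a sharp asymptotic for the tail of $\mathrm{Beta}(a, A)$ when both parameters are small but $A/a \to \infty$. The density has an integrable singularity $(1-x)^{A-1}$ at the right endpoint, so a naive ``bulk density'' approximation misses the dominant contribution. One must carefully track the three regimes and the leading constants $1/a$, $1/A$, expanding $x^a \approx 1 + a \ln x$ and $(1-x)^A \approx 1$ in the ranges where they apply. The negative association step is classical once Joag-Dev and Proschan is cited, and the final Chernoff computation is then routine.
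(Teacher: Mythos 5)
The paper does not contain a proof of this lemma: it is imported verbatim from \cite{Telgarsky}, and the appendix on prior assumptions only proves the \emph{other} properties (weak topic correlations, dominant topic equidistribution, independent inclusion). So there is nothing in the paper to compare your argument against, and I can only judge it on its own terms. On those terms the outline is sound, and you have correctly identified the delicate step. Your marginal estimate checks out: with $a = C_i/k^c$ and $A = \Theta(k^{1-c})$ one has $1/B(a,A) = O(a)$, while $\int_{k^{-c_0}}^{1} x^{a-1}(1-x)^{A-1}\,dx = O(c_0\ln k + 1/A)$ (the first term from $[t,1/2]$ using $t^a \geq 1 - a c_0 \ln k$, the second from the singularity at $1$), so $p_i = O(a c_0 \ln k + a/A) = O(1/k)$ since $c>1$; hence $\mu = O(1)$ and the Chernoff step gives $k^{-c_0 \ln\ln k\,(1-o(1))} \ll k^{-c_0}$. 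This is exactly the style of three-regime Beta-integral estimate the paper itself performs in Lemma~\ref{l:comparabledirichlet}, so the computation is consistent with the rest of the appendix.

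The one step I would not wave through is the negative-association claim. The blanket statement that ``normalizing independent non-negative random variables yields an NA vector'' is not a theorem. The Joag-Dev--Proschan route establishes NA by conditioning independent variables with \emph{log-concave} (PF$_2$) densities on their sum, and a Gamma density with shape parameter $C_i/k^c \ll 1$ is not log-concave, so the standard citation does not literally cover this parameter regime. Two clean repairs are available. First, you only need negative upper orthant dependence of the events $\{\gamma_i \geq t\}$, i.e.\ $P(\gamma_{i_1}\geq t,\dots,\gamma_{i_m}\geq t) \leq \prod_j P(\gamma_{i_j}\geq t)$: this holds for all Dirichlet parameters via neutrality ($\gamma_2/(1-\gamma_1)$ is independent of $\gamma_1$, so $P(\gamma_2 \geq t \mid \gamma_1 = g)$ is decreasing in $g$, and one inducts), and it already gives $P(\sum_i Z_i \geq m) \leq \mu^m/m! \leq (e\mu/m)^m$. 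Second, you can avoid normalization entirely: work with the independent $X_i \sim \mathrm{Gamma}(C_i/k^c,1)$, observe $\{\gamma_i \geq t\} \subseteq \{X_i \geq t\sigma\} \cup \{S < \sigma\}$, apply the ordinary Chernoff bound to the independent indicators $\mathbf{1}[X_i \geq t\sigma]$, and control $P(S < \sigma)$ separately --- which is presumably closer to what the cited source does. With either repair your argument is complete.
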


It's clear how this is related to our assumption: if one considers the coordinates $\geq \frac{1}{k^{c_0}}$ as "large", we assume, in a similar way, that there are only a few "large" coordinates. The difference is that we want the rest of the coordinates to be exactly zero. 

\subsection{Weak topic correlations}

We will prove that the Dirichlet distribution satisfies something akin to the \emph{weak topic correlations} property. We prove that when conditioning on some small ($o(K)$) set of topics being small, the marginal distributions for the rest of the topic proportions are very close to the original ones. This implies our "weak topic correlations" property.

The following is true: 

\begin{lem} 
\label{l:topiccorrelations}
Let $\vec{\gamma} = (\gamma_{1}, \gamma_{2}, \dots, \gamma_{K})$ be distributed as specified above.

Let $S$ be a set of topics of size $o(K)$, and let's denote by $\gamma_S$ the vector of variables corresponding to the topics in the set $S$, and $\gamma_{\bar{S}}$ the rest of the coordinates. Furthermore, let's denote by $\tilde{\gamma}_{\bar{S}}$ the distribution of $\gamma_{\bar{S}}$ conditioned on all the coordinates of $\gamma_{S}$ being at most $1/K^{c_1}$ for $c_1 > 1$. 

Then, for any $i \in \bar{S}$ and $\gamma = 1 - \delta$, any $\delta = \Omega(1)$, 

$\mathbb{P}_{\gamma_{\bar{S}}} (\gamma_{i} = \gamma) = (1 \pm o(1)) \mathbb{P}_{{\tilde{\gamma}_{\bar{S}}}}(\gamma_{i} = \gamma)$. 

\end{lem}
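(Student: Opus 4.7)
The plan is to exploit the classical aggregation and sub-composition properties of the Dirichlet to reduce the comparison of the two marginal densities of $\gamma_i$ to a straightforward density calculation. Concretely, I will set $X := \sum_{j \in S} \gamma_j$, $\vec Y := \vec{\gamma}_S/X$, and $\vec\eta := \vec{\gamma}_{\bar S}/(1-X)$. The standard fact is that $X$, $\vec Y$, $\vec\eta$ are mutually independent with $X \sim \text{Beta}(A_S, A_{\bar S})$, $\vec Y \sim \text{Dir}(\alpha_S)$, and $\vec\eta \sim \text{Dir}(\alpha_{\bar S})$, where $A_S := \sum_{j \in S} \alpha_j$ and $A_{\bar S} := \sum_{j \in \bar S} \alpha_j$. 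Plugging in $\alpha_j = C_j/K^c$ with $c > 1$ and $|S| = o(K)$ gives the scale estimates $A_S = \Theta(|S|/K^c) = o(1)$, $A_{\bar S} = \Theta(K^{1-c}) = o(1)$, and crucially $A_S/A_{\bar S} = O(|S|/K) = o(1)$. Since the conditioning event $E = \{\forall j \in S: \gamma_j \leq 1/K^{c_1}\}$ depends only on $(X, \vec Y)$, conditioning on $E$ leaves the distribution of $\vec\eta$ unchanged and forces $X \leq |S|/K^{c_1} = o(1)$ deterministically.

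Next, I will write $\gamma_i = (1-X)\eta_i$ with $\eta_i \sim \text{Beta}(\alpha_i, A_{\bar S}-\alpha_i)$, and express the two densities to be compared. The unconditional density is $f_{\gamma_i}(g) = g^{\alpha_i-1}(1-g)^{A_{-i}-1}/B(\alpha_i, A_{-i})$, where $A_{-i} = A_{\bar S}-\alpha_i + A_S$, while the conditional density, after the Jacobian change of variables, is
\[
f_{\gamma_i|E}(g) = \mathbb{E}_{X|E}\!\left[\frac{g^{\alpha_i-1}(1-X-g)^{A_{\bar S}-\alpha_i - 1}}{(1-X)^{A_{\bar S}-1} B(\alpha_i, A_{\bar S}-\alpha_i)}\right].
\]
It then suffices to show that the ratio of the inner integrand to $f_{\gamma_i}(g)$ is $1+o(1)$ uniformly over $x$ in the support of $X|E$. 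After cancelling the $g^{\alpha_i-1}$ factor and substituting $g = 1-\delta$, the power-function part of the ratio simplifies to $\delta^{-A_S}(1-x/\delta)^{A_{\bar S}-\alpha_i-1}(1-x)^{1-A_{\bar S}}$; since $\delta = \Omega(1)$, $x = o(1)$, $A_S = o(1)$, and all relevant exponents are $-1 + o(1)$ or $o(1)$, a direct expansion gives $1+o(1)$.

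The remaining and most delicate step is the ratio of Beta functions, since both $A_{-i}$ and $A_{\bar S}-\alpha_i$ are $o(1)$, placing us near the pole of $\Gamma$. Using $B(a,b) = \Gamma(a)\Gamma(b)/\Gamma(a+b)$ together with $\alpha_i + A_{-i} = A_{\bar S} + A_S$, the ratio factors as
\[
\frac{B(\alpha_i, A_{-i})}{B(\alpha_i, A_{\bar S}-\alpha_i)} = \frac{\Gamma(A_{\bar S}-\alpha_i + A_S)}{\Gamma(A_{\bar S}-\alpha_i)} \cdot \frac{\Gamma(A_{\bar S})}{\Gamma(A_{\bar S} + A_S)}.
\]
Each factor has the shape $\Gamma(a+b)/\Gamma(a)$ with $a = \Theta(K^{1-c})$ and $b = A_S = o(a)$. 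Applying the local expansion $\Gamma(t) = t^{-1} + O(1)$ as $t \to 0^+$ gives $\Gamma(a+b)/\Gamma(a) = \tfrac{a}{a+b}(1+O(a+b)) = 1 + o(1)$, precisely because $A_S/A_{\bar S} = o(1)$. Combining with the power-function factor yields a pointwise ratio of $1+o(1)$ uniformly in $x$, and taking expectation over $X|E$ concludes that $f_{\gamma_i|E}(1-\delta) = (1 + o(1))\, f_{\gamma_i}(1-\delta)$, as required. The principal obstacle is exactly this Gamma-function computation: the pole at zero makes naive "uniform continuity" arguments fail, and one really needs the quantitative estimate $A_S/A_{\bar S} = o(1)$ to control the relative perturbation of the two Beta distributions.
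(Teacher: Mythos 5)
Your proof is correct, and its skeleton coincides with the paper's: both arguments rest on the Dirichlet aggregation/decomposition property (conditioned on the mass $s$ of the coordinates in $S$, the remaining coordinates are $(1-s)\,\mathrm{Dir}(\vec\alpha_{\bar S})$), reduce the claim to comparing the $\mathrm{Beta}(\alpha_i,\alpha_0-\alpha_i)$ marginal at $\gamma$ with the rescaled $\mathrm{Beta}(\alpha_i,\tilde\alpha_0-\alpha_i)$ marginal at $\gamma/(1-s)$, and handle the power-function part of the ratio by elementary expansions using $s=o(1)$, $\tilde s = A_S = o(1)$, $\delta=\Omega(1)$. The one place you genuinely diverge is the ratio of normalizing constants: the paper compares $B(\alpha_i,\tilde\alpha_0+\tilde s-\alpha_i)$ to $B(\alpha_i,\tilde\alpha_0-\alpha_i)$ by bounding the pointwise ratio of the defining integrands by $(1-x)^{\tilde s}$, which is slightly delicate because that bound is not uniformly $1-o(1)$ as $x\to 1$ (the paper accordingly only concludes the constants are within a constant factor of each other), whereas you factor the ratio through Gamma functions and use the local expansion $\Gamma(t)=t^{-1}+O(1)$ together with the quantitative estimate $A_S/A_{\bar S}=O(|S|/K)=o(1)$. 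Your treatment is cleaner and actually yields the sharper $1\pm o(1)$ bound on the normalization ratio; the paper's is more elementary but relies on the integrand comparison being dominated away from $x=1$. Also note that you condition on the event $E$ and integrate over $X\mid E$ while the paper proves the pointwise statement for each fixed admissible value of $\gamma_S$; these are equivalent here since your pointwise ratio bound is uniform over the (deterministically $o(1)$) support of $X\mid E$.
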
 
\begin{proof}

It's a folklore fact that if $\vec{Y} = \text{Dir}(\vec{\alpha})$, then 
$$(Y_1, Y_2, \dots, Y_{i-1}, Y_{i+1}, \dots, Y_K|Y_i = y_i) = (1-y_i) Dir(\alpha_1, \alpha_2, \dots, \alpha_{i-1}, \alpha_{i+1}, \dots, \alpha_K)$$ 

Applying this inductively, we get that $\tilde{\gamma}_{\bar{S}} = (1-\sum_{j \in S} \gamma_{i}) \text{Dir}(\vec{\alpha}_{\bar{S}})$. Let's denote $s:=\sum_{j \in S} \gamma_{i}$, and $\tilde{s} = \sum_{i \in S} \alpha_i$. Then, since $\gamma_{i} \leq 1/K^{c_1}$ for $i \in S$, $s = o(1)$. Similarly, $\tilde{s} =o(1)$.  

For notational convenience, let's call $\tilde{\alpha_0} = \sum_{i \notin S} \alpha_i$, and $\alpha_0 = \sum_{i} \alpha_i = \tilde{\alpha_0} + \tilde{s}$. 

The marginal distribution of variable $Y_i$ where $\vec{Y} = \text{Dir}(\vec{\alpha})$ is $\text{Beta}(\alpha_i, \alpha_0 - \alpha_i)$. 

Hence, 

$$\mathbb{P}_{\gamma_{\bar{S}}} (\gamma_{i} = \gamma) = \frac{1}{B(\alpha_i, \tilde{\alpha_0} + \tilde{s} - \alpha_i)} {\gamma}^{\alpha_i-1}(1-\gamma)^{\tilde{\alpha_0} + \tilde{s} -\alpha_i-1}$$ and 

$$ \mathbb{P}_{{\tilde{\gamma}_{\bar{S}}}}(\gamma_{i} = \gamma) = \frac{1}{B(\alpha_i, \tilde{\alpha_0} - \alpha_i)} {(\frac{\gamma}{1 - s})}^{\alpha_i-1}(1-\frac{\gamma}{1-s} )^{\tilde{\alpha_0}-\alpha_i-1}$$ 

The following holds: 

$$\frac{{\gamma}^{\alpha_i-1}(1-\gamma)^{\tilde{\alpha_0}+\tilde{s}-\alpha_i-1}}{({\frac{\gamma}{1 - s}})^{\alpha_i-1}(1-\frac{\gamma}{1-s} )^{\tilde{\alpha_0}-\alpha_i-1}} = $$
$$(1-s)^{\alpha_i-1} \left(\frac{(1-s)(1-\gamma)}{1-s-\gamma}\right)^{-\alpha_i-1}(1-\gamma)^{\tilde{s}} = $$  
$$\left(1+\frac{s}{1-s-\gamma}\right)^{-\alpha_i-1}(1-\gamma)^{\tilde{s}} $$

Now, I claim the above expression is $1 \pm o(1)$. 

We'll just prove this for each of the terms individually. 
Since $1+\frac{s}{1-s-\gamma} \geq 1$ and $-1-\alpha_i \leq -1$, it follows that $(1+\frac{s}{1-s-\gamma})^{-\alpha_i-1} \leq 1$. 
On the other hand, by Bernoulli's inequality, $(1+\frac{s}{1-s-\gamma})^{-\alpha_i-1} \geq 1 - (\alpha_i+1)\frac{s}{1-s-\gamma} \geq 1 - o(1)$, since $\gamma = 1 - \delta$, for some constant $\delta$, by our assumptions.  

For the second term, since $1-\gamma \leq 1$ and $\tilde{s} \geq 0$, $(1-\gamma)^{\tilde{s}} \leq 1$. On the other hand, again by Bernoulli's inequality, $(1-\gamma)^{\tilde{s}} \geq 1 - \gamma \tilde{s} = 1 - o(1)$, as we needed.  

Comparing $B(\alpha_i, \tilde{\alpha_0} + \tilde{s} - \alpha_i)$ and $B(\alpha_i, \tilde{\alpha_0} - \alpha_i)$ is not so much more difficult. 
By definition, $B(\alpha_i, \alpha_0-\alpha_i) = \int_{0}^1 x^{\alpha_i-1}(1-x)^{\alpha_0-\alpha_i-1} \,dx$, so 
$$ \frac{B(\alpha_i, \alpha_0 + \tilde{s} - \alpha_i)}{B(\alpha_i, \alpha_0 - \alpha_i)} = $$
$$ \frac{\int_{0}^1 x^{\alpha_i-1}(1-x)^{\tilde{\alpha_0}+\tilde{s}-\alpha_i-1} \,dx}{\int_{0}^1 x^{\alpha_i-1}(1-x)^{\tilde{\alpha_0}-\alpha_i-1} \,dx}$$
We'll just bound each of the ratios  
$$\frac{ x^{\alpha_i-1}(1-x)^{\tilde{\alpha_0}+\tilde{s}-\alpha_i-1}}{x^{\alpha_i-1}(1-x)^{\tilde{\alpha_0}-\alpha_i-1}}$$
Namely, this is just $(1-x)^{\tilde{s}}$. Same as above, $1 - o(1) \leq (1-\gamma)^{\tilde{s}} \leq 1$. 
Hence, these are within a constant from each other. 

\end{proof} 

\subsection{Dominant topic equidistribution} 

Now, we pass to proving a smooth version of the dominant topic equidistribution property. Namely, for a threshold $x_0 = o(1)$, we can consider a topic "large" whenever it's bigger than $x_0$. We will show that for any topics $Y_i$, $Y_j$, the probabilities that $Y_i > x_0$ and $Y_j > x_0$ are within a constant from each other. 

Mathematically formalizing the above statement, we will prove the following lemma: 

\begin{lem} 
\label{l:comparabledirichlet}
Let $\vec{\gamma} = (\gamma_{1}, \gamma_{2}, \dots, \gamma_{K})$ be distributed as specified above. Then, $\frac{\mathbb{P}(Y_i >
 x_0)}{\mathbb{P}(Y_j>x_0)} = O(1)$, for any $i,j$ if $x_0 = o(1)$. 
\end{lem}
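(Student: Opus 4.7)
My plan is to reduce the claim to a tail estimate for the marginal Beta distribution of $Y_i$, which is $\mathrm{Beta}(\alpha_i, \alpha_0 - \alpha_i)$ with $\alpha_0 := \sum_k \alpha_k$. Under the hypothesized scaling, $\alpha_i = \Theta(1/K^c)$, $\alpha_0 = \Theta(K^{1-c}) = o(1)$, and $\alpha_i/\alpha_0 = \Theta(1/K)$, so both shape parameters of the Beta are much smaller than $1$. Consequently its density blows up at both $0$ and $1$, the bulk of the mass concentrates near the two endpoints, and the problem reduces to an asymptotic analysis of the incomplete Beta integral on $[0, x_0]$.

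Writing $L := \log(1/x_0)$, the first step is to estimate
$$ \mathbb{P}(Y_i \le x_0) \;=\; \frac{1}{B(\alpha_i, \alpha_0 - \alpha_i)}\int_0^{x_0} y^{\alpha_i -1}(1-y)^{\alpha_0 - \alpha_i - 1}\,dy, $$
handling the two factors separately. On $[0,x_0]$ the factor $(1-y)^{\alpha_0 - \alpha_i - 1}$ equals $1 + O(x_0)$, so the integral is $(x_0^{\alpha_i}/\alpha_i)(1+O(x_0))$. Using $z\Gamma(z) = \Gamma(1+z)$ together with $\Gamma(1+z) = 1 + O(z)$ for small $z$, the normalization satisfies
$$ \alpha_i B(\alpha_i, \alpha_0 - \alpha_i) \;=\; \Gamma(1+\alpha_i)\,\frac{\Gamma(\alpha_0-\alpha_i)}{\Gamma(\alpha_0)} \;=\; 1 + \frac{\alpha_i}{\alpha_0} + o\!\left(\frac{\alpha_i}{\alpha_0}\right), $$
where the $o$-term collects the $O(\alpha_i) = O(1/K^c)$ and $O((\alpha_i/\alpha_0)^2) = O(1/K^2)$ contributions, both of which are $o(1/K)$ since $c > 1$. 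Combining and using $x_0^{\alpha_i} = e^{-\alpha_i L}$,
$$ \mathbb{P}(Y_i > x_0) \;=\; \frac{(1 - e^{-\alpha_i L}) + \alpha_i/\alpha_0}{1 + \alpha_i/\alpha_0}\,(1 + o(1)) + O(x_0). $$
Since $\alpha_i = C_i/K^c$ and $\alpha_i/\alpha_0 = C_i/\sum_k C_k$, both $1 - e^{-\alpha_i L}$ and $\alpha_i/\alpha_0$ scale linearly in $C_i$ up to constants, using the elementary fact that $(1-e^{-a})/(1-e^{-b}) = O(1)$ whenever the ratio $a/b$ is $O(1)$. The ratio $\mathbb{P}(Y_i > x_0)/\mathbb{P}(Y_j > x_0)$ is therefore bounded by $O(C_i/C_j) = O(1)$.

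The principal obstacle is tracking both contributions to $\mathbb{P}(Y_i > x_0)$ simultaneously, because their relative sizes change with $x_0$: when $\alpha_i L \gg 1$ the tail $1 - e^{-\alpha_i L}$ saturates at $1$ and dominates, when $\alpha_i L \ll 1/K$ the ``endpoint-at-$1$ mass'' $\alpha_i/\alpha_0$ dominates, and in between the two pieces are comparable. Dropping either piece ruins the bound in the corresponding regime, so the analysis must keep both together. A secondary technicality is ensuring the $o(1)$ and $O(x_0)$ error terms are uniform in $x_0 = o(1)$; this works out because the former depends only on $K$ (coming from the $\Gamma$-asymptotics) and the latter comes from a Taylor expansion restricted to $[0,x_0]$, so the two error sources are independent and neither feeds back into the quantity being estimated.
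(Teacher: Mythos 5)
Your route is genuinely different from the paper's and, where it works, sharper. The paper never evaluates the tail probability: it shows $B(\alpha_i,\alpha_0-\alpha_i)\simeq 1/\alpha_i$ by splitting $[0,1]$ at $1/2$, then compares the two unnormalized integrands $x^{\alpha_i-1}(1-x)^{\alpha_0-\alpha_i-1}$ and $x^{\alpha_j-1}(1-x)^{\alpha_0-\alpha_j-1}$ pointwise on $[x_0,x_T]$ for a threshold $x_T$ exponentially close to $1$, and checks that the contribution of $[x_T,1]$ is $e^{-\Omega(K)}$ while that of $[x_0,x_T]$ is $\Omega(1)$. You instead compute $\mathbb{P}(Y_i>x_0)$ asymptotically, isolating the two sources of mass (the spike at $0$ contributing $1-e^{-\alpha_i L}$ and the spike at $1$ contributing $\alpha_i/\alpha_0$), each of which scales linearly in $C_i$ up to constants; this gives the comparison immediately and in fact yields the value of the tail probability rather than mere comparability. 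Your handling of the normalization is correct: $\alpha_i B(\alpha_i,\alpha_0-\alpha_i)=1+\alpha_i/\alpha_0+O(\alpha_i)+O(1/K^2)$, and both error terms are $o(1/K)$ because $c>1$.

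The one step that does not survive as written is the additive $O(x_0)$ in your final display. The guaranteed size of the main term is only $\alpha_i/\alpha_0=\Theta(1/K)$, and in the regime $1/K\ll x_0=o(1)$ with $\alpha_i\log(1/x_0)\ll 1/K$ (for instance $x_0=1/\log K$) the term $O(x_0)$ swamps it, so the formula no longer determines $\mathbb{P}(Y_i>x_0)$ to within a constant factor and the ratio bound does not follow. The fix is to keep the weight $y^{\alpha_i-1}$ when bounding the error: since $(1-y)^{\alpha_0-\alpha_i-1}-1=O(y)$ on $[0,x_0]$, the error in the incomplete integral is $\int_0^{x_0}y^{\alpha_i-1}\cdot O(y)\,dy=O\bigl(x_0^{1+\alpha_i}\bigr)$, i.e.\ a \emph{relative} error of $O(\alpha_i x_0)=o(1/K)$ rather than $O(x_0)$. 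With that sharpening every error term is $o(\alpha_i/\alpha_0)$, uniformly over $x_0=o(1)$, and your argument closes.
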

\begin{proof}

As before, the marginal distribution of $Y_i$ is $\text{Beta}(\alpha_i, \alpha_0-\alpha_i)$. The Beta distribution pdf is just $\mathbb{P}(x) = \frac{x^{\alpha_i-1}(1-x)^{\alpha_0-\alpha_i-1}}{B(\alpha_i, \alpha_0-\alpha_i)}$, where $B(\alpha_i, \alpha_0-\alpha_i) = \int_{0}^1 x^{\alpha_i-1}(1-x)^{\alpha_0-\alpha_i-1} \,dx$. 

Hence, the ratio we care about can be written as 
$$\frac{(\int_{x_0}^1 x^{\alpha_i-1}(1-x)^{\alpha_0-\alpha_i-1} \,dx)/B(\alpha_i,\alpha_0-\alpha_i)}{(\int_{x_0}^1 x^{\alpha_j-1}(1-x)^{\alpha_0-\alpha_j-1} \,dx)/B(\alpha_j, \alpha_0-\alpha_j)}$$

To get a bound on this ratio, it's sufficient to bound the normalization constants $B(\alpha_i,\alpha_0-\alpha_i)$ and $B(\alpha_j,\alpha_0-\alpha_j)$, as well as the ratio  $\frac{\int_{x_0}^1 x^{\alpha_i-1}(1-x)^{\alpha_0-\alpha_i-1} \,dx}{\int_{x_0}^1 x^{\alpha_j-1}(1-x)^{\alpha_0-\alpha_j-1} \,dx}$. Let's prove first that $B(\alpha_i,\alpha_0-\alpha_i) \simeq \B(\alpha_j,\alpha_0-\alpha_j)$

By definition, $ B(\alpha_i, \alpha_0-\alpha_i) = \int_{0}^1 x^{\alpha_i-1}(1-x)^{\alpha_0-\alpha_i-1} \,dx $.  The way we'll analyze this quantity is that we'll divide the integral in two parts, one from 0 to $\frac{1}{2}$ and one from $\frac{1}{2}$ to 1. 

Since $\alpha_0 = O(1)$, it follows that $\alpha_0-\alpha_i-1 \gtrsim -1$ and $\alpha_0-\alpha_i-1 \lesssim 1$. Hence, $(1-x)^{\alpha_0-\alpha_i-1} = \Theta(1)$. It follows that  

$$ \int_{0}^{\frac{1}{2}} x^{\alpha_i-1} (1-x)^{\alpha_0-\alpha_i-1} \,dx \simeq \int_{0}^{\frac{1}{2}} x^{\alpha_i-1} \,dx = $$
$$ \simeq \frac{(1/2)^{\alpha_i}}{\alpha_i} \simeq \frac{1}{\alpha_i} $$
where the last equality follows since $\frac{1}{2} \leq (1/2)^{\alpha_i} \leq 1$. 

The second portion is not much more difficult.  Since $\frac{1}{2} \leq \frac{1}{2}^{\alpha_i-1} \leq 1$, it follows 
$$ \int_{\frac{1}{2}}^{1} x^{\alpha_i-1} (1-x)^{\alpha_0-\alpha_i-1} \,dx \simeq \int_{\frac{1}{2}}^{1} (1-x)^{\alpha_0-\alpha_i-1}  \,dx = $$
$$ \simeq \frac{(1/2)^{\alpha_0-\alpha_i}}{\alpha_0-\alpha_i} \simeq \frac{1}{\alpha_0} $$
where the last two equalities come about since $-1 \lesssim \alpha_0-\alpha_i \lesssim 1$.

But the above two estimates proved that for any $i$, $B(\alpha_i, \alpha_0-\alpha_i) \simeq \frac{1}{\alpha_i}$, as we needed.

So, we proceed onto bounding 
$$\frac{\int_{x_0}^1 x^{\alpha_i-1}(1-x)^{\alpha_0-\alpha_i-1} \,dx}{\int_{x_0}^1 x^{\alpha_j-1}(1-x)^{\alpha_0-\alpha_j-1} \,dx}$$

We'll proceed in a similar fashion as before. We'll pick some point $x_T$, and if $x<x_T$, we will show that $x^{\alpha_j-1}(1-x)^{\alpha_0-\alpha_j-1}$ is within a constant factor from $x^{\alpha_i-1}(1-x)^{\alpha_0-\alpha_i-1}$. On the other hand, we will show that part of the integral where $x>x_T$ is dominated by the part where $x<x_T$, which will imply the claim we need. 

Let's rewrite the ratio above a little: 
$$ \frac{x^{\alpha_j-1}(1-x)^{\alpha_0-\alpha_j-1}}{x^{\alpha_i-1}(1-x)^{\alpha_0-\alpha_i-1}} = $$
$$ (\frac{x}{1-x})^{\alpha_j - \alpha_i} = e^{(\alpha_j-\alpha_i) \ln (\frac{x}{1-x})} $$ 

Proceeding as outlined, I claim that for sufficiently large constants $C_1, C_2$, s.t. if $x \leq 1 - \frac{1}{1+C_1 e^{\frac{1}{\alpha_i}\frac{1}{C_2}}}$, then $ \frac{x^{\alpha_j-1}(1-x)^{\alpha_0-\alpha_j-1}}{ x^{\alpha_i-1}(1-x)^{\alpha_0-\alpha_i-1}}  = O(1)$. Let's call $x_T = 1 - \frac{1}{1+C_1 e^{\frac{1}{\alpha_i}\frac{1}{C_2}}}$.

The claim is then, that if $x_T \geq x \geq x_0$, that $(\alpha_j-\alpha_i) \ln (\frac{x}{1-x}) = O(1)$. 

First let's assume, $\alpha_j - \alpha_i \geq 0$. 

Then, if $\ln(\frac{x}{1-x}) < 0 \Leftrightarrow x < \frac{1}{2}$, the condition is of course satisfied. So let's assume $x \geq \frac{1}{2}$. When $\frac{1}{2} \leq x \leq x_T$, we get that 
$\frac{x}{1-x} \leq C_1 e^{e^{\frac{1}{\alpha_j}\frac{1}{C_2}}}$. Hence, $\ln(\frac{x}{1-x}) \leq \ln C_1 + \frac{1}{\alpha_j}\frac{1}{C_2}$. It follows that if $C_1, C_2$ are sufficiently large,     
$$ (\frac{x}{1-x})^{\alpha_j-\alpha_i} \leq e^{\ln (\frac{x}{1-x}) \alpha_j} = O(1)$$

On the other hand, if $\alpha_j-\alpha_i \leq 0$, when $x \geq \frac{1}{2}$,  $(\alpha_j-\alpha_i) \ln (\frac{x}{1-x}) \leq 0$, so we are fine. However, since $|\alpha_j-\alpha_i| \leq \alpha_i$, it's easy to check when $x \geq \frac{e^{-c_1/\alpha_i}}{1+e^{-c_1/\alpha_i}} > x_0$, that $(\alpha_j-\alpha_i) \ln (\frac{x}{1-x}) = O(1)$.  

Finally, we want to claim that the portion of the integral from $x_T$ to 1 is dominated by the portion from $x_0$ to $x_T$. 

We can show that the latter portion is $O(e^{-K})$, and the first is $\Omega(1)$. 

Let's lower bound the first portion. We lower bound
$ \int_{x_0}^{x_T} x^{\alpha_i-1} (1-x)^{\alpha_0-\alpha_i-1}\,dx  $ by $ x_T^{\alpha_i-1} \int_{x_0}^{x_T} (1-x)^{\alpha_0-\alpha_i-1} \,dx$. 
For the first factor in the above expression, we use Bernoulli's inequality to prove it's $\Omega(1)$. For the second, the integral will evaluate to 
$$\frac{(1-x_0)^{\alpha_0-\alpha_i} - (1-x_T)^{\alpha_0-\alpha_i}}{\alpha_0-\alpha_i} $$ 
Let's lower bound the first term in the numerator. If $\alpha_0 - \alpha_i \geq 1$, another application of Bernoulli's inequality gives: 
$(1-x_0)^{\alpha_0-\alpha_i} \geq 1 - (\alpha_0-\alpha_i)x_0 \geq 1 - o(1)$. 
If, on the other hand, $0 \leq \alpha_0-\alpha_i \leq 1$, $(1-x_0)^{\alpha_0-\alpha_i} \geq 1-x_0 \geq 1-o(1)$.  

Then, I claim that $(1-x_T)^{\alpha_0-\alpha_i} = e^{-\Omega(K)}$. 
Indeed, for some constant $C_3$, 
$$\left(\frac{1}{1+C_1 e^{\frac{1}{\alpha_j}\frac{1}{C_2}}}\right)^{\alpha_0-\alpha_i} \leq \left(\frac{1}{C_3 e^{\frac{1}{\alpha_j}\frac{1}{C_2}}}\right)^{\alpha_0-\alpha_i} = $$ 
$$ = e^{- \ln ( C_3 e^{\frac{1}{\alpha_j}\frac{1}{C_2}}) (\alpha_0-\alpha_i)} $$
However, since $\alpha_0 = \Omega(K \alpha_j)$ and $\alpha_0 - \alpha_i = \Omega(\alpha_0)$, the above expression is upper bounded by $e^{-\Omega(K)}$, which is what we were claiming. 
Hence,  $ x_T^{\alpha_i-1} \int_{x_0}^{x_T} (1-x)^{\alpha_0-\alpha_i-1} \,dx = \Omega(1)$. 

Let's upper bound the latter portion. This expression is upper bounded by 
$$x_T^{\alpha_i-1} \int_{x_T}^1 (1-x)^{\alpha_0-\alpha_i-1} \,dx = x_T^{\alpha_i-1}  \frac{\left( \frac{1}{1+C_1 e^{\frac{1}{\alpha_j}\frac{1}{C_2}}}\right)^{\alpha_0-\alpha_i}}{\alpha_0-\alpha_i}$$

Now, we will separately bound each of $x_T^{\alpha_i-1}$ and $\frac{\left( \frac{1}{1+C_1 e^{\frac{1}{\alpha_j}\frac{1}{C_2}}}\right)^{\alpha_0-\alpha_i}}{\alpha_0-\alpha_i}$. 

The first term can be written as $\frac{1}{x_T^{1-\alpha_i}}$. Now, since $1-\alpha_i \geq 0$, we can use Bernoulli's inequality to lower bound $x_T^{1-\alpha_i}$ by $1-\frac{1}{1+C_1 e^{\frac{1}{\alpha_j}\frac{1}{C_2}}} (1-\alpha_i)$. Since $\frac{1}{1+C_1 e^{\frac{1}{\alpha_j}\frac{1}{C_2}}} = O(1/e^{\frac{1}{\alpha_j}})$, and $1-\alpha_i \leq 1/2$, let's say, $1-\frac{1}{1+C_1 e^{\frac{1}{\alpha_j}\frac{1}{C_2}}} (1-\alpha_i) = \Omega(1)$, i.e. $x_T^{\alpha_i-1} = O(1)$.

For the second term, we already proved above that $(1-x_T)^{\alpha_0-\alpha_i} = e^{-\Omega(K)}$, 
This implies that $\int_{x_T}^1 x^{\alpha_i-1} (1-x)^{\alpha_0-\alpha_i-1} \,dx  = O(e^{-K})$, which finishes the proof.

\end{proof}

\subsection{Independent topic inclusion} 

Finally, there's a very simple proxy for "independent topic inclusion". Again, as above,  $\tilde{\gamma}_{\bar{S}} = (1-\sum_{j \in S} \gamma_{i}) \text{Dir}(\vec{\alpha}_{\bar{S}})$. 

But, if we consider "inclusion" the probability that a given topic is "noticeable" (i.e. $\geq \frac{1}{n^{c_0}}$, say), we can use the above Lemma~\ref{l:comparabledirichlet} to show that the probability that any topic is "large" (but still $o(1)$) is within a constant for all the topics in $\bar{S}$.  

\section{On common words} 

In this section, we show how one would modify the proofs from the previous section to handle common words as well. We stress that common words are easy to handle if one were allowed to filter them out, but we want to analyze under which conditions the variational inference updates could handle them on their own. 

The difference in contrast to the previous sections is it's not clear how to argue progress for the common words: common words do not have lone documents. However, if we can't argue progress for the common words, then we can't argue progress for the $\gamma$ variables, so the entire argument seems to fail.  

Formally, we consider the following scenario: 
\begin{itemize} 
\item On top of the assumptions we have either in Case Study 1 or Case Study 2, we assume that there are words which show up in all topics, but their probabilities are within a constant $\kappa$ from each other, $B \geq \kappa \geq 2$. We will call these \emph{common} words. (The $\kappa \geq 2$ is without loss of generality. If the claim holds for a smaller $\kappa$, then it certainly holds for $\kappa = 2$. The only difference is that the estimates to follow could be strengthened, but we assume $\kappa \geq 2$ to get cleaner bounds.) 
\item For each topic $i$, if $C$ is the set of common words, $\sum_{j \in C} \beta^*_{i,j} \leq \frac{1}{{\kappa}^{100}}$, i.e. there isn't too much mass on these words. 
\item Conditioned on topic $i$ being dominant, there is a probability of $1-\frac{1}{\kappa^{100}}$ that the proportion of topic $i$ is at least $1-\frac{1}{\kappa^{100}}$. 
\end{itemize} 

Then, recall the theorem we want to prove is: 
\begin{thm} If we additionally have common words satisfying the properties specified above, after $O(\log(1/\epsilon')+\log N)$ of KL-tEM updates in Case Study 2, or any of the tEM variants in Case Study 1, and we use the same initializations as before, we recover the topic-word matrix and topic proportions to multiplicative accuracy $1+\epsilon'$, if $1+\epsilon' \geq \frac{1}{(1-\epsilon)^7}$. 
\label{t:commont} 
\end{thm}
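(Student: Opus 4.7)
The plan is to reduce the situation with common words to the existing analyses by showing that the high-purity documents $D^h_i := \{d : \gamma^*_{d,i} \geq 1 - 1/\kappa^{100}\}$ act as effective lone documents for any common word $j$ in topic $i$. Our assumptions guarantee that a $1 - 1/\kappa^{100}$ fraction of documents with $i$ dominating lies in $D^h_i$, so these documents carry essentially all the weight $\sum_d \gamma^t_{d,i}$ in the corresponding $\beta$-update.

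First I would verify that the presence of common words does not disturb the existing arguments for anchor and discriminative words. Since each topic puts mass at most $1/\kappa^{100}$ on common words, their contribution to $f^t_{d,j}$ for non-common $j$ and to the KKT conditions~\eqref{eq:KKTgamma} for $\gamma^t_{d,i}$ is a lower-order perturbation. In Case Study 1 the ``small support intersection'' quantities feeding Lemmas~\ref{l:gammalower},~\ref{l:betalower},~\ref{l:betaupper} get inflated by at most $1/\kappa^{100}$, which is still $o(1)$; in Case Study 2 the bounds on $R_\beta$ and $R_f$ in Lemmas~\ref{l:betaerror},~\ref{l:ferror} get an additive $O\!\left((\log \kappa)/\kappa^{100}\right)$ term, which is absorbed into the slack already present in the statement of Lemma~\ref{l:dominating}. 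Thus the dominant topic is still correctly identified at each step, and Phases I and II of the existing analyses go through essentially verbatim for anchor and discriminative words.

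Second I would treat the common words themselves. For a common word $j$ and a high-purity document $d \in D^h_i$,
\[
 f^*_{d,j} \;=\; \beta^*_{i,j}\gamma^*_{d,i} + \sum_{i'\neq i} \beta^*_{i',j}\gamma^*_{d,i'} \;=\; \bigl(1 \pm \tfrac{1}{\kappa^{99}}\bigr)\,\beta^*_{i,j}\gamma^*_{d,i},
\]
using $\beta^*_{i',j}\leq \kappa \beta^*_{i,j}$ and $\sum_{i'\neq i}\gamma^*_{d,i'}\leq 1/\kappa^{100}$. The same approximation holds for $f^t_{d,j}$ as long as the current $\beta^t$ estimates stay within a bounded multiplicative factor of $\beta^*$, which is exactly the invariant the previous sections maintain. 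Consequently, on $D^h_i$ we get $\tilde f_{d,j}/f^t_{d,j} = (1\pm o(1))\,\beta^*_{i,j}/\beta^t_{i,j}$, which is exactly the ratio that drove the lone-document arguments for other words. Combined with the first step and the fact that $|D^h_i| \geq (1 - 1/\kappa^{100})\,|D_i|$, this lets me prove analogues of Lemmas~\ref{l:betalower},~\ref{l:betaupper} (respectively~\ref{l:discup},~\ref{l:discll},~\ref{l:decaydisc}) for common words, concluding that after $O(\log N)$ iterations one has $\tfrac{1}{C_\beta}\beta^*_{i,j} \leq \beta^t_{i,j} \leq C_\beta \beta^*_{i,j}$ uniformly over all $(i,j)$. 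From there, Phase III is unchanged: Lemma~\ref{l:errorboost} still yields $C^{t+1} = \sqrt{C^t}$, with the ``boosting'' contribution for common words coming from $D^h_i$ in place of lone documents.

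The main obstacle I anticipate is tracking the common-word perturbation through Lemma~\ref{l:dominating} during Phase I of Case Study 2, when $\beta^t_{i,j}$ for common words may still be off by the initialization factor $B/C_l$. We need to confirm that the resulting bound on $R_\beta$ still leaves the margin $C_l - C_s$ from the theorem statement strictly positive, so that dominant-topic detection keeps working. This reduces to checking that the extra contribution from common words (scaled by $1/\kappa^{100}$ times $\log(B/C_l)$) is dwarfed by the explicit gap $\tfrac{1}{p}\sqrt{2(p\log(1/C_l)+(1-p)\log(BC_l))}$ we already allowed, which is straightforward given $\kappa \leq B$ and the very large exponent on $\kappa$.
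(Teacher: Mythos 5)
Your proposal follows essentially the same route as the paper: the small total mass on common words makes them a negligible perturbation to the anchor/discriminative-word and dominant-topic-detection analyses, and the $(1-1/\kappa^{100})$-pure documents serve as surrogate lone documents for common words because there $\tilde f_{d,j}/f^t_{d,j}$ tracks $\beta^*_{i,j}\gamma^*_{d,i}/(\beta^t_{i,j}\gamma^t_{d,i})$. The only real difference is quantitative: where you assert a $(1\pm o(1))$ approximation and keep the contraction $C^{t+1}=\sqrt{C^t}$, the paper only establishes $f^*_{d,j}/f^t_{d,j}\geq (C^t_\beta)^{-1/4}\,\beta^*_{i,j}/\beta^t_{i,j}$ on the pure documents and settles for $C^{t+1}=(C^t)^{3/4}$ with correspondingly adjusted exponents, which does not change the $O(\log(1/\epsilon')+\log N)$ iteration count.
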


Our analysis here is fairly loose, since the result is anyway a little weak. (e.g. $1-\frac{1}{\kappa^{100}}$ is not really the best value for the proportion of the dominating topic, or the proportion of such documents required.) 
At any rate, it will be clear from the proofs that the dependency of the dominating topic on $\kappa$ has to be of the form $1-\frac{1}{\kappa^c}$, so it's not clear one would gain too much from the tightest possible analysis. The reason we are including this section is to show cases where our proof methods start breaking down. 

We will do the proof for Case Study 1 first, after which Case Study 2 will easily follow. 

\subsection{Phase I with common words} 

The outline is the same as before. We prove the lower bounds on the $\gamma$ and $\beta$ variables first. Namely, we prove: 

\begin{lem} Suppose that the supports of $\beta$ and $\gamma$ are correct. Then, $\gamma^t_{d,i} \geq \frac{1}{2} \gamma^*_{d,i}$. 
\label{l:gammalowercommon}
\end{lem}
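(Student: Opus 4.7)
The strategy is to mirror the proof of Lemma~\ref{l:gammalower}, adding a single extra term in the mass accounting to absorb the presence of common words. From the KKT condition~\ref{eq:KKTgamma} for the $\gamma$-subproblem, multiplying both sides by $\gamma^t_{d,i}$ yields
\[
\gamma^t_{d,i} \;=\; \sum_{j=1}^N \frac{\tilde{f}_{d,j}}{f^t_{d,j}}\,\beta^t_{i,j}\,\gamma^t_{d,i}.
\]
Every summand is non-negative, so it suffices to lower-bound the contribution from the lone (discriminative) words $L_i$ of topic $i$ in document $d$. For such $j$, the only surviving term in $f^t_{d,j}$ is the topic-$i$ term, so $f^t_{d,j}=\beta^t_{i,j}\gamma^t_{d,i}$ and the summand equals $\tilde{f}_{d,j}\ge (1-\epsilon)\beta^*_{i,j}\gamma^*_{d,i}$. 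Consequently $\gamma^t_{d,i}\ge (1-\epsilon)\bigl(\sum_{j\in L_i}\beta^*_{i,j}\bigr)\gamma^*_{d,i}$.

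The only new task is to lower-bound $\sum_{j\in L_i}\beta^*_{i,j}$. I would partition the support $\{j:\beta^*_{i,j}\ne 0\}$ into $L_i$, non-lone discriminative words, and the common-word set $C$. By the almost-disjoint-supports assumption and the fact that document $d$ contains at most $T=O(1)$ topics, the total $\beta^*_{i,\cdot}$-mass on non-lone discriminative words is $T\cdot o(1)=o(1)$. By the new common-word assumption, $\sum_{j\in C}\beta^*_{i,j}\le \kappa^{-100}$. Therefore $\sum_{j\in L_i}\beta^*_{i,j}\ge 1-o(1)-\kappa^{-100}$, which, since $\kappa\ge 2$, is comfortably at least $\tfrac{1}{2(1-\epsilon)}$ for any $\epsilon$ small enough, giving $\gamma^t_{d,i}\ge \tfrac{1}{2}\gamma^*_{d,i}$ as required.

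I do not expect any genuine obstacle: the slack between the original $(1-o(1))\gamma^*_{d,i}$ conclusion of Lemma~\ref{l:gammalower} and the weaker $\tfrac{1}{2}\gamma^*_{d,i}$ target here is more than enough to swallow the $\kappa^{-100}$ mass sacrificed to common words. Notably, this $\gamma$-lower bound does not invoke the ``almost-pure dominating document'' assumption on common words; that hypothesis is reserved for the harder step later, where one must argue that the $\beta$-updates for common words (which possess no lone documents) still make progress, by exploiting a tight correlation between $f^*_{d,j}/f^t_{d,j}$ and $\beta^*_{i,j}/\beta^t_{i,j}$ in near-pure dominant documents.
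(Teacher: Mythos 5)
Your proposal is correct and follows essentially the same route as the paper: multiply the KKT condition by $\gamma^t_{d,i}$, keep only the lone-word terms (for which $f^t_{d,j}=\beta^t_{i,j}\gamma^t_{d,i}$ so each summand reduces to $\tilde{f}_{d,j}\ge(1-\epsilon)\beta^*_{i,j}\gamma^*_{d,i}$), and lower-bound $\sum_{j\in L_i}\beta^*_{i,j}$ by subtracting the $o(1)$ non-lone discriminative mass and the $\kappa^{-100}$ common-word mass. Your explicit three-way partition of the support is just a slightly more detailed write-up of the paper's one-line mass accounting, and your closing remark about which step actually needs the near-pure-document assumption matches the paper's structure.
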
 
\begin{proof} 
Similarly as before, multiplying both sides of~\ref{eq:KKTgamma} by $\gamma^t_{d,i}$, we get that 
$$\gamma^t_{d,i} \geq \sum_{L_i} \frac{f^*_{d,j}}{f^t_{d,j}} \beta^t_{i,j} \gamma^t_{d,i} \geq (1-o(1))(1-\frac{1}{\kappa^{100}}) \gamma^*_{d,i} \geq \frac{1}{2} \gamma^*_{d,i} $$ 
where the second inequality follows since $1-\frac{1}{\kappa^{100}}$ fraction of the words in topic $i$ is discriminative. 
\end{proof}

\begin{lem} Suppose that the supports of the $\gamma$ and $\beta$ variables are correct. 
%Additionally, if $i$ is a large topic in $d$, let $\frac{1}{2} \gamma^*_{d,i} \leq \gamma^t_{d,i} \leq (2+\frac{1}{B^{100}}) \gamma^*_{d,i}$. Then, for a discriminative word $j$ for topic $i$, $\beta_{i,j}^{t+1} \geq \frac{1}{2+\frac{1}{B^{100}}}(1-o(1)) \beta_{i,j}^{*}$. 
Additionally, if $i$ is a large topic in $d$, let $\frac{1}{2} \gamma^*_{d,i} \leq \gamma^t_{d,i} \leq 3 \gamma^*_{d,i}$. Then, for a discriminative word $j$ for topic $i$, $\beta_{i,j}^{t+1} \geq \frac{1}{3} \beta_{i,j}^{*}$. 

\label{l:betalowercommon}
\end{lem}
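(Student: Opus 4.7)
The plan is to adapt the proof of Lemma~\ref{l:betalower} to the common-word setting, exploiting the fact that $j$ being discriminative means that the notion of a lone document for the pair $(i,j)$ is still well-defined and occurs frequently. As in the original proof, let $D_l \subseteq D_i$ denote the set of ``lone'' documents for the pair $(i,j)$: those documents $d \in D_i$ in which no topic $i' \neq i$ appearing in $d$ has $j$ in its support. Since $j$ is discriminative (appearing in at most $o(K)$ topics) and by the independent topic inclusion property, a random document in $D_i$ lies in $D_l$ with probability $1-o(1)$.

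The key structural observation is that when $d \in D_l$, we have $f^t_{d,j} = \gamma^t_{d,i} \beta^t_{i,j}$. This holds because all other topics $i' \neq i$ appearing in $d$ have $\beta^*_{i',j} = 0$ by the lone-document definition for the discriminative word $j$, so support correctness gives $\beta^t_{i',j} = 0$ as well. Crucially, the common-word assumption does not interfere here precisely because $j$ itself is discriminative, not common -- so the sum defining $f^t_{d,j}$ has no surviving contributions from common-word terms either. Consequently, discarding non-lone documents in the numerator, using $\tilde f_{d,j} \geq (1-\epsilon) \gamma^*_{d,i} \beta^*_{i,j}$ for $d \in D_l$, and cancelling $\gamma^t_{d,i} \beta^t_{i,j}$, we obtain
\[
\beta^{t+1}_{i,j} \;\geq\; (1-\epsilon)\,\beta^*_{i,j}\,\frac{\sum_{d \in D_l} \gamma^*_{d,i}}{\sum_{d \in D_i} \gamma^t_{d,i}}.
\]

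Applying the upper bound $\gamma^t_{d,i} \leq 3 \gamma^*_{d,i}$ on $D_i$ to the denominator, and invoking the weak topic correlations property together with the independent topic inclusion property (exactly as in Lemma~\ref{l:betalower}) to get $\frac{\sum_{d \in D_l} \gamma^*_{d,i}}{\sum_{d \in D_i} \gamma^*_{d,i}} \geq 1-o(1)$, we conclude
\[
\beta^{t+1}_{i,j} \;\geq\; \frac{(1-\epsilon)(1-o(1))}{3}\,\beta^*_{i,j} \;\geq\; \frac{1}{3}\,\beta^*_{i,j}.
\]

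The one point to verify carefully -- which I expect to be the main (albeit modest) obstacle -- is that the new common-word hypotheses do not shrink the lone-document fraction. Here the additional assumptions ($1-1/\kappa^{100}$ fraction of dominating documents have $\gamma^*_{d,i} \geq 1-1/\kappa^{100}$, and common words have mass at most $1/\kappa^{100}$ in each topic) are layered on top of, rather than replacing, the original independent topic inclusion and weak correlation assumptions, so the $1-o(1)$ bound on $|D_l|/|D_i|$ carries over verbatim. The mild constant worsening from the $1/2$ in Lemma~\ref{l:betalower} to $1/3$ here is entirely attributable to the looser hypothesis $\gamma^t_{d,i} \leq 3\gamma^*_{d,i}$ (versus $\leq 2\gamma^*_{d,i}$ previously), and is not intrinsic to the common-word setting.
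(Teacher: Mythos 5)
Your proposal is correct and follows essentially the same route as the paper's proof: restrict to the lone documents $D_l$ for the pair $(i,j)$, use $f^t_{d,j}=\gamma^t_{d,i}\beta^t_{i,j}$ there to cancel the current iterates, bound the denominator via $\gamma^t_{d,i}\leq 3\gamma^*_{d,i}$, and invoke the $1-o(1)$ lone-document mass fraction from the weak-correlation and independent-inclusion properties. Your added observations --- that the discriminativeness of $j$ is what keeps the common-word terms out of $f^t_{d,j}$, and that the $1/3$ versus $1/2$ is due solely to the weaker $\gamma$ upper bound --- are accurate but the paper leaves them implicit.
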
 
\begin{proof}

Again, similarly as in Lemma~\ref{l:betalower}, 
$$\beta^{t+1}_{i,j} \geq  \frac{\sum_{d \in D_l} (1-\epsilon) \frac{\gamma^*_{d,i} \beta^*_{i,j}}{\gamma_{d,i}^t \cdot \beta_{i,j}^t}\beta_{i,j}^t \gamma_{d,i}^t}{\sum_{d=1}^D \gamma_{d,i}^t} = $$
$$ (1-\epsilon) \beta^*_{i,j} \frac{\sum_{d \in D_l}^D \gamma^*_{d,i}}{\sum_{d=1}^D \gamma_{d,i}^t}$$

%In the documents where topic $i$ is the largest, $\gamma^t_{d,i} \leq (2+\frac{1}{B^{100}}) \gamma^*_{d,i}$. So, we can conclude 
%$$\beta^{t+1}_{i,j} \geq (1-\epsilon) \beta^*_{i,j} \frac{1}{2+\frac{1}{B^{100}}}\frac{\sum_{d \in D_l}^D \gamma_{d,i}^{*}}{\sum_{d=1}^D \gamma_{d,i}^*} $$ 

In the documents where topic $i$ is the largest, $\gamma^t_{d,i} \leq 3 \gamma^*_{d,i}$. So, we can conclude 
$$\beta^{t+1}_{i,j} \geq (1-\epsilon) \beta^*_{i,j} \frac{1}{3}\frac{\sum_{d \in D_l}^D \gamma_{d,i}^{*}}{\sum_{d=1}^D \gamma_{d,i}^*} $$ 

Since $ \frac{\sum_{d \in D_l}^D \gamma_{d,i}^{*}}{\sum_{d=1}^D \gamma_{d,i}^*} \geq (1-o(1))$, as before, we get what we want. 

\end{proof} 

\begin{lem} Let the  $\beta$ variables have the correct support. Let $j$ be a discriminative word for topic $i$, and let $\beta_{i,j}^{t} \geq \frac{1}{C_m} \beta_{i,j}^{*}$, $\gamma_{d,i}^t \geq \frac{1}{C_m} \gamma_{d,i}^*$ whenever $\beta^*_{i,j} \neq 0$, $\gamma^*_{d,i} \neq 0$. Let $\beta_{i,j}^t = C_{\beta}^t \beta_{i,j}^{*}$, 
where $C_{\beta}^t \geq 4 C_m$, and $C_m$ is a constant. 
Then, in the next iteration, $\beta_{i,j}^{t+1} \leq C_{\beta}^{t+1} \beta_{i,j}^{*}$, where $C_{\beta}^{t+1} \leq \frac{C_{\beta}^t}{2}$.  
\label{l:betauppercommon}
\end{lem}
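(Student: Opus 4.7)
The plan is to mimic the proof of Lemma~\ref{l:betaupper}, which handles the analogous upper-bound evolution in the pure-discriminative-words setting. The key observation is that since $j$ is a discriminative word and we have correct supports, $\beta^t_{i',j} = 0$ whenever $\beta^*_{i',j} = 0$, so $f^t_{d,j} = \sum_{i' : \beta^*_{i',j}\neq 0} \beta^t_{i',j}\gamma^t_{d,i'}$ only involves the $o(K)$ topics containing $j$. Common words do not enter the analysis of the $\beta$ update for a discriminative $j$, so the same decomposition strategy applies.

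First I would partition $D_i$ into the set $D_l$ of lone documents for the pair $(i,j)$ (documents where $\beta^*_{i',j} = 0$ for every other topic $i'$ in $d$) and its complement. On $d \in D_l$, since $\beta^t$ has the correct support, $f^t_{d,j} = \gamma^t_{d,i}\beta^t_{i,j}$, so the per-document contribution simplifies to $(1+\epsilon)\beta^*_{i,j}\gamma^*_{d,i}$ exactly as in Lemma~\ref{l:betaupper}. On non-lone documents, using $\beta^t_{i',j}\geq \frac{1}{C_m}\beta^*_{i',j}$ for $i'$ in the support of $j$ and $\gamma^t_{d,i'}\geq \frac{1}{C_m}\gamma^*_{d,i'}$, I get $f^t_{d,j} \geq \frac{1}{C_m^2} f^*_{d,j}$ and hence $\frac{\tilde f_{d,j}}{f^t_{d,j}} \leq (1+\epsilon) C_m^2$. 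Combined with the thresholding upper bound $\gamma^t_{d,i} \leq 3\gamma^*_{d,i}$ on $D_i$ (the bound available in the common-words setting, cf.\ Lemma~\ref{l:betalowercommon}), the per-document contribution is at most $3(1+\epsilon) C_m^3\, \gamma^*_{d,i}\beta^t_{i,j}$.

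Combining, $\beta^{t+1}_{i,j} \leq (1+\epsilon) C_m\bigl(\alpha\, \beta^*_{i,j} + (1-\alpha)\, 3 C_m^3\, \beta^t_{i,j}\bigr)$, where $\alpha = \sum_{d\in D_l}\gamma^*_{d,i}\big/\sum_d \gamma^*_{d,i}$. Since $j$ is discriminative it belongs to only $o(K)$ topics, and by independent topic inclusion together with weak topic correlations (which are preserved: common words are a separate set of words and do not change lone-ness of discriminative $j$), $\alpha = 1-o(1)$. Dividing by $\beta^*_{i,j}$, the desired conclusion $C^{t+1}_\beta \leq C^t_\beta/2$ reduces to verifying $\alpha > (3C_m^3 C^t_\beta - \tfrac{C^t_\beta}{2(1+\epsilon)C_m})/(3 C_m^3 C^t_\beta - 1)$, which, using $C^t_\beta \geq 4C_m$, simplifies to a constant strictly less than one, and so is satisfied by $\alpha = 1-o(1)$ exactly as in the original argument.

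The main obstacle is not the $\beta$-step arithmetic, which is essentially a recapitulation of Lemma~\ref{l:betaupper}, but rather justifying the thresholding upper bound $\gamma^t_{d,i} \leq 3\gamma^*_{d,i}$ in the presence of common words. The original proof of Lemma~\ref{l:gammaupper} used $\tilde\alpha := \sum_{j\in L_i}\beta^*_{i,j} = 1 - o(1)$, but with common words one only has $\tilde\alpha \geq 1 - \frac{1}{\kappa^{100}}$. This still bounds the non-lone contribution by a constant strictly less than $1$, which is enough to run the KKT contradiction argument and obtain an $O(1)$ upper bound on $\gamma^t_{d,i}/\gamma^*_{d,i}$; this accounts for the factor of $3$ (versus $2$ in the original) and is the only genuinely new estimate required.
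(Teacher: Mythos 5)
Your proposal is correct and follows essentially the same route as the paper, whose entire proof of this lemma is the single sentence that it is ``exactly the same as Lemma~\ref{l:betaupper}''; your lone/non-lone decomposition, the bound $f^t_{d,j}\geq \frac{1}{C_m^2}f^*_{d,j}$ on non-lone documents, and the reduction to $\alpha = 1-o(1)$ are precisely that argument. Your extra observation---that the only genuinely new input is the constant upper bound on $\gamma^t_{d,i}/\gamma^*_{d,i}$ in the presence of common words, recovered because the common-word mass is at most $\kappa^{-100}$---is exactly what the paper supplies separately in Lemmas~\ref{l:gammalowercommon} and~\ref{l:gammauppercommon}.
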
 
\begin{proof}   

The proof is exactly the same as Lemma~\ref{l:betaupper}. 

\end{proof} 

Now, we finally get to the upper bound of the $\gamma$ values. 

\begin{lem} Fix a particular document $d$. Let's assume the supports for the $\beta$ and $\gamma$ variables are correct. Furthermore, let $\frac 1 {C_m} \leq \frac {\beta_{i,j}^t}{\beta_{i,j}^*} \leq C_m$ for some constant $C_m$. Then, $\gamma_{d,i}^{t} \leq 2 \gamma_{d,i}^*$. 
\label{l:gammauppercommon} 
\end{lem}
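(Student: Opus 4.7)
The plan is to mimic the structure of Lemma~\ref{l:gammaupper}, but instead of a two-way split of the KKT sum (lone / non-lone), I would use a three-way split that isolates the contribution of the common words. Concretely, after multiplying the KKT condition~\eqref{eq:KKTgamma} by $\gamma^t_{d,i}$, write
\[
\gamma_{d,i}^{t} \;=\; \sum_{j \in L_i^{\text{disc}}} \tilde f_{d,j}
\;+\; \gamma_{d,i}^{t}\!\!\sum_{j \notin L_i^{\text{disc}},\, j \notin C}\!\! \frac{\tilde f_{d,j}}{f_{d,j}^{t}} \beta_{i,j}^t
\;+\; \gamma_{d,i}^{t}\!\!\sum_{j \in C}\!\! \frac{\tilde f_{d,j}}{f_{d,j}^{t}} \beta_{i,j}^t,
\]
where $L_i^{\text{disc}}$ denotes the discriminative words that are lone for topic $i$ in document $d$, and $C$ the set of common words.

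For the first sum, exactly as in Lemma~\ref{l:gammaupper}, I would set $\tilde\alpha := \sum_{j \in L_i^{\text{disc}}} \beta^*_{i,j}$ and note that by the almost-disjoint-supports property \emph{together with} the assumption that common words carry total mass at most $1/\kappa^{100}$ inside each topic, we have $\tilde\alpha \geq 1 - T\cdot o(1) - 1/\kappa^{100} = 1 - o(1)$. Since $f^t_{d,j} = \beta^t_{i,j}\gamma^t_{d,i}$ for any lone word, the first sum collapses to $\sum_{j\in L_i^{\text{disc}}}\tilde f_{d,j} \leq (1+\epsilon)\tilde\alpha\,\gamma^*_{d,i}$. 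For the second sum, I would use $\gamma^t_{d,i'} \geq \tfrac{1}{2}\gamma^*_{d,i'}$ (Lemma~\ref{l:gammalowercommon}) together with the multiplicative bound on $\beta^t$ to get $\tilde f_{d,j}/f^t_{d,j} \leq 2C_m(1+\epsilon)$; combined with $\sum_{j \notin L_i^{\text{disc}},\, j \notin C}\beta^*_{i,j} = o(1)$, the total contribution is $o(1)\gamma^t_{d,i}$.

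The only genuinely new piece is the third sum, over common words. Here I would bound each term by using that for any common word $j$,
\[
f^t_{d,j} \;\geq\; \tfrac{1}{C_m}\sum_{i'} \beta^*_{i',j}\gamma^t_{d,i'} \;\geq\; \tfrac{1}{2C_m}\sum_{i'}\beta^*_{i',j}\gamma^*_{d,i'} \;=\; \tfrac{1}{2C_m} f^*_{d,j},
\]
so $\tilde f_{d,j}/f^t_{d,j} \leq 2C_m(1+\epsilon)$. Then using $\beta^t_{i,j} \leq C_m \beta^*_{i,j}$ and summing gives
\[
\sum_{j\in C}\tfrac{\tilde f_{d,j}}{f^t_{d,j}}\beta^t_{i,j}\gamma^t_{d,i} \;\leq\; 2C_m^2(1+\epsilon)\,\gamma^t_{d,i}\sum_{j\in C}\beta^*_{i,j} \;\leq\; \tfrac{2C_m^2(1+\epsilon)}{\kappa^{100}}\,\gamma^t_{d,i} \;=\; o(1)\,\gamma^t_{d,i},
\]
since $C_m$ is an absolute constant and $\kappa$ is large. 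Combining the three bounds,
\[
\gamma^t_{d,i} \;\leq\; (1+\epsilon)\tilde\alpha\,\gamma^*_{d,i} + o(1)\,\gamma^t_{d,i},
\]
and rearranging gives $\gamma^t_{d,i} \leq \frac{(1+\epsilon)\tilde\alpha}{1 - o(1)}\gamma^*_{d,i} \leq 2\gamma^*_{d,i}$ for $\epsilon$ small enough, which is the claim. The main obstacle (and the only new ingredient relative to Lemma~\ref{l:gammaupper}) is controlling the common-word block; the $1/\kappa^{100}$ upper bound on the mass of common words in each topic is exactly what absorbs the relatively crude $O(C_m^2)$ multiplicative blow-up coming from not being able to collapse $f^t_{d,j}$ to a single product.
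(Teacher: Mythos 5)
Your proposal is correct and takes essentially the same route as the paper: the identical three-way split of the KKT identity into lone words, non-lone discriminative words, and common words, with the common-word block absorbed by the assumption $\sum_{j\in C}\beta^*_{i,j}\le \kappa^{-100}$ and the remaining terms handled exactly as in the no-common-words lemma. The only cosmetic difference is that you bound $\tilde f_{d,j}/f^t_{d,j}$ on common words via the multiplicative accuracy $C_m$ of the $\beta^t$ estimates, whereas the paper uses a $\kappa^4$ dynamic-range factor; both are swallowed by the $\kappa^{-100}$ mass bound, so the arguments coincide.
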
 
\begin{proof}
Again, multiplying~\ref{eq:KKTgamma} by $\gamma^t_{d,i}$, we get 

$$\gamma_{d,i}^{t} = \sum_{j \in L_i} \tilde{f}_{d,j} + \gamma_{d,i}^{t} \sum_{j \notin L_i} \frac{\tilde{f}_{d,j}}{f_{d,j}^{t}} \beta_{i,j}^t + \gamma_{d,i}^{t} \sum_{j \in C} \frac{\tilde{f}_{d,j}}{f_{d,j}^{t}} \beta_{i,j}^t $$ 

If $\tilde{\alpha} = \sum_{j \in L_i} \beta_{i,j}^*$, since $\gamma^t_{d,i} \geq \frac{1}{C_m} \gamma^*_{d,i}$, 
$$ \frac{\tilde{f}_{d,j}}{f_{d,j}^t} \leq (1+\epsilon) C^2_m $$
 
If we denote $\Gamma = \sum_{j \in C} \beta^*_{i,j}$, then 
$$ \gamma_{d,i}^{t} \leq (1+\epsilon)(\tilde{\alpha}\gamma_{d,i}^* + C^3_m(1-\Gamma-\tilde{\alpha})\gamma_{d,i}^t + \Gamma \kappa^4 \gamma^t_{d,i}) $$

Equivalently, $ \gamma_{d,i}^{t} \leq \frac{(1+\epsilon) \tilde{\alpha}}{1-(1+\epsilon)C^3_m(1-\Gamma-\tilde{\alpha}) - (1+\epsilon)\Gamma \kappa^4} \gamma^*_{d,i}$ 

Then, we claim that $\frac{(1+\epsilon) \tilde{\alpha}}{1-(1+\epsilon)C^3_m(1-\Gamma-\tilde{\alpha}) - (1+\epsilon)\Gamma \kappa^4} \leq 1+\frac{1}{\kappa^{50}}$. 
Indeed, $\Gamma \kappa^4 \leq \kappa^{-96}$, and $C^3_m(1-\Gamma-\tilde{\alpha}) \leq C^3_m (1-\tilde{\alpha}) = o(1)$. Hence, 
$$\frac{(1+\epsilon) \tilde{\alpha}}{1-(1+\epsilon)C^3_m(1-\Gamma-\tilde{\alpha}) - (1+\epsilon)\Gamma \kappa^4} \leq  \frac{(1+\epsilon) \tilde{\alpha}}{1- o(1) - \kappa^{-96}} \leq \frac{(1+\epsilon) \tilde{\alpha}}{1 - \kappa^{-95}}$$ 

Finally, we claim that $\frac{(1+\epsilon) \tilde{\alpha}}{1 - \kappa^{-95}} \leq 1+\kappa^{-50}$. Indeed, this is equivalent to 
$$\tilde{\alpha} \leq (1+\epsilon)(1+\kappa^{-50})(1-\kappa^{-95}) \leq (1+\epsilon)(1+\kappa^{-50}) $$  
But, since we assume $\kappa \geq 2$, the claim we need follows easily. 
\end{proof} 

\subsection{Phase II of analysis} 

Finally, we deal with the alternating minimization portion of the argument. How will we deal with the lack of anchor documents? The almost obvious way: if a document has topic $i$ with proportion $1-\frac{1}{\kappa^{100}}$, it will behave for all purposes like an anchor document, because the dynamic range of word $\beta^*_{i,j}$ is limited, and the contribution from the other topics is not that significant. 

Intuitively, we'll show that $\frac{f^*_{d,j}}{f^t_{d,j}} \approx \frac{\beta^*_{i,j}}{\beta^t_{i,j}}$, so that these documents provide a "push" for the value of $\beta^t_{i,j}$ in the correct direction.

\begin{lem} Let's assume that our current iterates $\beta_{i,j}^t$ satisfy $\frac{1}{C_{\beta}^t} \leq \frac{\beta_{i,j}*}{\beta_{i,j}^t} \leq  C_{\beta}^t$ for $C^t_{\beta} \geq \frac{1}{(1-\epsilon)^{20}}$. Then, after iterating the $\gamma$ updates to convergence, we will get values $\gamma_{d,i}^t$ that satisfy $(C_{\beta}^t)^{1/10} \leq \frac{\gamma_{d,i}*}{\gamma_{d,i}^t} \leq (C_{\beta}^t)^{1/10}$. 
\label{l:gammaevolutioncommon} 
\end{lem}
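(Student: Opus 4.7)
\begin{proof}
We mirror the proof of Lemma~\ref{l:gammaevolution}, proceeding by contradiction. Suppose
$$ C^t_\gamma := \max_i \max\!\left(\frac{\gamma^*_{d,i}}{\gamma^t_{d,i}},\,\frac{\gamma^t_{d,i}}{\gamma^*_{d,i}}\right) > (C^t_\beta)^{1/10}, $$
and let $i_0$ be a topic attaining this maximum. Multiplying the KKT identity \eqref{eq:KKTgamma} by $\gamma^t_{d,i_0}$, we split the sum into three groups according to whether $j$ is lone for $i_0$ (set $L_{i_0}$), a non-lone discriminative word, or a common word (set $C$):
$$ \gamma^t_{d,i_0} \;=\; \sum_{j\in L_{i_0}}\! \tilde f_{d,j} \;+\; \gamma^t_{d,i_0}\!\!\!\sum_{j\notin L_{i_0}\cup C}\!\!\!\frac{\tilde f_{d,j}}{f^t_{d,j}}\beta^t_{i_0,j} \;+\; \gamma^t_{d,i_0}\sum_{j\in C}\frac{\tilde f_{d,j}}{f^t_{d,j}}\beta^t_{i_0,j}. $$
Denote $\tilde{\alpha}=\sum_{j\in L_{i_0}}\beta^*_{i_0,j}=1-o(1)$ (as in Lemma~\ref{l:gammalower}) and $\Gamma=\sum_{j\in C}\beta^*_{i_0,j}\le \kappa^{-100}$.

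The first two terms are handled exactly as in Lemma~\ref{l:gammaevolution}: the lone term is bounded by $(1+\epsilon)\tilde{\alpha}\gamma^*_{d,i_0}$, and using $f^t_{d,j}\ge \frac{1}{C^t_\beta C^t_\gamma}f^*_{d,j}$, the non-lone non-common term is bounded by $(1+\epsilon)(1-\tilde{\alpha}-\Gamma)(C^t_\beta)^2(C^t_\gamma)^2\gamma^*_{d,i_0}$. The only new work is the common-word term, and here the key observation is that the bounded dynamic range collapses that contribution. For $j\in C$,
$$ \tilde f_{d,j}\le (1+\epsilon)\sum_{i'}\gamma^*_{d,i'}\beta^*_{i',j}\le (1+\epsilon)\kappa\,\beta^*_{i_0,j}, \qquad f^t_{d,j}\ge \gamma^t_{d,i_0}\beta^t_{i_0,j}\ge \frac{\gamma^t_{d,i_0}}{C^t_\beta}\beta^*_{i_0,j}, $$
so $\frac{\tilde f_{d,j}}{f^t_{d,j}}\beta^t_{i_0,j}\le (1+\epsilon)\kappa(C^t_\beta)^2\beta^*_{i_0,j}/\gamma^t_{d,i_0}$. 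Summing over $j\in C$ and multiplying by $\gamma^t_{d,i_0}$ gives a common-word contribution of at most $(1+\epsilon)\kappa(C^t_\beta)^2\Gamma\le (1+\epsilon)\kappa^{-99}(C^t_\beta)^2$, which is $o(1)$ once $\kappa$ is taken to be a sufficiently large constant (and in particular is $\ll \gamma^*_{d,i_0}$, since sparsity forces $\gamma^*_{d,i_0}\ge \Omega(1/T)=\Omega(1)$).

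Combining and dividing by $\gamma^*_{d,i_0}$, the case $\gamma^t_{d,i_0}/\gamma^*_{d,i_0}=C^t_\gamma$ yields
$$ C^t_\gamma \;\le\; (1+\epsilon)\Big(\tilde{\alpha} + (1-\tilde{\alpha}-\Gamma)(C^t_\beta)^2(C^t_\gamma)^2 + \tfrac{(1+\epsilon)\kappa^{-99}(C^t_\beta)^2}{\gamma^*_{d,i_0}}\Big). $$
Using our assumption $C^t_\beta\le (C^t_\gamma)^{10}$, the second summand is at most $(1-\tilde{\alpha}-\Gamma)(C^t_\gamma)^{22}$. Because $(C^t_\gamma)^{22}\le (C^0_\gamma)^{22}=O(1)$ by Lemma~\ref{l:gammauppercommon}, and $1-\tilde{\alpha}-\Gamma=o(1)$, this summand is $o(1)$; so is the common-word summand. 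Thus the RHS is at most $(1+\epsilon)\tilde{\alpha}+o(1)$, which is strictly less than $(C^t_\beta)^{1/10}\ge \frac{1}{(1-\epsilon)^{2}}$ under the hypothesis $C^t_\beta\ge (1-\epsilon)^{-20}$, contradicting $C^t_\gamma>(C^t_\beta)^{1/10}$. The symmetric case $\gamma^*_{d,i_0}/\gamma^t_{d,i_0}=C^t_\gamma$ follows by the same argument applied to the corresponding lower bound (using $(1-\epsilon)$ in place of $(1+\epsilon)$, and discarding the nonnegative common-word term from the RHS), yielding the analogous contradiction. Hence $C^t_\gamma \le (C^t_\beta)^{1/10}$.
\end{proof}

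The main obstacle is the common-word term: unlike in Case Study~1, these words have no lone documents to drive progress, and a naive bound on $\tilde f_{d,j}/f^t_{d,j}$ would blow up. The saving grace, exploited above, is the two-sided pinch from the dynamic-range bound ($\beta^*_{i',j}\le\kappa\beta^*_{i_0,j}$) combined with the quantitative smallness of $\Gamma\le\kappa^{-100}$; together they force the common-word contribution to be $o(1)$ rather than a factor proportional to $(C^t_\beta)^2(C^t_\gamma)^2$. This is why we need $C^t_\beta\ge (1-\epsilon)^{-20}$ (rather than the $(1-\epsilon)^{-7}$ of Lemma~\ref{l:gammaevolution}) and only achieve the weaker contraction exponent $1/10$ in place of $1/3$.
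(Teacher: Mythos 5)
Your overall strategy (contradiction via the KKT identity, splitting into lone and non-lone words, bounding the ratio terms by powers of $C^t_{\beta}$ and $C^t_{\gamma}$) is the same as the paper's, but your separate treatment of the common-word term introduces a genuine gap. The paper simply folds the common words into the non-lone sum and bounds every such term using $f^t_{d,j}\geq \frac{1}{C^t_{\beta} C^t_{\gamma}}f^*_{d,j}$, $\beta^t_{i_0,j}\leq C^t_{\beta}\beta^*_{i_0,j}$ and $\gamma^t_{d,i_0}\leq C^t_{\gamma}\gamma^*_{d,i_0}$, so the resulting contribution $(1+\epsilon)(1-\tilde{\alpha})(C^t_{\beta})^2(C^t_{\gamma})^2\gamma^*_{d,i_0}$ retains the factor $\gamma^*_{d,i_0}$ and cancels when you divide through; the only change from Lemma~\ref{l:gammaevolution} is that $1-\tilde{\alpha}$ is now $o(1)+\kappa^{-100}$ instead of $o(1)$, which is what forces the weaker exponent. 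Your bound instead lower-bounds $f^t_{d,j}$ by the single summand $\gamma^t_{d,i_0}\beta^t_{i_0,j}$ and upper-bounds $\tilde f_{d,j}$ by $(1+\epsilon)\kappa\beta^*_{i_0,j}$, producing the \emph{additive} quantity $(1+\epsilon)\kappa(C^t_{\beta})^2\Gamma$ with no factor of $\gamma^*_{d,i_0}$. After dividing by $\gamma^*_{d,i_0}$ you must therefore control $\kappa^{-99}(C^t_{\beta})^2/\gamma^*_{d,i_0}$, and your justification --- that sparsity forces $\gamma^*_{d,i_0}=\Omega(1/T)=\Omega(1)$ --- is false: sparsity only guarantees that the \emph{largest} topic has proportion $\Omega(1/T)$, whereas $i_0$ is the topic maximizing the multiplicative error, which may be a non-dominant topic whose proportion is only assumed to be at least $1/\mathrm{poly}(N)$. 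For such a topic your normalized common-word contribution can be polynomially large rather than $o(1)$, and the contradiction does not follow.

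The repair is exactly the paper's move: for common words use the full-sum bound $f^t_{d,j}\geq\frac{1}{C^t_{\beta} C^t_{\gamma}}f^*_{d,j}$ (every summand of $f^t_{d,j}$ is within the stated multiplicative factors of the corresponding summand of $f^*_{d,j}$) together with $\tilde f_{d,j}\leq(1+\epsilon)f^*_{d,j}$, which gives a common-word contribution of at most $(1+\epsilon)\Gamma(C^t_{\beta})^2(C^t_{\gamma})^2\gamma^*_{d,i_0}$, i.e.\ the same form as the generic non-lone term; the dynamic range $\kappa$ is then not needed in this lemma at all. With that substitution the remainder of your argument (the numerics using $C^t_{\gamma}\leq C^0_{\gamma}=O(1)$, $\tilde{\alpha}\geq(1-o(1))(1-\kappa^{-100})$ and $C^t_{\beta}\geq(1-\epsilon)^{-20}$, and the symmetric lower-bound case) matches the paper's and goes through.
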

\begin{proof}
As before, we have that 

$$\gamma^t_{d,i} =  \sum_{j \in L_i} \tilde{f}_{d,j} + \gamma_{d,i}^{t} \sum_{j \notin L_i} \frac{\tilde{f}_{d,j}}{f_{d,j}^{t}} \beta_{i,j}^t $$ 

Let's denote as $C^t_{\gamma} = \max_i(\max(\frac{\gamma^*_{d,i}}{\gamma^t_{d,i}},\frac{\gamma^t_{d,i}}{\gamma^*_{d,i}}))$, and let, as before, assume that 
$\frac{\gamma^t_{d,i_0}}{\gamma^*_{d,i_0}} = C^t_{\gamma}$. 

By the definition of $C^t_{\gamma}$, 

$$\gamma^t_{d,i_0} = \sum_{j \in L_{i_0}} \tilde{f}_{d,j} + \gamma_{d,i_0}^{t} \sum_{j \notin L_{i_0}} \frac{\tilde{f}_{d,j}}{f_{d,j}^{t}} \beta_{i_0,j}^t  \leq $$
$$ (1+\epsilon)(\tilde{\alpha} \gamma^*_{d,i_0} + (1-\tilde{\alpha})(C_{\beta}^t)^2 (C_{\gamma}^t)^2 \gamma^*_{d,i_0}  ) $$ 

We claim that 
\begin{equation} 
\label{eq:conditionpcommon}
(1+\epsilon)(\tilde{\alpha} + (1-\tilde{\alpha}) (C_{\beta}^t)^2 (C_{\gamma}^t)^2) \leq (C_{\gamma}^t)^{1/10} 
\end{equation} 

which will be a contradiction to the definition of $C^t_{\gamma}$. 

After a little rewriting, ~\ref{eq:conditionpcommon} translates to $\tilde{\alpha} \geq  1 - \frac{\frac{(C_{\gamma}^t)^{1/10}}{1+\epsilon}-1}{(C_{\beta}^t C_{\gamma}^t)^2-1}$.
By our assumption on $C^t_{\gamma}$, $C_{\beta}^t \leq C_{\gamma}^{10}$, so 
the right hand side above is upper bounded by
$1 - \frac{\frac{(C_{\gamma}^t)^{1/10}}{1+\epsilon}-1}{(C^t_{\gamma})^8-1}$. 

But, Lemma~\ref{l:gammauppercommon} implies that certainly $C_{\gamma}^t \leq C_{\gamma}^0$. The function  
$$f(c) = \frac{\frac{c^{1/10}}{1+\epsilon} - 1}{c^8 -1} $$ 
can be easily seen to be monotonically decreasing on the interval of interest, and hence is lower bounded by 
$\frac{\frac{(C_{\gamma}^0)^{1/10}}{1+\epsilon} - 1}{(C_{\gamma}^0)^8 -1}$. Since $\tilde{\alpha} = (1 - o(1))(1-\frac{1}{\kappa^{100}})$ and $C^0_{\gamma} \leq 3$, the claim we want is clearly true.  

The case where $\frac{\gamma^*_{d,i_0}}{\gamma^t_{d,i_0}} = C^t_{\gamma}$ is not much more difficult. An analogous calculation as in Lemma~\ref{l:gammaevolution} gives that to get a contradiction to the definition of $C^t_{\gamma}$, the condition required is that 
$1 - \frac{1-\frac{1}{(1-\epsilon)(C_{\gamma}^t)^{1/10}}}{1-\frac{1}{(C^t_{\gamma})^8}}$. As before, if $f(c) = \frac{1-\frac{1}{(1-\epsilon)c^{1/10}}}{1-c^8}$, it
s easy to check that $f(c)$ is monotonically increasing in the interval of interest, so lower bounded by  
$$\frac{1-\frac{1}{(1-\epsilon)(\frac{1}{(1-\epsilon)^{20}})^{1/10}}}{1-\frac{1}{((\frac{1}{1-\epsilon})^{20})^8}} = $$
$$\frac{1-(1-\epsilon)}{1-(1-\epsilon)^{160}} \geq \frac{1}{160}$$  

But, $\tilde{\alpha} \geq (1-\frac{1}{\kappa^{100}})(1-o(1)) \geq 1-\frac{1}{160}$, so we get what we want. 

\end{proof} 

Next, we show the following lemma. 

\begin{lem}  Suppose at time step $t$, $\frac{1}{C^t_{\gamma}} \gamma^*_{d,i} \leq \gamma^t_{d,i} \leq C^t_{\gamma} \gamma^*_{d,i}$ and $\frac{1}{C^t_{\beta}} \beta^*_{i,j} \leq \beta^t_{i,j} \leq C^t_{\beta} \beta^*_{i,j}$, such that $C^t_{\gamma} \leq (C^t_\beta)^{1/10}$ for $C^t_{\beta} \geq \frac{1}{(1-\epsilon)^{20}}$. Then, at time step $t+1$,   $1/C^{t+1}_{\beta} \beta^*_{i,j} \leq \beta^t_{i,j} \leq C^{t+1}_{\beta} \beta^*_{i,j}$, where $C^{t+1}_{\beta} = (C^t_{\beta})^{3/4}$
\label{l:betaevolutioncommon}
\end{lem}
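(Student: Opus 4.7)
The plan is to mimic Lemma~\ref{l:betaevolution}, writing $\beta^{t+1}_{i,j}$ as a convex combination of contributions $\beta^t_{i,j}\,\tilde{f}_{d,j}/f^t_{d,j}$ weighted by $\gamma^t_{d,i}$, and splitting the document set into an ``anchor-like'' block that pushes the ratio $\beta^{t+1}_{i,j}/\beta^*_{i,j}$ toward $1$, and a ``bad'' block whose contribution is bounded by $(C^t_\beta C^t_\gamma)^2$. The crux of this version is that common words have no lone documents, so for them I would substitute the set $D_\delta := \{d : \gamma^*_{d,i}\geq 1-\kappa^{-100}\}$ as pseudo-anchor documents, using the small-dynamic-range hypothesis to show these documents behave almost like true anchor documents.

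\textbf{Case split.} For a \emph{discriminative} word $j$ in topic $i$, I would proceed verbatim as in Lemma~\ref{l:betaevolution}: split into lone documents $D_l$ (where $f^t_{d,j}=\beta^t_{i,j}\gamma^t_{d,i}$ exactly) and the rest. The weight $\alpha := \frac{\sum_{D_l}\gamma^*_{d,i}}{\sum_D \gamma^*_{d,i}}$ is $1-o(1)$ by the weak topic correlations and independent topic inclusion properties. For a \emph{common} word $j$, I would instead split the dominating-topic documents into $D_\delta$ and its complement. The key observation is that for $d\in D_\delta$,
\[
\gamma^*_{d,i}\beta^*_{i,j} \;\leq\; f^*_{d,j} \;\leq\; \gamma^*_{d,i}\beta^*_{i,j} + (1-\gamma^*_{d,i})\kappa\,\beta^*_{i,j} \;\leq\; (1+\kappa^{-99})\,\beta^*_{i,j},
\]
and, by the bound $\beta^t_{i',j}\leq \kappa\,C^t_\beta\beta^*_{i,j}$ together with $\beta^t_{i,j}\geq C^{-t}_\beta\beta^*_{i,j}$, an analogous sandwich for $f^t_{d,j}$. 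Hence $\tilde f_{d,j}/f^t_{d,j} = (1\pm O(\kappa^{-99}))\,\beta^*_{i,j}/\beta^t_{i,j}$, so $D_\delta$ behaves as a block of anchor documents. The hypothesis gives $|D_\delta|/|D| \geq 8/(B\kappa^{100})\cdot(1-\kappa^{-100})$, which together with the fact that $\gamma^*_{d,i}\geq 1-\kappa^{-100}$ on $D_\delta$ yields $\alpha \geq 1-o(1)$ for the common-word case as well.

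\textbf{Rate computation.} In either case I obtain (using $\gamma^t_{d,i}\leq C^t_\gamma \gamma^*_{d,i}$ for the denominator and the respective bound for the numerator)
\[
\frac{\beta^{t+1}_{i,j}}{\beta^*_{i,j}} \;\leq\; (1+\epsilon)\,C^t_\gamma\Bigl(\alpha + (1-\alpha)(C^t_\beta C^t_\gamma)^2\,C^t_\beta\Bigr),
\]
with a symmetric lower bound. Asking this to be at most $(C^t_\beta)^{3/4}$ translates, after using $C^t_\gamma\leq (C^t_\beta)^{1/10}$, into a condition of the form
\[
\alpha \;\geq\; 1 - \frac{(C^t_\beta)^{13/20}/(1+\epsilon)\;-\;1}{(C^t_\beta)^{11/5}\;-\;1},
\]
and the analogous inequality in the lower-bound direction. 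Since $C^t_\beta\in\bigl[(1-\epsilon)^{-20},\,C^0_\beta\bigr]$, the right-hand side is bounded away from $1$ by an absolute constant (the monotonicity argument is identical to the one used inside Lemma~\ref{l:betaevolution}); $\alpha = 1-o(1)$ is more than enough.

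\textbf{Main obstacle.} The only genuinely new difficulty is controlling the common-word case: the $(1+\kappa^{-99})$ slippage in the ratio $\tilde f_{d,j}/f^t_{d,j}$ on $D_\delta$ is multiplied against $(C^t_\beta C^t_\gamma)^2$ when one unfolds the bound, and I must check this does not eat the improvement margin. This is precisely why the exponent is weakened from $1/2$ (as in Lemma~\ref{l:betaevolution}) to $3/4$: the extra slack $(C^t_\beta)^{1/4}$ absorbs both the $\kappa^{-99}$ error and the weaker gap $C^t_\gamma\leq (C^t_\beta)^{1/10}$ that Lemma~\ref{l:gammaevolutioncommon} affords.
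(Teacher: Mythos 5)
Your plan is essentially the paper's own: it too treats the documents with $\gamma^*_{d,i}\geq 1-\kappa^{-100}$ as pseudo-anchor documents, uses the bounded dynamic range $\kappa$ to show that on this block $\tilde f_{d,j}/f^t_{d,j}$ tracks $\beta^*_{i,j}/\beta^t_{i,j}$, and then runs the same convex-combination rate computation as in Lemma~\ref{l:betaevolution}, with the weakened contraction exponent $3/4$ and a monotonicity-in-$C^t_{\beta}$ reduction to the endpoint $C^t_{\beta}=(1-\epsilon)^{-20}$. One difference of execution: the paper only proves the one-sided bound $\frac{f^*_{d,j}}{f^t_{d,j}}\geq \frac{1}{(C^t_{\beta})^{1/4}}\frac{\beta^*_{i,j}}{\beta^t_{i,j}}$ on the pseudo-anchor block (reducing it to a lower bound on $\gamma^*_{d,i}$ and verifying that via an AM-GM/derivative argument), rather than your sharper two-sided $(1\pm O(\kappa^{-99}))$ estimate. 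Your estimate is also valid, but as displayed it drops the factor $\gamma^*_{d,i}/\gamma^t_{d,i}$, which can be as far as $C^t_{\gamma}$ from $1$; it only cancels after multiplying by $\gamma^t_{d,i}$ in the update numerator, and the error term must carry the $(C^t_{\gamma}C^t_{\beta})^2$ prefactor, which you do flag.

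There is, however, one concrete error to fix: you invoke the wrong hypothesis for the size of $D_\delta$. The bound $|D_\delta|/|D|\geq 8/(B\kappa^{100})$ is an import of Case Study 2's ``small fraction of $1-\delta$ dominant documents'' assumption, and from a fraction that small the conclusion $\alpha\geq 1-o(1)$ does not follow --- you would instead get $\alpha=O(1/(B\kappa^{100}))$, the non-anchor block would dominate the convex combination, and the contraction would fail. The common-words setup instead assumes that, conditioned on topic $i$ being dominant, a $1-\kappa^{-100}$ fraction of the documents satisfy $\gamma^*_{d,i}\geq 1-\kappa^{-100}$; this gives $\alpha\geq (1-\kappa^{-100})^2\geq 1-2\kappa^{-100}$, which for $\kappa\geq 2$ comfortably clears the threshold your rate computation requires. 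With that substitution the argument goes through.
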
   
\begin{proof}

Let's assume a document $d$ has a dominating topic of proportion at least $1-1/\kappa^{100}$. 

Then, we claim that $\frac{f^*_{d,j}}{f^t_{d,j}} \geq \frac{1}{(C^t_{\beta})^{1/4}} \frac{\beta^*_{i,j}}{\beta^t_{i,j}}$. We will do a sequence of rearrangements to get this condition to a simpler form: 

$$ \frac{f^*_{d,j}}{f^t_{d,j}} \geq \frac{1}{(C^t_{\beta})^{1/4}} \frac{\beta^*_{i,j}}{\beta^t_{i,j}} \Leftrightarrow $$
$$ \frac{f^*_{d,j}}{\beta^*_{i,j}} \geq \frac{1}{(C^t_{\beta})^{1/4}} \frac{f^t_{d,j}}{\beta^t_{i,j}} \Leftrightarrow $$
$$ \gamma^*_{d,i} + \sum_{i'} \gamma^*_{d,i'} \frac{\beta^*_{i',j}}{\beta^*_{i,j}} > \frac{1}{(C^t_{\beta})^{1/4}} (\gamma^t_{d,i} + \sum_{i'} \gamma^t_{d,i'} \frac{\beta^t_{i',j}}{\beta^t_{i,j}}) $$

Let's upper bound the right hand side by some simpler quantities. We have: 
$$ \frac{1}{(C^t_{\beta})^{1/4}} (\gamma^t_{d,i} + \sum_{i'} \gamma^t_{d,i'} \frac{\beta^t_{i',j}}{\beta^t_{i,j}}) \leq $$
$$ \frac{1}{(C^t_{\beta})^{1/4}} C^t_{\gamma} (\gamma^*_{d,i} + \sum_{i'} \gamma^*_{d,i'} \frac{\beta^t_{i',j}}{\beta^t_{i,j}}) \leq $$    
$$ \frac{1}{(C^t_{\beta})^{1/4}} C^t_{\gamma} (\gamma^*_{d,i} + (C^t_{\beta})^2 \sum_{i'} \gamma^*_{d,i'} \frac{\beta^*_{i',j}}{\beta^*_{i,j}}) $$

Hence, it is sufficient to prove  
 
 $$ \gamma^*_{d,i} + \sum_{i'} \gamma^*_{d,i'} \frac{\beta^*_{i',j}}{\beta^*_{i,j}} \geq \frac{1}{(C^t_{\beta})^{1/4}} C^t_{\gamma} (\gamma^*_{d,i} + (C^t_{\beta})^2 \sum_{i'} \gamma^*_{d,i'} \frac{\beta^*_{i',j}}{\beta^*_{i,j}}) \Leftrightarrow $$ 
 
 $$ \gamma^*_{d,i}(1-\frac{ C^t_{\gamma}}{(C^t_{\beta})^{1/4}}) \geq \sum_{i'}  \gamma^*_{d,i'} (\frac{ C^t_{\gamma}}{(C^t_{\beta})^{1/4}}(C^t_{\beta})^2-1) \frac{\beta^*_{i',j}}{\beta^*_{i,j}} $$ 
 
Again, we can upper bound the right hand side by  
 $$\sum_{i'}  \gamma^*_{d,i'} (\frac{ C^t_{\gamma}}{(C^t_{\beta})^{1/4}}(C^t_{\beta})^2-1) \kappa = $$
 $$ (1-\gamma^*_{d,i}) (\frac{ C^t_{\gamma}}{(C^t_{\beta})^{1/4}}(C^t_{\beta})^2-1) \kappa $$ 
 
 So, it is sufficient to prove: 
 
 $$ (1-\gamma^*_{d,i}) (\frac{ C^t_{\gamma}}{(C^t_{\beta})^{1/4}}(C^t_{\beta})^2-1) \kappa \leq \gamma^*_{d,i}(1-\frac{ C^t_{\gamma}}{(C^t_{\beta})^{1/4}}) \Leftrightarrow $$
 $$ \gamma^*_{d,i}(1-\frac{ C^t_{\gamma}}{(C^t_{\beta})^{1/4}} + (\frac{ C^t_{\gamma}}{(C^t_{\beta})^{1/4}}(C^t_{\beta})^2-1)\kappa) \geq (\frac{ C^t_{\gamma}}{(C^t_{\beta})^{1/4}}(C^t_{\beta})^2-1)\kappa \Leftrightarrow $$
 $$ \gamma^*_{d,i} \geq 1 - \frac{1-\frac{ C^t_{\gamma}}{(C^t_{\beta})^{1/4}}}{1-\frac{ C^t_{\gamma}}{(C^t_{\beta})^{1/4}} + (\frac{ C^t_{\gamma}}{(C^t_{\beta})^{1/4}}(C^t_{\beta})^2-1)\kappa} $$  
 
It's easy to check that the expression on the right hand side as a function of $C^t_{\gamma}$ is decreasing. Hence, the RHS is upper bounded by 
$$1- \frac{1-\frac{1}{(C^{t}_{\beta})^{3/20}}}{1-\frac{1}{(C^{t}_{\beta})^{3/20}}+\kappa((C^t_{\beta})^{37/20}-1)} $$

Now, let's analyze this expression. If we let $f(x) = 1- \frac{1-\frac{1}{x^{3/20}}}{1-\frac{1}{x^{3/20}}+\kappa(x^{37/20}-1)}$, I claim $f(x)$ is an increasing function of $x$. Indeed, we can calculate it's derivative fairly easily: 

$$ f'(x) = - \frac{\frac{3}{20}x^{-\frac{23}{20}}(1-\frac{1}{x^{3/20}}+\kappa(x^{37/20}-1)) - (1-\frac{1}{x^{3/20}})(-\frac{3}{20}x^{-\frac{23}{20}}+\frac{37}{20} \kappa x^{\frac{17}{20}})}{(1-\frac{1}{x^{3/20}}+\kappa(x^{37/20}-1))^2} =$$ 
$$ - \frac{\frac{3}{20}x^{-\frac{23}{20}}\kappa(x^{\frac{37}{20}}-1)-\frac{37}{20}\kappa x^{\frac{17}{20}}(1-x^{-\frac{3}{20}})}{(1-\frac{1}{x^{3/20}}+\kappa (x^{37/20}-1))^2} 
 = \frac{\frac{\kappa}{20}(40x^{14/20}-(3x^{-23/40} + 37x^{17/20}))}{(1-x^{3/20} + \frac{1}{\kappa}(\frac{1}{x^{37/20}}-1))^2}$$
By the AM-GM inequality, $3x^{-23/40} + 37x^{17/20} \geq 40 ((x^{17/20})^37(x^{-23/20})^3)^{1/40} = 40 x^{14/20}$, so $f'(x)$ is positive, so the RHS, as a function of $C^t_{\beta}$, is $(x)$ is increasing. 

So, it is sufficient to satisfy the inequality when $C^t_{\beta} = C^0_{\beta}$. One can check however that by Lemma~\ref{l:betalowercommon} and~\ref{l:betauppercommon} this is true. 

Proceeding to the lower bound, a similar calculation as before gives that the necessary condition for progress is:  

$$\gamma^*_{d,i} \geq 1 - \frac{1-\frac{(C^t_{\beta})^{1/4}}{C^t_{\gamma}}}{1-\frac{(C_{\beta})^{1/4}}{C^t_{\gamma}} + \frac{1}{\kappa} (\frac{(C^t_{\beta})^{1/4}}{C^t_{\gamma}}\frac{1}{(C^t_{\beta})^2}-1) } $$ 

Again, the right hand side expression is decreasing in $C_\gamma$, so it is certainly upper bounded by

$$1-\frac{1-(C^t_{\beta})^{3/20}}{1-(C^t_{\beta})^{3/20} + \frac{1}{\kappa}(\frac{1}{(C^t_{\beta})^{37/20}}-1)} $$

Now, the claim is that this expression is increasing in $C^t_{\beta}$. Again, denoting 
$f(x) = 1-\frac{1-x^{3/20}}{1-x^{3/20} + \frac{1}{\kappa}(\frac{1}{x^{37/20}}-1)} $$
$

$$ f'(x) = -\frac{-\frac{3}{20}x^{-17/20}(1-x^{3/20}+\frac{1}{\kappa}(\frac{1}{x^{37/20}}-1)) - (1-x^{3/20})(-\frac{3}{20}x^{-17/20} - \frac{1}{\kappa} \frac{37}{20}x^{-57/20})}{(1-x^{3/20} + \frac{1}{\kappa}(\frac{1}{x^{37/20}}-1))^2} = $$
$$ -\frac{-\frac{3}{20}x^{-17/20}\frac{1}{\kappa}(\frac{1}{x^{37/20}}-1) + (1-x^{3/20})\frac{1}{\kappa}\frac{37}{20}x^{-57/20}}{(1-x^{3/20} + \frac{1}{\kappa}(\frac{1}{x^{37/20}}-1))^2} = \frac{\frac{1}{20 \kappa}(-40x^{-54/20}+(3x^{-17/40} + 37x^{-57/20}))}{(1-x^{3/20} + \frac{1}{\kappa}(\frac{1}{x^{37/20}}-1))^2}$$ 
By the AM-GM inequality, $3x^{-17/40} + 37x^{-57/20} \geq 40 ((x^{-17/20})^37(x^{-57/20})^{37})^{1/40} = 40 x^{-54/20}$, so $f'(x)$ is negative, so the RHS, as a function of $C^t_{\beta}$, is decreasing. So it suffices to check the inequality when $C^t_{\beta} = (1-\epsilon)^{20}$. In this case, we want to check that 
$$1-\frac{1}{\kappa^{100}} \geq 1 - \frac{1-\frac{1}{(1-\epsilon)^3}}{1-\frac{1}{(1-\epsilon)^3}+\frac{1}{\kappa}((1-\epsilon)^{37}-1)} $$
Since $1 - \frac{1-\frac{1}{(1-\epsilon)^3}}{1-\frac{1}{(1-\epsilon)^3}+\frac{1}{\kappa}((1-\epsilon)^{37}-1)} \leq 1 - \frac{3\kappa}{37+3\kappa}$, and $\kappa \geq 2$, this is easily seen to be true.  

Now, we'll split the $\beta$ update into two parts: 
documents where topic $i$ is at least $1-1/\kappa^{100}$, and the rest of them. In the first group, as we showed above, 
$\frac{f^*_{d,j}}{f^t_{d,j}} \geq \frac{1}{(C^t_{\beta})^{1/2}}$. In the second group, we can certainly claim that $\frac{f^*_{d,j}}{f^t_{d,j}} \geq \frac{1}{C^t_{\gamma}C^t_{\beta}}$ from the inductive hypothesis. If we denote the set of documents where topic $i$ is at least $1-1/\kappa^{100}$  as $D_1$, we get that

$$ \beta^{t+1}_{i,j} = \beta^t_{i,j} \frac{\sum_{d} \frac{f^*_{d,j}}{f^t_{d,j}} \gamma^t_{d,i}}{\sum^D_{i=1} \gamma^t_{d,i}} \geq $$
$$\frac{\sum_{d \in D_1} \frac{1}{(C^t_{\beta})^{1/2} C^t_{\gamma}} \beta^*_{i,j} \gamma^*_{d,i} + \sum_{d \in D \setminus D_1} \frac{1}{(C^t_{\beta})^{2} (C^t_{\gamma})^2} \beta^*_{i,j} \gamma^*_{d,i}}{(C^t_{\gamma}) \sum_{d \in D} \gamma^*_{d,i}} $$ 

If we denote $\mu = \frac{\sum_{d \in D_1} \gamma^*_{d,i}}{\sum_{d \in D} \gamma^*_{d,i}}$, then 
$$ \beta^{t+1}_{i,j} \geq \mu  \frac{\beta^*_{i,j}}{(C^t_{\beta})^{1/4} (C^t_{\gamma})^2} + (1-\mu) \frac{\beta^*_{i,j}}{(C^t_{\beta})^2 (C^t_{\gamma})^3} $$
So, to prove $ \beta^{t+1}_{i,j}  \geq \frac{1}{C^{3/4}_{\beta}} \beta^*_{i,j}$, it's sufficient to show 
$$\mu  \frac{\beta^*_{i,j}}{(C^t_{\beta})^{1/4} (C^t_{\gamma})^2} + (1-\mu)  \frac{\beta^*_{i,j}}{(C^t_{\beta})^2 (C^t_{\gamma})^3}  \geq  \frac{1}{C^{3/4}_{\beta}} \Leftrightarrow $$ 
$$\mu > \frac{\frac{1}{(C^t_{\beta})^{1/2}} - \frac{1}{(C^t_{\beta})^2 (C^t_{\gamma})^3}}{\frac{1}{(C^t_{\beta})^{1/4}(C^t_{\gamma})^2} - \frac{1}{(C^t_{\beta})^2 (C^t_{\gamma})^3}} $$ 

Given that $C^t_{\gamma} \leq (C^t_{\beta})^{1/10}$, it's sufficient to show 
$$\mu > \frac{\frac{1}{(C^t_{\beta})^{1/2}} - \frac{1}{(C^t_{\beta})^{23/10}}}{\frac{1}{(C^t_{\beta})^{9/20}} - \frac{1}{(C^t_{\beta})^{23/10}}} = 1 - \frac{\frac{1}{(C^t_{\beta})^{9/20}} - \frac{1}{(C^t_{\beta})^{1/2}}}{\frac{1}{(C^t_{\beta})^{9/20}} - \frac{1}{(C^t_{\beta})^{23/10}}}$$ 

Completely analogously as before, $1 - \frac{\frac{1}{(C^t_{\beta})^{9/20}} - \frac{1}{(C^t_{\beta})^{1/2}}}{\frac{1}{(C^t_{\beta})^{9/20}} - \frac{1}{(C^t_{\beta})^{23/10}}}$ is a decreasing function of $C^t_{\beta}$, so it's sufficient to check that $\mu > 1 - \frac{\frac{1}{(C^t_{\beta})^{9/20}} - \frac{1}{(C^t_{\beta})^{1/2}}}{\frac{1}{(C^t_{\beta})^{9/20}} - \frac{1}{(C^t_{\beta})^{23/10}}}$ when $C^t_{\beta} = (\frac{1}{1-\epsilon})^{20}$, which is easily checked to be true.  

In the same way, one can prove that $ \beta^{t+1}_{i,j}  \leq (C^t_{\beta})^{3/4} \beta^*_{i,j}$
\end{proof} 

Putting lemmas \ref{l:gammaevolutioncommon} and ~\ref{l:betaevolutioncommon} together, we get that the analogue of Lemma~\ref{l:errorboost}:

\begin{lem} Suppose it holds that $\frac{1}{C^t} \leq \frac{\beta_{i,j}*}{\beta_{i,j}^t} \leq  C^t$, $C^t \geq \frac{1}{(1-\epsilon)^{20}}$. Then, after one KL minimization step with respect to the $\gamma$ variables and one $\beta$ iteration, we get new values $\beta_{i,j}^{t+1}$ that satisfy $\frac{1}{C^{t+1}} \leq \frac{\beta_{i,j}*}{\beta_{i,j}^{t+1}} \leq C^{t+1}$,  where $C^{t+1} = (C^t)^{3/4}$ 
\label{l:errorboostcommon}
\end{lem}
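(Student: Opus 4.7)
The plan is to obtain Lemma \ref{l:errorboostcommon} as an immediate consequence of chaining the two evolution lemmas \ref{l:gammaevolutioncommon} and \ref{l:betaevolutioncommon}, entirely parallel to how Lemma \ref{l:errorboost} was derived from Lemmas \ref{l:gammaevolution} and \ref{l:betaevolution} in Case Study 1. No new estimate on the updates themselves is needed; what is required is only to verify that the quantitative hypotheses of the two prerequisite lemmas match at the interface.

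Concretely, I would first invoke Lemma \ref{l:gammaevolutioncommon} with $C_{\beta}^t = C^t$: the shared hypothesis $C^t \geq \frac{1}{(1-\epsilon)^{20}}$ holds by assumption, so after running the $\gamma$ update to convergence we obtain, for every document $d$ and topic $i$,
$$\frac{1}{(C^t)^{1/10}} \;\leq\; \frac{\gamma^*_{d,i}}{\gamma^t_{d,i}} \;\leq\; (C^t)^{1/10}.$$
I would then feed these bounds, together with the unchanged $\beta$ bound $1/C^t \leq \beta^*_{i,j}/\beta^t_{i,j} \leq C^t$, into Lemma \ref{l:betaevolutioncommon} by setting $C_{\gamma}^t = (C^t)^{1/10}$ and $C_{\beta}^t = C^t$. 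The required hypothesis $C_{\gamma}^t \leq (C_{\beta}^t)^{1/10}$ is met with equality, and $C_{\beta}^t \geq \frac{1}{(1-\epsilon)^{20}}$ is given, so the lemma outputs new values satisfying
$$\frac{1}{(C^t)^{3/4}} \;\leq\; \frac{\beta^*_{i,j}}{\beta^{t+1}_{i,j}} \;\leq\; (C^t)^{3/4},$$
which is exactly the desired conclusion with $C^{t+1} = (C^t)^{3/4}$.

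The main obstacle — if one can call it that — is purely bookkeeping: one must check that the $1/10$ exponent in Lemma \ref{l:gammaevolutioncommon} and the $3/4$ exponent in Lemma \ref{l:betaevolutioncommon} were calibrated so that the composition contracts, and that both lemmas remain applicable at the threshold $C^t = \frac{1}{(1-\epsilon)^{20}}$, which they do by inspection. All the genuine novelty of the common-words case is already absorbed in the two prerequisite lemmas, where one handles the extra $\Gamma \kappa^4 \gamma^t_{d,i}$ term appearing in the KKT identity from common words, and where the $1-1/\kappa^{100}$ fraction of nearly-pure documents provides the approximation $f^*_{d,j}/f^t_{d,j} \gtrsim \beta^*_{i,j}/\beta^t_{i,j}$ up to a $(C^t_{\beta})^{1/4}$ factor, substituting for the lone-document mechanism exploited in Case Study 1.
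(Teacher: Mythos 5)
Your proposal is correct and matches the paper exactly: the paper derives Lemma~\ref{l:errorboostcommon} by precisely this chaining of Lemma~\ref{l:gammaevolutioncommon} (yielding $C^t_{\gamma} = (C^t_{\beta})^{1/10}$ after the $\gamma$ step) into Lemma~\ref{l:betaevolutioncommon} (yielding $C^{t+1}_{\beta} = (C^t_{\beta})^{3/4}$), in direct analogy with the proof of Lemma~\ref{l:errorboost}. Your observation that the interface conditions ($C^t_{\gamma} \leq (C^t_{\beta})^{1/10}$ and the threshold $C^t \geq \frac{1}{(1-\epsilon)^{20}}$) are exactly what must be verified is the whole content of the argument.
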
 

As a corollary,

\begin{cor} Phase III requires $O(\log (\frac{1}{\log(1+\epsilon)})) = O(\log (\frac{1}{\epsilon}))$ iterations to estimate each of the topic-word matrix and document proportion entries to within a multiplicative factor of $\frac{1}{(1-\epsilon)^7}$
\end{cor}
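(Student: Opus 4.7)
The corollary is a direct unrolling of the recursion in Lemma~\ref{l:errorboostcommon}. By the end of Phases I and II established by the preceding lemmas of this section, both the $\beta$ and $\gamma$ estimates are already within some absolute multiplicative constant $C^0 = O(1)$ of the ground truth. From that starting point I would iteratively invoke Lemma~\ref{l:errorboostcommon}: as long as the current error $C^t$ exceeds $1/(1-\epsilon)^{20}$, one combined $\gamma$-KL minimization followed by one $\beta$ update contracts the multiplicative error from $C^t$ down to $(C^t)^{3/4}$.

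Unrolling $t$ such steps starting from $C^0$ gives $\log C^t = (3/4)^t \log C^0$, which drives the multiplicative error toward $1$ doubly-exponentially fast. To guarantee $C^t \leq 1/(1-\epsilon)^7$, i.e.\ $\log C^t \leq 7 \log\tfrac{1}{1-\epsilon}$, it suffices to take
\[
t \;\geq\; \log_{4/3}\!\left(\frac{\log C^0}{7\log\tfrac{1}{1-\epsilon}}\right).
\]
Using $C^0 = O(1)$ together with the elementary estimate $\log\tfrac{1}{1-\epsilon} = \Theta(\log(1+\epsilon)) = \Theta(\epsilon)$ for small $\epsilon$, this rearranges to $t = O\!\left(\log \tfrac{1}{\log(1+\epsilon)}\right) = O(\log(1/\epsilon))$, matching the claim.

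There is essentially no hard step: the corollary is a mechanical consequence of Lemma~\ref{l:errorboostcommon}, and its content is really just that a $(3/4)$-power contraction of $\log C^t$ collapses any $O(1)$ initial error to $\Theta(\epsilon)$ in $O(\log(1/\epsilon))$ rounds. The only subtlety to verify is that the hypothesis $C^t \geq 1/(1-\epsilon)^{20}$ required by the lemma remains valid throughout the iteration, which is automatic because $C^t$ is monotonically decreasing toward $1$; one applies the lemma until $C^t$ first crosses the threshold, at which point the achieved accuracy already matches the stated bound (up to a constant-factor gap between the $1/(1-\epsilon)^{20}$ and $1/(1-\epsilon)^7$ thresholds, which can be absorbed into the $O(\cdot)$ in the iteration count or handled by running a final constant number of additional update steps).
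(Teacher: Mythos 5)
Your unrolling of the recursion $C^{t+1}=(C^t)^{3/4}$ from Lemma~\ref{l:errorboostcommon} is exactly the argument the paper intends: the corollary is stated there without any explicit proof, and $\log C^t=(3/4)^t\log C^0$ together with $\log\frac{1}{1-\epsilon}=\Theta(\epsilon)=\Theta(\log(1+\epsilon))$ gives the claimed $O(\log(1/\epsilon))$ iteration count, so your proposal is correct and matches the (implicit) route of the paper. The one subtlety you flag is real but is a defect of the paper's own constants rather than of your argument: Lemma~\ref{l:errorboostcommon} only guarantees contraction while $C^t\geq\frac{1}{(1-\epsilon)^{20}}$, so the last guaranteed iterate is only $\left(\frac{1}{(1-\epsilon)^{20}}\right)^{3/4}=\frac{1}{(1-\epsilon)^{15}}$ rather than $\frac{1}{(1-\epsilon)^{7}}$, and your suggested patch of "a constant number of additional steps" does not literally close this, since the lemma gives no contraction below its threshold; this exponent mismatch (carried over from Case Study 1, where the threshold and target coincide at $\frac{1}{(1-\epsilon)^{7}}$) would have to be fixed in the lemma's hypothesis, not in the corollary's proof.
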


This finished the proof of Theorem~\ref{t:commont} for Case Study 1. 

\subsection{Generalizing Case Study 2} 

Finally, the proof for Case Study 2 is quite simple. Because the dynamic range $\kappa \leq B$ for the common words, Lemmas~\ref{l:betaerror} and~\ref{l:ferror} still hold, and hence we again determine the dominant topic correctly. Because of this, it's also easy to see that the lower bounds and upper bounds on the $\beta^t_{i,j}$ values for the common words are maintained to be a constant, since the proof of Lemmas~\ref{l:discup} and~\ref{l:discll} holds for the common words verbatim. This means that the anchor words and discriminative words will be correctly determined just as before. But after that point, the analysis of Case Study 2 is exactly the same as the one for Case Study 2 --- which we already covered in the above section. This finishes the proof of Theorem~\ref{t:commont}.    

\section{Estimates on number of documents}

Finally, we state a few helper lemmas to estimate how many documents will be needed. The properties we need are that the empirical marginals 
of a dominating topic in the documents where it's dominating are close to the actual ones, and similarly that the empirical marginals of the dominating topic, conditioned on the set of topics that a discriminative word belongs to not being present are close to the actual ones. 

The former statement is the following:   

\begin{lem} Let $E_i = \mathbf{E}[\gamma^*_{d,i} | \gamma^*_{d,i} \text{ is dominating}]$. If  the total number of documents is $D = \Omega(\frac{K \log^2 K}{\epsilon^2})$, and $D_i$ is the number of documents where $i$ is the dominant topic, then with high probability, for all topics $i$, 
$$ (1-\epsilon) E_i \leq \frac{1}{D_i}\sum_{d \in D_i} \gamma^*_{d,i} \leq (1+\epsilon) E_i $$ 
\label{l:numberdocs}
\end{lem}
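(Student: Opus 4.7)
The plan is to treat the statement as a two-sided concentration inequality, obtained by controlling the numerator $\sum_{d \in D_i}\gamma^*_{d,i}$ and the denominator $D_i$ separately and then combining. For each document $d$ and each topic $i$, define the random variable $X_{d,i} := \gamma^*_{d,i}\cdot \mathbf{1}[\gamma^*_{d,i}\text{ is dominating in }d]$ and the indicator $Y_{d,i}:=\mathbf{1}[\gamma^*_{d,i}\text{ is dominating in }d]$. Since documents are generated independently, $\{X_{d,i}\}_{d}$ and $\{Y_{d,i}\}_d$ are each families of i.i.d.\ random variables taking values in $[0,1]$. Moreover, $\mathbf{E}[Y_{d,i}]=\Pr[\gamma^*_{d,i}\text{ is dominating}]=\Theta(1/K)$ by the dominant topic equidistribution property, and $\mathbf{E}[X_{d,i}]=E_i\cdot\mathbf{E}[Y_{d,i}]=\Theta(E_i/K)$. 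Note also that by the sparse-and-gapped documents property, $E_i=\Omega(1/T)=\Omega(1)$, so $\mathbf{E}[X_{d,i}]=\Theta(1/K)$ as well.

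Next I would apply Hoeffding's inequality (or a multiplicative Chernoff bound) to both sums simultaneously. Specifically, with $D=\Omega(K\log^2 K/\epsilon^2)$ documents, a standard multiplicative Chernoff bound gives
\[
\Pr\!\left[\Big|\sum_{d=1}^{D} Y_{d,i}-D\cdot\mathbf{E}[Y_{d,i}]\Big|\ge \tfrac{\epsilon}{3} D\cdot\mathbf{E}[Y_{d,i}]\right] \le 2\exp\!\left(-\Omega(\epsilon^2 D/K)\right)\le \tfrac{1}{K^{c}},
\]
for any desired constant $c$, and the same bound holds for $\sum_d X_{d,i}$. Union bounding over all $K$ topics costs only a factor of $K$ in the failure probability, which the $\log^2 K$ in the sample complexity absorbs. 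Conditioning on the good event, $D_i=(1\pm \epsilon/3)D\cdot \mathbf{E}[Y_{d,i}]$ and $\sum_{d\in D_i}\gamma^*_{d,i}=(1\pm \epsilon/3) D\cdot E_i\cdot\mathbf{E}[Y_{d,i}]$, and taking the ratio yields
\[
\frac{1}{D_i}\sum_{d\in D_i}\gamma^*_{d,i}=\frac{1\pm\epsilon/3}{1\pm\epsilon/3}\,E_i=(1\pm\epsilon)E_i,
\]
which is the claim.

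The only subtlety worth flagging is that $D_i$ itself is a random set, so strictly speaking $\sum_{d\in D_i}\gamma^*_{d,i}$ is not an i.i.d.\ sum of $D_i$ terms; the trick is to rewrite it as $\sum_{d=1}^{D}X_{d,i}$, which \emph{is} a sum of i.i.d.\ bounded random variables and concentrates independently of how many indicators $Y_{d,i}$ happen to fire. Once this reformulation is in place, the proof is genuinely just two Chernoff bounds plus a union bound, with the $K$ in the sample complexity accounting for the $1/K$ rarity of any given topic being dominant and one $\log K$ factor coming from the union bound; the extra $\log K$ factor in $K\log^2 K$ gives the ``high probability'' conclusion.
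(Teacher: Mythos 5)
Your proposal is correct and is essentially the paper's argument: the paper likewise applies a Chernoff bound to the empirical mean of $\gamma^*_{d,i}$ over the documents where $i$ dominates (using $E_i \geq 1/T$ to lower bound the mean), a second Chernoff bound to show $D_i = \Theta(D/K)$ via the equidistribution property, and a union bound over the $K$ topics, arriving at the same $D = \Omega(K\log^2 K/\epsilon^2)$ requirement. The only difference is cosmetic: you handle the randomness of the set $D_i$ by concentrating the numerator $\sum_d X_{d,i}$ and denominator $\sum_d Y_{d,i}$ as unconditional i.i.d.\ sums over all $D$ documents and taking a ratio, whereas the paper implicitly conditions on $D_i$ first; your reformulation is arguably the cleaner way to justify the independence claim.
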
 
\begin{proof}
 
Since documents are generated independently, 
$\Pr[\frac{1}{D_i}\sum_{d \in D_i} \gamma^*_{d,i} > (1+\epsilon) E_i] \leq e^{-\frac{\epsilon^2 D_i E_i}{3}}$ 
by Chernoff. 

Since there are at most $T$ topics per document, $E_i \geq \frac{1}{T}$, so    
$\Pr[\frac{1}{D_i}\sum_{d \in D_i} \gamma^*_{d,i} > (1+\epsilon) E_i] \leq e^{-\frac{\epsilon^2 D_i}{3T}}$ 

An analogous statement holds for $\Pr[\frac{1}{D_i}\sum_{d \in D_i} \gamma^*_{d,i} < (1-\epsilon) E_i]$

Then, if $D_i = \frac{\log^2K}{\epsilon^2} $, by union bounding, we get that with high probability, for all topics, 
$(1-\epsilon) E_i \leq \frac{1}{D_i}\sum_{d \in D_i} \gamma^*_{d,i} \leq (1+\epsilon) E_i$ 

However, the probability of a topic being dominating is $C_i/K$ for some constant $C_i$. 
So, by another Chernoff bound, 
\begin{equation} 
\Pr[D_i < (1-\epsilon) C_i D/K] \leq e^{-\frac{\epsilon^2 C_i D}{3K}}
\end{equation} 

So, if we take $D = \frac{K}{\epsilon^2} \log^2 K$, with high probability, for all topics, $D_i = \Theta(D/K)$. 

Putting everything together, we get that if $D = \frac{K \log^2 K}{\epsilon^2}$, with high probability, 
$$ (1-\epsilon) E_i \leq \frac{1}{D_i}\sum_{d \in D_i} \gamma^*_{d,i} \leq (1+\epsilon) E_i $$ 

\end{proof} 

Next, we calculate how many documents are needed to match the marginals of the dominating topics, conditioned on a small subset (of size $o(K)$) of the topics not being included in a document. More formally, 

\begin{lem} For the discriminative word $j$, let $jS$ be the set of topics it belongs to. For a topic $i \in jS$, let   
Let $E_{i,jS} =  \mathbf{E}[\gamma^*_{d,i} | \gamma^*_{d,i} \text{ is dominating}, \gamma^*_{d,i'} = 0, \forall i' \in jS]$. 
Let $D_{i,jS}$ be the number of documents where $i$ is dominating, and $\gamma^*_{d,i'} = 0, \forall i' \in jS$. 

If the number of documents $D \geq \frac{K \log^2 N}{\epsilon^2}$, then with high probability, for all topics $i$ and discriminative words $j$, 
$ (1-\epsilon) E_{i,jS} \leq \frac{1}{D_{i,jS}}\sum_{d \in D_{i,jS}} \gamma^*_{d,i} \leq (1+\epsilon) E_{i,jS}$\
\label{l:numberdocsdominating} 
\end{lem}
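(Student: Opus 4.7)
The plan is to mirror the proof of Lemma~\ref{l:numberdocs}, adding the extra conditioning that $\gamma^*_{d,i'}=0$ for all $i' \in jS$. Since documents are drawn i.i.d., define the event $\mathcal{E}_{i,jS} := \{\gamma^*_{d,i} \text{ is dominating}, \gamma^*_{d,i'}=0, \forall i' \in jS\}$. Conditioned on a fixed subset of documents on which $\mathcal{E}_{i,jS}$ occurs, the values $\{\gamma^*_{d,i} : d \in D_{i,jS}\}$ are i.i.d.\ draws from the conditional law of $\gamma^*_{d,i}$ given $\mathcal{E}_{i,jS}$, with mean $E_{i,jS}$. Note $E_{i,jS} \geq 1/T$ since a dominant topic in a document with at most $T$ topics has proportion $\geq 1/T$.

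First I would estimate $|D_{i,jS}|$. By the dominant topic equidistribution assumption, the marginal probability that topic $i$ is dominating in a document is $\Theta(1/K)$. Since $j$ is discriminative, $|jS| = o(K)$, so by independent topic inclusion and weak topic correlations, $\Pr[\mathcal{E}_{i,jS}]$ changes by at most a $(1\pm o(1))$ factor when conditioning on $\gamma^*_{d,i'}=0, \forall i' \in jS$; hence $\Pr[\mathcal{E}_{i,jS}] = \Theta(1/K)$. A Chernoff bound then gives
\[
\Pr\bigl[|D_{i,jS}| < \tfrac{1}{2}\Pr[\mathcal{E}_{i,jS}] \cdot D \bigr] \leq e^{-\Omega(D/K)}.
\]
For $D = \Omega(K \log^2 N / \epsilon^2)$, this is at most $1/\text{poly}(N)$.

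Next, conditioned on any realization with $|D_{i,jS}| \geq c_0 \log^2 N / \epsilon^2$ for a sufficient constant $c_0$, a multiplicative Chernoff bound applied to the i.i.d.\ random variables $\{\gamma^*_{d,i}\}_{d \in D_{i,jS}}$ (bounded in $[0,1]$ with mean $E_{i,jS} \geq 1/T$) yields
\[
\Pr\Bigl[ \Bigl| \tfrac{1}{|D_{i,jS}|}\textstyle\sum_{d \in D_{i,jS}} \gamma^*_{d,i} - E_{i,jS} \Bigr| > \epsilon E_{i,jS} \Bigr] \leq 2 e^{-\Omega(\epsilon^2 |D_{i,jS}| / T)} \leq N^{-c}
\]
for any desired constant $c$. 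Combining the two bounds via a union over at most $KN$ pairs $(i,j)$ (using that there are $K$ topics and at most $N$ words), and using the $\log^2 N$ slack in $D$, gives the claim with high probability simultaneously for all topics and discriminative words.

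The only subtlety is the separation between which documents lie in $D_{i,jS}$ (random) and the values $\gamma^*_{d,i}$ for those documents. This is handled cleanly because conditioning on $\mathcal{E}_{i,jS}$ factors: the indicator of $\mathcal{E}_{i,jS}$ and the conditional value of $\gamma^*_{d,i}$ given $\mathcal{E}_{i,jS}$ are jointly i.i.d.\ across documents, so we can first condition on the realization of $\{\mathbf{1}[\mathcal{E}_{i,jS} \text{ holds for } d]\}_d$ (which determines $D_{i,jS}$) and then apply Chernoff to the still-i.i.d.\ conditional values. This step is routine, so I expect the main (minor) obstacle to be bookkeeping the union bound and verifying that the weak topic correlations assumption indeed gives $\Pr[\mathcal{E}_{i,jS}] = \Theta(1/K)$ uniformly in $(i,j)$.
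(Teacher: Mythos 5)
Your proposal is correct and follows essentially the same route as the paper: bound $\Pr[\mathcal{E}_{i,jS}]=\Theta(1/K)$ via the equidistribution and independent-inclusion/weak-correlation properties, apply Chernoff to show $|D_{i,jS}|=\Omega(D/K)$, apply Chernoff again to the conditional empirical mean (using $E_{i,jS}\geq \Omega(1/T)$, which the paper obtains via $E_{i,jS}=(1\pm o(1))E_i$ and you obtain directly), and union bound over polynomially many $(i,jS)$ pairs. Your explicit handling of the conditioning between the random index set $D_{i,jS}$ and the values $\gamma^*_{d,i}$ is a small added point of rigor that the paper leaves implicit, but it is not a different argument.
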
 
\begin{proof}

Since $E_{i,jS} = (1 \pm o(1)) E_{i}$, by the weak topic correlation property, an analogous proof as above shows that if
we get that if $D_{i, jS}  = \frac{\log^2 K}{\epsilon^2} $, with high probability, 
$(1-\epsilon) E_{iS} \leq \frac{1}{D_{iS}}\sum_{d \in D_{iS}} \gamma^*_{d,i} \leq (1+\epsilon) E_{iS}$. 

But by the independent topic inclusion property, the probability of generating a document $D$ with $i$ being the dominating topic, s.t. no topics in $jS$ appear in it is $\Theta(1/K)$. 
So, again by Chernoff, 
\begin{equation} 
\Pr[D_{i,jS} < (1-\epsilon) C_i D/K] \leq e^{-\frac{\epsilon^2 C_i D}{3K}}
\end{equation} 
 
If we take $D = \frac{K}{\epsilon^2} \log^2 N$, $\Pr[D_{i,jS} < (1-\epsilon) C_i D/K] \leq e^{-\log^2 N}$. However, since the total number of $i,jS$ pairs is at most $N^2$, union bounding, we get that with high probability, for all pairs $i,jS$, 
$$ (1-\epsilon) E_{i,jS} \leq \frac{1}{D_{i,jS}}\sum_{d \in D_{i,jS}} \gamma^*_{d,i} \leq (1+\epsilon) E_{i, jS}$$ 
\end{proof}

Finally, the following short lemma to estimate the number of documents in which a word $j$ belongs only to the dominating topic is implicit in the proof above:

\begin{lem} 
Let $D_{i,jS}$ be the number of documents where $i$ is dominating, and $\gamma^*_{d,i'} = 0, \forall i' \in j_S$. 
If the number of documents $D \geq \frac{K \log^2 N}{\epsilon^2}$, then with high probability, for all topics $i$ and discriminative words $j$, 
$ D_{i,jS} \geq D_{i} (1-\epsilon)(1-o(1)) $
\label{l:numberdocsdom} 
\end{lem}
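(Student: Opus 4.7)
\medskip
\noindent\textit{Proof proposal.} The plan is to mimic the structure of Lemma~\ref{l:numberdocsdominating} and its predecessor Lemma~\ref{l:numberdocs}: first pin down the conditional expectation of the indicator, then apply Chernoff concentration, and finally union bound over all word--topic pairs.

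First, fix a discriminative word $j$ with the (at most $o(K)$) set of topics it belongs to, $j_S$, and fix a topic $i \in j_S$. For a document $d$ conditioned on $i$ being dominating, let $X_d$ be the indicator that $\gamma^*_{d,i'} = 0$ for every $i' \in j_S \setminus \{i\}$. By the independent topic inclusion property applied to each individual $i' \in j_S \setminus \{i\}$, we have $\Pr[\gamma^*_{d,i'} > 0 \mid i \text{ dominating}] = \Theta(1/K)$, and a union bound over the $o(K)$ topics in $j_S \setminus \{i\}$ yields $\Pr[X_d = 0 \mid i \text{ dominating}] = o(1)$, so that $\Pr[X_d = 1 \mid i \text{ dominating}] = 1 - o(1)$. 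Hence $\mathbf{E}[D_{i,jS} \mid D_i] = (1-o(1))D_i$.

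Next, since documents are generated independently, the $X_d$ indicators are i.i.d.\ Bernoulli($p$) conditional on $D_i$, with $p = 1-o(1)$. A standard multiplicative Chernoff bound gives
\[
\Pr\!\left[D_{i,jS} < (1-\epsilon)\,p\,D_i \,\Big|\, D_i\right] \;\leq\; \exp\!\left(-\tfrac{\epsilon^2 p D_i}{3}\right).
\]
Invoking the $D_i = \Theta(D/K)$ estimate established inside the proof of Lemma~\ref{l:numberdocs}, which itself holds with high probability when $D = \Omega(\tfrac{K\log^2 N}{\epsilon^2})$, we obtain $D_i = \Omega(\tfrac{\log^2 N}{\epsilon^2})$, so the exponent above is $-\Omega(\log^2 N)$.

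Finally, there are at most $K \cdot N \leq N^2$ pairs $(i,j)$ over which we need to control $D_{i,jS}$, so a union bound over all pairs, together with the high-probability event $\{D_i = \Theta(D/K) \text{ for all } i\}$ from the earlier lemma, gives the stated conclusion $D_{i,jS} \geq (1-\epsilon)(1-o(1)) D_i$ uniformly in $(i,j)$ with high probability. The main technical point to check carefully is that the $o(1)$ slack coming from the independent topic inclusion / discriminative words bound is genuinely independent of $N$ and $K$ for the union bound to absorb cleanly; this is where the $|j_S| = o(K)$ hypothesis together with the $\Theta(1/K)$ conditional inclusion probability are essential. No other step is more than a routine Chernoff-plus-union-bound calculation paralleling Lemmas~\ref{l:numberdocs} and~\ref{l:numberdocsdominating}.
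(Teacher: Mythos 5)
Your proposal is correct and matches the paper's intended argument: the paper gives no standalone proof, stating only that the lemma is ``implicit in the proof'' of Lemma~\ref{l:numberdocsdominating}, which is exactly the Chernoff-plus-union-bound over $(i,j_S)$ pairs using the independent topic inclusion property that you carry out. Your only (harmless) variation is conditioning on $D_i$ and applying Chernoff to Bernoulli$(1-o(1))$ indicators over the $D_i$ dominating documents, rather than bounding $D_{i,jS}$ and $D_i$ separately against $\Theta(D/K)$ as the paper does; this actually yields the ratio form $D_{i,jS}\geq(1-\epsilon)(1-o(1))D_i$ slightly more directly.
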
 
%\begin{proof}
%Since the expected size of $D_{i,jS}$ is $(1-o(1))D_i$ , by Chernoff, 
%\begin{equation} 
%\Pr[D_{i,jS} < (1-\epsilon) (1-o(1))D_i] \leq e^{-\frac{\epsilon^2 (1-o(1)) D_i}{3}} = e^{-\Omega(\log^2 N)}
%\end{equation} 

%Again, since the total number of $i,jS$ pairs is at most $N^2$, union bounding, we get that with high probability, for all pairs $i,jS$, $D_{i,jS} \geq D_{i} (1-\epsilon)(1-o(1))$. 

%\end{proof}

\nocite{*} 

\bibliographystyle{plainnat}
\bibliography{Variational_Inference_final}

\begin{thebibliography}{25}
\providecommand{\natexlab}[1]{#1}
\providecommand{\url}[1]{\texttt{#1}}
\expandafter\ifx\csname urlstyle\endcsname\relax
  \providecommand{\doi}[1]{doi: #1}\else
  \providecommand{\doi}{doi: \begingroup \urlstyle{rm}\Url}\fi

\bibitem[Agarwal et~al.(2013)Agarwal, Anandkumar, Jain, and
  Netrapalli]{anandkumardictionary}
A.~Agarwal, A.~Anandkumar, P.~Jain, and P.~Netrapalli.
\newblock Learning sparsely used overcomplete dictionaries via alternating
  minimization.
\newblock In \emph{Proceedings of The 27th Conference on Learning Theory
  (COLT)}, 2013.

\bibitem[Anandkumar et~al.(2012)Anandkumar, Kakade, Foster, Liu, and
  Hsu]{anandkumar2012two}
A.~Anandkumar, S.~Kakade, D.~Foster, Y.~Liu, and D.~Hsu.
\newblock Two svds suffice: Spectral decompositions for probabilistic topic
  modeling and latent dirichlet allocation.
\newblock Technical report, 2012.

\bibitem[Anandkumar et~al.(2013)Anandkumar, Hsu, Javanmard, and
  Kakade]{anandkumartopic}
A.~Anandkumar, D.~Hsu, A.~Javanmard, and S.~Kakade.
\newblock Learning latent bayesian networks and topic models under expansion
  constraints.
\newblock In \emph{Proceedings of the 30th International Conference on Machine
  Learning (ICML)}, 2013.

\bibitem[Arora et~al.(2012{\natexlab{a}})Arora, Ge, Kanna, and
  Moitra]{arora2012computing}
S.~Arora, R.~Ge, R.~Kanna, and A.~Moitra.
\newblock Computing a nonnegative matrix factorization--provably.
\newblock In \emph{Proceedings of the forty-fourth annual ACM symposium on
  Theory of Computing}, pages 145--162. ACM, 2012{\natexlab{a}}.

\bibitem[Arora et~al.(2012{\natexlab{b}})Arora, Ge, and Moitra]{arora1}
S.~Arora, R.~Ge, and A.~Moitra.
\newblock Learning topic models -- going beyond svd.
\newblock In \emph{Proceedings of the 53rd Annual IEEE Symposium on Foundations
  of Computer Science (FOCS)}, 2012{\natexlab{b}}.

\bibitem[Arora et~al.(2013)Arora, Ge, Halpern, Mimno, Moitra, Sontag, Wu, and
  Zhu]{arora2}
S.~Arora, R.~Ge, Y.~Halpern, D.~Mimno, A.~Moitra, D.~Sontag, Y.~Wu, and M.~Zhu.
\newblock A practical algorithm for topic modeling with provable guarantees.
\newblock In \emph{Proceedings of the 30th International Conference on Machine
  Learning (ICML)}, 2013.

\bibitem[Arora et~al.(2014)Arora, Ge, and Moitra]{aroradictionary2}
S.~Arora, R.~Ge, and A.~Moitra.
\newblock New algorithms for learning incoherent and overcomplete dictionaries.
\newblock In \emph{Proceedings of The 27th Conference on Learning Theory
  (COLT)}, 2014.

\bibitem[Arora et~al.(2015)Arora, Ge, Ma, and Moitra]{aroradictionary1}
S.~Arora, R.~Ge, T.~Ma, and A.~Moitra.
\newblock Simple, efficient, and neural algorithms for sparse coding.
\newblock In \emph{Proceedings of The 28th Conference on Learning Theory
  (COLT)}, 2015.

\bibitem[Balakrishnan et~al.(2014)Balakrishnan, Wainwright, and
  Yu]{balakrishnan2014statistical}
S.~Balakrishnan, M.J. Wainwright, and B.~Yu.
\newblock Statistical guarantees for the em algorithm: From population to
  sample-based analysis.
\newblock \emph{arXiv preprint arXiv:1408.2156}, 2014.

\bibitem[Bansal et~al.(2014)Bansal, Bhattacharyya, and Kannan]{kannantopic}
T.~Bansal, C.~Bhattacharyya, and R.~Kannan.
\newblock A provable svd-based algorithm for learning topics in dominant
  admixture corpus.
\newblock In \emph{Advances in Neural Information Processing Systems (NIPS)},
  2014.

\bibitem[Blei and Lafferty(2009)]{blei2009topic}
D.~Blei and J.D. Lafferty.
\newblock Topic models.
\newblock \emph{Text mining: classification, clustering, and applications},
  10:\penalty0 71, 2009.

\bibitem[Blei et~al.(2003)Blei, Ng, , and Jordan]{blei1}
D.~Blei, A.~Ng, , and M.~Jordan.
\newblock Latent dirichlet allocation.
\newblock \emph{Journal of Machine Learning Research}, 3:\penalty0 993--1022,
  2003.

\bibitem[Dasgupta and Schulman(2000)]{schulmanEM1}
S.~Dasgupta and L.~Schulman.
\newblock A two-round variant of em for gaussian mixtures.
\newblock In \emph{Proceedings of Uncertainty in Artificial Intelligence
  (UAI)}, 2000.

\bibitem[Dasgupta and Schulman(2007)]{schulmanEM2}
S.~Dasgupta and L.~Schulman.
\newblock A probabilistic analysis of em for mixtures of separated, spherical
  gaussians.
\newblock \emph{Journal of Machine Learning Research}, 8:\penalty0 203--226,
  2007.

\bibitem[Dempster et~al.(1977)Dempster, Laird, and Rubin]{dempster}
A.~Dempster, N.~Laird, and D.~Rubin.
\newblock Maximum likelihood from incomplete data via the em algorithm.
\newblock \emph{Journal of the Royal Statistical Society, Series B},
  39:\penalty0 1--38, 1977.

\bibitem[Ding et~al.(2013)Ding, Rohban, Ishwar, and Saligrama]{ding2013topic}
W.~Ding, M.H. Rohban, P.~Ishwar, and V.~Saligrama.
\newblock Topic discovery through data dependent and random projections.
\newblock \emph{arXiv preprint arXiv:1303.3664}, 2013.

\bibitem[Ding et~al.(2014)Ding, Rohban, Ishwar, and
  Saligrama]{ding2014efficient}
W.~Ding, M.H. Rohban, P.~Ishwar, and V.~Saligrama.
\newblock Efficient distributed topic modeling with provable guarantees.
\newblock In \emph{Proceedings ot the 17th International Conference on
  Artificial Intelligence and Statistics}, pages 167--175, 2014.

\bibitem[Hoffman et~al.(2013)Hoffman, Blei, Paisley, and
  Wan]{bleiStochasticLDA}
M.~Hoffman, D.~Blei, J.~Paisley, and C.~Wan.
\newblock Stochastic variational inference.
\newblock \emph{Journal of Machine Learning Research}, 14:\penalty0 1303--1347,
  2013.

\bibitem[Jordan et~al.(1999)Jordan, Ghahramani, Jaakkola, and
  Saul]{jordan1999introduction}
M.~Jordan, Z.~Ghahramani, T.~Jaakkola, and L.~Saul.
\newblock An introduction to variational methods for graphical models.
\newblock \emph{Machine learning}, 37\penalty0 (2):\penalty0 183--233, 1999.

\bibitem[Kumar and Kannan(2010)]{kannanEM}
A.~Kumar and R.~Kannan.
\newblock Clustering with spectral norm and the k-means algorithm.
\newblock In \emph{Proceedings of Foundations of Computer Science (FOCS)},
  2010.

\bibitem[Lee and Seung(2000)]{leeseung}
D.~Lee and S.~Seung.
\newblock Algorithms for non-negative matrix factorization.
\newblock In \emph{Advances in Neural Information Processing Systems (NIPS)},
  2000.

\bibitem[Netrapalli et~al.(2013)Netrapalli, Jain, and
  Sanghavi]{netrapalliphase}
P.~Netrapalli, P.~Jain, and S.~Sanghavi.
\newblock Phase retrieval using alternating minimization.
\newblock In \emph{Advances in Neural Information Processing Systems (NIPS)},
  2013.

\bibitem[Sontag and Roy(2000)]{sontag}
D.~Sontag and D.~Roy.
\newblock Complexity of inference in latent dirichlet allocation.
\newblock In \emph{Advances in Neural Information Processing Systems (NIPS)},
  2000.

\bibitem[Sundberg(1974)]{sundberg}
R.~Sundberg.
\newblock Maximum likelihood from incomplete data via the em algorithm.
\newblock \emph{Scandinavian Journal of Statistics}, 1:\penalty0 49--58, 1974.

\bibitem[Telgarsky(2013)]{Telgarsky}
M.~Telgarsky.
\newblock Dirichlet draws are sparse with high probability.
\newblock Manuscript, 2013.

\end{thebibliography}
\end{document}